\documentclass[11pt]{article}

\usepackage{fullpage}

\usepackage[T1]{fontenc}
\usepackage[utf8]{inputenc}
\usepackage{lmodern}
\usepackage{microtype}

\usepackage{amsmath,mathtools}
\usepackage{amsfonts,amssymb}
\usepackage{amsthm,thmtools}
\usepackage{bm}
\usepackage{bbm}
\usepackage{accents}
\usepackage{placeins}
\usepackage{wrapfig}
\usepackage{graphicx}
\usepackage{capt-of}
\usepackage{xcolor}
\usepackage{booktabs}
\usepackage{array}
\usepackage{tabularx}
\usepackage{threeparttable}
\usepackage{caption}
\usepackage{subcaption}
\usepackage{pifont}
\usepackage{enumitem}

\usepackage{algorithm}
\usepackage{algpseudocode}

\usepackage[round]{natbib}
\usepackage[colorlinks=true,citecolor=blue,linkcolor=red,urlcolor=magenta]{hyperref}
\usepackage{cleveref}

\crefformat{equation}{(#2#1#3)}
\Crefformat{equation}{(#2#1#3)}
\crefrangeformat{equation}{(#3#1#4)--(#5#2#6)}
\Crefrangeformat{equation}{(#3#1#4)--(#5#2#6)}
\crefmultiformat{equation}{(#2#1#3)}{ and (#2#1#3)}{, (#2#1#3)}{, and (#2#1#3)}
\Crefmultiformat{equation}{(#2#1#3)}{ and (#2#1#3)}{, (#2#1#3)}{, and (#2#1#3)}

\newtheorem{theorem}{Theorem}
\newtheorem{lemma}[theorem]{Lemma}
\newtheorem{proposition}[theorem]{Proposition}

\newtheorem{assumption}[theorem]{Assumption}
\newtheorem{remark}[theorem]{Remark}
\newtheorem{condition}[theorem]{Condition}

\crefname{assumption}{Assumption}{Assumptions}
\Crefname{assumption}{Assumption}{Assumptions}

\newcommand{\para}[1]{\paragraph{#1.}}

\newcommand{\R}{\mathbb{R}}
\newcommand{\N}{\mathbb{N}}

\newcommand{\B}{\mathbb{B}}
\newcommand{\I}{\mathbf{I}}

\newcommand{\given}{\nonscript\;\delimsize\vert\nonscript\;}

\DeclarePairedDelimiterX{\cbrp}[1]{\lbrace}{\rbrace}{#1}
\DeclarePairedDelimiterX{\brp}[1]{(}{)}{#1}
\DeclarePairedDelimiterX{\sqbrp}[1]{[}{]}{#1}
\DeclarePairedDelimiterX{\normp}[1]{\lVert}{\rVert}{#1}
\DeclarePairedDelimiterX{\absp}[1]{\lvert}{\rvert}{#1}
\DeclarePairedDelimiterX{\ipp}[1]{\langle}{\rangle}{#1}

\newcommand{\cbr}[1]{\ensuremath{\mathchoice{\cbrp*{#1}}{\cbrp{#1}}{\cbrp{#1}}{\cbrp{#1}}}}
\newcommand{\br}[1]{\ensuremath{\mathchoice{\brp*{#1}}{\brp{#1}}{\brp{#1}}{\brp{#1}}}}
\newcommand{\sqbr}[1]{\ensuremath{\mathchoice{\sqbrp*{#1}}{\sqbrp{#1}}{\sqbrp{#1}}{\sqbrp{#1}}}}
\newcommand{\norm}[1]{\ensuremath{\mathchoice{\normp*{#1}}{\normp{#1}}{\normp{#1}}{\normp{#1}}}}
\newcommand{\abs}[1]{\ensuremath{\mathchoice{\absp*{#1}}{\absp{#1}}{\absp{#1}}{\absp{#1}}}}

\DeclareMathOperator*{\argmax}{arg\,max}

\renewcommand{\P}{\mathbb{P}}
\newcommand{\E}{\mathbb{E}}

\newcommand{\ind}[1]{\ensuremath{\mathbf{1}\,\cbr{#1}}}
\DeclareMathOperator{\Bern}{Bern}

\DeclareMathOperator{\Unif}{Unif}

\renewcommand{\v}[1]{\boldsymbol{#1}}

\newcommand{\tp}{^{\mathsf{T}}}

\renewcommand{\cal}[1]{\mathcal{#1}}

\newcommand{\what}[1]{\widehat{#1}}
\newcommand{\wtilde}[1]{\widetilde{#1}}

\DeclareRobustCommand{\term}[1]{%
  \ifmmode
    A_{\text{#1}}%
  \else
    term~\ensuremath{A_{\text{#1}}}%
  \fi
}
\newcommand{\uterm}[2]{\ensuremath{\underbrace{#1}_{\term{#2}}}}

\DeclareRobustCommand{\case}[1]{\ensuremath{\text{(case~#1)}}}
\newcommand{\udef}[2]{\ensuremath{\underbrace{#1}_{\coloneqq\,#2}}}

\newcommand{\eps}{\epsilon}
\newcommand{\veps}{\varepsilon}

\DeclarePairedDelimiterX{\parens}[1]{(}{)}{#1}

\newcommand{\bigO}[1]{\ensuremath{\mathchoice{\mathcal{O}\parens*{#1}}{\mathcal{O}\parens{#1}}{\mathcal{O}\parens{#1}}{\mathcal{O}\parens{#1}}}}
\newcommand{\tildeO}[1]{\ensuremath{\mathchoice{\widetilde{\mathcal{O}}\,\parens*{#1}}{\widetilde{\mathcal{O}}\,\parens{#1}}{\widetilde{\mathcal{O}}\,\parens{#1}}{\widetilde{\mathcal{O}}\,\parens{#1}}}}

\newcommand{\SlashComment}[1]{%
  \Statex \hspace*{\algorithmicindent}%
  \textcolor{blue}{\ttfamily // #1}%
}
\algnewcommand\algorithmicinit{\textbf{initialize}}
\algnewcommand\Init{\item[\algorithmicinit]}
\algnewcommand\algorithmicinput{\textbf{input}}
\algnewcommand\Input{\item[\algorithmicinput]}

\newcommand{\note}[1]{{\color{blue}#1}}

\title{
Learning to Route and Schedule LLMs from User Retrials via Contextual Queueing Bandits
}

\author{
Seoungbin Bae$^{1}$ \quad Junyoung Son$^{2}$ \quad Dabeen Lee$^{3}$\\[0.3em]
$^{1}$Department of Industrial \& Systems Engineering, KAIST\\
$^{2}$Graduate School of Data Science, KAIST\\
$^{3}$Department of Mathematical Sciences, Seoul National University\\
\texttt{sbbae31@kaist.ac.kr, jun.son@kaist.ac.kr, dabeenl@snu.ac.kr}
}

\date{}

\begin{document}
\maketitle

\begin{abstract}
Explosive demands for LLMs often cause user queries to accumulate in server queues, requiring efficient routing (query-LLM matching) and scheduling (query prioritization)  mechanisms. Several online algorithms are being deployed, but they overlook the following two key challenges inherent to conversational LLM services: (1) unsatisfied users may retry queries, increasing the server backlog, and (2) requests for ``explicit" feedback, such as ratings, degrade user experiences.
In this paper, we develop a joint routing and scheduling algorithm that leverages ``implicit" feedback inferred from user retrial behaviors.
The key idea is to propose and study the framework of contextual queueing bandits with multinomial logit feedback (CQB-MNL). CQB-MNL models query retrials, as well as context-based learning for user preferences over LLMs.
Our algorithm, anytime CQB (\textbf{ACQB}), achieves efficient learning while maintaining queue stability by combining Thompson sampling with forced exploration at a decaying rate.
We show that ACQB simultaneously achieves a cumulative regret of $\tildeO{\sqrt{t}}$ for routing and a queue length regret of $\tildeO{t^{-1/4}}$ for any large $t$. For experiments, we refine query embeddings via contrastive learning while adopting a disjoint parameter model to learn LLM-specific parameters.
Experiments on synthetic data, offline routing datasets (\textsc{SPROUT}, \textsc{EmbedLLM}, and \textsc{RouterBench}), and real user conversation logs (\textsc{WildChat-1M}) confirm that our methods improve routing, scheduling, and queue stability against strong online and offline-trained baselines.
\end{abstract}

\section{Introduction}

Recent advances in large language models (LLMs) have revolutionized various domains, driving explosive demands for LLM-based applications \citep{achiam2023gpt,team2023gemini,dubey2024llama}.
However, running LLMs is computationally intensive and resource-demanding.
Therefore, a flood of user queries, coupled with limited server capacity, inevitably leads to significant query accumulation in system queues.
Hence, efficient query routing and scheduling is crucial to ensure high-quality user experiences and system sustainability \citep{kwon2023efficient,agrawal2024taming}.

To mitigate such system congestion, LLM routing and scheduling frameworks have emerged as necessary components.
In a typical system of multiple LLMs with varying types and levels of capabilities, performance, and costs, the system must decide which model to serve a given query (routing) and in what order to process backlogged queries (scheduling) \citep{mitzenmacher2025queueing}.
The first approaches have relied on policies learned offline \citep{ong2024routellm,feng2024graphrouter,fu2024efficient}, but these methods often fail to adapt to the dynamic and non-stationary nature of online environments. 
This limitation has motivated the adoption of online learning frameworks \citep{chiang2025llm,jitkrittum2025universal}, which learn optimal policies through sequential interactions with the environment. 

However, existing online learning approaches are often limited in practice as they overlook two key challenges inherent to conversational LLM services.
First, they fail to account for the subsequent impact of user dissatisfaction.
When users encounter unsatisfactory responses, they often retry queries.
This retrial behavior increases server backlog, exacerbating congestion.
This introduces a trade-off between exploration and system stability: exploring potentially sub-optimal models may induce user retrials that increase queue length.
%to learn optimal policies, the system must perform exploration (potentially assigning sub-optimal models), yet this very act carries the risk of triggering retrials that can increase the queue length.
Second, most algorithms rely on explicit user feedback (e.g., ratings, preference information) to update their policies.
In practice, however, such feedback is sparse, as users are often reluctant to provide it.
Furthermore, mandating explicit feedback can disrupt conversation flows and degrade user experiences.

In this paper, we address these challenges by presenting a novel formulation for the joint routing and scheduling problem via contextual queueing bandits with multinomial logit feedback (CQB-MNL).
Specifically, CQB-MNL considers a discrete-time queueing system with $N$ LLMs, where in each round a query with some context information arrives at the queue. An algorithm for CQB-MNL chooses a pending query with the highest priority (scheduling) and recommends an assortment of $K$ LLMs for response generation (routing) (See \Cref{fig:main}). Here, the case $K=1$ corresponds to the standard single-response setting, which is essentially a dynamic matching problem.
The case $K=2$ represents the pairwise comparison setting, commonly employed to collect human preferences for reinforcement learning from human feedback (RLHF), where a user receives two candidate responses and is asked to select one. In CQB-MNL, the choice behavior of a user is modeled with the multinomial logit (MNL) framework \citep{agrawal2019mnl,oh2019thompson}. Basically, the user accepts one of $K$ candidate responses, indicating satisfaction, or rejects all and retries the query, indicating dissatisfaction, and the choice decision is governed by the MNL model. This formulation enables the system to learn optimal policies based on implicit feedback inferred from user retrials.

However, an online learning algorithm for CQB-MNL requires exploration, which may involve suggesting suboptimal models, and as a result, it inherits the risk of triggering retrials, causing system congestion. Therefore, we need an algorithm that cleverly balances exploration and system stability. To address this requirement, we propose an algorithm, anytime CQB (ACQB), that achieves efficient learning of the underlying MNL model while maintaining queue stability. The main backbone of ACQB is Thompson sampling for MNL, while it enforces uniform exploration steps.

Our contributions are summarized as follows:

\begin{itemize}[itemsep=0pt, topsep=0pt, parsep=0pt, partopsep=0pt]
    \item We introduce CQB-MNL, a novel online learning framework designed to address the two critical challenges inherent to online learning for LLM routing and scheduling: system congestion due to user dissatisfaction triggering retrials and impracticality of relying on explicit feedback. 
    
    \item We propose ACQB for CQB-MNL that allows efficient learning of the unknown MNL model while guaranteeing queue stability. In each time step, Thompson sampling or uniform exploration is performed. Here, the probability of running uniform exploration decreases as time goes on.

    \item ACQB is an ``anytime" algorithm suitable for continuous LLM services. It works without prior knowledge of the time horizon or the traffic slackness parameter. These parameters are usually unavailable a priori. We establish that ACQB simultaneously achieves a cumulative regret of $\tildeO{\sqrt{t}}$ for learning the MNL model and a queue length regret of $\tildeO{t^{-1/4}}$, ensuring convergence to optimal queue length, for any large $t$. 

    \item For experiments, we employ disjoint parameterization to capture individual LLM-specific characteristics (\Cref{sec:disjoint}). Moreover, we refine query embeddings via contrastive learning. Experiments on synthetic data and offline routing datasets (\textsc{EmbedLLM}, \textsc{SPROUT}, \textsc{RouterBench}) demonstrate that our methods consistently outperform queueing bandit baselines and offline-trained routing baselines. We also test the scheduling component of our method on real user conversation logs from \textsc{WildChat-1M} by replaying timestamped conversations, where retrials are extracted from repeated or highly similar queries by the same user.
\end{itemize}

\section{Problem formulation}
\label{sec:prelim}

\begin{figure}[t] % [t]는 페이지 상단 배치, [h]는 현재 위치
    \centering
    % width=\linewidth 로 설정하면 칼럼 폭에 딱 맞게 들어갑니다.
    \includegraphics[width=0.55\linewidth]{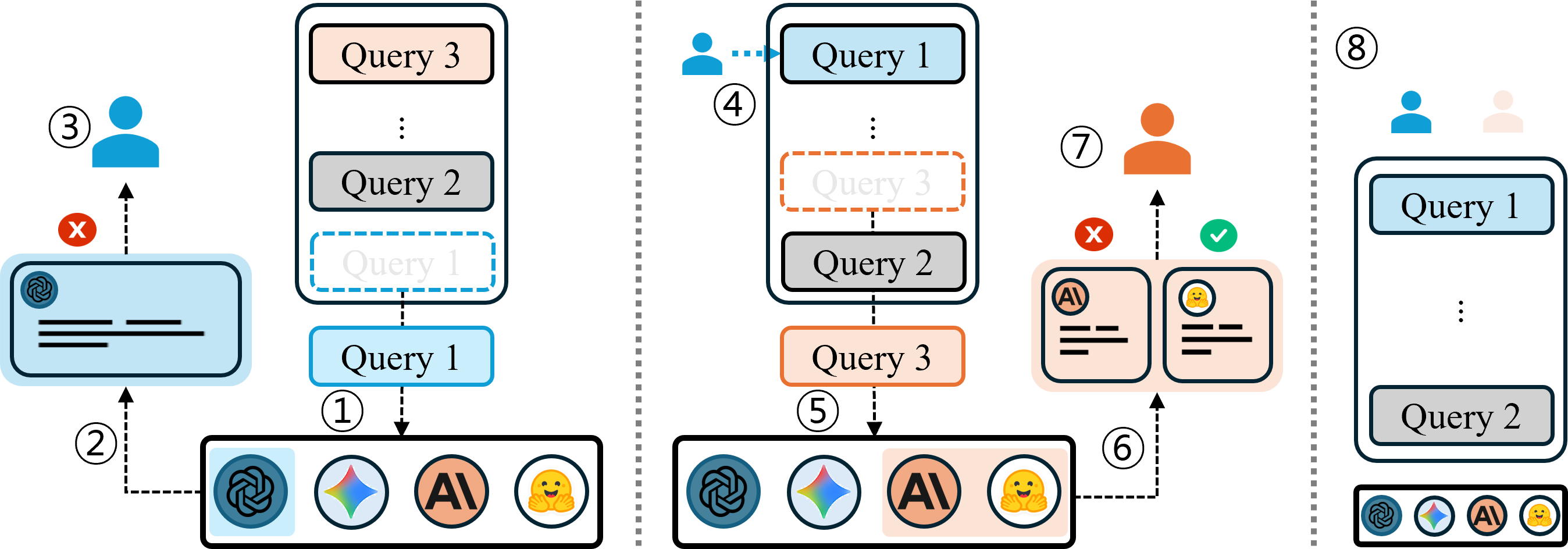}
    % \vspace{-0.2cm} % 그림과 캡션 사이 간격 조절 (필요시 사용)
    \caption{Illustration demonstrating retrial and departure dynamics. 
\textcolor{cyan}{($K=1$):} \textcircled{1} The agent schedules a query (Query 1). \textcircled{2} An assortment of size $K=1$ is assigned. \textcircled{3} The user is dissatisfied with the response, which \textcircled{4} triggers a retrial. 
\textcolor{orange}{($K=2$):} \textcircled{5} The agent schedules a query (Query 3). \textcircled{6} An assortment of size $K=2$ is assigned. \textcircled{7} The user selects one of the responses (satisfaction), and consequently, \textcircled{8} the query departs the queue.}
    \label{fig:main}
\end{figure}

In this section, we formulate the online learning of LLM routing and scheduling with users’ retrials as a \emph{contextual queueing bandit} problem. We consider a discrete-time queueing environment with a single queue and $N$ LLMs.
At each round $t$, a user query (together with its context) arrives and joins the queue. The agent then performs a joint operation: it selects a pending query to process from the queue (scheduling) and assigns an appropriate LLM (or an assortment of LLMs) to the selected query (routing).
Formally, at round $t$, the agent observes a queue state $\mathcal{X}_t \subseteq \mathcal{X} \subset \mathbb{R}^d$, given by the set of contexts of the remaining queries, and chooses a query context $x_t \in \mathcal{X}_t$. The corresponding feature vector of $x_t$ for each LLM $j \in [N]$ is denoted as $x_{t,j}$ (or $x_{-j}$ for an arbitrary context $x$). The agent then chooses an assortment of servers $S_t \in \mathcal{C}$, where $\mathcal{C} := \{S \subseteq [N] : |S| = K\}$ and $K$ is the assortment size.
For simplicity, we fix $K$ throughout the paper; in practice, one may deploy a policy where the agent primarily uses a single LLM ($K=1$) and occasionally generates two answers ($K=2$) when additional feedback is required, and the same analysis applies.

We characterize user satisfaction through follow-up behavior. If the user is satisfied---specifically, accepting the answer in the single-response setting ($K=1$) or selecting a preferred answer in the pairwise setting ($K=2$)---the corresponding query departs the queue. Conversely, if the user is dissatisfied, they exhibit retrials, which we model as the query remaining, or re-entering the queue. We describe these dynamics formally as follows: Let $Q(t) = |\mathcal{X}_t|$ denote the queue length at the beginning of round $t$. Let $A(t) \in \{0,1\}$ indicate the random arrival of a new job, and $D(t) \in \{0,1\}$ denote the random departure at time $t$. When $A(t)=1$, we denote the context of the newly arriving job as $x^{(t)}$. The queue state then evolves as:
\begin{align*}
    \cal X_{t+1} = \cal X_t \setminus\{x_t: D(t)=1\} \cup \{x^{(t)}:A(t)=1\},
\end{align*}
and the queue length evolves according to $Q(t+1) = [Q(t) + A(t) - D(t)]^+$, where $[z]^+ = \max\{0, z\}$. For technical convenience, if the queue is empty ($Q(t)=0$), the agent selects a dummy job $x_0 \in \mathbb{R}^d$, and the resulting feedback is not used for learning.

We model the user choice probability using the MNL model \citep{agrawal2019mnl,oh2019thompson}. The agent observes a binary choice vector $y_t = [y_{t0},y_{t1},\dots,y_{tK}] \in \{0,1\}^{K+1}$, where $\sum_{j\in[K]\cup\{0\}} y_{tj}=1$ and $y_{tj}=1$ if the user chooses the $j$-th LLM in the assortment $S_t$, and $0$ otherwise. We designate the outside option (no choice) as $y_{t0}=1$, representing user dissatisfaction. The probability that a user chooses item $j$ from assortment $S$ given context $x$ is defined as $p_j(x, S, \theta^*)$, where $\theta^* \in \mathbb{R}^d$ is the unknown parameter, and $p_{j}(x, S, \theta) := \frac{\exp(x_{-j}\tp \theta)}{1 + \sum_{j'\in S} \exp (x_{-j'}\tp\theta)}$ for $j\in \cal S$, and $p_{0}(x, S, \theta) := \frac{1}{1 + \sum_{j'\in S} \exp (x_{-j'}\tp\theta)}$ for the outside option ($j=0$). Therefore, a departure $D(t)=1$ occurs if the user selects any valid option in $S_t$ (i.e., avoids the outside option). Thus, $D(t)$ is a random variable with mean $R(x_t, S_t, \theta^*)$, where the success probability is given by 
\begin{align*}
R(x,S,\theta) := \sum_{j\in S} p_j(x, S, \theta) = \frac{\sum_{j\in S}\exp(x_{-j}^\top \theta)}{1 + \sum_{j'\in S} \exp(x_{-j'}^\top \theta)}.
\end{align*} 
Finally, we assume the arrival $A(t)$ is a random variable with mean $\lambda$.

We evaluate the performance of the agent using two key measures: (i) \emph{queue length regret} and (ii) \emph{cumulative regret}. Let $\pi$ denote the agent's policy and $\pi^*$ denote the optimal policy that has prior knowledge of the true parameter $\theta^*$. Given a set of remaining features $\cal Y \subseteq \cal X$, $\pi^*$ selects the query-assortment pair $(x^*, S^*)$ that maximizes the expected departure rate, i.e., $(x^*, S^*) \in \argmax_{x\in\mathcal{Y}, S\in\mathcal{C}} R(x,S,\theta^*)$. Now, we define the queue length regret at round $t$ as
\begin{align*}
R_t := \mathbb{E}[Q(t) - Q^*(t)],
\end{align*}
where $Q^*(t)$ is the queue length under the optimal policy $\pi^*$. To minimize $R_t$, the agent must learn both which query to serve and which LLM assortment to assign. Accordingly, this metric simultaneously evaluates the routing and scheduling performance. Moreover, to evaluate the routing performance in terms of departure-rate optimality, we consider cumulative regret, a standard measure in the bandit literature. Let ${S_i^*} = \argmax_{S\in\mathcal{C}} R(x_i, S, \theta^*)$ be the optimal assortment for the fixed query $x_i$ chosen by the agent. The standard cumulative regret is defined as 
\begin{align*}
    \text{Regret}_t := \sum_{i=1}^t \E\sqbr{R(x_i, {S_i^*}, \theta^*) - R(x_i, S_i, \theta^*)}.
\end{align*}
% \begin{remark}
% We analyze a stronger regret definition. Let $(x_i^*, S_i^*) = \argmax_{x\in\mathcal{X}_i, S \in\mathcal{C}} R(x, S, \theta^*)$ be the optimal pair chosen from the queue state $\mathcal{X}_i$. Define
% \begin{align*}
% \text{Regret}_t := \sum_{i=1}^t \mathbb{E}\sqbr{R(x_i^*, S_i^*, \theta^*) - R(x_i, S_i, \theta^*)}.
% \end{align*}
% Since $\text{Regret}'_t \leq \text{Regret}_t$ holds by definition, establishing a bound on $\text{Regret}_t$ suffices.
% \end{remark}
Lastly, we introduce the assumptions as follows:
\begin{assumption} \label{ass:basic}
$\|x_{-j}\|_2 \leq 1$ for all $x\in\cal X,j\in[N]$. Also, $\|\theta^*\|\leq 1$.
\end{assumption}

\begin{assumption} \label{ass:constants}
    There exist $\kappa >0$ such that for all $x\in\cal X, j\in[N]$, $\inf_{S\in\cal C,\theta\in\R^d} p_j(x,S,\theta) p_0(x,S,\theta) \geq 1/\kappa$.
\end{assumption}

\begin{assumption} \label{ass:iid}
The features of newly arriving jobs are assumed to be independently and identically distributed (i.i.d.) from an unknown distribution $\cal D$. Moreover, there exists $\sigma_0 > 0$ such that $\lambda_{\min}(\E_{x\sim\mathcal D}[\frac{1}{N}\sum_{j\in[N]} x_{-j} x_{-j}^\top]) \geq \sigma_0^2$.
\end{assumption}

\begin{assumption} \label{ass:slackness}
There exists some traffic slackness $\eps>0$ such that for each $x\in\cal X$, there exists an corresponding assortment $S^*(x)\in\cal C$ with $R(x,S^*(x),\theta^*)-\lambda\geq \epsilon$.
\end{assumption}

\Cref{ass:basic} states that the norms of the feature vector $x_{-j}$ and the unknown parameter $\theta^*$ are bounded. \Cref{ass:constants} introduces problem-dependent parameters that control the local behavior of $\dot\mu(\cdot)$. \Cref{ass:iid} imposes a regularity assumption on the underlying distribution. \Cref{ass:slackness} specifies a traffic slack condition to guarantee stability, which is standard in the queueing bandit literature \citep{krishnasamy2016regret,bae2026queuelengthregretbounds}.

\section{Algorithm for contextual queueing bandits with multinomial logit feedback}

\begin{algorithm} [t] 
\caption{ACQB} \label{alg:1}
\begin{algorithmic} [1] 
    \Init design matrix $V_0=\lambda_0\I$, assortment size $K$, sample size $M$, combination set $\cal C=\{S\subset [N]: |S|=K\}$, combination counter $c=0$, exploration parameter $\eta(t)$, confidence radius $\alpha_t$
    \For{$t=1,\dots$}
        \If{$A(t-1)=1$ \textbf{and} $E(t-1)=1$}  \Comment{\note{on arrivals, $\eta(t)$-exploration}}
            \State Set $x_t \leftarrow x^{(t-1)}$, $S_t \leftarrow \cal C[c+1]$
            \State $c \leftarrow c+1 \pmod{|\cal C|}$
        \Else
            \State Sample $\{\wtilde\theta_{t-1}^{(i)}\}_{i=1}^{M} \sim \cal N(\what\theta_{t-1}, \alpha_{t-1}^2 V_{t-1}^{-1})$ 
            \State Set $x_t, S_t \leftarrow \argmax_{x\in\cal X_t, S\in\cal C} \widetilde R(x, S)$ 
        \EndIf
        \State Assign $x_t$ to $S_t$, receive $y_t=(y_{t0},y_{t1}, \dots,y_{tK})$
        \State Update $\what \theta_t$ as in \Cref{eq:mle}
        \State $V_{t} \leftarrow V_{t-1} + \sum_{j\in S_t} x_{tj} x_{tj}\tp$ 
        \State Sample $E(t)\sim \Bern(\eta(t))$
    \EndFor
\end{algorithmic}
\end{algorithm}

In this section, we introduce our proposed algorithm, ACQB, which is illustrated in \Cref{alg:1}. The algorithm consists of two branches: (i) random exploration and (ii) a Thompson sampling-based optimistic rule. Let $E(t)\sim \Bern(\eta(t))$ be the random variable that indicates whether we run random exploration in round $t+1$, where $\eta(t) = \min\{1,c_1(t+1)^{-1/2}\}$ is the exploration parameter for some absolute constant $c_1>0$. Then, in each round $t$, if there is a new job arrival ($A(t-1)=1$), we perform random exploration with probability $\eta(t-1)$ (i.e., when $E(t-1)=1$) by selecting the newly arriving job $x^{(t-1)}$ and choosing the LLM assortment in a round-robin manner. If this event (i.e., $A(t-1)=1$ and $E(t-1)=1$) does not occur, we choose the query-assortment pair according to the Thompson sampling-based optimistic rule \citep{oh2019thompson}: (Line~6) sample $\{\wtilde\theta^{(i)}_{t-1}\}_{i=1}^M$ from the Gaussian distribution $N(\what\theta_{t-1}, \alpha_{t-1}^2 V_{t-1}^{-1})$, where $M=\lceil 1 - \frac{\log (K)}{\log(1-1/(4\sqrt{e\pi}))} \rceil$ and $\alpha_{t-1}$ is a confidence radius defined as
\begin{align*}
\alpha_l = \frac{\kappa}{2} \sqrt{d \log \br{1 + lK / (d\lambda_0)} + 4 \log l} + \kappa \sqrt{\lambda_0},    
\end{align*}
where $\lambda_0>0$ is a regularization parameter.
(Line~7) We then choose the query and LLM assortment pair with the largest optimistic departure rate estimate $\wtilde R(x, S)$, where
\begin{align*}
    \wtilde u_{tj}(x) := \max_i x_{-j}\tp \wtilde\theta_{t-1}^{(i)}, \quad \wtilde R(x, S) := \sum_{k\in S} \frac{\exp(\wtilde u_{tk}(x))}{1 + \sum_{j\in S} \exp(\wtilde u_{tj}(x))}.
\end{align*}
After assigning the query $x_t$ to the LLM assortment $S_t$, we observe the reward vector $y_t$. Finally, we update the maximum likelihood estimator (MLE) $\what\theta_t$ by minimizing the regularized cross-entropy loss
\begin{align}
    \cal L_t(\theta) = \frac{\lambda_0}{2} \|\theta\|_2^2 -\sum_{i=1}^t \sum_{j\in S_i\cup\{0\}} y_{ij} \log p_{j}(x_i, S_i,\theta) . \label{eq:mle}
\end{align}
Now we introduce our main results, which provide the queue length regret and cumulative regret bound of \Cref{alg:1}. The proof sketches are provided in \Cref{sec:regret}, and the full proofs are deferred to \Cref{sec:regret_formal}.
\begin{theorem} \label{thm:regret1}
For any large $t$ (\Cref{cond:t}), we have
\begin{align*}
    R_t = \bigO{\frac{d^5t^{-1/4}\log^5(t)}{\sigma_0^9 \eps^5} + \frac{d^{11/2}t^{-1}\log^5(t)}{\sigma_0^{12}\eps^5}}.
\end{align*}
\end{theorem}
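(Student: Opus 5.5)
We follow the standard queue-length-regret template for queueing bandits, in the spirit of \citet{krishnasamy2016regret,bae2026queuelengthregretbounds}. The idea is to split time into a burn-in/early phase and a late phase, and to show two things: the queue under ACQB is stable with exponentially light tails, and after enough samples the estimator is accurate enough that ACQB's departure rate is close to optimal. The proof has four ingredients, carried out in the order below.

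\textbf{(i) Parameter estimation from forced exploration.} Each round the algorithm explores independently with probability $\lambda\eta(t)$: it picks a fresh i.i.d.\ query and a round-robin assortment. Hence the design matrix satisfies $\lambda_{\min}(V_t)\gtrsim \lambda_0+\sum_{s\le t}\lambda\eta(s)\sigma_0^2 K\asymp \sigma_0^2\sqrt t$ with high probability. This follows from a matrix Freedman/Chernoff bound combined with \Cref{ass:iid}, using that round-robin over $\mathcal C$ averages $x_{-j}x_{-j}\tp$ uniformly over $j$. Combining this with the standard MNL confidence bound $\|\what\theta_t-\theta^*\|_{V_t}\le\alpha_t$ (which uses $\kappa$ from \Cref{ass:constants}) and Gaussian concentration of the $M$ Thompson samples gives
\[
\max_i\|\wtilde\theta^{(i)}_t-\theta^*\|_2\lesssim \alpha_t\sqrt{d\log t}/\lambda_{\min}(V_t)^{1/2}=\tildeO{\sqrt d\,\alpha_t\,\sigma_0^{-1}t^{-1/4}}.
\]
It follows that for every $x,S$ we have $|\wtilde R(x,S)-R(x,S,\theta^*)|\lesssim \tildeO{d\,\sigma_0^{-1}t^{-1/4}}$, because $R$ is $1$-Lipschitz in the utilities.

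\textbf{(ii) Stability via a drift argument.} Take the Lyapunov function $Q(t)$ (or $Q(t)^2$). Once $t$ is large enough (\Cref{cond:t}), the estimation error is below $\eps/4$ and $\eta(t)\le\eps/4$. On the good event, the chosen pair then satisfies $R(x_t,S_t,\theta^*)\ge \max_{x,S}R-\eps/2\ge\lambda+\eps/2$ by \Cref{ass:slackness}, so the conditional drift of $Q$ is at most $-\eps/2$ whenever $Q>0$. A Hajek-type bound then yields $\E[Q(t)]=\tildeO{1/\eps}$ and geometric tails. The failure events have probability $O(t^{-2})$ and cost at most $O(t)$, so they contribute only lower-order terms. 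The early phase is handled by showing that the queue built up before the threshold time drains at rate $\eps/2$.

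\textbf{(iii) Coupling with $\pi^*$.} Write
\[
R_t=\E[Q(t)-Q^*(t)]=\E\bigl[(Q(t)-Q^*(t))\mathbf 1\{\text{recent regeneration}\}\bigr]+\dots
\]
Next, couple the two systems on the same arrivals and the same uniform variables for departures. Both queues empty frequently, with regeneration intervals of length $\tildeO{\eps^{-2}}$ with high probability. On each interval, the gap $Q(t)-Q^*(t)$ is at most the number of rounds within the last busy period in which ACQB's departure probability falls below the optimal one, plus the forced-exploration rounds. The expected excess per round is $\tildeO{d\sigma_0^{-1}s^{-1/4}}+\eta(s)$ for $s\in[t-\tildeO{\eps^{-2}}\log t,\,t]$, and summing over the window gives the $\tildeO{t^{-1/4}}$ leading term. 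Making the dependence on $d,\sigma_0,\eps$ explicit, through powers of $\alpha_t$, the burn-in length and the window length, yields the first term $d^5\sigma_0^{-9}\eps^{-5}$. The second term $t^{-1}$ collects the failure probabilities and the contribution of the burn-in.

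\textbf{Main obstacle.} The hard step is (iii). The scheduling component means ACQB and $\pi^*$ hold different queue contents $\mathcal X_t$, so a naive sample-path coupling breaks. I would handle this by comparing each system to a queue whose service rate is the maximal achievable rate, using that $\pi^*$ is greedy on the same arrival stream. I would also bound the extra deficit created by serving a near-optimal but different job, noting that only the departure indicator, not job identity, affects $Q$. A further concern is that correlations between the learning and the queue state, induced by the i.i.d.\ arrivals, could break the martingale bounds in (i). I would address this by basing those bounds solely on the exploration rounds, which use fresh i.i.d.\ contexts.
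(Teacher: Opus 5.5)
Your steps (i) and (ii) are essentially the paper's own ingredients: eigenvalue growth from arrival-triggered exploration (\Cref{lem:min_eig_2}), the gap bound $(\alpha_{l-1}+48\beta_{l-1})\lambda_{\min}^{-1/2}(V_{l-1})$ of \Cref{prop:per_round_1}, and the $-\eps/2$ drift after the $\tau(t)$ bad rounds (\Cref{lem:drift,lem:q_tail}). The genuine gap is step (iii), the only step that compares $Q(t)$ with $Q^*(t)$. There you assert that the gap is at most the number of bad rounds in the last busy period, and that is exactly the queue-state-misalignment problem, which you name but do not resolve. Under a direct coupling of ACQB with $\pi^*$ (shared arrivals and uniforms), the two systems hold different job sets after the first disagreement. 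Step (i) only controls $R(x_s^*,S_s^*,\theta^*)-R(x_s,S_s,\theta^*)$ with $(x_s^*,S_s^*)$ optimal for ACQB's \emph{own} queue $\mathcal X_s$. The departure of $\pi^*$, by contrast, is driven by its best job in $\mathcal X_s^*$, which may be better or worse. As a result there is no pathwise ordering between $Q(s)$ and $Q^*(s)$. For example, after a mistake in which both depart, ACQB may retain a high-rate job that $\pi^*$ has already cleared, and later depart when $\pi^*$ does not. So neither the claimed joint regenerations nor the ``mistakes in the last busy period'' bound follow. Your suggested fixes do not close this. Bounding each system separately against a reference queue yields only a bounded regret (e.g.\ $R_t\le\E[Q(t)]$), not a vanishing one. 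And job identity is not irrelevant, because the identities of the remaining jobs determine all future departure rates.

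The missing idea is the policy-switching interpolation. Let $Q_i$ follow ACQB through round $i$ and $\pi^*$ afterwards. All processes share arrivals, exploration coins, Thompson samples and the per-round uniform, so $R_t=\sum_{i=1}^{t-1}\E[Q(i,t)-Q(i-1,t)]$. Consecutive processes share the queue state $\mathcal X_i$ at round $i$, so they diverge with probability exactly the aligned gap that your step (i) controls. Because both follow $\pi^*$ afterwards, \Cref{lem:psi} (which tracks only the optimal departure rates $\max_S R(x,S,\theta^*)$ of queued jobs) shows the difference lies in $\{0,1\}$ after such a divergence. It also shows the difference vanishes once the queue carrying the extra job empties. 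By \Cref{lem:cond_exp_psi,lem:q_tail}, the probability that this queue stays nonempty on $[i+1,t]$ is exponentially small in $t-i-1-\omega$. The paper then separates the history-dependent gap at round $i$ from this persistence probability by Cauchy--Schwarz, $\E[\psi(i,t)]\le e_ip_i$ (\Cref{lem:regret_decomp}). Your ``expected excess per round times window length'' needs the same decoupling but never justifies it. Finally, the paper sums the two monotone bounds with \Cref{lem:cheb}. Relatedly, the factor $d^5\sigma_0^{-9}\eps^{-5}$ does not come out of your accounting, which multiplies a per-round gap $\tildeO{d\sigma_0^{-1}t^{-1/4}}$ by a window $\tildeO{\eps^{-2}}$. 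In the paper it arises from the window $\omega=4\tau(t)/\eps$ with $\tau(t)=O(M(t)^2)$.
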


\begin{theorem} \label{thm:regret2}
    For any $t\geq 1$, we have
    \begin{align*}
        \text{Regret}_t = \wtilde{\cal O}(d^{3/2} \sqrt{t}).
    \end{align*}
\end{theorem}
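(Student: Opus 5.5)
The plan is to decompose the cumulative regret according to which branch of \Cref{alg:1} is active in each round. Let $\mathcal{E}_i=\{A(i-1)=1, E(i-1)=1\}$ denote the forced-exploration event. On $\mathcal{E}_i$ the per-round regret is at most $1$, since $R\in[0,1]$, and $\P(\mathcal{E}_i)\le\eta(i-1)\le c_1 i^{-1/2}$. Summing gives $\sum_{i\le t} c_1 i^{-1/2}=\bigO{\sqrt t}$, which already matches the target rate. The remaining rounds are Thompson-sampling rounds, and there the query $x_i$ is chosen jointly with $S_i$ by maximizing $\wtilde R$. Since the regret benchmark $S_i^*$ is defined for the chosen $x_i$, the pair $(x_i,S_i)$ maximizes $\wtilde R(x_i,\cdot)$ over $\cal C$. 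This means the analysis of \citet{oh2019thompson} (TS-MNL with $M$ optimistic samples) applies per round, with the context set at round $i$ replaced by the singleton $\{x_i\}$. The fact that $x_i$ depends on the history does not matter, because all concentration arguments are self-normalized martingale bounds that allow adapted contexts.

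For the Thompson-sampling rounds I would proceed in three steps. \emph{(i) MLE confidence.} With the regularized loss \eqref{eq:mle}, \Cref{ass:basic} and \Cref{ass:constants}, the Hessian of $\cal L_t$ dominates $\kappa^{-1}V_t$. A standard self-normalized bound, applied to the gradient noise $\sum_i\sum_j (y_{ij}-p_j)x_{ij}$ with vector martingale differences bounded by $1$, then gives $\|\what\theta_{t}-\theta^*\|_{V_t}\le\alpha_t$ with probability $1-\bigO{t^{-2}}$; the $4\log l$ term in $\alpha_l$ is chosen to make this union bound summable. Exploration rounds also feed $V_t$, which only helps. \emph{(ii) Sample concentration and optimism.} Gaussian concentration gives $\|\wtilde\theta^{(m)}_{i-1}-\what\theta_{i-1}\|_{V_{i-1}}\le\alpha_{i-1}\sqrt{4d\log(Mi)}$ with high probability, so $|\wtilde u_{ij}-x_{ij}\tp\theta^*|\le\gamma_i\|x_{ij}\|_{V_{i-1}^{-1}}$ with $\gamma_i=\tildeO{\sqrt d\,\alpha_i}$. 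Anti-concentration of the Gaussian, together with the choice $M=\lceil 1-\log K/\log(1-1/(4\sqrt{e\pi}))\rceil$, ensures that with constant probability $\wtilde u_{ij}\ge x_{ij}\tp\theta^*$ simultaneously for all $j\in S_i^*$. Because $R$ is increasing in each utility, this gives $\wtilde R(x_i,S_i^*)\ge R(x_i,S_i^*,\theta^*)$. \emph{(iii) Regret in terms of widths.} Using the Lipschitz property of $R$ in the utilities (constant $\le 1$), the instantaneous regret splits into $\wtilde R(x_i,S_i)-R(x_i,S_i,\theta^*)\le 2\gamma_i\max_{j\in S_i}\|x_{ij}\|_{V_{i-1}^{-1}}$ plus an optimism-gap term. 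Following Agrawal et al./Oh--Iyengar, I would control the optimism gap through the constant optimism probability, which bounds it by a constant multiple of the expected width in the next round.

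The last step is to sum the widths with the elliptical potential lemma. Because $V_i=V_{i-1}+\sum_{j\in S_i}x_{ij}x_{ij}\tp$ includes every chosen context, we get $\sum_i\min\{1,\max_{j\in S_i}\|x_{ij}\|^2_{V_{i-1}^{-1}}\}\le 2d\log(1+tK/(d\lambda_0))$. Cauchy--Schwarz then gives $\sum_i\gamma_i\|x_{ij}\|_{V_{i-1}^{-1}}=\tildeO{\sqrt d\,\alpha_t\sqrt{dt}}=\tildeO{d^{3/2}\sqrt t}$, using $\alpha_t=\tildeO{\kappa\sqrt d}$ with $\kappa$ treated as a constant. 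Adding the failure-probability contributions $\sum_i\bigO{i^{-2}}=\bigO{1}$ and the exploration term $\bigO{\sqrt t}$ yields $\text{Regret}_t=\tildeO{d^{3/2}\sqrt t}$, valid for every $t\ge1$ since no step requires $t$ to be large.

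The main obstacle is step (ii)--(iii), the optimism argument for TS with MNL. Two issues arise there. First, $S_i$ is selected jointly with $x_i$ from the whole queue, so the inequality $\wtilde R(x_i,S_i)\ge\wtilde R(x_i,S_i^*)$ must be used with the agent's own $x_i$ rather than with a fixed context. Second, the width argument must handle the fact that $R$ aggregates $K$ utilities. I would first try to reuse Oh and Iyengar's lemma bounding $\E[\wtilde R(x,S^*)-R(x,S^*,\theta^*)]$ via the probability of being ``saturated'', applied conditionally on the filtration that includes the queue state.
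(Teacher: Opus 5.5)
\textbf{There is a genuine gap} in the reduction ``apply \citet{oh2019thompson} per round with the context set replaced by the singleton $\{x_i\}$.'' In a Thompson-sampling round, $x_i$ is not fixed before sampling. It is the query maximizing $\max_S \wtilde R(x,S)$ over $\mathcal{X}_i$, so it is a function of the current samples $\{\wtilde\theta_{i-1}^{(m)}\}_{m=1}^M$. Your remark that adaptivity does not matter covers dependence on the history, not dependence on the current round's samples. The Thompson-sampling optimism argument needs the benchmark to be measurable with respect to the pre-sampling $\sigma$-algebra $\mathcal{F}_i^-$, and two steps break without this.

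(a) Anti-concentration gives a constant optimism probability only at a context fixed before sampling. Conditional on which query won, the samples are no longer $\mathcal{N}(\what\theta_{i-1},\alpha_{i-1}^2V_{i-1}^{-1})$. Unconditionally, you cannot union-bound over the queue, because $M$ is tuned to $K$, not to $|\mathcal{X}_i|$.

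(b) The gap must be bounded by an expected width under an independent draw in the same round (not the next round), and that expected width must then be replaced by the realized width $\max_{j\in S_i}\|x_{ij}\|_{V_{i-1}^{-1}}$ via a martingale argument. Your plan skips this step and jumps straight to the elliptical potential. In the singleton version the step fails: the fresh-draw width is evaluated at $(x_i,S(x_i;\theta'))$ with $x_i$ frozen. That is not the conditional mean of the realized width, where a single draw selects both $x_i$ and $S_i$. The elliptical potential controls only the realized widths, since only those enter $V$.

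\textbf{The fix, which is what the paper does}, is to bound the stronger regret. Use $R(x_i,S^*(x_i),\theta^*)\le R(x_i^*,S_i^*,\theta^*)$ with $(x_i^*,S_i^*)=\argmax_{x\in\mathcal{X}_i,S\in\mathcal{C}}R(x,S,\theta^*)$. This benchmark depends only on $\mathcal{X}_i$ and is therefore $\mathcal{F}_i^-$-measurable.

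Optimism then uses the joint maximization in the opposite direction from the one you propose: $\wtilde R(x_i,S_i)\ge\wtilde R(x_i^*,S^*(x_i^*))\ge R(x_i^*,S_i^*,\theta^*)$. This holds whenever the samples are optimistic on the $K$ items of $S^*(x_i^*)$ at the fixed context $x_i^*$, which happens with probability at least $(4\sqrt{e\pi})^{-1}$ (the modified Lemma 2 of \citet{oh2019thompson}).

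The paper's version of Lemma 12 then bounds the gap by $16\sqrt{e\pi}\beta_{i-1}\E[\max_{j\in S(x_i,\wtilde\theta^{1:M}_{i-1})}\|x_{ij}\|_{V_{i-1}^{-1}}\mid\mathcal{F}_i^-]$, with the jointly selected pair inside. Its difference from the realized width is therefore a bounded martingale difference, handled by Azuma--Hoeffding. Cauchy--Schwarz and the elliptical potential lemma then give $\tildeO{\beta_t\sqrt{dt}}=\tildeO{d^{3/2}\sqrt t}$.

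The rest of your plan is consistent with this route: the $\bigO{\sqrt t}$ exploration term, the MLE confidence bound, the sample concentration, and the rate bookkeeping.
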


\para{Comparison with previous works}

Closely related works in queueing bandits literature include \citet{krishnasamy2016regret,kim2024queueing,bae2026queuelengthregretbounds}.
First, \citet{krishnasamy2016regret} pioneered the analysis of queue length regret. 
However, their multi-armed bandit (MAB)-based framework ignores query contexts and assumes fixed departure rates, making it unsuitable for our setting.
Later, \citet{kim2024queueing} address contexts but restrict it to a single fixed type per queue, unlike our arbitrary setting, and they lack an analysis of queue length regret.
The most comparable study is Algorithm 1 of \citet{bae2026queuelengthregretbounds}, which accommodates arbitrary contexts and analyzes queue length regret. Notably, it yields a tighter leading term of $\cal O(d^{3/2}T^{-1/4}\log^{3/2}(T) \sigma_0^{-5}\eps^{-3})$ compared to the $\cal O(d^5T^{-1/4}\log^5(T)\sigma_0^{-9} \eps^{-5})$ term of our \Cref{alg:1}.
The heavier dependence on $d$ mainly comes from the Thompson-sampling analysis and the anytime exploration design, rather than from the MNL feedback model itself. If we replace Thompson sampling with a UCB rule in our framework, the queue length regret improves to $\wtilde{\cal O}(d^{5/2}T^{-1/4})$, and the cumulative regret improves to $\wtilde{\cal O}(d\sqrt{T})$.
In contrast, their approach relies on an exploration schedule dependent on the horizon $T$ and slackness $\epsilon$, which fails to satisfy the anytime property, making it unsuitable for continuous LLM serving (see \Cref{remark:tau}). Finally, their model considers only the logistic function, requiring non-trivial extension to the MNL model.

% Algorithm 1 of \citet{bae2026queuelengthregretbounds} consists of two phases. For the first $\tau$ rounds, it runs a pure-exploration phase in which, whenever a new job arrives, it always performs random exploration. Afterward, it applies an $\eta$-exploration policy: if a new job arrives, it performs random exploration with probability $\eta=T^{-1/2}$, where $T$ is the known horizon length; otherwise, it selects an optimistic pair by maximizing the UCB term. This algorithm has two limitations. (i) It requires prior knowledge of the horizon $T$ to set the exploration probability $\eta$ and the length of the exploration phase $\tau$, which is typically unavailable in online services that must run continuously. (ii) Moreover, to set the pure-exploration length $\tau$, it also requires prior knowledge of the slackness parameter $\eps$, since $\tau$ is inversely proportional to $\eps$ as $\tau=\bigO{d\log(T)/\eps^2)}$, which can be unrealistic in some situations.

% In contrast, our \Cref{alg:1} bypasses both limitations by adopting a decaying exploration rate $\eta(t)$. As a result, the algorithm is anytime and does not require prior knowledge of $\eps$, which is important for deployment. Correspondingly, we carefully choose $\tau(t)$, which can be viewed as an exploration length but appears only in the analysis, and show that this choice suffices to meet the condition required of $\tau$ (in terms of $T$ and $\eps$) in Algorithm~1 of \citet{bae2026queuelengthregretbounds}.

\section{Algorithmic extensions}

We introduce two algorithmic adaptations designed to address the challenges of real-world deployment and to enhance performance.

\subsection{Capturing LLM heterogeneity via disjoint parameterization}\label{sec:disjoint}

Many prior works \citep{chiang2025llm,shirkavand2025cost} use a joint feature vector $x_{t,j}$ with a shared parameter vector $\theta^*$.
Although query embeddings $x_t$ can be obtained from pre-trained embedding models \citep{reimers2019sentence,sentence-transformers-allminilm-2021}, constructing $x_{t,j}$ requires additional feature engineering for each LLM.
This can make it unclear whether the gain comes from the routing algorithm or from the feature construction.

Therefore, for the experiments, we adopt a disjoint parameter model to bypass the reliance on manual feature engineering. In this setting, the query context $x_t$ is shared, but each server $j$ is governed by a unique unknown parameter $\theta_j^*$. This allows the agent to learn the unique characteristics of heterogeneous LLMs through online learning, thereby removing the confounding effects of feature engineering and enabling a fair assessment of algorithmic performance.
Accordingly, define the collection of parameters as $\Theta^*=[\theta_1^*, \theta_2^*, \dots, \theta_N^*]\in \R^{d\times N}$. When a user with context $x$ is presented with an assortment $S$, the probability $p_j(x, S, \Theta^*)$ of selecting item $j$ and the corresponding departure rate $R(x, S, \Theta^*)$ are defined as $p_{j}(x, S, \Theta^*) := \frac{\exp(x^\top \theta_j^*)}{1 + \sum_{j'\in S} \exp (x^\top\theta_{j'}^*)}$ for $j\in S$, $p_{0}(x, S, \Theta^*) := \frac{1}{1 + \sum_{j'\in S} \exp (x^\top\theta_{j'}^*)}$ for $j=0$, and $R(x,S,\Theta^*) := \sum_{j\in S} p_j(x, S, \Theta^*)$.
Under this disjoint model, the agent maintains separate statistics for each model $j$: a design matrix $V_{t,j}$, a maximum likelihood estimator $\hat{\theta}_{t,j}$, and a confidence radius $\alpha_{t,j}$. The detailed algorithm is outlined in \Cref{ssec:disjoint}.
\begin{remark}
    Notice that this disjoint parameterization is a specific instance of the shared parameter setting used in our theoretical analysis. Let $e_j \in \mathbb{R}^N$ be the standard basis vector with a $1$ at the $j$-th position and $0$ elsewhere. Then, for all $j \in [N]$, we can write $x\tp \theta_j^* =(x \otimes e_j)\tp \text{vec}(\Theta^*)$ where $\otimes$ denotes the Kronecker product and $\text{vec}(\Theta^*)$ represents the vectorization of a matrix. By viewing $(x \otimes e_j) \in \mathbb{R}^{dN}$ as a feature vector and $\text{vec}(\Theta^*) \in \mathbb{R}^{dN}$ as the shared parameter, our theoretical analysis remains valid.
\end{remark}

\subsection{Utility-aligned query embeddings via contrastive learning}
\label{sec:EQCL}
Standard query encoders are typically trained to align representations with linguistic semantics \citep{reimers2019sentence}, resulting in an embedding space where proximity reflects categorical or paraphrase-level similarity \citep{chiang2025llm}. 
However, effective multi-LLM routing requires embeddings to capture utility alignment rather than semantic closeness. 
That is, two queries should be considered similar if they exhibit similar routing utilities (i.e., performance-cost trade-offs) across varying LLMs, even if they are semantically distinct \citep{chen2024routerdc}. 
To address this, we propose ACQB-CL (\Cref{alg:2,alg:ucl}), which realigns the query representation space. Specifically, given a raw prompt $\xi$, we freeze the backbone encoder $E(\cdot)$ and train a two-layer MLP projection head $B(\cdot;\theta)$ to obtain a utility-aligned representation $z = B(E(\xi);\theta)\in\R^{d'}$. 
The projection head is optimized via an InfoNCE loss \citep{oord2018representation} on an offline dataset to cluster queries with similar utilities while separating divergent ones.
Comprehensive details regarding offline data construction, pair selection, and the precise loss formulation are provided in \Cref{ssec:ucl}.

% \subsection{Enhancing Query Embeddings via Contrastive Learning}
% \label{sec:EQCL}
% Standard query encoders are trained to align representations with linguistic semantics~\citep{reimers2019sentence}, yielding an embedding geometry that groups prompts by categorical or paraphrase level similarity \citep{chiang2025llm}. In multi-LLM routing, however, similarity should be defined by LLMs' performance alignment. Two queries are considered similar if they achieve high performance on the same models regardless of semantic content \citep{chen2024routerdc}. To address this, we propose ACQB-CL (\Cref{alg:2}), which realigns the query representation space. With a raw prompt $\xi$, we freeze the query encoder $E(\cdot)$ and train a two-layer MLP projection head $B(\cdot;\theta):\mathbb{R}^{d_1}\!\to\!\mathbb{R}^{d_2}$ to obtain utility aligned representation, $z_{(i)} = B(E(\xi_{(i)});\theta)$, using offline dataset $\xi_{(i)}\in\xi$. The head is optimized with an InfoNCE loss \citep{oord2018representation} that pulls positive query pairs closer and pushes negative pairs farther apart, where positives and negatives are defined based on utility similarity. Full details of offline data construction, pair selection, the exact loss, and the algorithm are deferred to \Cref{alg:ucl}.

\section{Experiments} \label{sec:experiments}

\newcommand{\best}[1]{\textbf{#1}}

\begin{figure*}[!t]
    \centering
    
    % --- 설정: 너비 변수 (전체 가로 폭을 2등분) ---
    \newcommand{\groupwidth}{0.49\textwidth} 
    
    % ---------------- Header (K=1, K=2) ----------------
    \begin{minipage}{\groupwidth}
        \centering \scriptsize $K=1$
    \end{minipage}
    \hfill
    \begin{minipage}{\groupwidth}
        \centering \scriptsize $K=2$
    \end{minipage}

    \vspace{0.1cm} % 헤더와 그림 사이 간격

    % ---------------- Main Content ----------------
    
    % --- K=1 Group ---
    \begin{minipage}{\groupwidth}
        \centering
        \begin{subfigure}[c]{0.49\linewidth}
            \includegraphics[width=\linewidth]{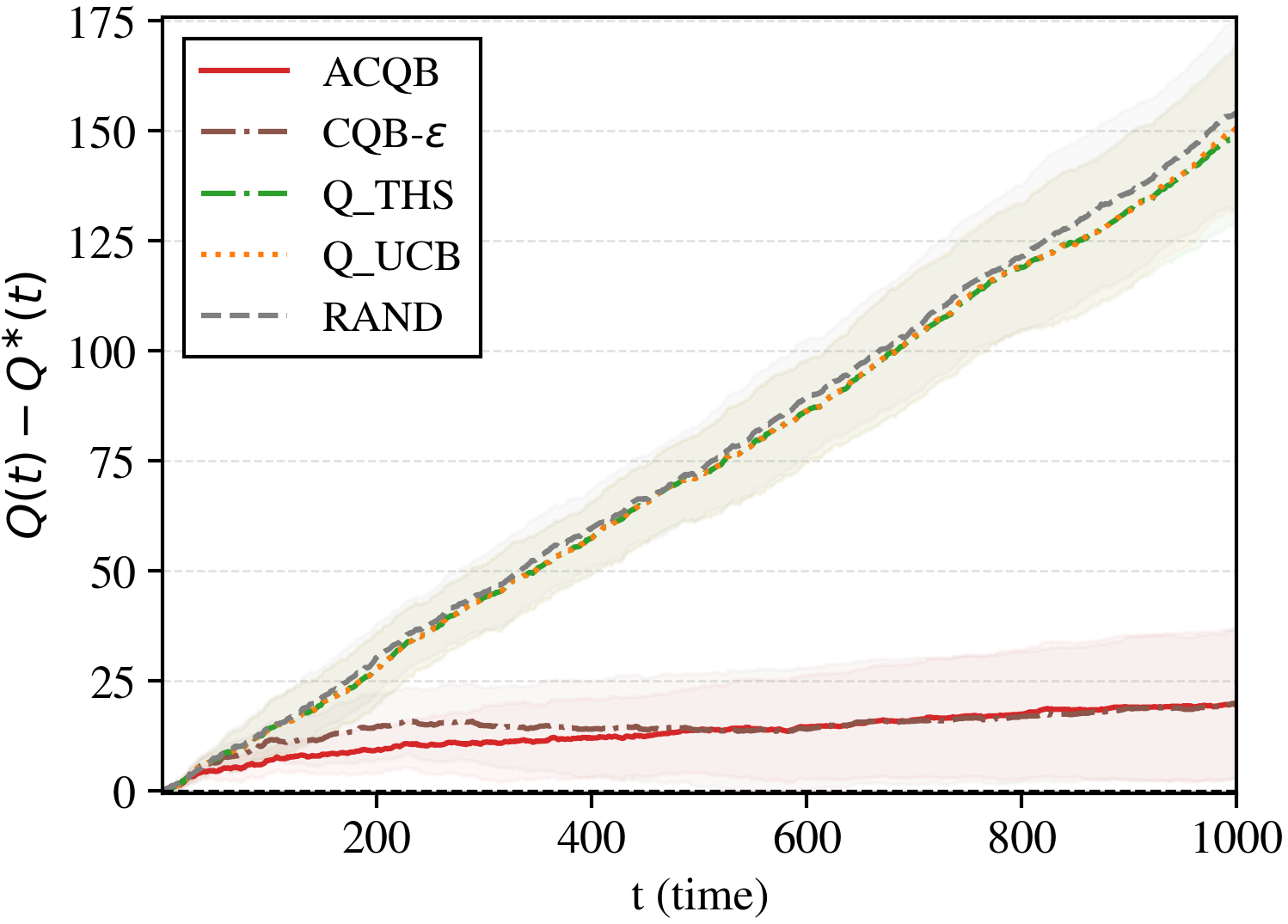}
            \label{fig:sim-k1-q}
        \end{subfigure}
        \hfill
        \begin{subfigure}[c]{0.49\linewidth}
            \includegraphics[width=\linewidth]{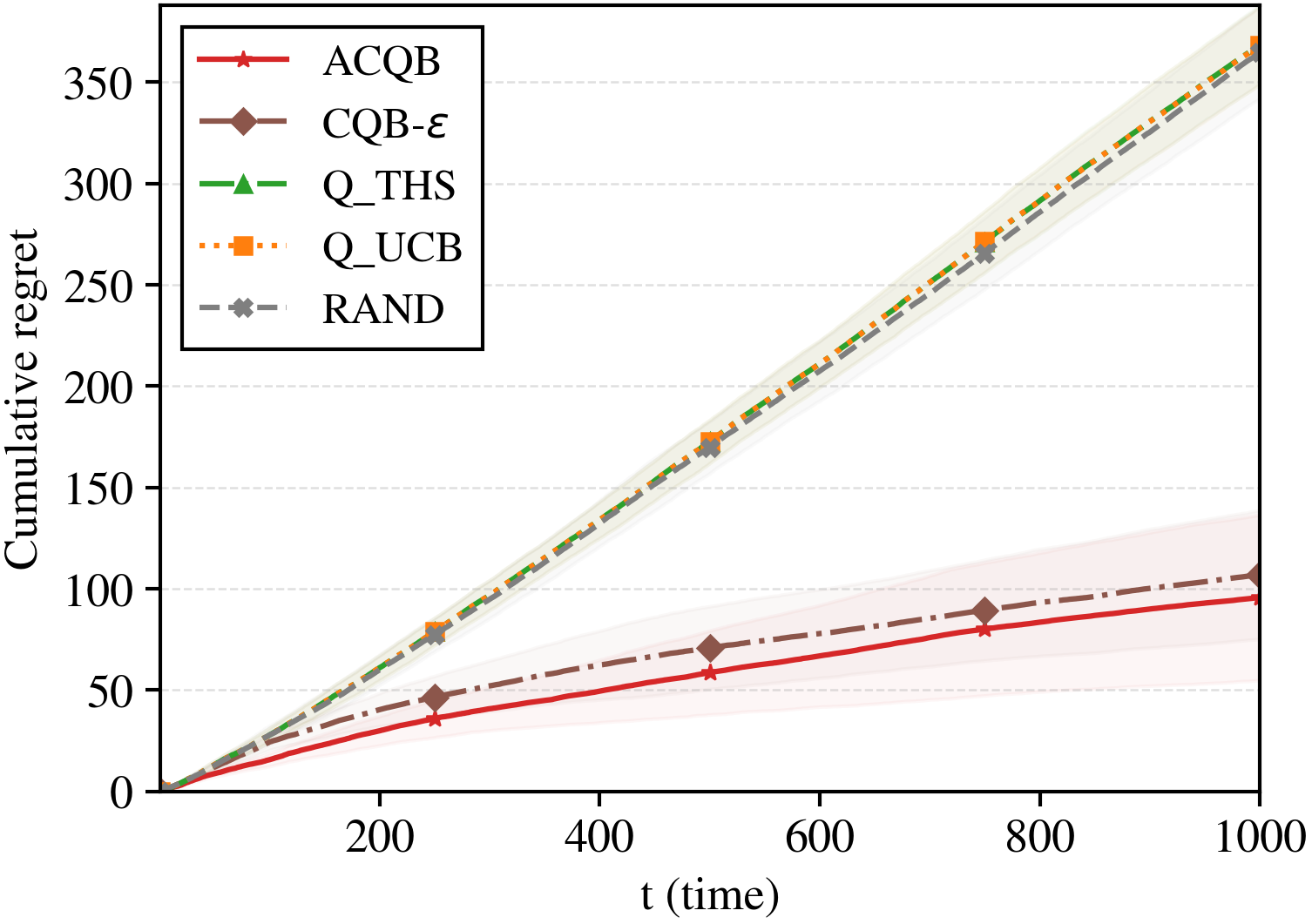}
            \label{fig:sim-k1-r}
        \end{subfigure}
    \end{minipage}
    \hfill
    % --- K=2 Group ---
    \begin{minipage}{\groupwidth}
        \centering
        \begin{subfigure}[c]{0.49\linewidth}
            \includegraphics[width=\linewidth]{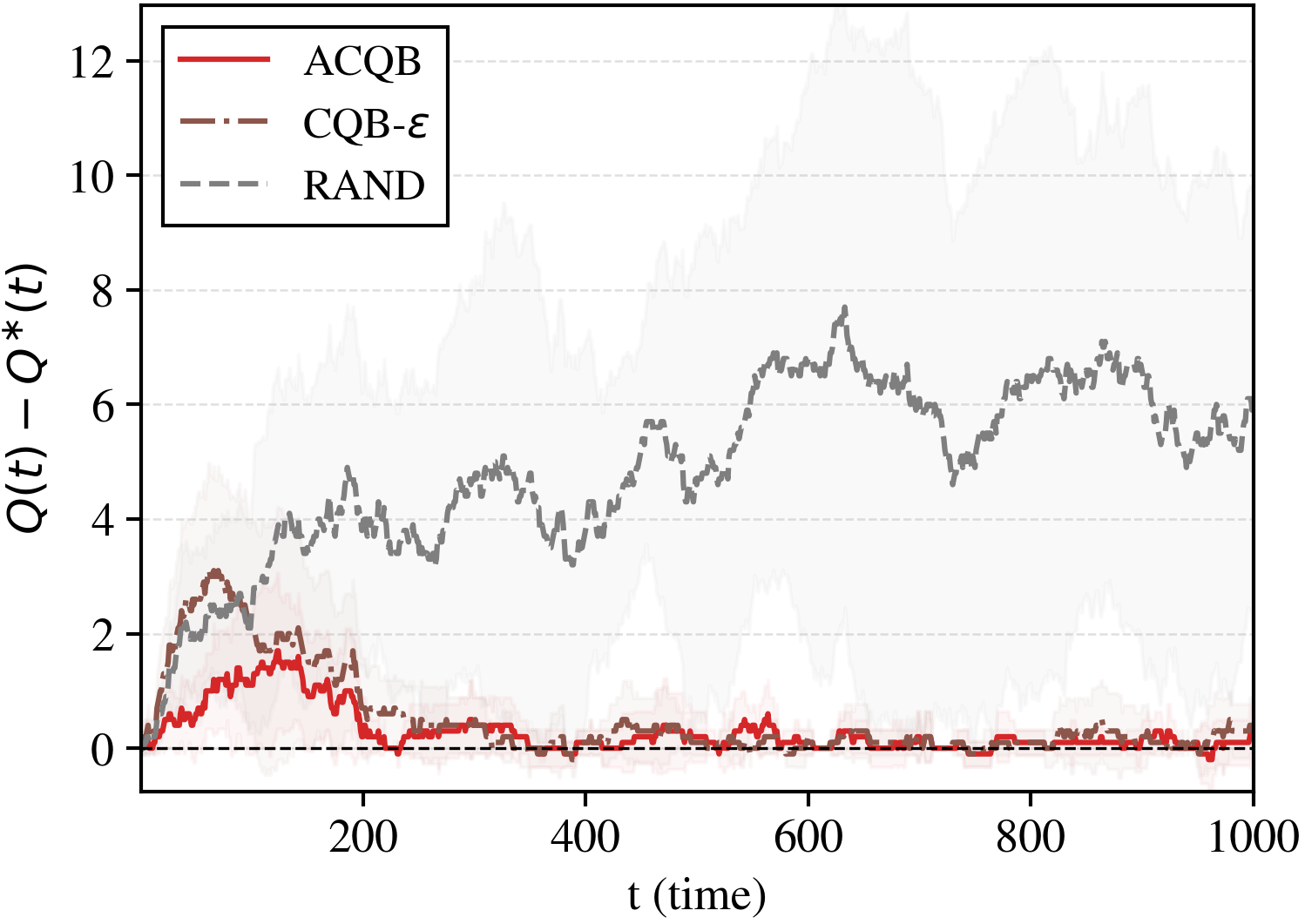}
            \label{fig:sim-k2-q}
        \end{subfigure}
        \hfill
        \begin{subfigure}[c]{0.49\linewidth}
            \includegraphics[width=\linewidth]{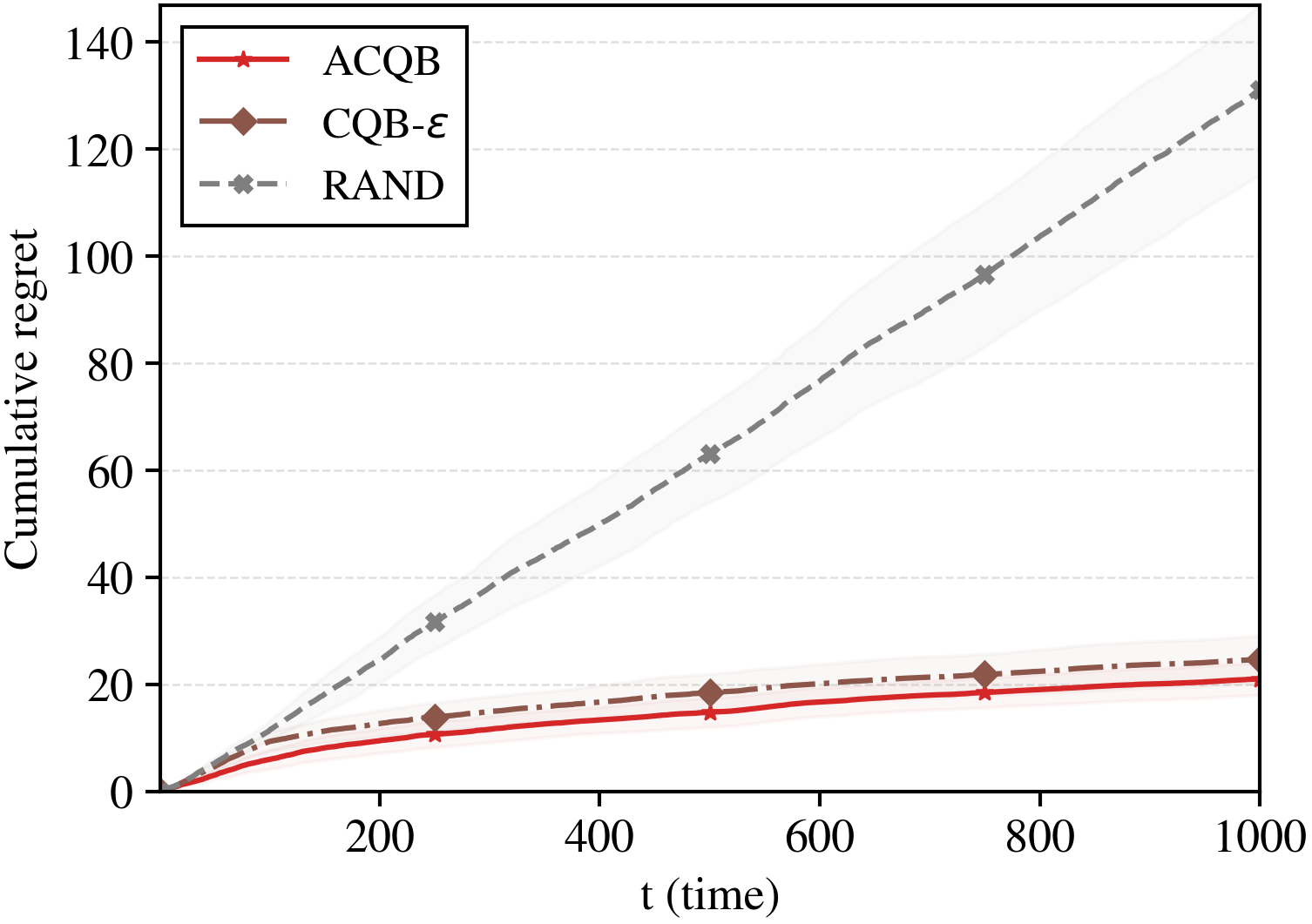}
            \label{fig:sim-k2-r}
        \end{subfigure}
    \end{minipage}
    \vspace{-0.5cm}
    \caption{Queue length and cumulative regret on synthetic data with $\lambda=0.7$, $\epsilon=0.03$, and  $N=5$.}
    \label{fig:sim_1}
\vspace{0.1cm}
    \centering
    
    % --- 설정: 너비 변수 (전체 가로 폭을 2등분) ---
    \renewcommand{\groupwidth}{0.49\textwidth} 
    
    % ---------------- Header (K=1, K=2) ----------------
    \begin{minipage}{\groupwidth}
        \centering \scriptsize $K=1$
    \end{minipage}
    \hfill
    \begin{minipage}{\groupwidth}
        \centering \scriptsize $K=2$
    \end{minipage}

    \vspace{0.1cm} % 헤더와 그림 사이 간격

    % ---------------- Main Content ----------------
    
    % --- K=1 Group ---
    \begin{minipage}{\groupwidth}
        \centering
        \begin{subfigure}[c]{0.49\linewidth}
            \includegraphics[width=\linewidth]{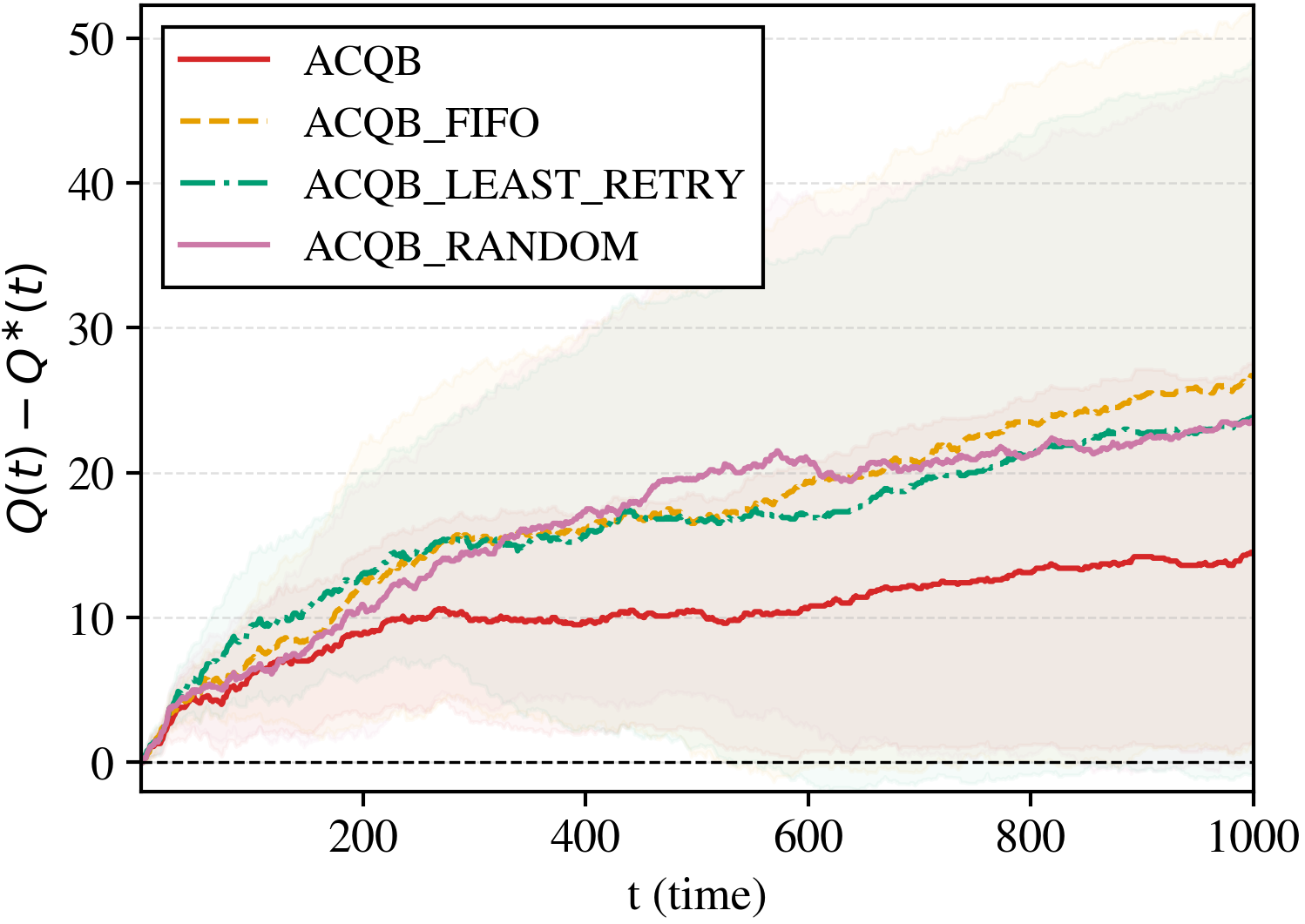}
            \label{fig:sim-sch-k1-q}
        \end{subfigure}
        \hfill
        \begin{subfigure}[c]{0.49\linewidth}
            \includegraphics[width=\linewidth]{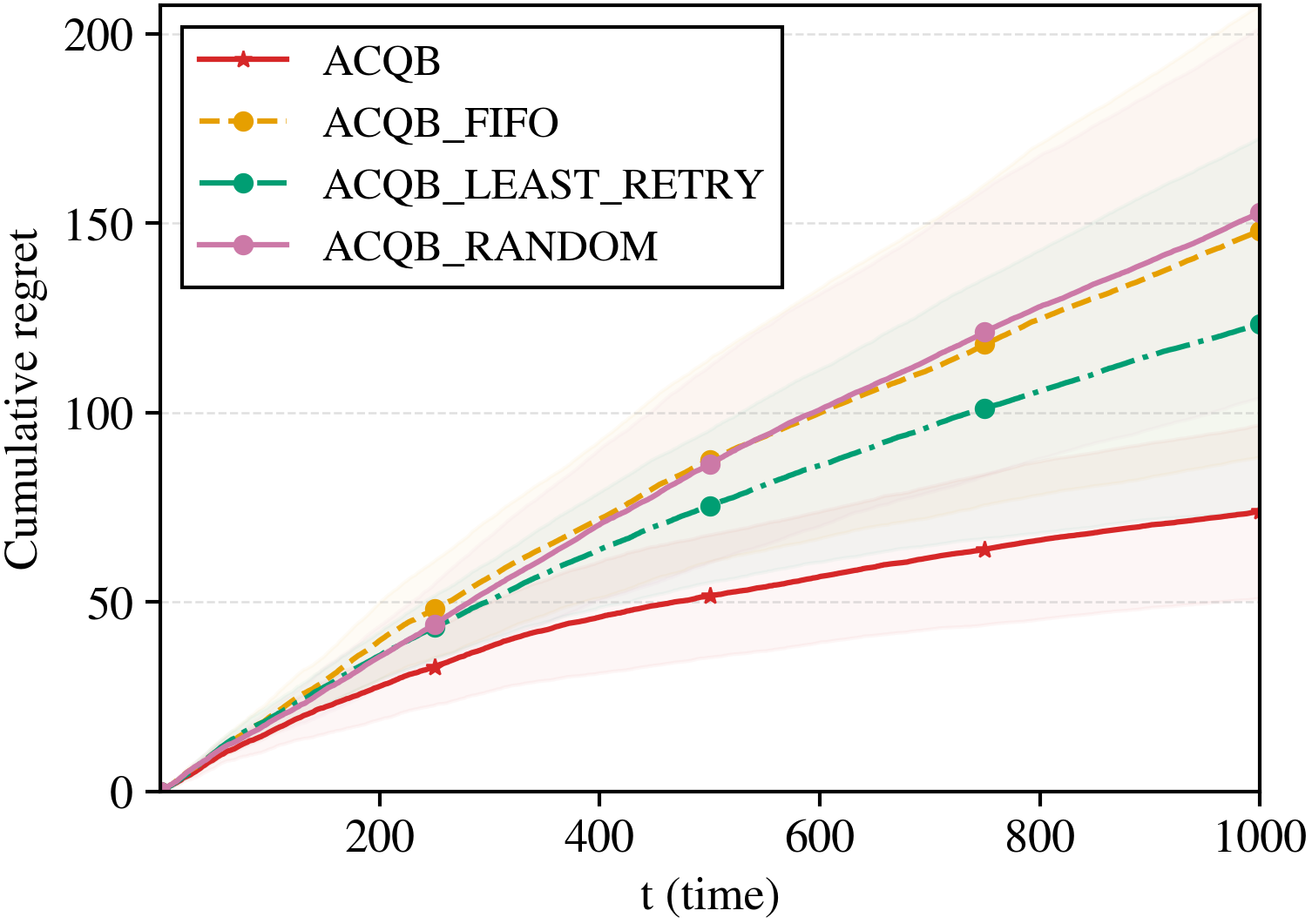}
            \label{fig:sim-sch-k1-r}
        \end{subfigure}
    \end{minipage}
    \hfill
    % --- K=2 Group ---
    \begin{minipage}{\groupwidth}
        \centering
        \begin{subfigure}[c]{0.49\linewidth}
            \includegraphics[width=\linewidth]{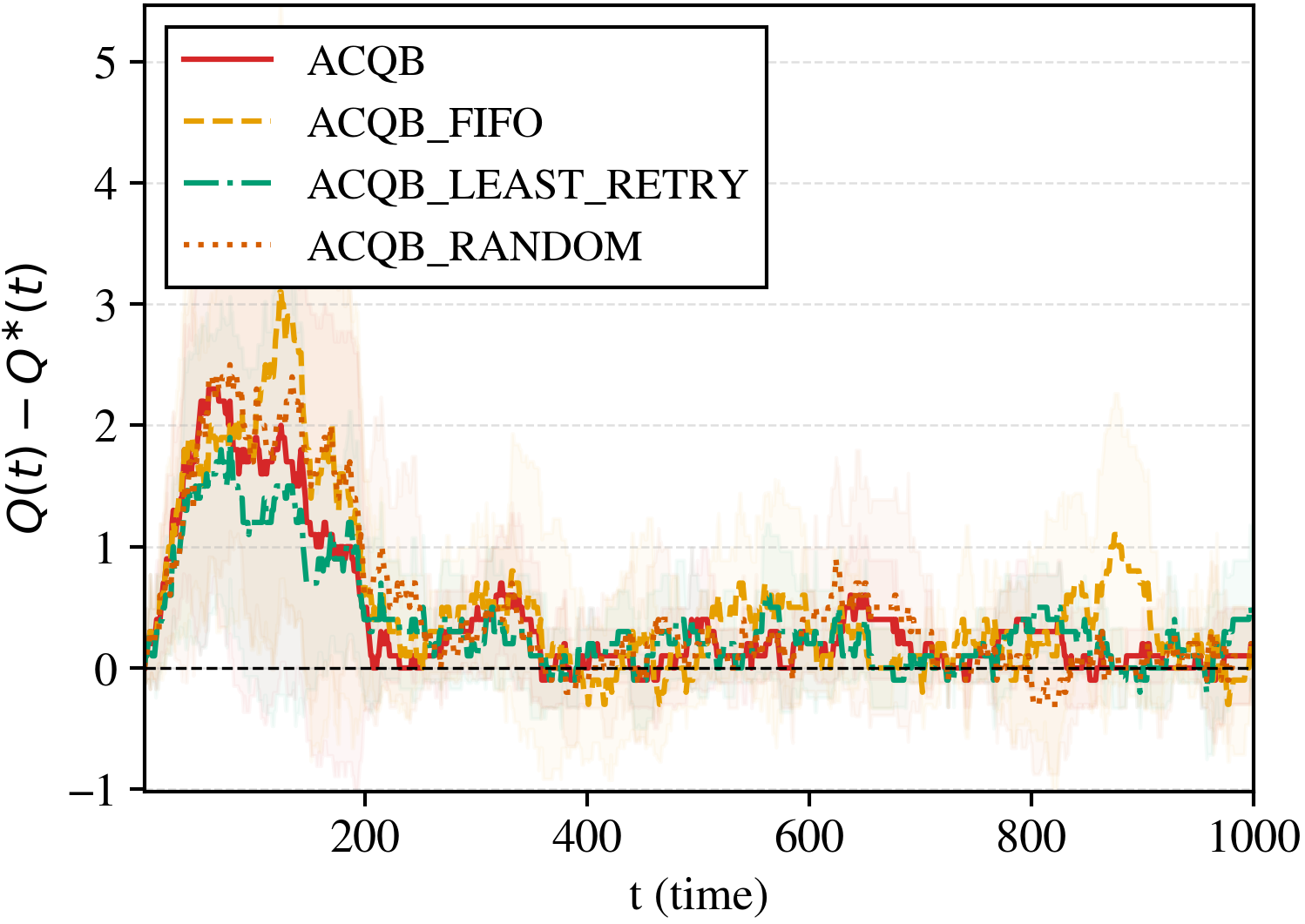}
            \label{fig:sim-sch-k2-q}
        \end{subfigure}
        \hfill
        \begin{subfigure}[c]{0.49\linewidth}
            \includegraphics[width=\linewidth]{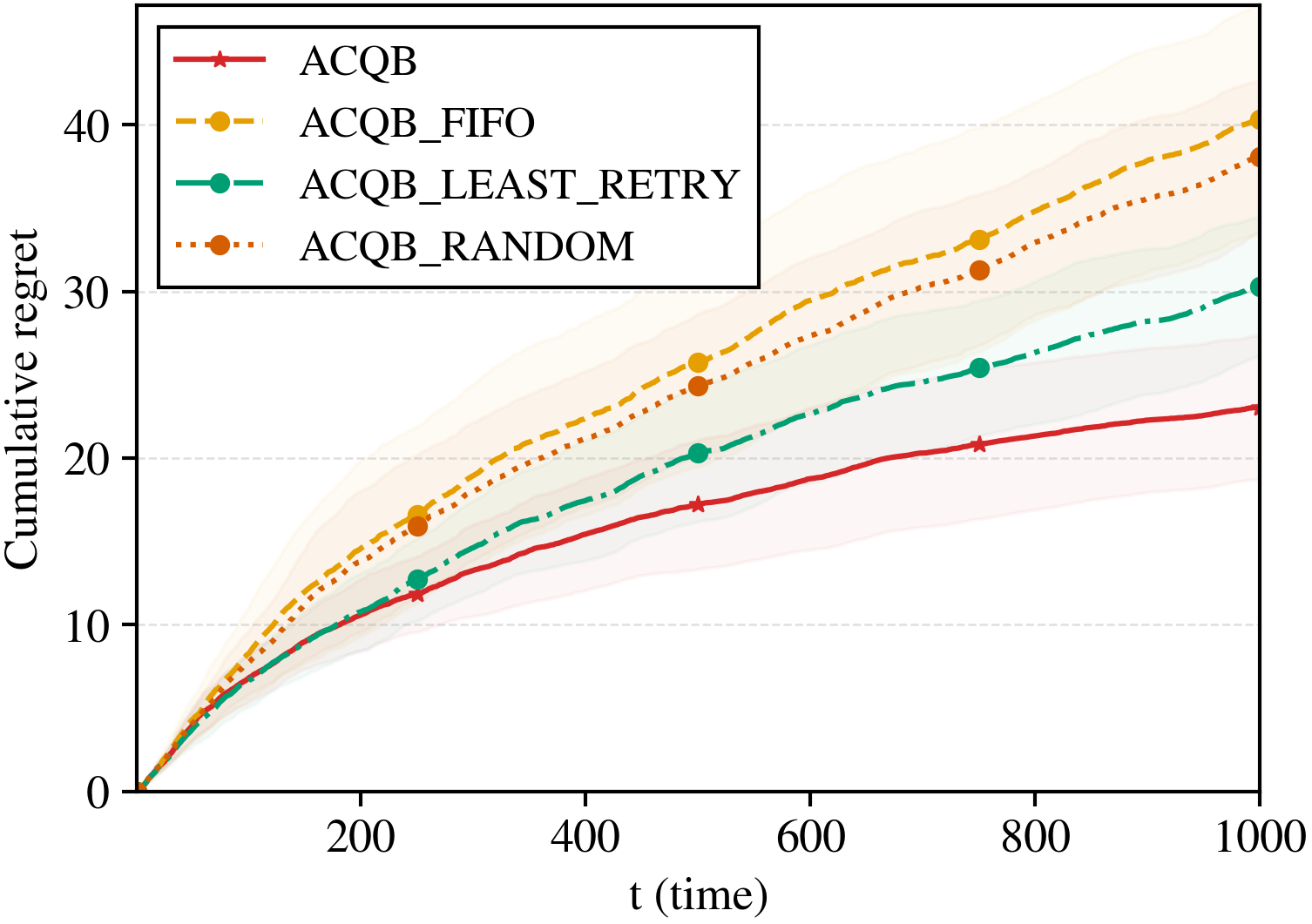}
            \label{fig:sim-sch-k2-r}
        \end{subfigure}
    \end{minipage}
    \vspace{-0.5cm}
    \caption{Ablation on ACQB scheduling on synthetic data with $\lambda=0.7$, $\epsilon=0.03$, and  $N=5$.}
    \label{fig:sim_sch}

\vspace{0.12cm}
\begin{minipage}{\textwidth}
\centering
\captionof{table}{Throughput (T.P.) and wall-clock time (W.C., seconds per round, excluding LLM inference) on synthetic data at $T=500$ and $T=1000$.}
\label{tab:throughput_wallclock_t500_t1000}
\small
\setlength{\tabcolsep}{3.2pt}
\renewcommand{\arraystretch}{1.12}
\resizebox{\linewidth}{!}{%
\begin{tabular}{llcccccccc}
\toprule
Time & Metric
& ACQB & ACQB-FIFO & ACQB-RR & ACQB-RAND & CQB-$\v\veps$ & Q-ThS & Q-UCB & RAND \\
\midrule

$T=500$ & T.P.
& \best{0.672}
& 0.660
& 0.660
& 0.655
& 0.658
& 0.551
& 0.551
& 0.549 \\

& W.C
& 0.0130
& 0.0135
& 0.0151
& 0.0125
& 0.0147
& 0.0000
& 0.0000
& 0.0001 \\

\midrule

$T=1000$ & T.P.
& \best{0.680}
& 0.667
& 0.670
& 0.670
& 0.667
& 0.545
& 0.544
& 0.540 \\

& W.C
& 0.0126
& 0.0114
& 0.0129
& 0.0121
& 0.0142
& 0.0001
& 0.0001
& 0.0001 \\

\bottomrule
\end{tabular}%
}
\end{minipage}
\vspace{-1.0em}
\end{figure*}

In this section, we empirically evaluate the performance of our proposed algorithms.

\subsection{Experiments on synthetic data} 
\label{ssec:sync}

\para{Experimental setup}
We generate random instances for $K=\{1,2\}$ by setting the arrival rates to $\lambda=0.7$ with the parameters $d=5$, $\eps=0.03$, $N=5$, and $T=1{,}000$. The context vectors $x\in\R^d$ and disjoint parameters $\theta_j^*\in\R^d$ for $j\in[N]$ are sampled from $\Unif(-1,1)$, ensuring that the contexts satisfy the slackness condition, i.e., $\max_{S\in\cal C} R(x,S,\Theta^*) \geq \lambda + \eps$. We perform 10 independent runs and report the average queue length regret and average cumulative regret with $\pm 1$ standard deviation.
We benchmark ACQB against five baseline algorithms for $K=1$: (i) the Optimal policy; (ii) the Random policy, which selects a random query and assortment; (iii--iv) two MAB-based queueing bandit algorithms from \citep{krishnasamy2021learning}; and (v) the contextual queueing bandit algorithm (Algorithm 1 from \citet{bae2026queuelengthregretbounds}). To the best of our knowledge, aside from these queueing bandit-based approaches, no other existing baselines are capable of learning from user retrials (implicit feedback).
For the $K=2$ setting, since baselines (iii) and (iv) are incompatible with the MNL model, we compare ACQB only with (i), (ii), and the MNL-adapted version of (v). Detailed descriptions of each baseline policy are provided in \Cref{sec:exp:baselines}.

\para{Performance comparison}
As shown in \Cref{fig:sim_1}, ACQB has lower queue length and cumulative regret values than the baselines. To isolate the scheduling rule, we also fix the routing rule of ACQB and replace only the scheduling policy with FIFO, round-robin over the least-served jobs (RR), or random scheduling. \Cref{fig:sim_sch} shows that ACQB scheduling has lower queue length regret than these variants.
\Cref{tab:throughput_wallclock_t500_t1000} reports throughput and wall-clock time. ACQB achieves higher throughput at both $T=500$ and $T=1{,}000$, with wall-clock time comparable to the other contextual policies. This overhead is small relative to LLM inference time: a response time can be approximated by $\mathrm{TTFT}+(\text{number of output tokens})/(\text{tokens/sec})$, and a public benchmark reports TTFT $2.84$s and throughput $30$ tokens/sec for GPT-4 Turbo.\footnote{\url{https://benchlm.ai/llm-speed}} The near-zero W.C. entries for Q-ThS, Q-UCB, and RAND are rounded values with very small but nonzero overheads. Details of the scheduling variants and additional experiments with varying $N$ and $\eps$ are provided in \Cref{sec:exp:baselines,sec:exp:sim}.

\para{Exploration variants}
ACQB triggers random exploration only when a new query arrives, which preserves i.i.d. arriving contexts in the eigenvalue-growth argument. In \Cref{sec:exp:exploration-variants}, we compare this design with heuristic variants that trigger exploration independently of arrivals and select the exploration query by FIFO, LIFO, round-robin over the least-served
queries, or random scheduling.

\subsection{Experiments on offline routing datasets}
\label{ssec:offline-routing}
\begin{figure*}[t]
    \centering
    
    % --- 설정: 너비 변수 ---
    \newcommand{\lbwidth}{0.02\textwidth} % 세로 라벨 너비
    \newcommand{\blockwidth}{0.46\textwidth} % 그림 블록 너비 (이미지 2개 포함)
    
    % ---------------- Header (K=1, K=2) ----------------
    % 1. Left Spacing
    \begin{minipage}{\lbwidth} \centering \phantom{L} \end{minipage}%
    \hfill%
    % 2. K=1 Header
    \begin{minipage}{\blockwidth} \centering \scriptsize $K=1$ \end{minipage}%
    \hfill%
    % 3. Middle Spacing
    \begin{minipage}{\lbwidth} \centering \phantom{L} \end{minipage}%
    \hfill%
    % 4. K=2 Header
    \begin{minipage}{\blockwidth} \centering \scriptsize $K=2$ \end{minipage}%
    
    \vspace{0.1cm} % 헤더와 첫 줄 사이 간격

    % ================= ROW 1: lambda=0.55 =================
    \begin{minipage}[c]{\lbwidth}
        \centering \rotatebox{90}{\scriptsize $\lambda=0.7$}
    \end{minipage}%
    \hfill%
    \begin{minipage}[c]{\blockwidth}
        \centering
        \begin{subfigure}[c]{0.49\linewidth}
            \includegraphics[width=\linewidth]{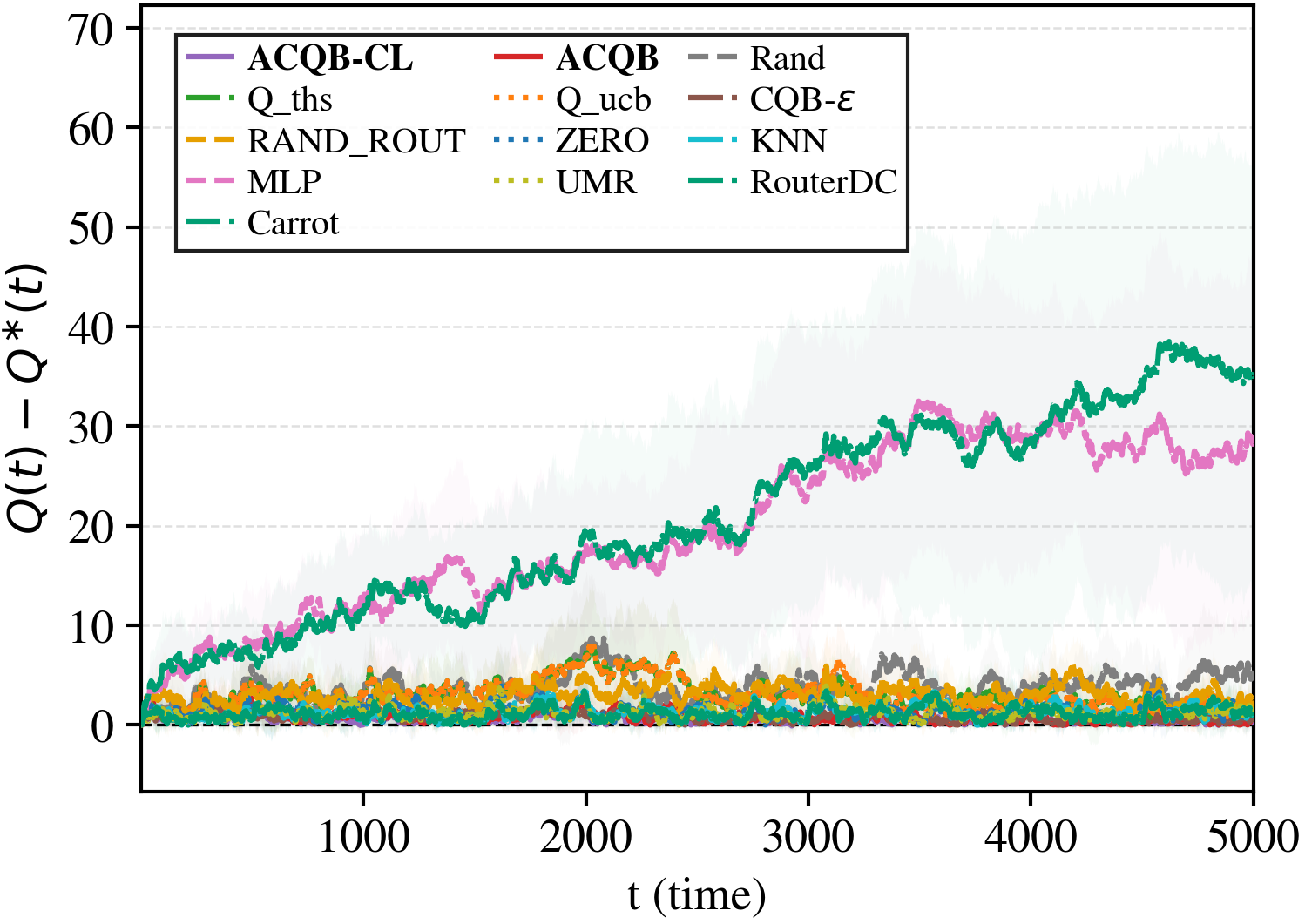}
        \end{subfigure}\hfill
        \begin{subfigure}[c]{0.49\linewidth}
            \includegraphics[width=\linewidth]{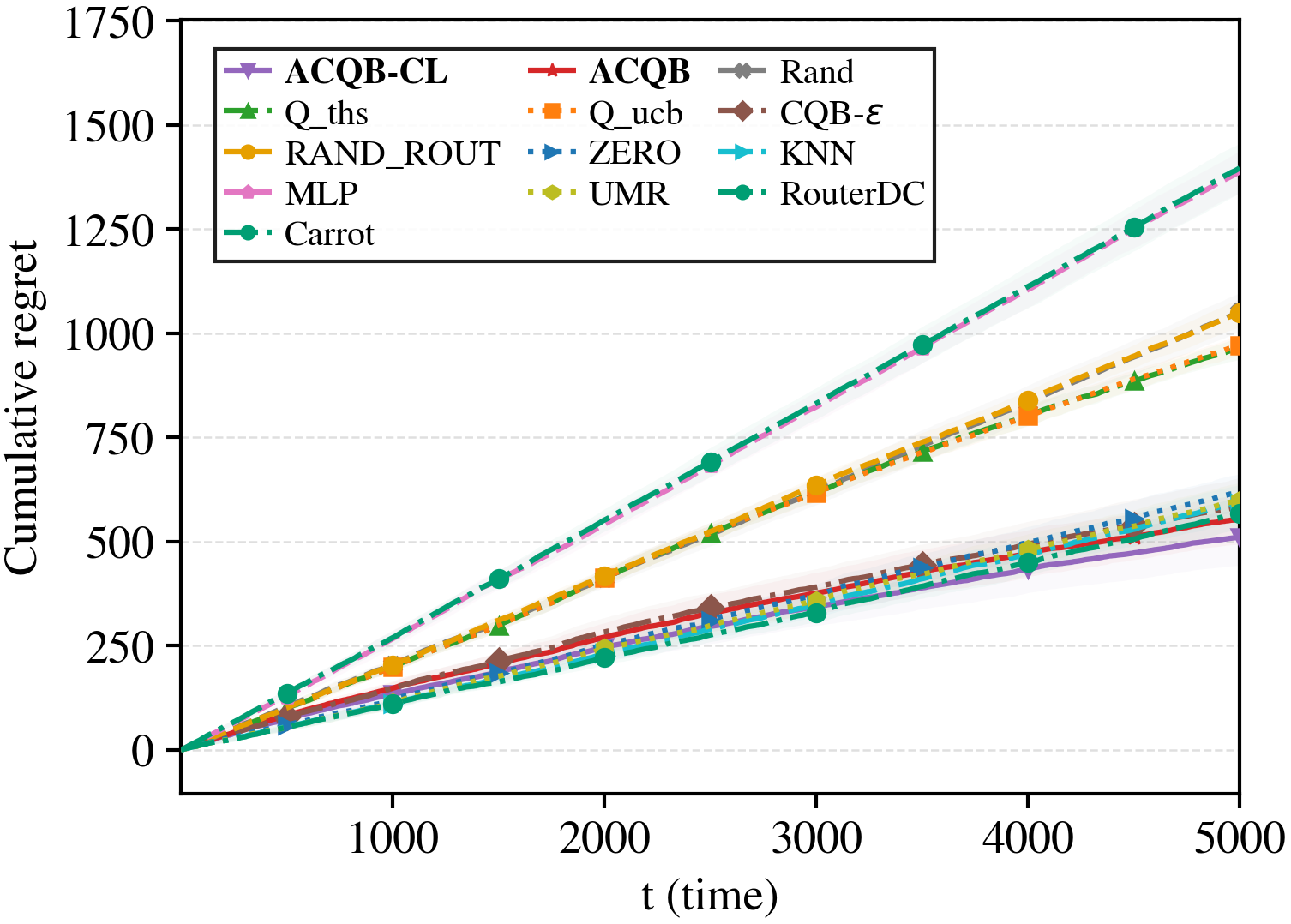}
        \end{subfigure}
    \end{minipage}%
    \hfill%
    \begin{minipage}[c]{\lbwidth} % 중앙 라벨 (필요 없다면 삭제 후 \blockwidth 조정)
        \centering \rotatebox{90}{\scriptsize $\lambda=0.75$}
    \end{minipage}%
    \hfill%
    \begin{minipage}[c]{\blockwidth}
        \centering
        \begin{subfigure}[c]{0.49\linewidth}
            \includegraphics[width=\linewidth]{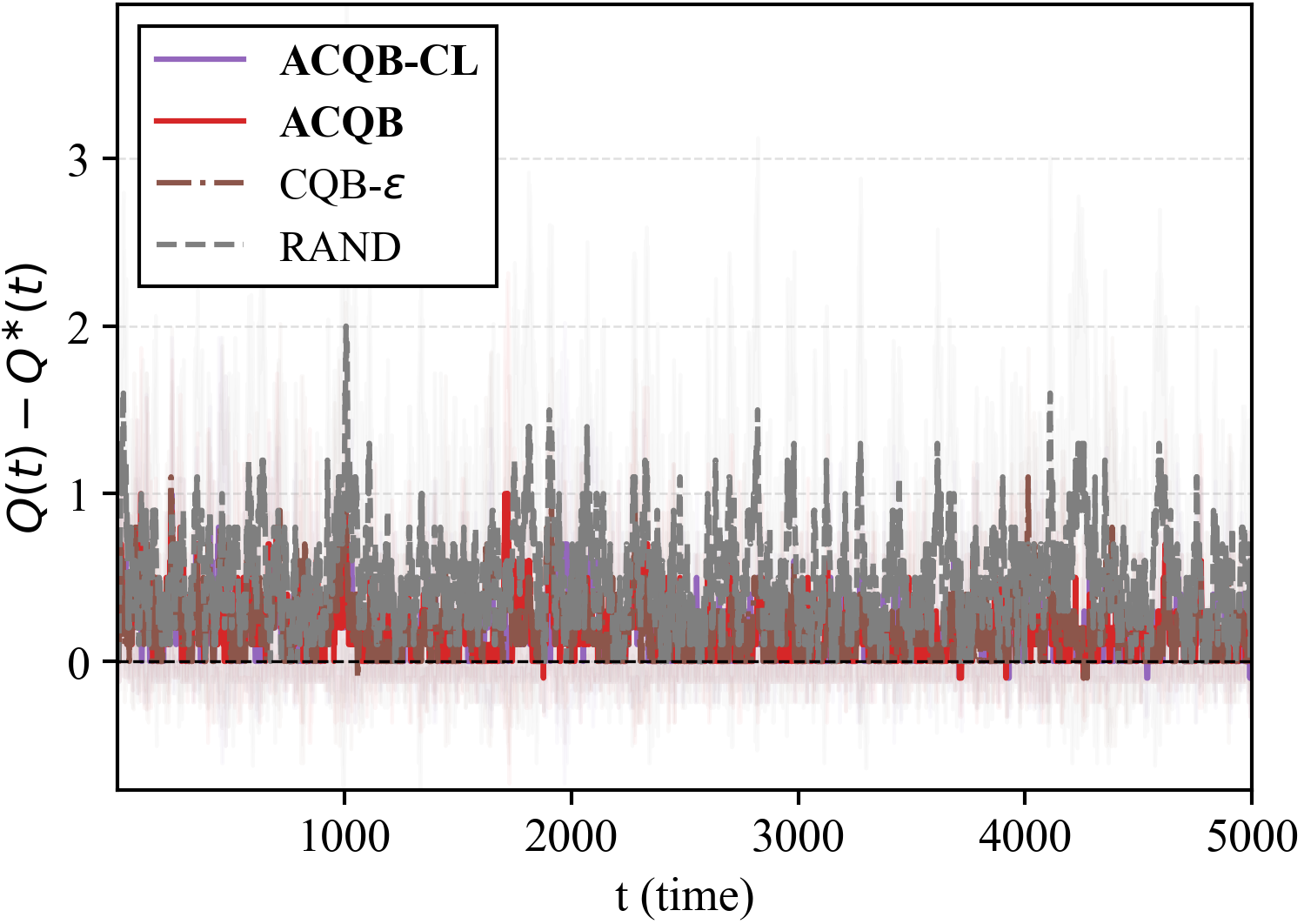}
        \end{subfigure}\hfill
        \begin{subfigure}[c]{0.49\linewidth}
            \includegraphics[width=\linewidth]{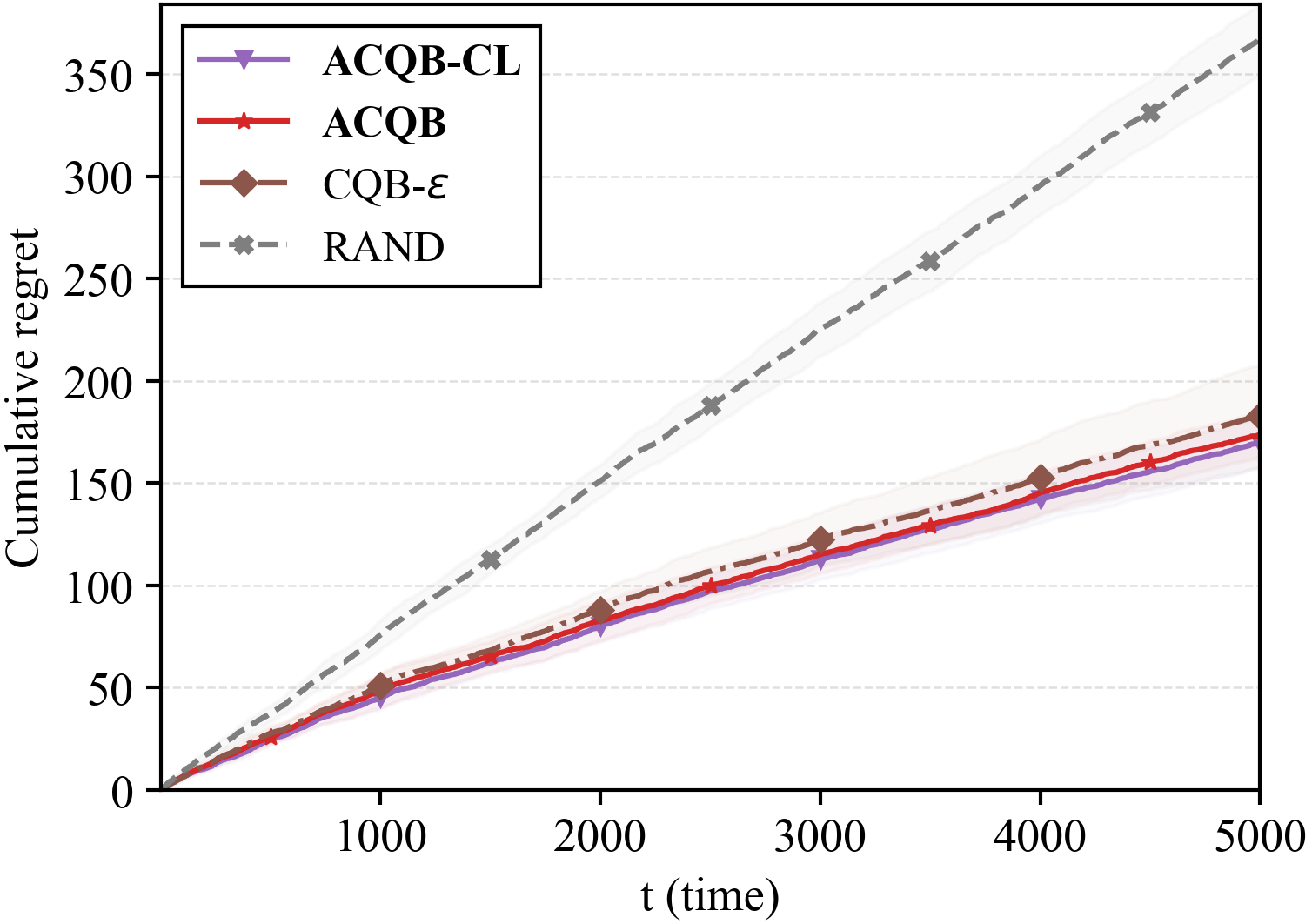}
        \end{subfigure}
    \end{minipage}%

    \vspace{0.15cm} % 행 간 간격

    % ================= ROW 2: lambda=0.65 (예시) =================
    \begin{minipage}[c]{\lbwidth}
        \centering \rotatebox{90}{\scriptsize $\lambda=0.8$}
    \end{minipage}%
    \hfill%
    \begin{minipage}[c]{\blockwidth}
        \centering
        \begin{subfigure}[c]{0.49\linewidth}
            \includegraphics[width=\linewidth]{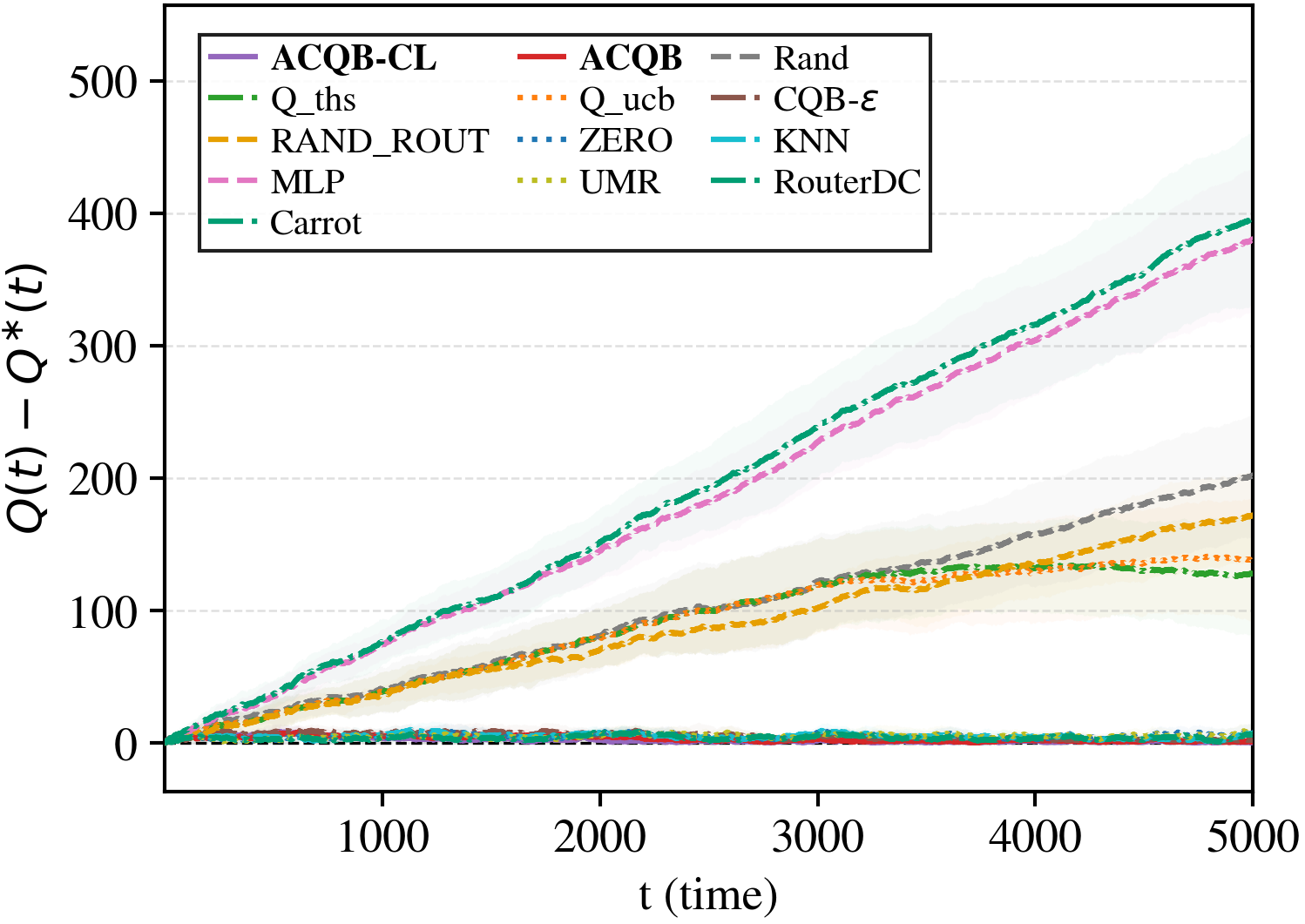}
        \end{subfigure}\hfill
        \begin{subfigure}[c]{0.49\linewidth}
            \includegraphics[width=\linewidth]{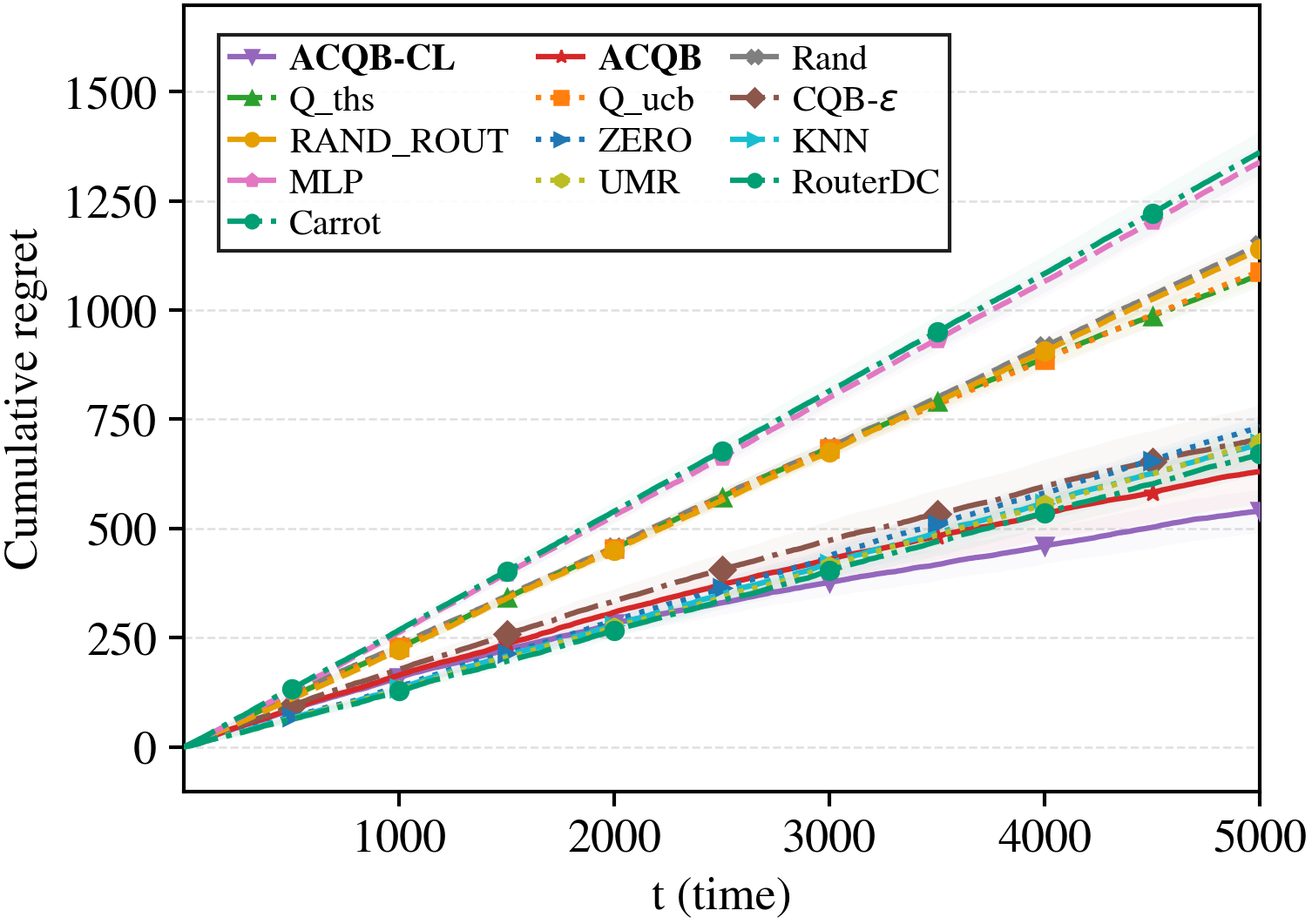}
        \end{subfigure}
    \end{minipage}%
    \hfill%
    \begin{minipage}[c]{\lbwidth}
        \centering \rotatebox{90}{\scriptsize $\lambda=0.85$}
    \end{minipage}%
    \hfill%
    \begin{minipage}[c]{\blockwidth}
        \centering
        \begin{subfigure}[c]{0.49\linewidth}
            \includegraphics[width=\linewidth]{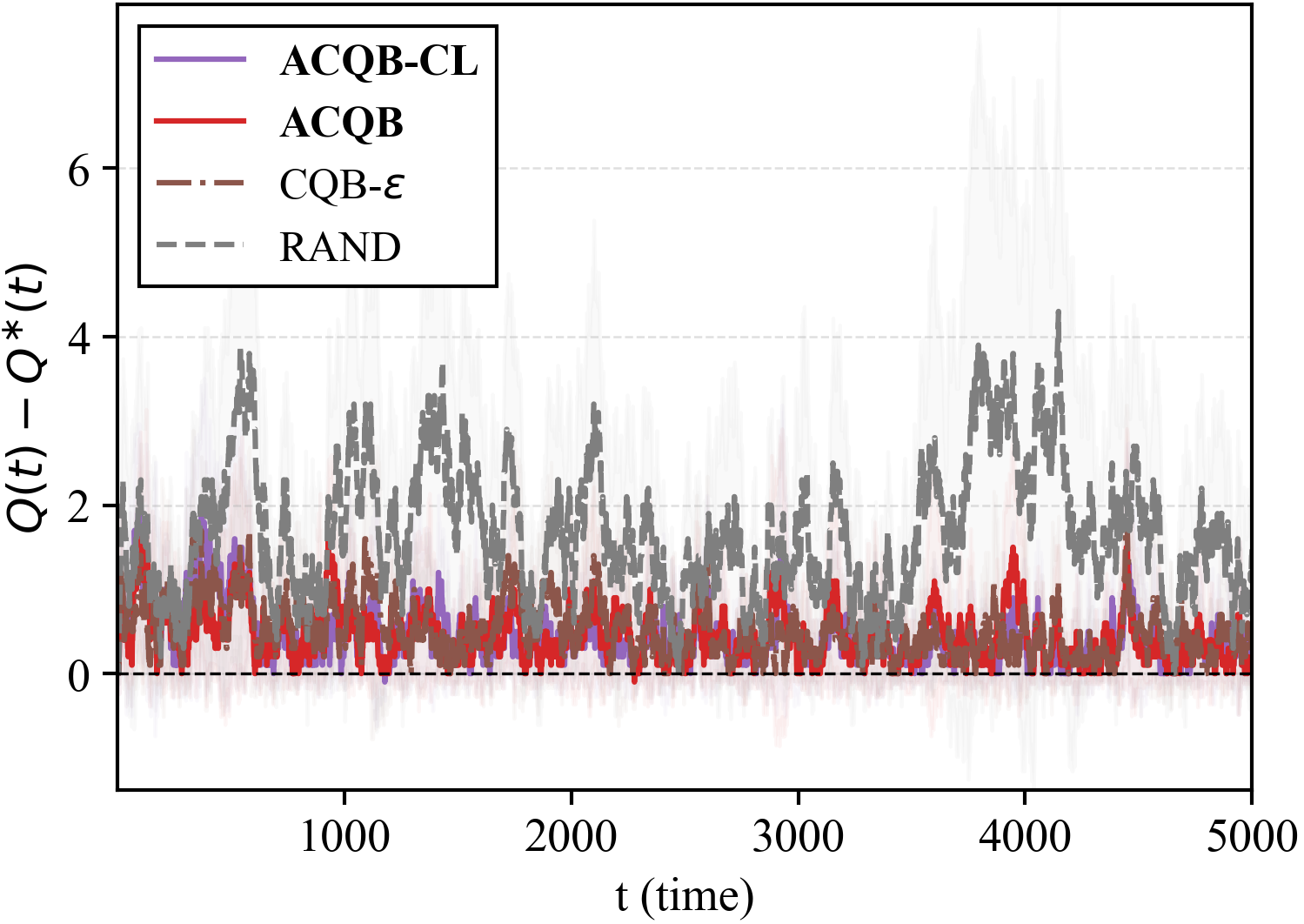}
        \end{subfigure}\hfill
        \begin{subfigure}[c]{0.49\linewidth}
            \includegraphics[width=\linewidth]{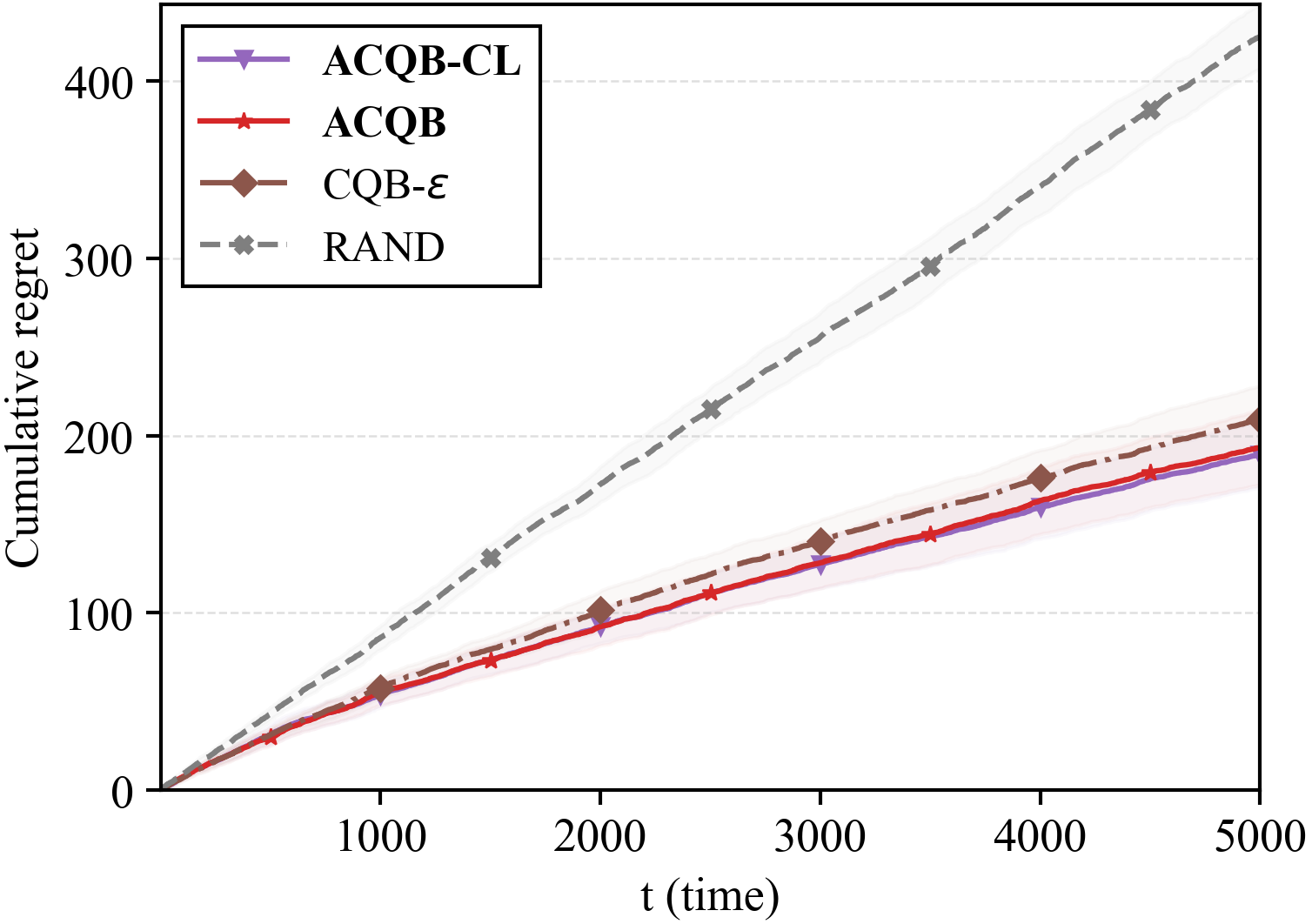}
        \end{subfigure}
    \end{minipage}%

    \vspace{0.15cm} % 행 간 간격

    % ================= ROW 3: lambda=0.75 (예시) =================
    \begin{minipage}[c]{\lbwidth}
        \centering \rotatebox{90}{\scriptsize $\lambda=0.9$}
    \end{minipage}%
    \hfill%
    \begin{minipage}[c]{\blockwidth}
        \centering
        \begin{subfigure}[c]{0.49\linewidth}
            \includegraphics[width=\linewidth]{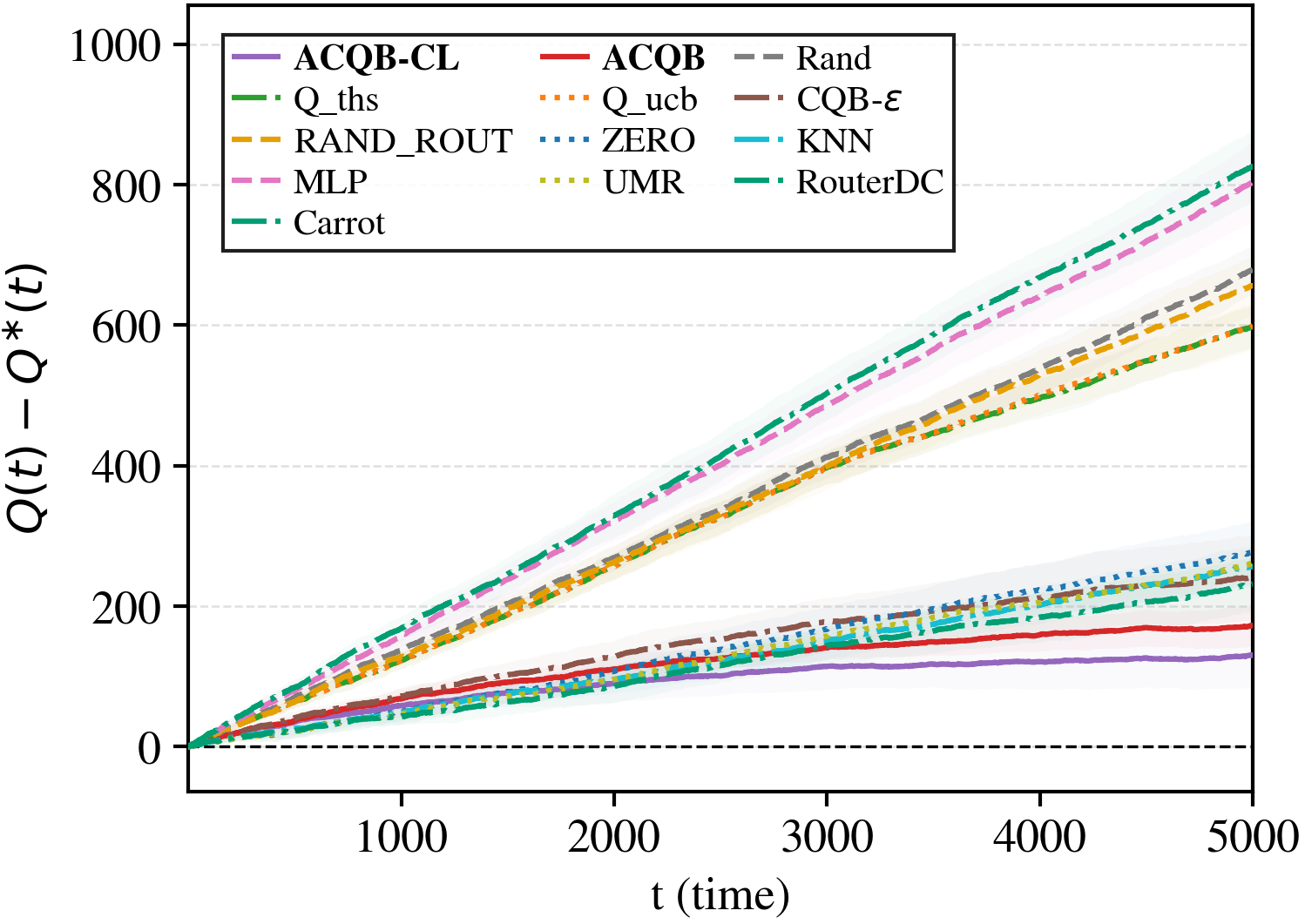}
        \end{subfigure}\hfill
        \begin{subfigure}[c]{0.49\linewidth}
            \includegraphics[width=\linewidth]{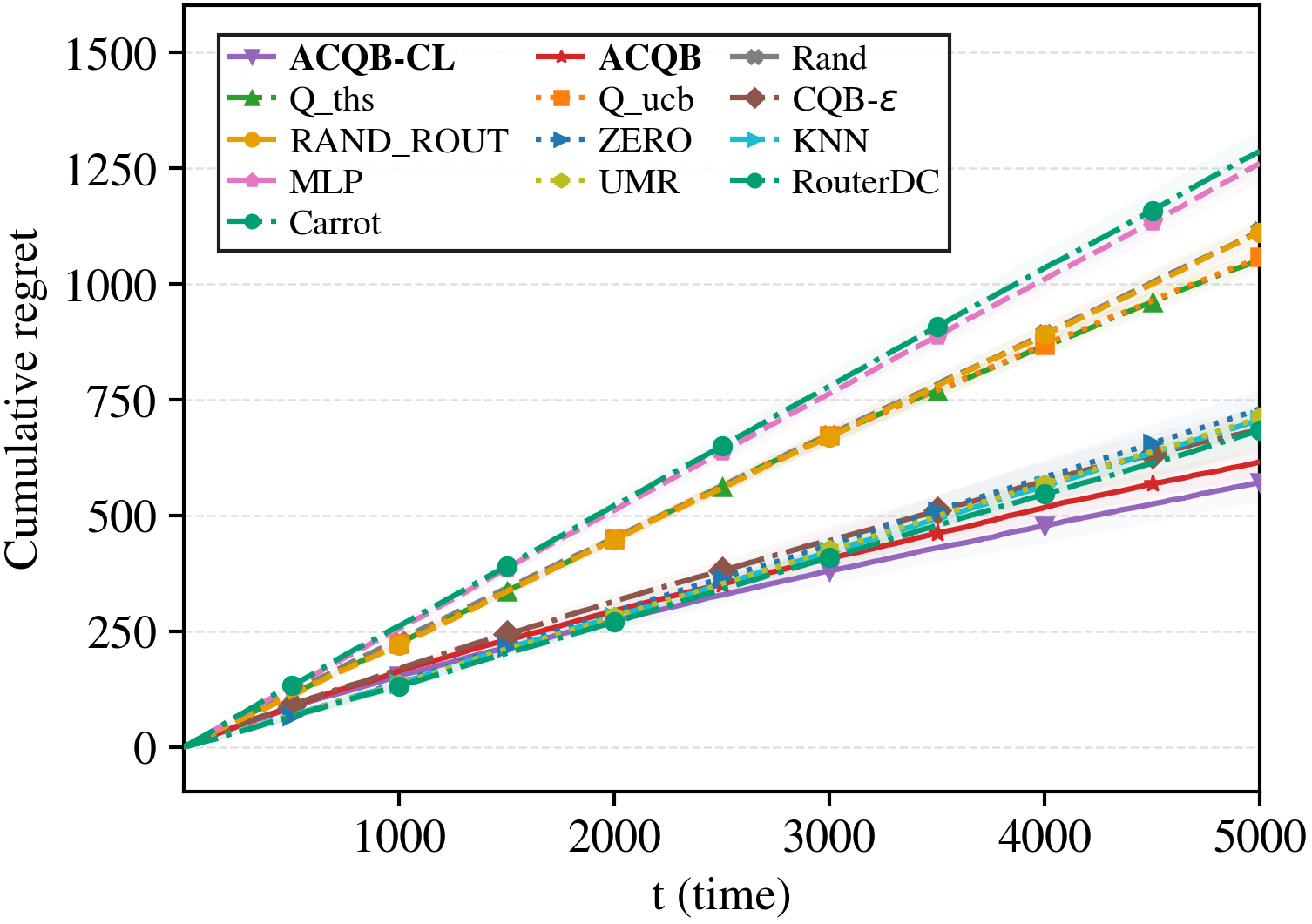}
        \end{subfigure}
    \end{minipage}%
    \hfill%
    \begin{minipage}[c]{\lbwidth}
        \centering \rotatebox{90}{\scriptsize $\lambda=0.95$}
    \end{minipage}%
    \hfill%
    \begin{minipage}[c]{\blockwidth}
        \centering
        \begin{subfigure}[c]{0.49\linewidth}
            \includegraphics[width=\linewidth]{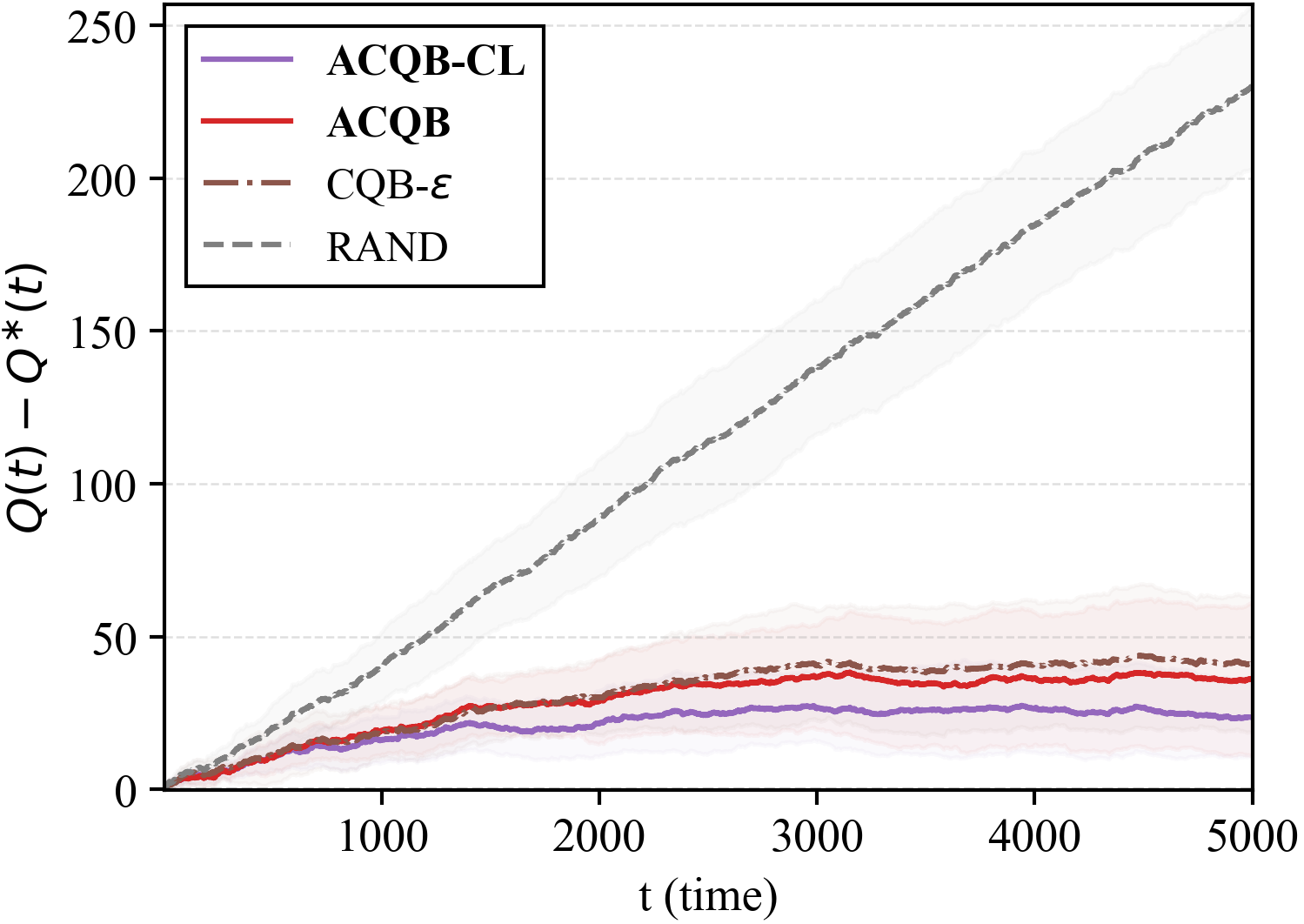}
        \end{subfigure}\hfill
        \begin{subfigure}[c]{0.49\linewidth}
            \includegraphics[width=\linewidth]{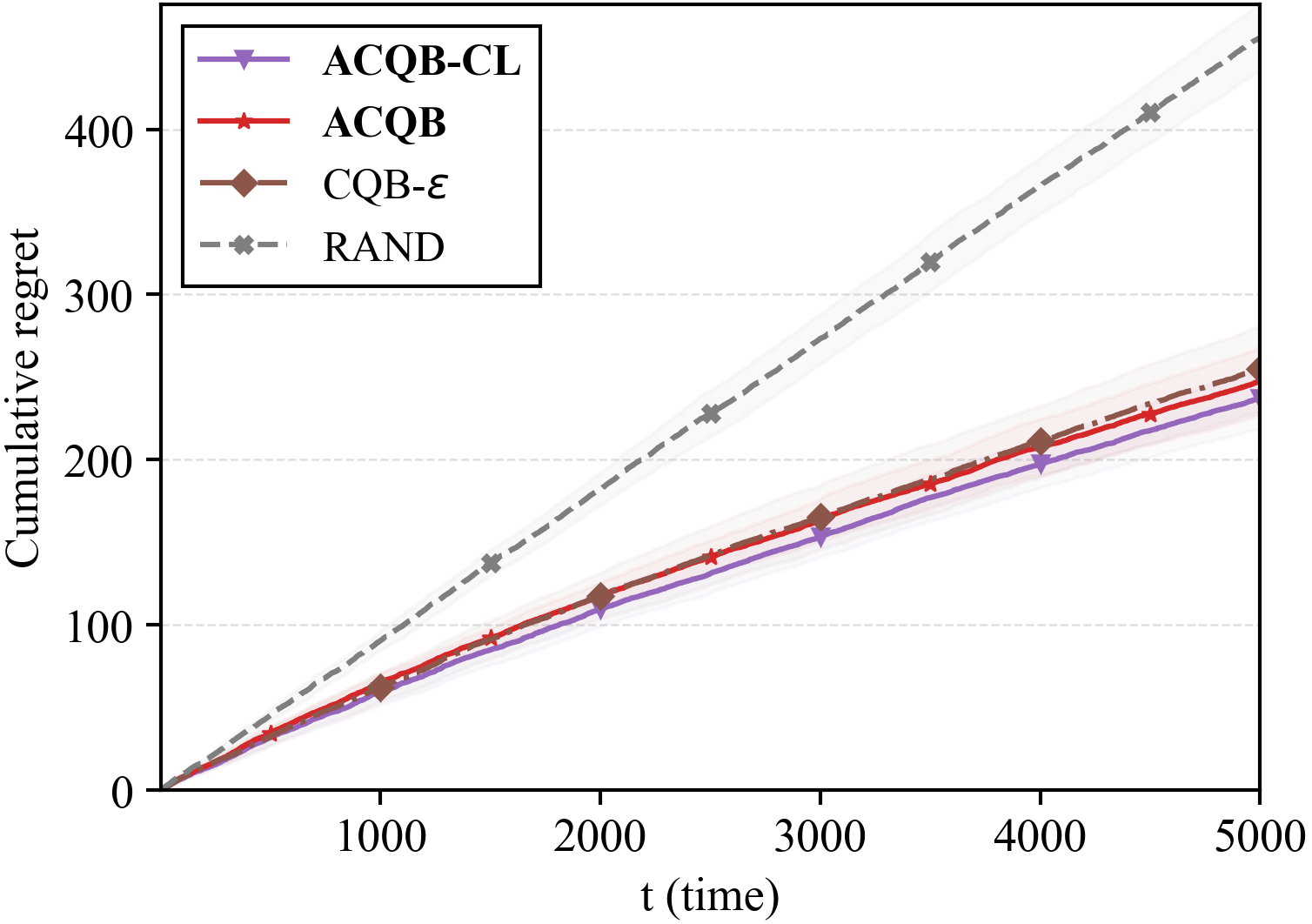}
        \end{subfigure}
    \end{minipage}%

    \vspace{-0.1cm}
    % \caption{Queue length and cumulative regret on \textsc{SPROUT-o3mini} dataset across various arrival rates $\lambda$.}
    \caption{Queue length and cumulative regret on the \textsc{SPROUT-o3mini} dataset across various arrival rates $\lambda$.}
    \label{fig:main_results}
\vspace{-0.8em}
\end{figure*}
We next evaluate ACQB and ACQB-CL through simulations on offline LLM routing datasets. We use three datasets: \textsc{SPROUT-o3mini} (14 LLMs)~\citep{somerstep2025carrot}, \textsc{EmbedLLM} (112 LLMs)~\citep{zhuang2024embedllm}, and \textsc{RouterBench} (11 LLMs)~\citep{hu2024routerbench}.
All datasets include benchmark-specific correctness scores (e.g., exact-match or graded accuracy) and inference costs. For \textsc{RouterBench}, we use the provided cost metadata, while for \textsc{EmbedLLM} and \textsc{SPROUT-o3mini}, we construct the cost map using reports from \citep{shirkavand2025cost, somerstep2025carrot}.
For the query embedding, we adopt the \texttt{sentence-transformers/all-MiniLM-L6-v2} ~\citep{sentence-transformers-allminilm-2021} as the backbone query encoder ($d=384$). In the online phase, we run the contextual bandit simulation over an online prompt pool of size 1{,}000 for a horizon of $T=5{,}000$.

\newcommand{\second}[1]{\underline{#1}}

\begin{table*}[t]
\centering
\caption{Mean ($\mu$), standard deviation ($\sigma$) of queue length gap, and throughput (T.P.) at time steps 2500 and 5000 for $K=1$ (arrival setting: $\lambda_{\mathrm{Sprout}}=0.8$,  $\lambda_{\mathrm{EmbedLLM}}=0.4$,
$\lambda_{\mathrm{RouterBench}}=0.6$).}
\label{tab:k1-lam5-qgap-throughput}
\scriptsize
\setlength{\tabcolsep}{2.4pt}
\renewcommand{\arraystretch}{1.15}
\resizebox{\textwidth}{!}{%
\begin{tabular}{lllrrrrrrrrrrrrr}
\toprule
Dataset & Time & Stat.
& ACQB-CL & ACQB & RAND & Q-ThS & Q-UCB & CQB-$\v\veps$ & RAND\_ROUT & ZERO & KNN & MLP & UMR & RouterDC & CARROT \\
\midrule

Sprout & 2500 & $\mu$
& \best{1.2} & 3.3 & 101.4 & 99.9 & 99.9 & 6.5 & 87.8 & 3.9 & 2.3 & 181.6 & 5.6 & 192.4 & \second{2.2} \\
& & $\sigma$
& 1.7 & 3.2 & 48.5 & 46.6 & 46.6 & 10.0 & 23.7 & 3.4 & 2.7 & 38.4 & 6.5 & 45.6 & 2.5 \\
& & T.P.
& \best{0.796} & 0.795 & 0.756 & 0.756 & 0.756 & 0.794 & 0.761 & 0.795 & 0.795 & 0.724 & 0.794 & 0.719 & \second{0.795} \\

& 5000 & $\mu$
& \best{0.3} & \second{1.1} & 202.0 & 127.8 & 138.1 & 2.4 & 171.8 & 8.4 & 6.7 & 380.3 & 7.9 & 397.2 & 6.5 \\
& & $\sigma$
& 0.5 & 1.7 & 66.6 & 66.6 & 67.2 & 3.7 & 48.0 & 9.4 & 5.0 & 76.9 & 9.9 & 93.0 & 5.4 \\
& & T.P.
& \best{0.797} & \second{0.797} & 0.757 & 0.772 & 0.770 & 0.797 & 0.763 & 0.796 & 0.796 & 0.721 & 0.796 & 0.718 & 0.796 \\
\midrule

EmbedLLM & 2500 & $\mu$
& \best{55.0} & \second{59.4} & 188.3 & 179.5 & 179.5 & 70.5 & 168.0 & 255.5 & 181.4 & 328.7 & 182.4 & 295.7 & 181.4 \\
& & $\sigma$
& 20.3 & 11.3 & 27.0 & 22.9 & 22.9 & 28.2 & 25.5 & 30.0 & 36.4 & 35.8 & 30.5 & 34.2 & 36.4 \\
& & T.P.
& \best{0.375} & \second{0.374} & 0.322 & 0.326 & 0.326 & 0.369 & 0.330 & 0.295 & 0.325 & 0.266 & 0.324 & 0.279 & 0.325 \\

& 5000 & $\mu$
& \best{54.6} & \second{60.5} & 364.4 & 358.5 & 358.5 & 63.4 & 333.5 & 506.0 & 360.6 & 653.8 & 355.1 & 586.0 & 360.6 \\
& & $\sigma$
& 20.8 & 21.4 & 38.4 & 40.0 & 40.0 & 43.3 & 45.4 & 32.7 & 50.2 & 71.0 & 54.7 & 69.9 & 50.2 \\
& & T.P.
& \best{0.388} & \second{0.387} & 0.326 & 0.327 & 0.327 & 0.386 & 0.332 & 0.297 & 0.326 & 0.268 & 0.328 & 0.281 & 0.326 \\

\midrule

RouterBench & 2500 & $\mu$
& \best{6.9} & \second{10.3} & 160.3 & 157.3 & 156.2 & 29.9 & 155.7 & 70.5 & 68.4 & 342.8 & 60.0 & 287.5 & 62.9 \\
& & $\sigma$
& 6.7 & 10.3 & 59.2 & 62.9 & 60.6 & 35.1 & 26.7 & 30.6 & 42.6 & 40.8 & 33.3 & 41.3 & 43.1 \\
& & T.P.
& \best{0.591} & \second{0.590} & 0.530 & 0.531 & 0.532 & 0.582 & 0.532 & 0.566 & 0.567 & 0.457 & 0.570 & 0.479 & 0.569 \\

& 5000 & $\mu$
& \best{3.7} & \second{3.8} & 330.3 & 206.3 & 220.6 & 22.4 & 313.3 & 123.7 & 118.0 & 683.1 & 110.0 & 586.4 & 114.8 \\
& & $\sigma$
& 3.4 & 3.3 & 84.2 & 85.9 & 86.2 & 36.7 & 56.1 & 77.1 & 90.9 & 84.7 & 86.1 & 81.2 & 93.0 \\
& & T.P.
& \best{0.595} & \second{0.595} & 0.529 & 0.554 & 0.551 & 0.591 & 0.533 & 0.571 & 0.572 & 0.459 & 0.573 & 0.478 & 0.572 \\

\bottomrule
\end{tabular}%
}
\vspace{-1.0em}
\end{table*}

\para{Modeling departure and choice probabilities} \label{sec:exp:mnl}

For every LLM $j$ and raw prompt $\xi$, we denote the provided performance score and normalized inference cost as $\mathrm{perf}_{j}(\xi)\in[0,1]$ and $\mathrm{cost}_{j}(\xi)\in[0,1]$, respectively.
The performance score measures the response quality of model $j$ on prompt $\xi$; for example, a binary score gives an outcome in $\{0,1\}$, while a graded score represents the expected quality level.
To obtain a cost-aware departure probability, we first define the raw utility as $u_j^{\mathrm{raw}}(\xi) = \mathrm{perf}_{j}(\xi) - \rho \cdot \mathrm{cost}_{j}(\xi)$.
Since $u_j^{\mathrm{raw}}(\xi)$ need not lie in a valid probability range, we apply min-max normalization over models and then rescale it to a clipped interval $[r_{\mathrm{lo}},r_{\mathrm{hi}}]\subset(0,1)$, where $r_{\mathrm{lo}}=0.1$ and $r_{\mathrm{hi}}=0.99$ in our experiments.
We use the resulting $u_j(\xi)\in(0,1)$ as the departure probability, i.e., a job departs the queue when the served response is accepted, and convert it to the corresponding MNL choice probability as described in \Cref{sec:exp:details}.

\label{sec:exp:protocol}

\para{Performance comparison}
\label{sec:exp:comp}
We benchmark ACQB and ACQB-CL under varying load conditions. We set the arrival rates to $\lambda \in \{0.7, 0.8, 0.9\}$ for $K=1$ and $\lambda \in \{0.75, 0.85, 0.95\}$ for $K=2$. We also compare with offline-trained routing baselines combined with FIFO scheduling: CARROT~\citep{somerstep2025carrot}, KNN~\citep{hu2024routerbench}, MLP~\citep{hu2024routerbench}, ROUTERDC~\citep{chen2024routerdc}, UMR~\citep{jitkrittum2025universal}, and ZERO~\citep{hu2024routerbench}. Details of these baselines are provided in \Cref{sec:exp:baselines}.
\Cref{fig:main_results} shows the queue length and cumulative regrets on \textsc{SPROUT-o3mini}. ACQB and ACQB-CL have lower queue length regret than the baselines across arrival rates, and ACQB-CL also has lower cumulative regret than ACQB in most settings. \Cref{tab:k1-lam5-qgap-throughput} reports the queue-length gap and throughput on all three offline routing datasets. ACQB-CL has the lowest mean queue-length gap in all reported cases, and ACQB has the second-lowest gap in most cases. Additional results are provided in \Cref{ssec:exp:real-result}.

\subsection{Experiments on a real LLM conversation dataset}
\label{ssec:wildchat}

\begin{wrapfigure}{r}{0.46\columnwidth}
\vspace{-0.8em}
\centering

\includegraphics[width=\linewidth]{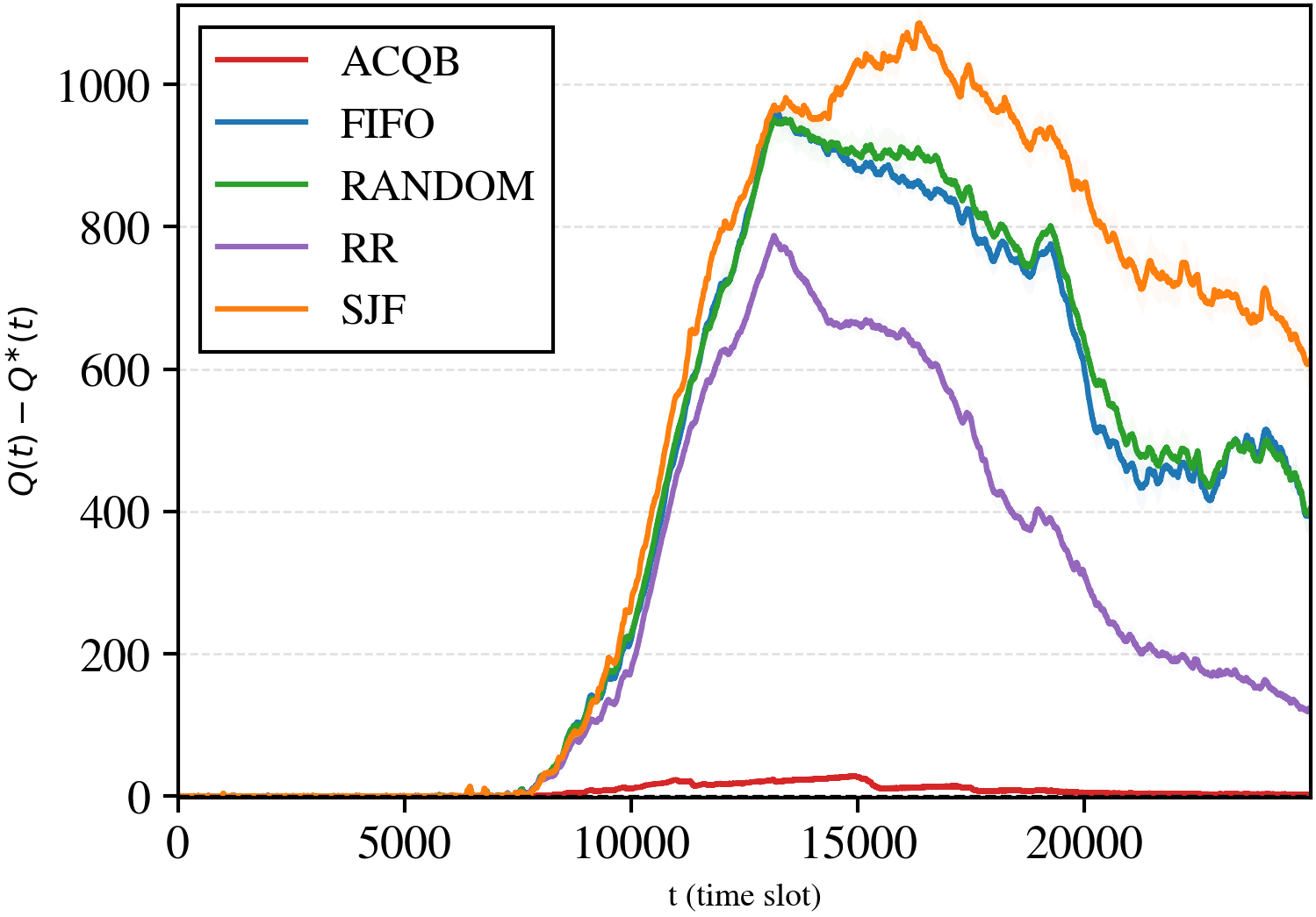}
\caption{Scheduling evaluation on \textsc{WildChat-1M}.}
\label{fig:real_world_setting}

\vspace{0.35em}

\captionof{table}{Scheduling ablation results on \textsc{WildChat-1M}.}
\label{tab:wildchat-qgap-throughput}
\tiny
\setlength{\tabcolsep}{1.7pt}
\renewcommand{\arraystretch}{1.05}
\resizebox{\linewidth}{!}{%
\begin{tabular}{llrrrrr}
\toprule
Time & Stat. & ACQB & FIFO & RR & RANDOM & SJF \\
\midrule
12500 & $\mu$
& \best{18.6} & 800.8 & \second{666.4} & 790.6 & 845.0 \\
& $\sigma$
& 2.5 & 19.3 & 16.4 & 28.6 & 16.2 \\
& T.P.
& \best{0.520} & 0.457 & \second{0.468} & 0.458 & 0.454 \\
\midrule
25000 & $\mu$
& \best{2.4} & 405.4 & \second{124.6} & 405.0 & 614.6 \\
& $\sigma$
& 0.5 & 22.9 & 9.3 & 29.1 & 31.4 \\
& T.P.
& \best{0.584} & 0.568 & \second{0.579} & 0.568 & 0.559 \\
\bottomrule
\end{tabular}%
}

\vspace{-1.4em}
\end{wrapfigure}

We additionally test the scheduling component on real user conversation logs from \textsc{WildChat-1M}~\citep{zhao2024wildchat}. Since publicly available logs from real multi-LLM services are unavailable, we use the conversations served by \texttt{gpt-4-1106-preview} and hold routing fixed.

Retrials are not explicitly labeled in \textsc{WildChat-1M}. We infer retrial chains by sorting conversations by inferred user arrival time and linking two consecutive unresolved conversations from the same hashed IP when their first prompts are identical or have cosine similarity at least $0.995$ under \texttt{paraphrase-multilingual-mpnet-base-v2} embeddings. The user arrival time is estimated by subtracting the response time from the logged response timestamp.

We replay the inferred chains in a discrete-time single-server queue with arrival rate $0.7$. A served job departs if it is the last node in its chain, and otherwise its next retrial appears in the next time slot. Since most chains have no retrial, we randomly subsample chains of length one and keep all chains of length at least two. We compare ACQB with FIFO, round-robin scheduling over the least-served queries (RR), random scheduling, and shortest-job-first (SJF) over five runs. Further implementation details are given in \Cref{sec:exp:wildchat-details}. As shown in \Cref{fig:real_world_setting} and \Cref{tab:wildchat-qgap-throughput}, ACQB has lower queue length regret and higher throughput than the other scheduling rules. At $T=25{,}000$, the mean queue length gap is $2.4$ for ACQB and $124.6$ for RR.

\section{Conclusion}

In this work, we introduce contextual queueing bandits with MNL feedback (CQB-MNL), an online learning framework for joint LLM routing and scheduling from user retrials.
We propose anytime CQB (ACQB), which achieves sublinear cumulative regret and decaying queue length regret without prior knowledge of the time horizon or the traffic slackness parameter.
To make the method practical for heterogeneous LLMs, we use disjoint parameterization and refine query embeddings through contrastive learning.
Experiments on synthetic data, offline routing datasets, and real user conversation logs show that ACQB improves routing, scheduling, and queue stability against online and offline-trained baselines.
Our feedback model focuses on retrials captured through the MNL outside option; extending the model to richer retrial mechanisms involving latency, interface design, reformulation effort, and user patience is an important direction for future work.

\bibliographystyle{plainnat}
\bibliography{example_paper}

\newpage
\appendix
\onecolumn

\section{Related work}

\para{LLM routing and scheduling}
LLM routing selects models to balance quality and cost, while LLM scheduling determines the service order of pending requests to minimize latency. Although substantial research has relied on supervised learning with offline datasets, such as \citet{ong2024routellm,feng2024graphrouter,lu2024routing,somerstep2025carrot} for routing and \citet{fu2024efficient,shahout2024don,yang2025justitia} for scheduling, recent works have emerged that utilize online learning to address the high costs of offline annotation and the non-stationarity of real-world environments \citep{mohammadshahi2024routoo,wu2025efficient,chiang2025llm,jitkrittum2025universal,jaillet2025online,ao2025optimizing}.
However, these methods neglect the interactive nature of LLMs, either ignoring congestion caused by retrials or disrupting user experience by requiring explicit feedback.

\para{Queueing bandits}
The queueing bandit framework, originally designed to address routing problems in systems with unknown service rates, was introduced by \citet{krishnasamy2016regret}.
This seminal work has inspired a significant body of research establishing queue stability---ensuring that expected queue lengths remain bounded---and deriving queue length regret bounds across various settings \citep{liang2018minimizing,stahlbuhk2021learning,choudhury2021job,krishnasamy2021learning,wijewardena2025bandit}.
More recently, this framework has been extended to the contextual setting, incorporating job-specific features and adopting logistic models for departure dynamics \citep{kim2024queueing,bae2026queuelengthregretbounds}.
However, the approach of \citet{kim2024queueing} is limited by a fixed context assumption. While \citet{bae2026queuelengthregretbounds} accommodate heterogeneous contexts, their reliance on the time horizon and traffic slackness parameter hinders practical deployment.

\para{Logistic Bandits}
Starting with the seminal work on generalized linear bandits \citep{filippi2010parametric}, the (multinomial) logistic bandit problem has been extensively studied due to its effectiveness in modeling discrete user choices and categorical feedback \citep{li2017provably,agrawal2019mnl,oh2019thompson,jun2021improved,abeille2021instance,bae2025neural,lee2025improved,zhang2025generalized}.
In this work, we adopt the MNL model for queue dynamics \citep{kim2024queueing,bae2026queuelengthregretbounds} and directly apply these analytical tools to bound the departure rate gap between the optimal policy and our proposed policy.

\section{Experiment details}

\subsection{Algorithm adaptation for disjoint parameterization} \label{ssec:disjoint}

In this section, we present the adaptation of \Cref{alg:1} to the disjoint parameterization setting, as outlined in \Cref{alg:2}. Recall that while the query context $x_t$ is shared, each server $j$ is governed by a distinct unknown parameter vector $\theta_j^*$. Accordingly, we define the collection of parameters as $\Theta^*=[\theta_1^*, \theta_2^*, \dots, \theta_N^*]\in \R^{d\times N}$. When a user with context $x$ is presented with an assortment $S$, the probability $p_j(x, S, \Theta^*)$ of selecting item $j$ and the corresponding departure rate $R(x, S, \Theta^*)$ are defined as 
\begin{align*}
    &p_{j}(x, S, \Theta^*) := \frac{\exp(x^\top \theta_j^*)}{1 + \sum_{j'\in S} \exp (x^\top\theta_{j'}^*)} \quad \text{for } j\in S,  \\
    &p_{0}(x, S, \Theta^*) := \frac{1}{1 + \sum_{j'\in S} \exp (x^\top\theta_{j'}^*)} \quad \text{for } j=0, \\
    &R(x,S,\Theta^*) := \sum_{j\in S} \frac{\exp(x^\top \theta_j^*)}{1 + \sum_{j'\in S} \exp (x^\top\theta_{j'}^*)} = \sum_{j\in S} p_j(x, S, \Theta^*).
\end{align*}

Under this disjoint model, the agent maintains separate statistics for each server $j$: a design matrix $V_{t,j}$, a maximum likelihood estimator $\what{\theta}_{t,j}$, and a confidence radius $\alpha_{t,j}$.
For the design matrices, we initialize $\{V_{t,j}\}_{j=1}^N = \lambda_0 \I$. In each round, we update only the matrices corresponding to the servers included in the assortment $S_t$, such that $V_{t,j}\leftarrow V_{t-1,j} + x_t x_t\tp$ for all $j\in S_t$ (Line 15).
Similarly, the confidence radius is updated only for $j\in S_t$ (Line 14) as follows:
\begin{align}
    \alpha_{t,j} \leftarrow \frac{\kappa}{2} \sqrt{d \log \br{1 + \frac{K \sum_{i=1}^t \ind{j\in S_i}}{d\lambda_0}} + 4 \log \br{\sum_{i=1}^t \ind{j\in S_i}}} + \kappa \sqrt{\lambda_0}, \label{eq:disjoint_alpha}
\end{align}
where we replace the time horizon $t$ in the original $\alpha_t$ definition with the effective number of times server $j$ has been selected, i.e., $\sum_{i=1}^t \ind{j\in S_i}$.

Regarding the maximum likelihood estimators, we define the regularized negative log-likelihood function as
\begin{align*}
    \cal L_t(\Theta) := \frac{\lambda_0}{2} \sum_{j=1}^N \|\theta_j\|_2^2 - \sum_{i=1}^t \sum_{j\in S_i \cup \{0\}} y_{ij} \log p_j(x,S_i, \Theta).
\end{align*}
Then, we update the estimators $\what\theta_{t,j}$ for the relevant servers $j\in S_t$ (Line 14) by solving the first order stationary point $\nabla_{\theta_j} \cal L_t(\Theta) = 0$, which is given by
\begin{align}
      0 = \lambda_0 \theta_j - \sum_{i=1}^t \ind{j\in S_i} (y_{ij} - p_j(x,S_i,\Theta)). \label{eq:disjoint_mle}
\end{align}

Finally, utilizing these separate statistics, the agent selects a query and an assortment as follows: If $A(t-1)=1$ and $E(t-1)=1$, we proceed with the random exploration round (Lines 4--5). Otherwise, for each $j\in[N]$, we sample $\{\wtilde\theta_{t-1,j}^{(i)}\}_{i=1}^M$ as shown in Line 8. We then compute the optimistic departure rate estimate as
\begin{align*}
    &\wtilde u_{tj}(x) = \max_i x\tp \wtilde\theta_{t-1,j}^{(i)}, \quad \wtilde R(x, S) = \sum_{k\in S} \frac{\exp(\wtilde u_{tk}(x))}{1 + \sum_{j\in S} \exp(\wtilde u_{tj}(x))},
\end{align*}
and select the pair maximizing this value (Line 10).

\begin{algorithm} [t] 
\caption{ACQB/ACQB-CL with Disjoint Parameterization} \label{alg:2}
\begin{algorithmic} [1] 
    \Init design matrix $\{V_{0,j}\}_{j=1}^N=\lambda_0\I$, assortment size $K$, sample size $M$, combination set $\cal C=\{S\subset [N]: |S|=K\}$, combination counter $c=0$, exploration parameter $\eta(t)$, confidence radius $\{\alpha_{t,j}\}_{j=1}^N$, projection head $B(\cdot;\theta)$ 
    \State Define context map $\phi(x) = 
\begin{cases} 
    x & \text{if ACQB} \\ 
    B(x; \theta) & \text{if ACQB-CL (with $\theta$ from \Cref{alg:ucl}}) 
\end{cases}$

    \For{$t=1,\dots$}
        \If{$A(t-1)=1$ \textbf{and} $E(t-1)=1$}  
        \SlashComment{on arrivals, $\eta(t)$-exploration}
            \State Set $x_t \leftarrow x^{(t-1)}$, $S_t \leftarrow \cal C[c+1]$
            \State $c \leftarrow c+1 \pmod{|\cal C|}$
        \Else
            \For{$j\in[N]$}
                \State Sample $\{\wtilde\theta_{t-1,j}^{(i)}\}_{i=1}^{M} \sim \cal N(\what\theta_{t-1,j}, \alpha_{t-1,j}^2 V_{t-1,j}^{-1})$ 
            \EndFor
            \State Set $x_{t}, S_{t} \leftarrow \argmax_{x\in\cal X_t, S\in\cal C} \widetilde R(\phi(x), S)$ 
        \EndIf
        \State Assign $x_t$ to $S_t$, receive $y_t=(y_{t0},y_{t1}, \dots,y_{tK})$
        \For{$j\in[N]$}
            \If{$j\in S_t$}
                \State Update $\what \theta_{t,j}$ as in \Cref{eq:disjoint_mle}, $\alpha_{t,j}$ as in \Cref{eq:disjoint_alpha}
                \State $V_{t,j} \leftarrow V_{t-1,j} + \phi(x_t) \phi(x_t)\tp$ 
            \Else
                \State $\what \theta_{t,j} \leftarrow \what \theta_{t-1,j}$, $\alpha_{t,j}\leftarrow \alpha_{t-1,j}$, $V_{t,j} \leftarrow V_{t-1,j}$
            \EndIf
        \EndFor
        \State Sample $E(t)\sim \Bern(\eta(t))$
    \EndFor
\end{algorithmic}
\end{algorithm}

\subsection{Utility-aligned query embeddings via contrastive learning} \label{ssec:ucl}

\begin{algorithm}[htbp]
\caption{Utility-Based Contrastive Learning}
\label{alg:ucl}
\begin{algorithmic}[1]
\Init offline dataset $\mathcal D_{\mathrm{off}}=\{(\xi_{(i)},u_{(i)})\}_{i=1}^{n}$, frozen encoder $E(\cdot)$, projection head $B(\cdot;\theta)$, thresholds $(\iota_{\mathrm{_{pos}}},{\iota_\mathrm{{neg}}})$, temperature $\tau_{temp}$, negative cap $K_{\mathrm{neg}}$, learning rate $\eta$
\For{$\text{epoch}=1,\dots,e$}
    \State $x_{(i)} \leftarrow E(\xi_{(i)})$,\quad $z_{(i)}\leftarrow B(x_{(i)};\theta)$,\quad $\bar{u}_{(i)} \leftarrow u_{(i)} - \mathrm{mean}(u_{(i)})$\ for all $i$
    \State $c_{(i,j)}\leftarrow \cos(\bar u_{(i)},\bar u_{(j)})$ \ for all $i\neq j$
    \For{$i=1,\dots,n$}
        \State Set $P(i)=\{j\neq i:\ c_{(i,j)}>\iota_{\mathrm{pos}}\}$,\quad $N(i)=\{j\neq i:\ c_{(i,j)}<\iota_{\mathrm{neg}}\}$
        \State Pick $j^+\in\arg\max_{j\in P(i)} c_{(i,j)}$
        \State Pick $\mathcal N_{(i)}\subseteq N(i)$ with $|\mathcal N_{(i)}|\le K_{\mathrm{neg}}$
        \State Set $\ell_{(i)}(\theta)$ as in \Cref{eq:infonce}
    \EndFor
    \State $\theta \leftarrow \theta - \eta \nabla_\theta \sum_{(i):\,|P((i))|>0}\ell_{(i)}(\theta)$ 
\EndFor
\State \Return $\theta$
\end{algorithmic}
\end{algorithm}

This section details the utility-based contrastive learning framework for \textsc{ACQB-CL}, extending the overview in \Cref{sec:EQCL}. The goal of this offline phase is to train the projection head $B(\cdot;\theta)$ appended to the frozen encoder $E(\cdot)$. Leveraging an offline dataset containing utility values (i.e., performance and cost) for all servers, we aim to structure the embedding space such that queries with similar routing utilities are clustered together, while those with dissimilar utilities are separated.

We begin the process by constructing a balanced offline dataset from the representation learning split. Specifically, we employ a model-balanced sampling strategy. For each available prompt $\xi$, we compute the utility vector $\{u_j(\xi)\}_{j=1}^N$ across all models and identify the optimal model that yields the highest utility.
We then group prompts according to their optimal model index. To ensure diversity, we uniformly sample a fixed number of prompts (e.g., 5 prompts in our experiments for \textsc{Sprout} data) from each group.
This procedure ensures that the offline training set is composed of samples selected evenly across all candidate LLMs.

\para{Offline Training Algorithm}
After sampling, we form the offline dataset as $\mathcal D_{\mathrm{off}}=\{(\xi_{(i)},u_{(i)})\}_{i=1}^{n}$, where $\xi_{(i)}$ denotes the selected prompt and $u_{(i)}$ represents its utility vector across all models.
We use the frozen encoder outputs $x_{(i)}=E(\xi_{(i)})$ and the utility vectors $u_{(i)}$ to train the two-layer MLP projection head $B(\cdot;\theta)$. \Cref{alg:ucl} summarizes the procedure.

At the beginning of each epoch, we compute the projected representation $z_{(i)}=B(x_{(i)};\theta)$ for each offline prompt. To determine similarity targets, we first mean-center each utility vector across models:
\[
    \bar u_{(i)} \leftarrow u_{(i)}-\mathrm{mean}(u_{(i)}).
\]
We then compute the pairwise utility similarity $c_{(i,j)}$ for all $i\neq j$ using the cosine similarity of these centered vectors:
\[
    c_{(i,j)}\leftarrow \cos(\bar u_{(i)},\bar u_{(j)}):=\frac{\bar u_{(i)}^\top \bar u_{(j)}}{\|\bar u_{(i)}\|_2\,\|\bar u_{(j)}\|_2}.
\]
This metric is used for selecting positive and negative pairs. Given a pair of thresholds $(\iota_{\mathrm{pos}}, \iota_{\mathrm{neg}})$, we define the candidate positive set $P(i)$ and negative set $N(i)$ as
\[
    P(i)=\{j\neq i:\ c_{(i,j)}>\iota_{\mathrm{pos}}\}, \quad N(i)=\{j\neq i:\ c_{(i,j)}<\iota_{\mathrm{neg}}\}.
\]
Any candidate $j$ with similarity falling between the thresholds (i.e., $\iota_{\mathrm{neg}} \leq c_{(i,j)} \leq \iota_{\mathrm{pos}}$) is ignored.
From the positive set $P(i)$, we select the single best positive example $j^+$ that maximizes utility similarity, i.e. $j^+\in\arg\max_{j\in P(i)} c_{(i,j)}$.
For the negative set, we construct a subset $\mathcal N_{(i)}\subseteq N(i)$ to limit computational cost. Specifically, we select up to $K_{\mathrm{neg}}$ hard negatives (those with the smallest similarity values $c_{(i,j)}$). If $P(i)$ is empty or no negatives remain, we skip the update for query $i$.

Using the selected positive $j^+$ and negative set $\mathcal N_{(i)}$, we compute the representation similarity $\mathrm{sim}(z_{(i)},z_{(j)})=z_{(i)}^\top z_{(j)}$ and define the InfoNCE loss with temperature $\tau_{\mathrm{temp}}$ as
\begin{align} \label{eq:infonce}
    \ell_{(i)}(\theta)= -\log \dfrac{\exp(\mathrm{sim}(z_{(i)},z_{(j^+)})/\tau_{\mathrm{temp}})}{\exp(\mathrm{sim}(z_{(i)},z_{(j^+)})/\tau_{\mathrm{temp}})+\sum_{j\in\mathcal N_{(i)}}\exp(\mathrm{sim}(z_{(i)},z_{(j)})/\tau_{\mathrm{temp}})}.
\end{align}
Finally, we update the projection head parameters $\theta$ via gradient descent over all valid queries:
\[
    \theta \leftarrow \theta - \eta \nabla_\theta \sum_{i:\,|P(i)|>0}\ell_{(i)}(\theta).
\]
After offline pretraining, we freeze the projection head $B(\cdot;\theta)$ and proceed to the online learning stage described in \Cref{alg:2}.

\paragraph{Experimental Settings.}
We use $\tau_{\mathrm{temp}}=0.07$ and a negative cap $K_{\mathrm{neg}}=64$. The projection head is a two-layer MLP with an output dimension of $384$, matching the encoder dimension. We set distinct threshold pairs $(\iota_{\mathrm{pos}},\iota_{\mathrm{neg}})$ for each dataset: $(0.6,0.3)$ for \textsc{SPROUT-o3mini}, $(0.4,0.1)$ for \textsc{EmbedLLM}, and $(0.5,0.3)$ for \textsc{RouterBench}. For contrastive pretraining of $B$ in an offline setting, we use 5 prompts per model for \textsc{SPROUT-o3mini} and train for 10 epochs; for \textsc{EmbedLLM}, we use 10 prompts per model (30 prompts per epoch) and train for 50 epochs.

\subsection{Baseline algorithms}
\label{sec:exp:baselines}

We compare our proposed ACQB and ACQB-CL algorithms against the following baselines:

\begin{enumerate}
    \item[(i)] \textbf{Optimal Policy:} 
    The optimal policy has full access to the true set of disjoint parameters $\Theta^*=\{\theta_j^*\}_{j=1}^N$. In every round $t$, given the set of backlogged queries $\cal X_t^*$, it selects the query-assortment pair $(x^*, S^*)$ that maximizes the expected departure rate:
    \begin{align*}
        (x^*, S^*) \in \argmax_{x\in\mathcal{X}_t^*, S\in\mathcal{C}} R(x,S,\Theta^*).
    \end{align*}
    
    \item[(ii)] \textbf{Random Policy (RAND):} 
    In every round $t$, the random policy selects a query $x_t$ uniformly at random from the current backlog and chooses an assortment $S_t$ uniformly at random from the feasible set $\cal C$.
    \begin{itemize}
        \item \textbf{Random Router Policy (RAND\_ROUT):} In every round $t$, the random router policy selects a query $x_t$ that arrived first among the current backlog and chooses an assortment $S_t$ uniformly at random from the feasible set $\cal C$.
    \end{itemize}
    
    \item[(iii)] \textbf{Q-UCB} \citep[Algorithm 1]{krishnasamy2021learning}: 
    This is a MAB-based algorithm designed for the single-item setting ($K=1$). Since this algorithm treats all queries as identical ignoring their contexts, it employs a first-in-first-out (FIFO) scheduling rule, simply selecting the oldest job in the queue as $x_t$. The algorithm explores with probability $\min\{1, 3N(\log^2t)/t\}$. For the assortment selection, it chooses the item that maximizes the upper confidence bound (UCB):
    \begin{align*}
        S_t = \argmax_{S\in\cal C, |S|=1} \left( \what\mu_{S}(t) + \sqrt{\frac{\log^2 t}{2 T_S(t-1)}} \right),
    \end{align*}
    where $\what\mu_S(t) = \sum_{i=1}^{t-1} \ind{S_i=S} y_i / T_S(t-1)$ is the empirical mean reward and  $T_S(t-1) = \sum_{i=1}^{t-1} \ind{S_i = S}$is the number of times assortment $S$ has been played up to time $t-1$.

    \item[(iv)] \textbf{Q-ThS} \citep[Algorithm 2]{krishnasamy2021learning}: 
    Similar to Q-UCB, this is an MAB-based approach for $K=1$ using FIFO scheduling. It explores with the same probability schedule as Q-UCB. For every assortment selection $S_t$, it samples a departure rate estimate from a Beta posterior:
    \begin{align*}
        \wtilde r_t(x_t, S) \sim \text{Beta}\Big(\what \mu_S(t) T_S(t-1) + 1, (1-\what\mu_S(t)) T_S(t-1) + 1\Big).
    \end{align*}
    It then selects the assortment $S_t$ that maximizes this sampled value, i.e. $S_t:=\argmax_{K=1,S\in\cal C} \wtilde r_t(x_t, S)$, where we use the same definitions of $\what\mu_S(t)$ and $T_S(t-1)$ as Q-UCB.

    \item[(v)] \textbf{CQB-$\v\veps$} (Adapted from \citet{bae2026queuelengthregretbounds}): 
    This baseline is an adaptation of the contextual queueing bandit algorithm from \citet{bae2026queuelengthregretbounds}. While the original algorithm was developed as a UCB-based approach under logistic feedback, we modified it to utilize Thompson Sampling with MNL feedback. This adaptation ensures a fair and consistent comparison with our proposed framework under the same environmental assumptions.

    Let $\tau$ denote the length of the pure-exploration phase. If $A(t-1)=1$ and the agent is either in the exploration phase ($t \leq \tau$) or triggers a random exploration with probability $T^{-1/2}$, it selects $x_t\leftarrow x^{(t-1)}$ (the most recent job) and chooses $S_t$ in a round-robin manner. Otherwise, it follows the optimistic rule described in ACQB (Line 6--9 of \Cref{alg:2}).

    Notice that the theoretical requirement for the exploration length of CQB-$\v\veps$, $\tau=\mathcal{O}(\frac{d\log(T)}{\sigma_0^4 \epsilon^2})$, is often prohibitively large for practical experiments. Therefore, we adopt distinct practical heuristics depending on the experimental setting.
For synthetic data experiments, following previous work \citep{bae2026queuelengthregretbounds}, we set a fixed exploration length of $\tau = T/10$.
For offline routing datasets, we determine $\tau$ by aligning it with the stabilization point of our exploration rate $\eta(t)$. Specifically, we set $\tau = \min \{t : c_1(t+1)^{-1/2} \leq 1\}$, ensuring that the pure-exploration phase ends when ACQB's exploration probability drops below 1.    
    % Notice that the theoretical requirement for the exploration length of CQB-$\v\veps$, $\tau=\bigO{\frac{d\log(T)}{\sigma_0^4 \eps^2}}$, is often prohibitively large for practical experiments. Therefore, we adopt a practical heuristic by aligning $\tau$ with the stabilization point of our exploration rate $\eta(t)$. Specifically, we set $\tau = \min \{t : c_1(t+1)^{-1/2} \leq 1\}$, ensuring that the pure-exploration phase ends when ACQB's exploration probability drops below 1.
\end{enumerate}

\para{Scheduling variants}
To isolate the effect of scheduling from routing, we evaluate variants that keep the ACQB routing rule fixed while replacing only the query-selection rule. In the synthetic experiments, all variants use the same exploration schedule, arrival process, MNL feedback generation, and Thompson-sampling routing rule as ACQB. The difference appears only in exploitation rounds: FIFO selects the oldest query, round-robin selects uniformly among the least-served queries, and random scheduling selects a query uniformly from the current queue. After the query is selected, each variant chooses the assortment by the same optimistic ACQB routing rule for that query.

\para{Offline-trained routing baselines}
For simulations on offline routing datasets, we further compare against offline-trained routing baselines combined with FIFO scheduling. We include CARROT~\citep{somerstep2025carrot}, KNN~\citep{hu2024routerbench}, MLP~\citep{hu2024routerbench}, ROUTERDC~\citep{chen2024routerdc}, UMR~\citep{jitkrittum2025universal}, and ZERO~\citep{hu2024routerbench}. For each run, the baselines are trained on the offline split excluding all evaluation prompts and then evaluated on the same online prompt pool as ACQB. They use the same query embeddings and utility target $\mathrm{perf}_j(\xi)-\rho\mathrm{cost}_j(\xi)$: ZERO chooses the best fixed model from the training split; KNN and MLP predict per-model performance from the query embedding and subtract the model cost; CARROT predicts both performance and cost; UMR clusters query embeddings and routes by cluster-level error and cost; and ROUTERDC trains a neural router with query and model embeddings. Since these routers are not designed for queueing systems with retrial feedback, we combine each router with FIFO scheduling and use the same MNL departure-probability construction as in \Cref{sec:exp:details}.

\subsection{\textsc{WildChat-1M} experiment details}
\label{sec:exp:wildchat-details}

For the \textsc{WildChat-1M} experiment, we use conversations served by \texttt{gpt-4-1106-preview}. Since the logged timestamp is the assistant response timestamp, we estimate the user arrival time by subtracting $2.84+\mathrm{tokens}/30$ seconds from the timestamp of the first assistant response. We then sort conversations by the inferred arrival time and construct retrial chains separately for each hashed IP. A conversation is linked to the previous unresolved conversation from the same hashed IP if the first user prompts are identical or their normalized \texttt{paraphrase-multilingual-mpnet-base-v2} embeddings have cosine similarity at least $0.995$, and if the inferred arrival time is after the previous assistant timestamp. Since most conversations do not have a retrial, we randomly keep a subset of singleton chains so that the singleton-chain fraction is $0.8$, while keeping all non-singleton chains. For the queue replay, root conversations are assigned to discrete time bins according to the inferred arrival times with arrival rate $0.7$ and service capacity one. When a served node is not terminal in its chain, the next node enters the queue in the next time slot; otherwise, the job departs. ACQB uses 32-dimensional projected prompt embeddings with the chain state variables. For SJF, we set the job length to the number of tokens in the first assistant response.

\subsection{Modeling departure and choice probabilities for offline routing dataset experiments}
\label{sec:exp:details}

This section details how we transform raw utility scores into departure and choice probabilities, complementing the experimental setup in \Cref{ssec:offline-routing}.
Recall that for a raw prompt $\xi$ and model $j$, we define the (raw) utility as $u_j^{\text{raw}}(\xi) = \mathrm{perf}_j(\xi) - \rho \cdot \mathrm{cost}_j(\xi)$, where we use $\rho=5$ in our experiments. 
Since this value may fall outside the feasible probability range $[0,1]$ depending on $\rho$, we apply a two-step transformation. 
First, we apply min-max normalization over the models to obtain a value in $[0,1]$:
\begin{align*}
    u_j^{\text{norm}}(\xi) = \frac{u_j^{\text{raw}}(\xi)-\min_k u_k^{\text{raw}}(\xi)}{\max_k u_k^{\text{raw}}(\xi)-\min_k u_k^{\text{raw}}(\xi)}.
\end{align*}
Next, to prevent extreme probabilities (0 or 1) that can cause numerical instability, we linearly rescale this normalized value to a bounded range $[r_{\mathrm{lo}}, r_{\mathrm{hi}}] \subset (0,1)$. We define the final departure probability $u_j(\xi)$ as:
\begin{align*}
    u_j(\xi) := r_{\mathrm{lo}} + (r_{\mathrm{hi}}-r_{\mathrm{lo}}) \cdot u_j^{\text{norm}}(\xi),
\end{align*}
where we set $r_{\mathrm{lo}}=0.1$ and $r_{\mathrm{hi}}=0.99$ in our experiments.

We define the true departure probability for prompt $\xi$ on model $j$ as this transformed value $u_j(\xi)$. 
To be consistent with our MNL choice framework, let $x=E(\xi)$ denote the query context. We construct the latent reward $x\tp \theta_j^*$ such that its logistic function matches the departure probability:
\begin{align*}
    u_j(\xi) = \frac{\exp(x\tp\theta_j^*)}{1 + \exp(x\tp\theta_j^*)} \quad \Longrightarrow \quad \exp(x\tp\theta_j^*) = \frac{u_j(\xi)}{1 - u_j(\xi)}.
\end{align*}
Substituting this relationship into the MNL choice model for an assortment $S$, we derive the final choice probabilities entirely in terms of the derived utilities:
\begin{align*}
    p(j \mid x, S) = \frac{\exp(x\tp\theta_j^*)}{1 + \sum_{k\in S} \exp(x\tp\theta_k^*)} = \frac{\frac{u_j(\xi)}{1 - u_j(\xi)}}{1 + \sum_{k\in S} \frac{u_k(\xi)}{1 - u_k(\xi)}}.
\end{align*}

\section{Additional experimental results}
\label{sec:exp:result}
This section provides additional experiments omitted from the main text. We first report synthetic experiments with varying $N$ and $\eps$, then show the exploration-scheduling ablation, and finally provide additional simulations on offline routing datasets.
All experiments were conducted on a server equipped with an AMD EPYC 9354 32-Core Processor, 251 GiB of RAM, and one NVIDIA RTX A6000 GPU.

\subsection{Additional synthetic data results} 
\label{sec:exp:sim}
We use the same setup as \Cref{ssec:sync} and vary one parameter at a time. \Cref{fig:sim_results2} changes the number of LLMs $N$, while \Cref{fig:sim_result3} changes the slackness parameter $\eps$.

\begin{figure*}[htbp]
    \centering
    
    % --- 설정: 너비 변수 수정 ---
    % 중앙 라벨이 사라졌으므로 blockwidth를 0.46 -> 0.48로 늘려 공간 활용
    \newcommand{\lbwidth}{0.02\textwidth}  % 세로 라벨 너비
    \newcommand{\blockwidth}{0.48\textwidth} % 그림 블록 너비 (확장됨)
    
    % ---------------- Header (K=1, K=2) ----------------
    % 1. Left Spacing (첫 번째 세로 라벨 위의 빈 공간)
    \begin{minipage}{\lbwidth} \centering \phantom{L} \end{minipage}%
    \hfill%
    % 2. K=1 Header (첫 번째 그림 블록 중앙)
    \begin{minipage}{\blockwidth} \centering \scriptsize $K=1$ \end{minipage}%
    \hfill%
    % 3. K=2 Header (두 번째 그림 블록 중앙) - 중간 라벨 공간 없음
    \begin{minipage}{\blockwidth} \centering \scriptsize $K=2$ \end{minipage}%
    
    \vspace{0.1cm} % 헤더와 첫 줄 사이 간격

    % ================= ROW 1: lambda=0.55 =================
    % 1. Row Label
    \begin{minipage}[c]{\lbwidth}
        \centering \rotatebox{90}{\scriptsize $N=3$}
    \end{minipage}%
    \hfill%
    % 2. K=1 Block (Images)
    \begin{minipage}[c]{\blockwidth}
        \centering
        \begin{subfigure}[c]{0.49\linewidth}
            \includegraphics[width=\linewidth]{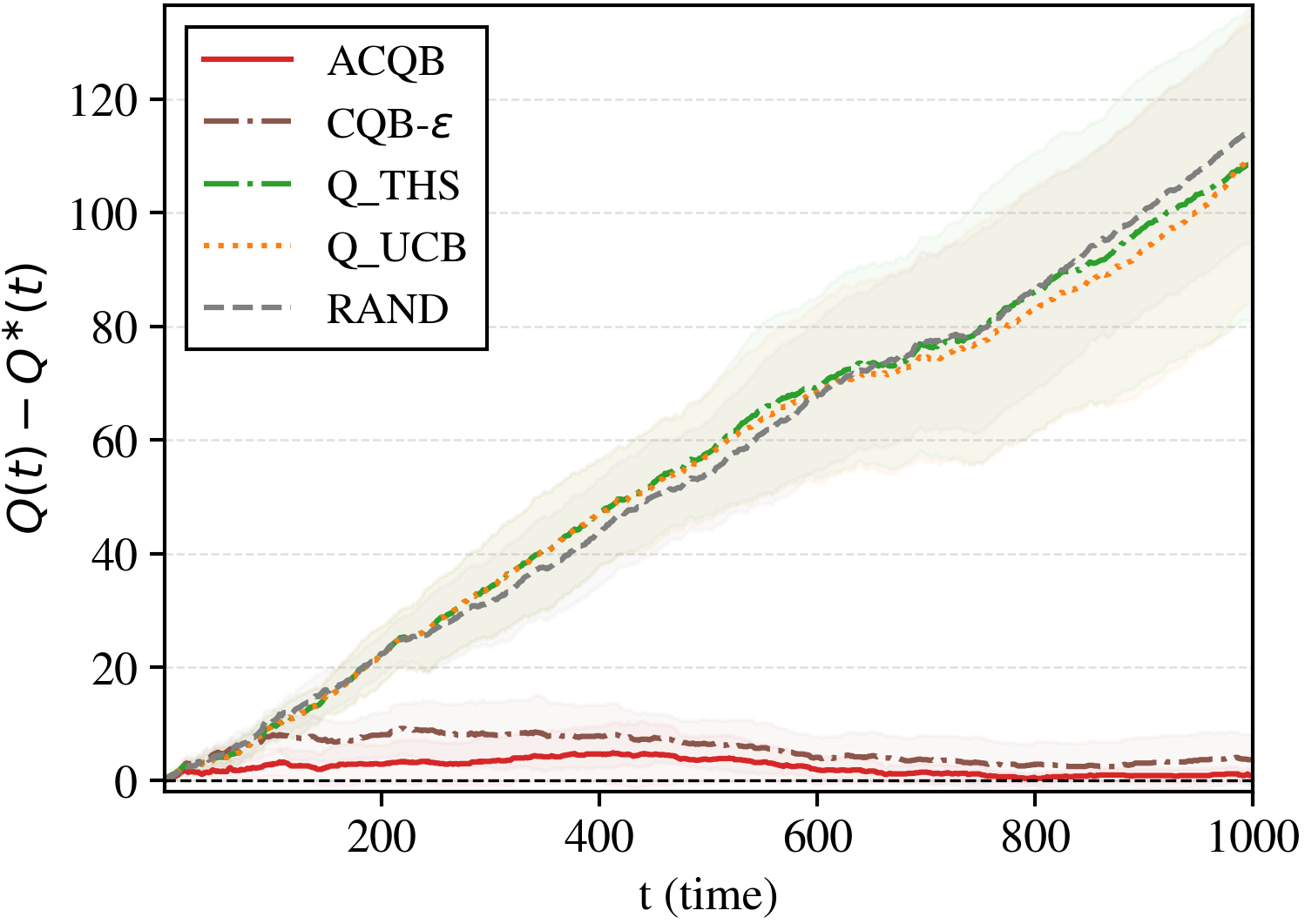}
        \end{subfigure}\hfill
        \begin{subfigure}[c]{0.49\linewidth}
            \includegraphics[width=\linewidth]{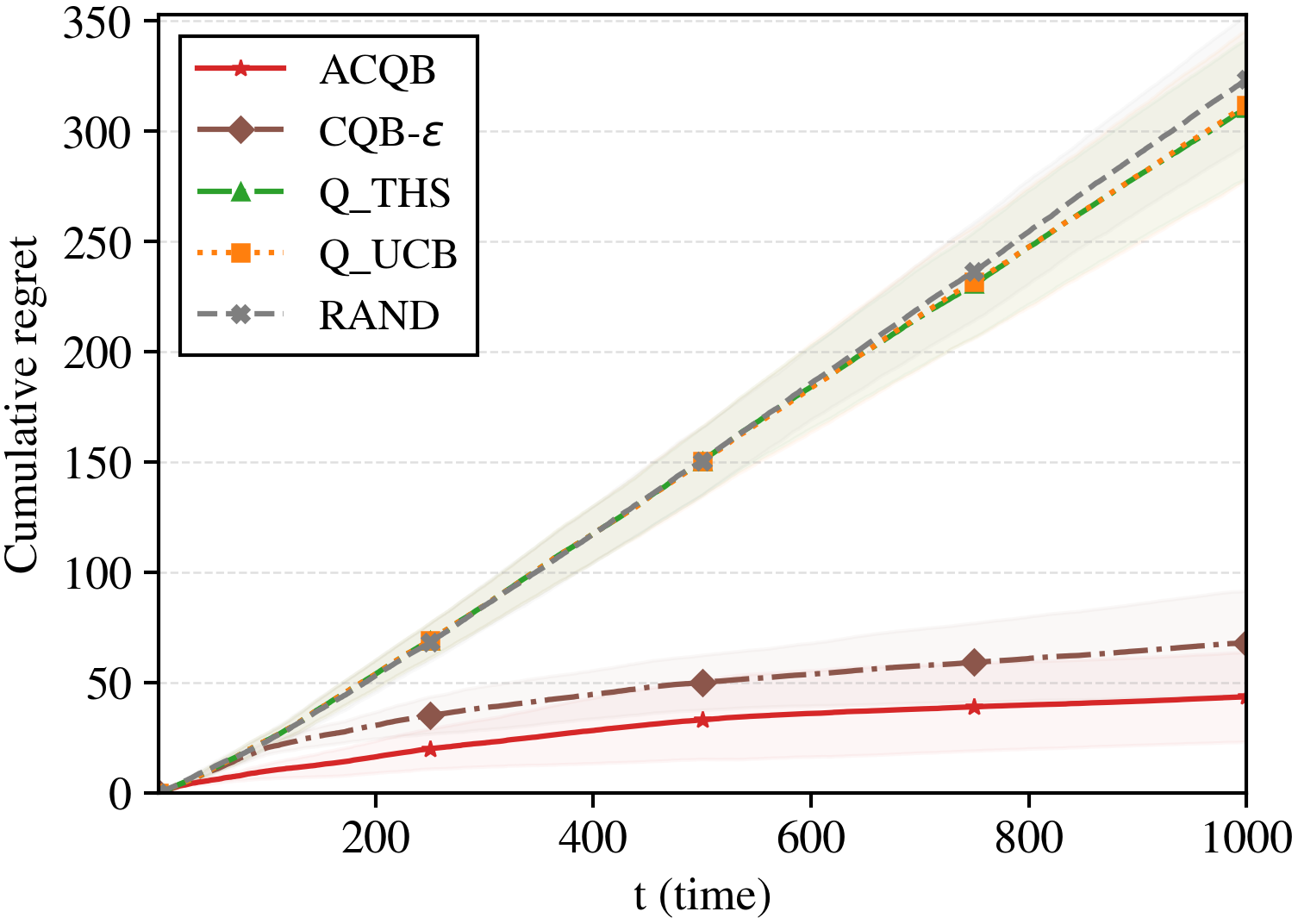}
        \end{subfigure}
    \end{minipage}%
    \hfill%
    % 3. K=2 Block (Images) - 중간 라벨 제거됨, 바로 이어짐
    \begin{minipage}[c]{\blockwidth}
        \centering
        \begin{subfigure}[c]{0.49\linewidth}
            \includegraphics[width=\linewidth]{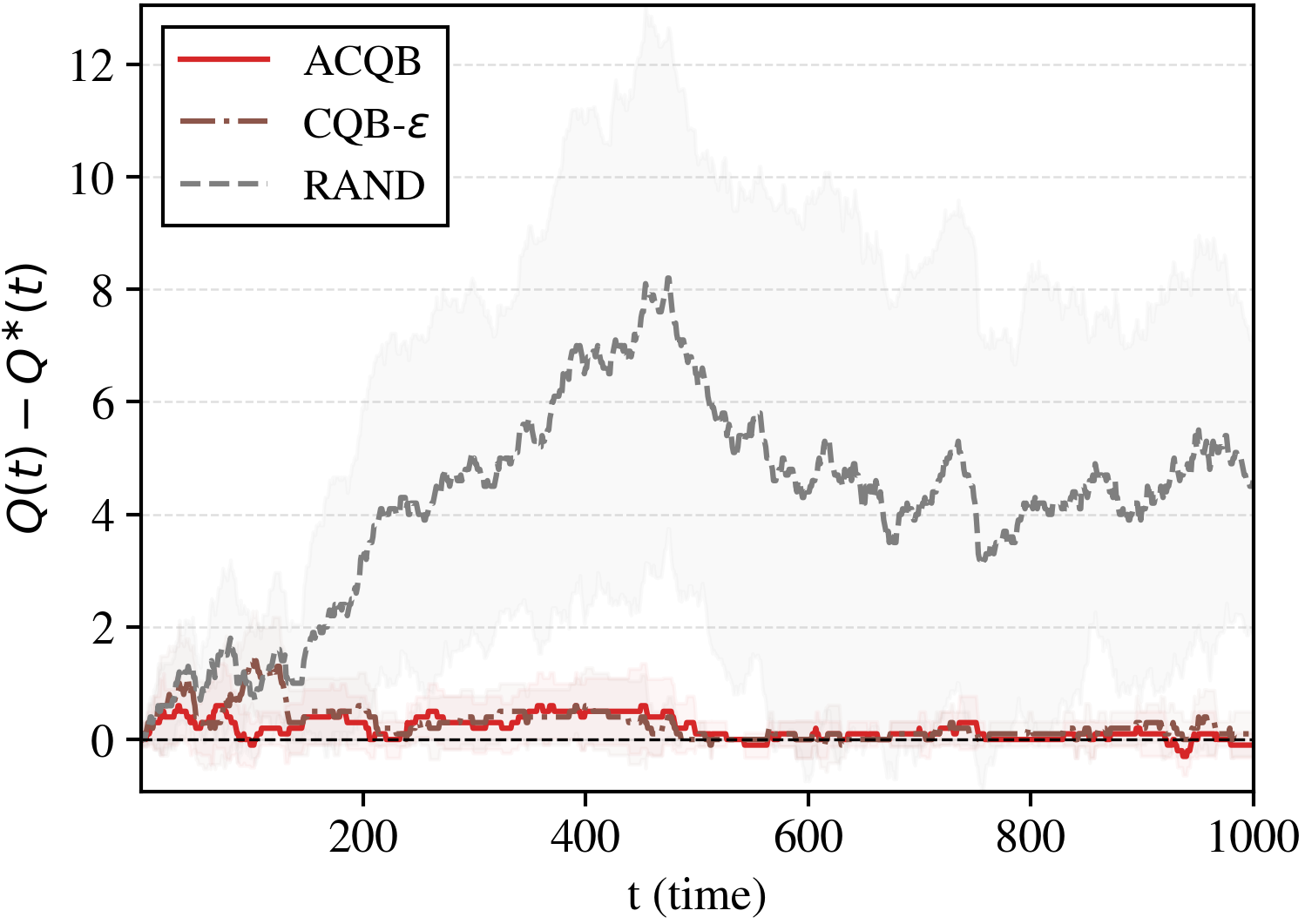}
        \end{subfigure}\hfill
        \begin{subfigure}[c]{0.49\linewidth}
            \includegraphics[width=\linewidth]{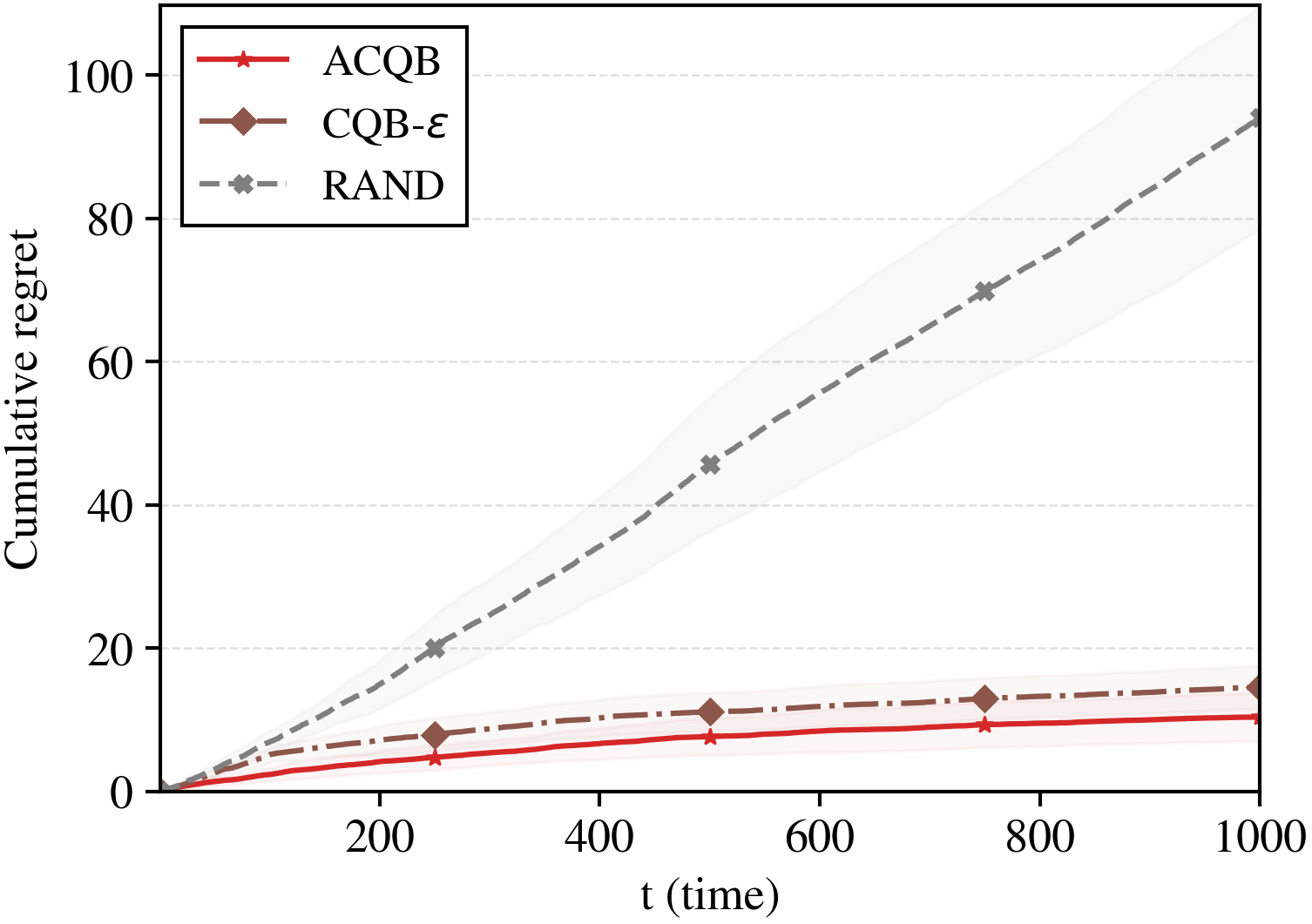}
        \end{subfigure}
    \end{minipage}%

    \vspace{0.15cm} % 행 간 간격

    % ================= ROW 2: lambda=0.65 =================
    \begin{minipage}[c]{\lbwidth}
        \centering \rotatebox{90}{\scriptsize $N=5$}
    \end{minipage}%
    \hfill%
    \begin{minipage}[c]{\blockwidth}
        \centering
        \begin{subfigure}[c]{0.49\linewidth}
            \includegraphics[width=\linewidth]{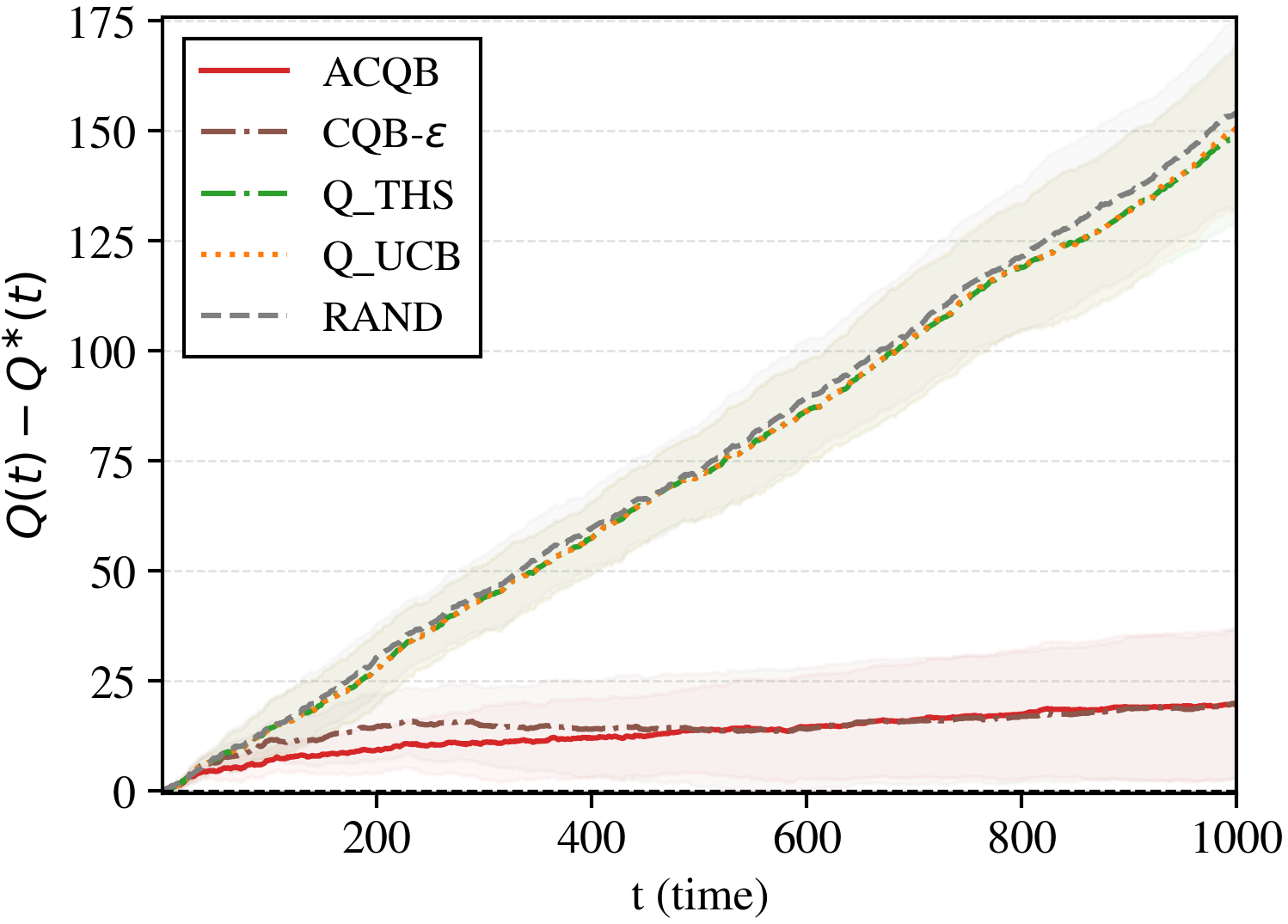}
        \end{subfigure}\hfill
        \begin{subfigure}[c]{0.49\linewidth}
            \includegraphics[width=\linewidth]{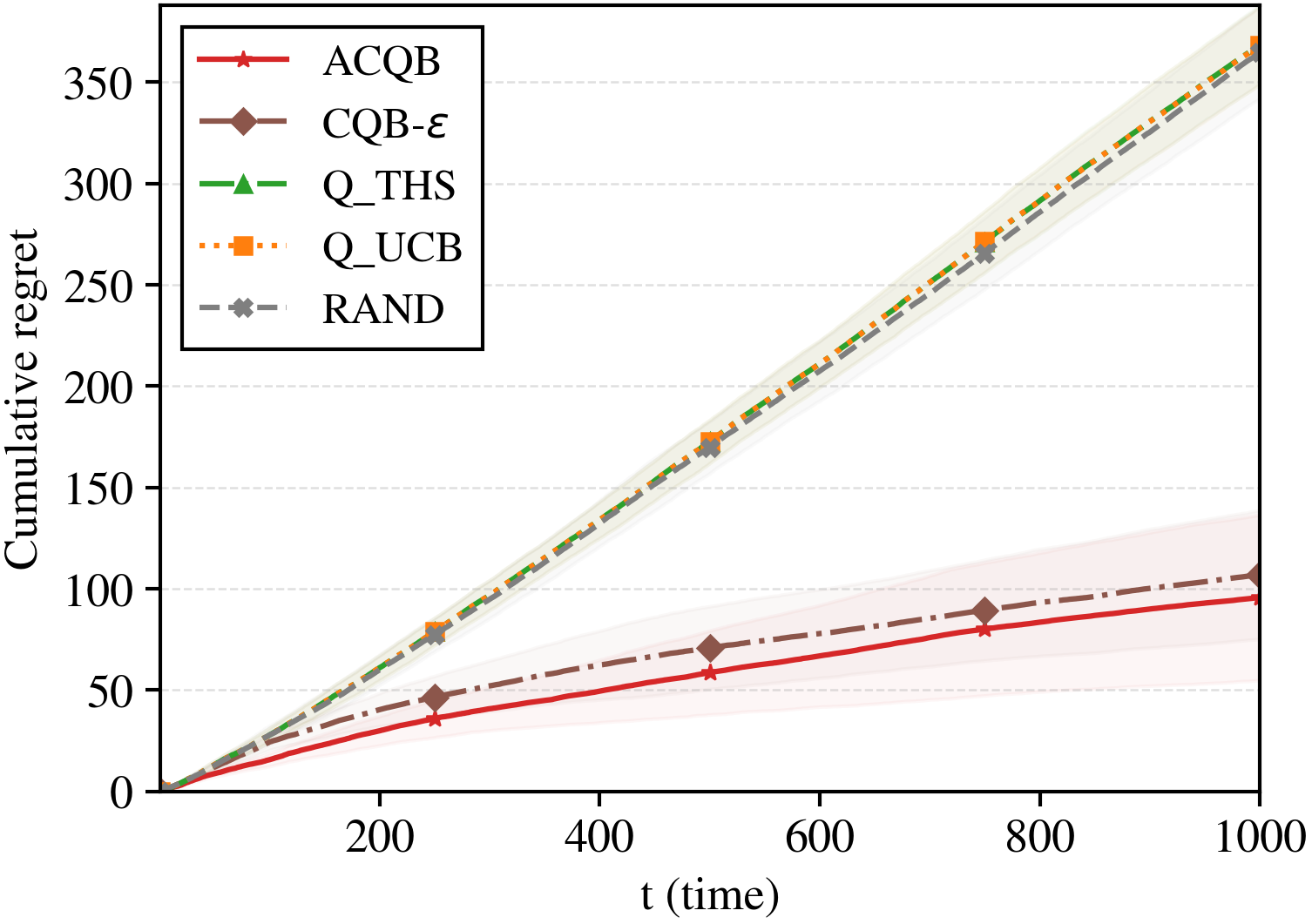}
        \end{subfigure}
    \end{minipage}%
    \hfill%
    \begin{minipage}[c]{\blockwidth}
        \centering
        \begin{subfigure}[c]{0.49\linewidth}
            \includegraphics[width=\linewidth]{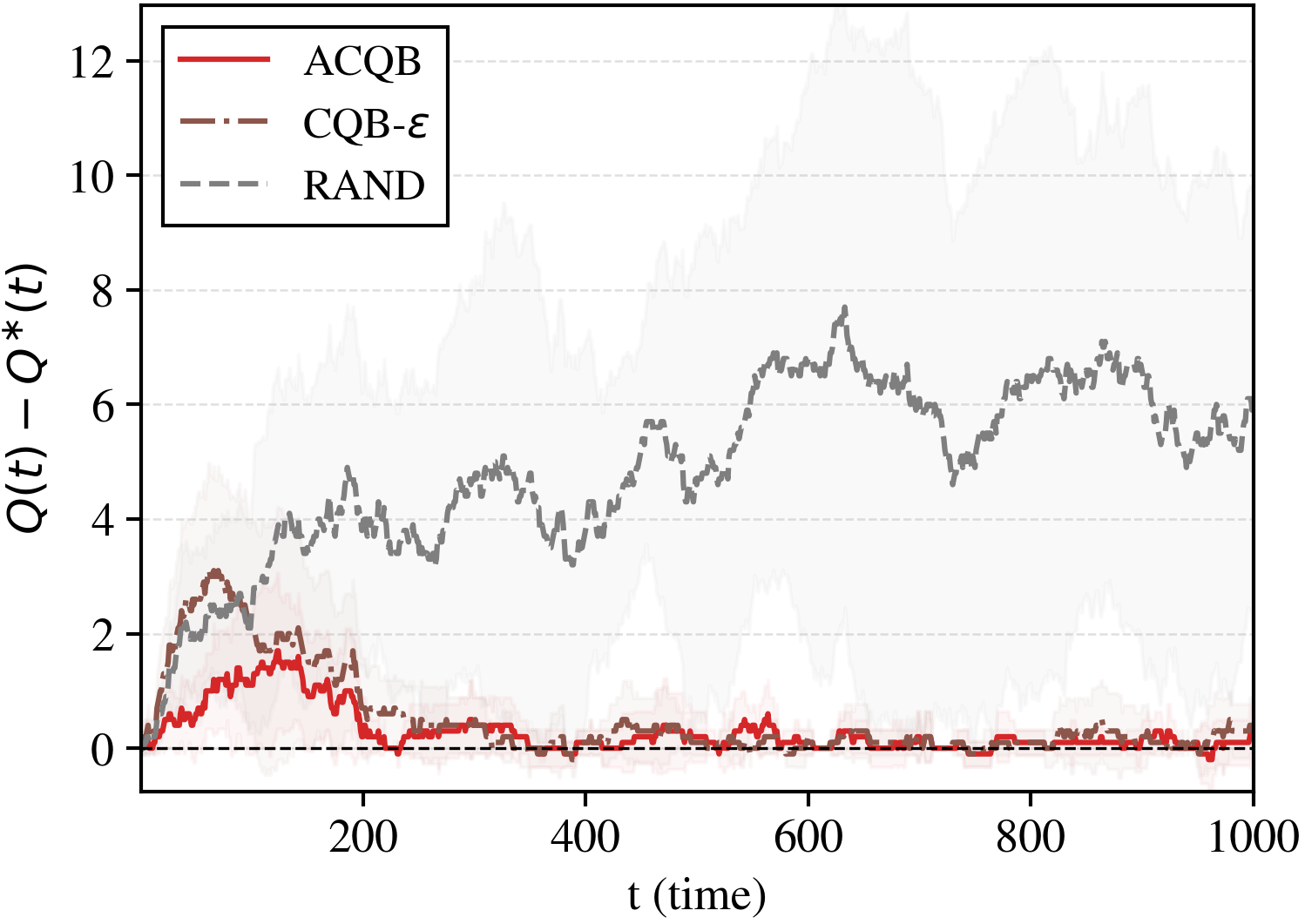}
        \end{subfigure}\hfill
        \begin{subfigure}[c]{0.49\linewidth}
            \includegraphics[width=\linewidth]{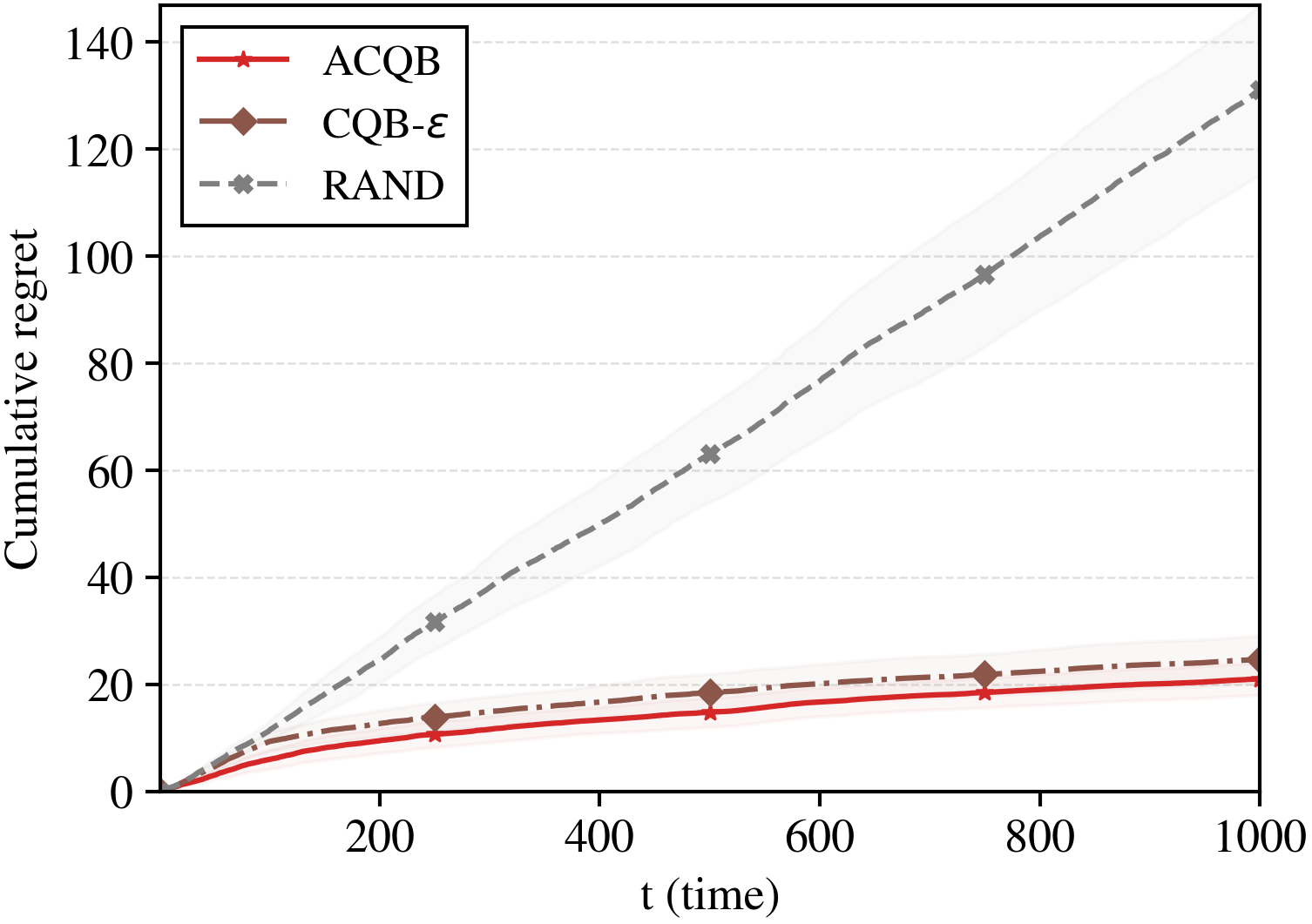}
        \end{subfigure}
    \end{minipage}%

    \vspace{0.15cm} % 행 간 간격

    % ================= ROW 3: lambda=0.75 =================
    \begin{minipage}[c]{\lbwidth}
        \centering \rotatebox{90}{\scriptsize $N=10$}
    \end{minipage}%
    \hfill%
    \begin{minipage}[c]{\blockwidth}
        \centering
        \begin{subfigure}[c]{0.49\linewidth}
            \includegraphics[width=\linewidth]{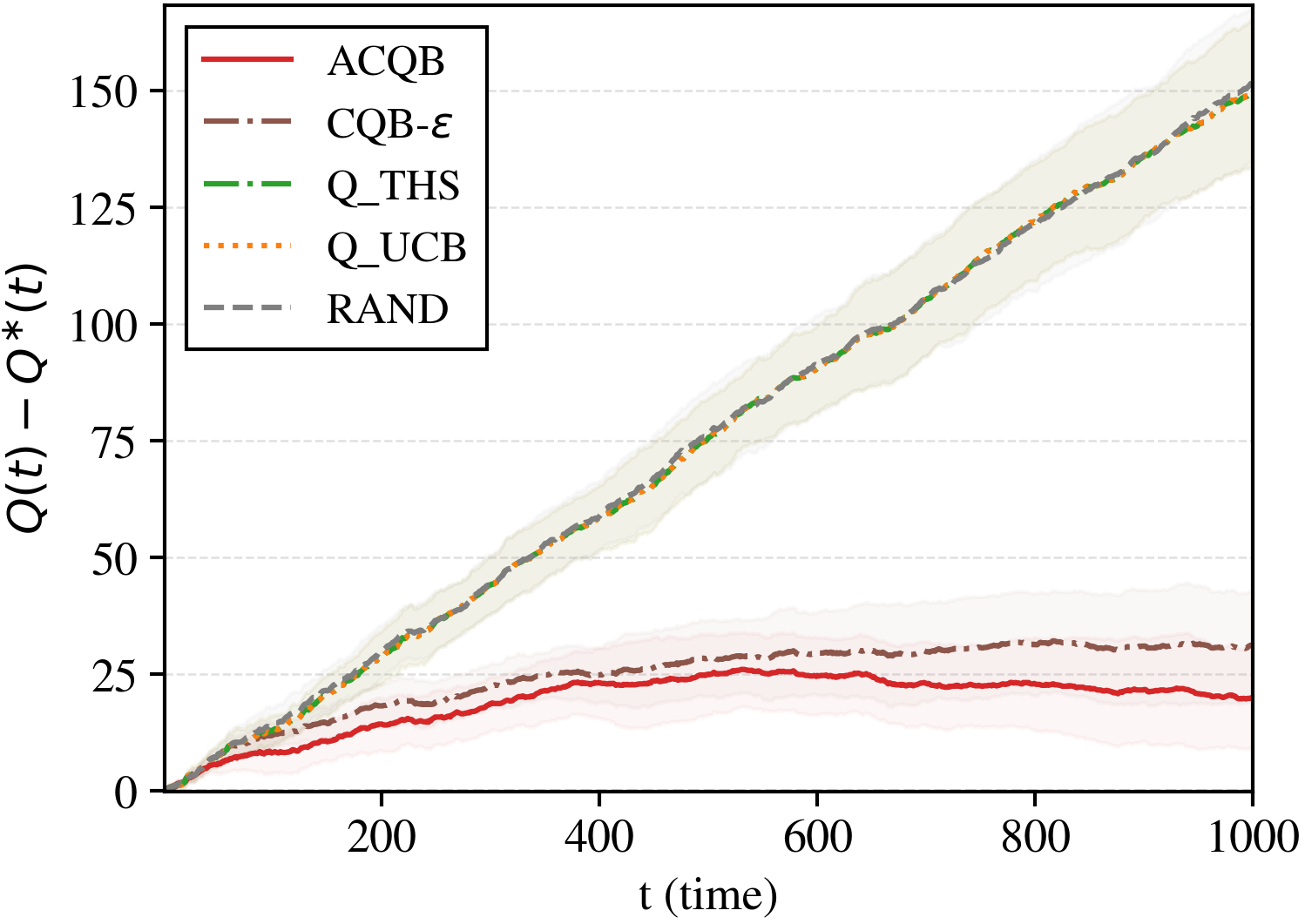}
        \end{subfigure}\hfill
        \begin{subfigure}[c]{0.49\linewidth}
            \includegraphics[width=\linewidth]{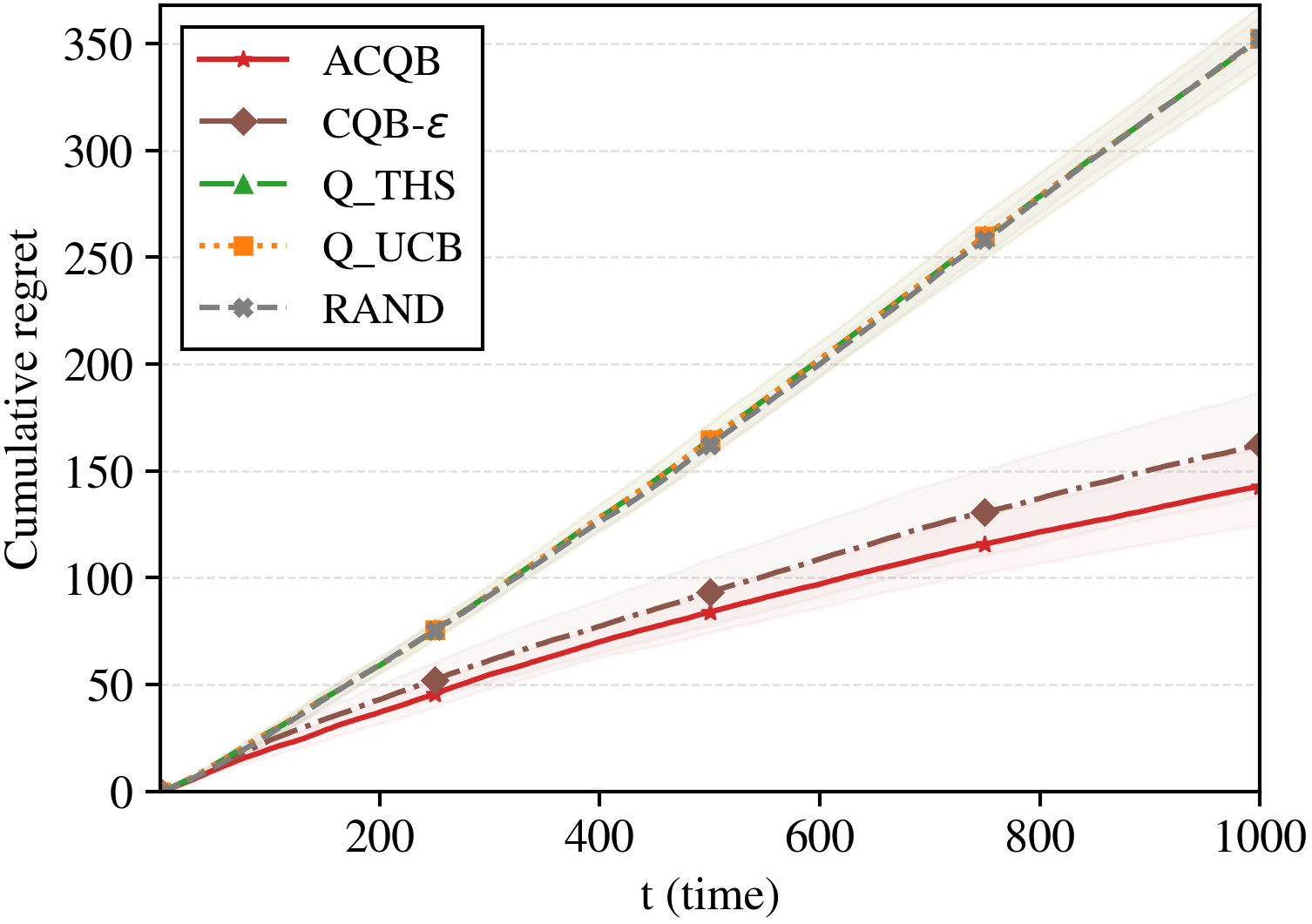}
        \end{subfigure}
    \end{minipage}%
    \hfill%
    \begin{minipage}[c]{\blockwidth}
        \centering
        \begin{subfigure}[c]{0.49\linewidth}
            \includegraphics[width=\linewidth]{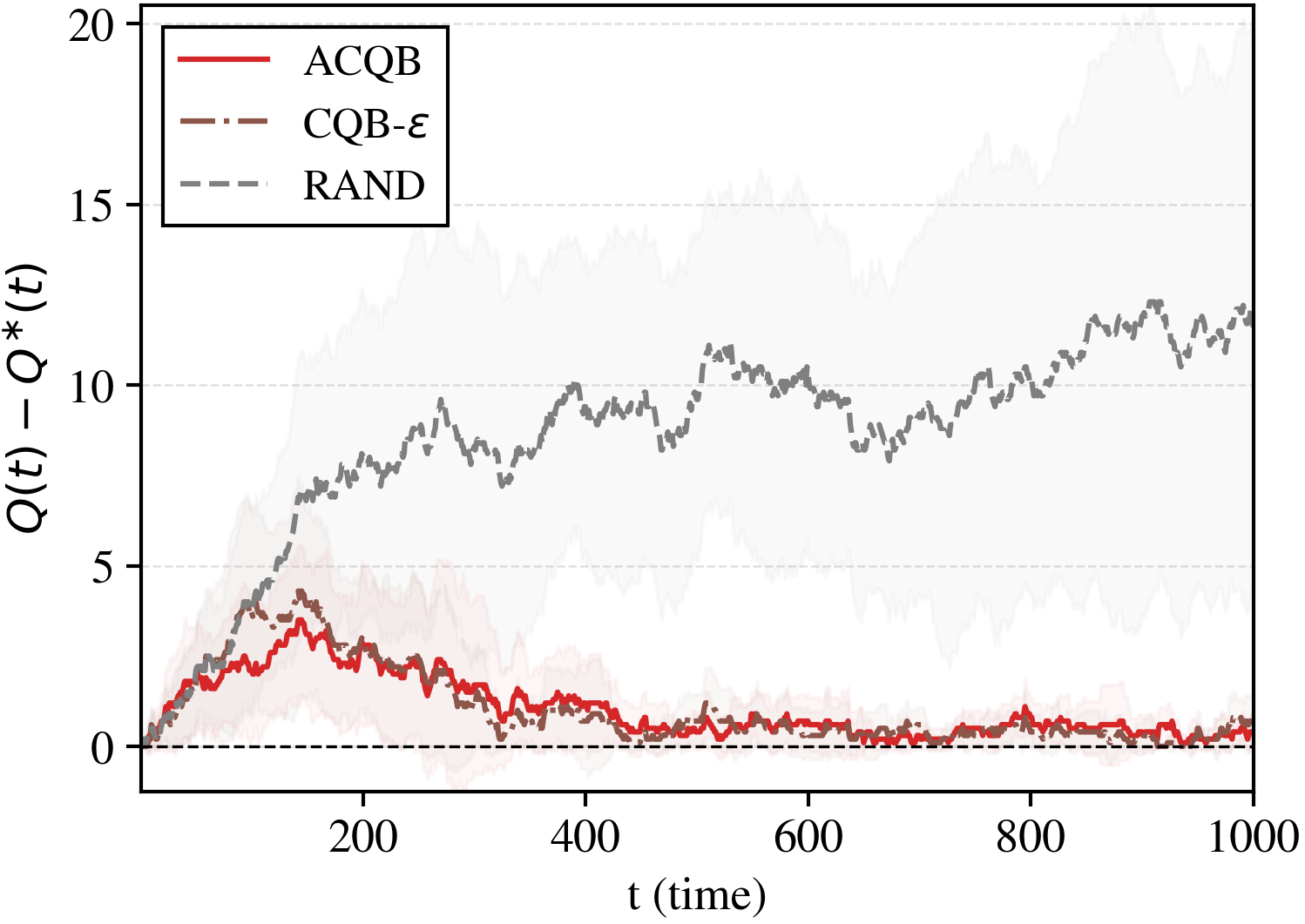}
        \end{subfigure}\hfill
        \begin{subfigure}[c]{0.49\linewidth}
            \includegraphics[width=\linewidth]{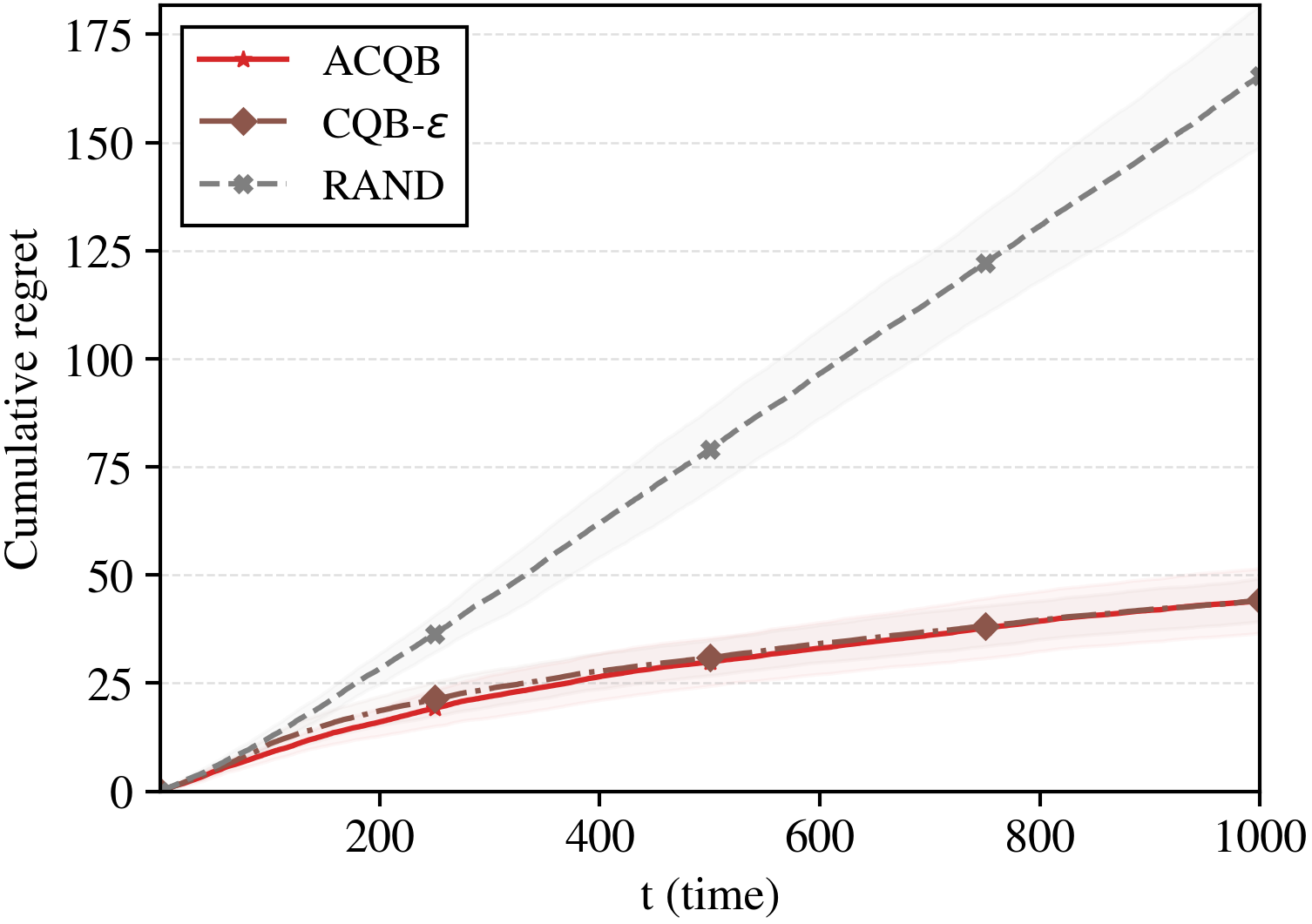}
        \end{subfigure}
    \end{minipage}%

    \vspace{-0.1cm}
    \caption{Queue length and cumulative regret on synthetic data with $\lambda=0.7$, $\epsilon=0.03$, and varying $N \in \{3,5,10\}$.}
    \label{fig:sim_results2}
\end{figure*}

\begin{figure*}[htbp]
    \centering
    
    % --- 설정: 너비 변수 수정 ---
    % 중앙 라벨이 사라졌으므로 blockwidth를 0.46 -> 0.48로 늘려 공간 활용
    \newcommand{\lbwidth}{0.02\textwidth}  % 세로 라벨 너비
    \newcommand{\blockwidth}{0.48\textwidth} % 그림 블록 너비 (확장됨)
    
    % ---------------- Header (K=1, K=2) ----------------
    % 1. Left Spacing (첫 번째 세로 라벨 위의 빈 공간)
    \begin{minipage}{\lbwidth} \centering \phantom{L} \end{minipage}%
    \hfill%
    % 2. K=1 Header (첫 번째 그림 블록 중앙)
    \begin{minipage}{\blockwidth} \centering \scriptsize $K=1$ \end{minipage}%
    \hfill%
    % 3. K=2 Header (두 번째 그림 블록 중앙) - 중간 라벨 공간 없음
    \begin{minipage}{\blockwidth} \centering \scriptsize $K=2$ \end{minipage}%
    
    % \vspace{0.1cm} % 헤더와 첫 줄 사이 간격

    % ================= ROW 1: lambda=0.55 =================
    % 1. Row Label
    \begin{minipage}[c]{\lbwidth}
        \centering \rotatebox{90}{\scriptsize $\eps=0.05$}
    \end{minipage}%
    \hfill%
    % 2. K=1 Block (Images)
    \begin{minipage}[c]{\blockwidth}
        \centering
        \begin{subfigure}[c]{0.49\linewidth}
            \includegraphics[width=\linewidth]{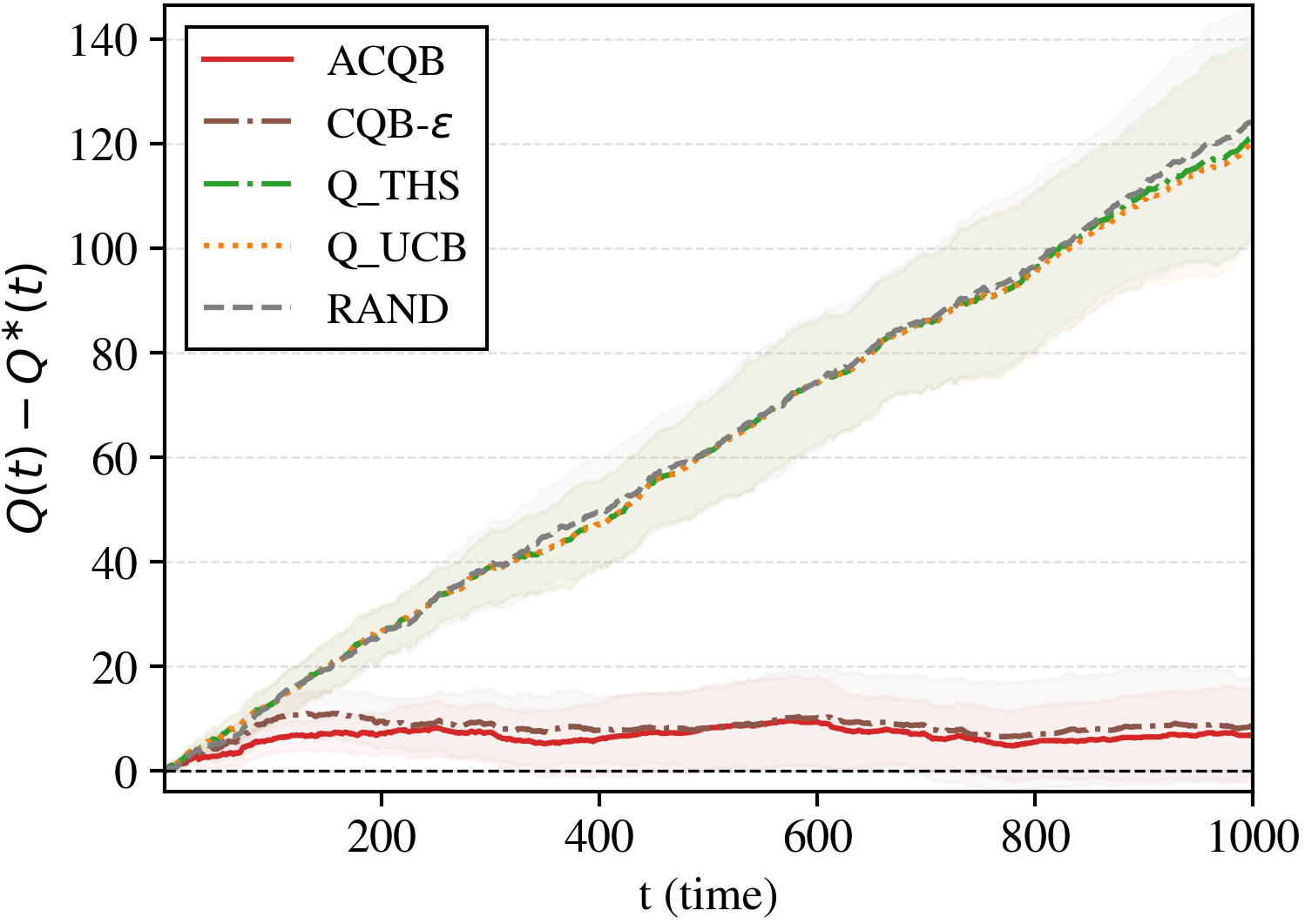}
        \end{subfigure}\hfill
        \begin{subfigure}[c]{0.49\linewidth}
            \includegraphics[width=\linewidth]{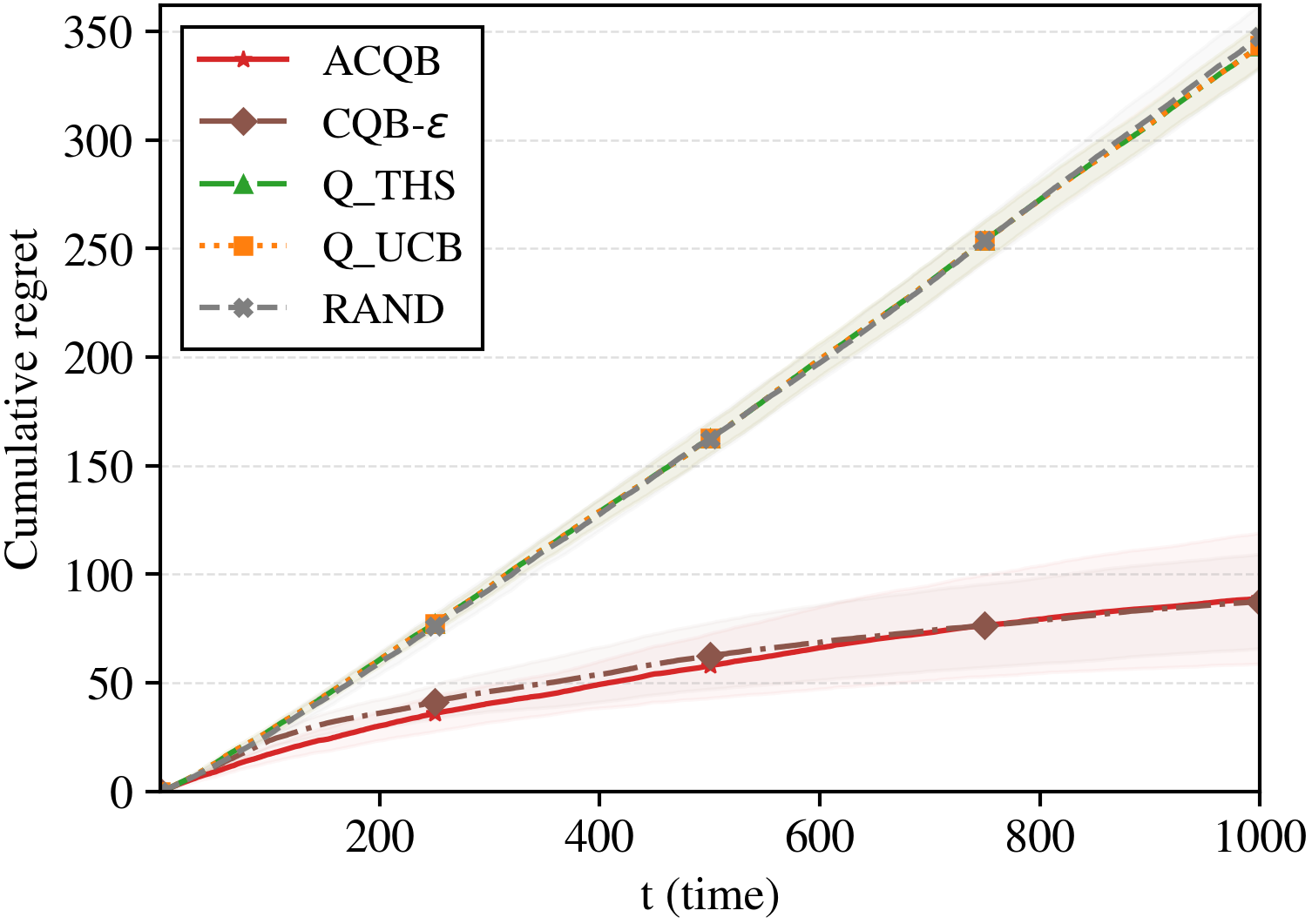}
        \end{subfigure}
    \end{minipage}%
    \hfill%
    % 3. K=2 Block (Images) - 중간 라벨 제거됨, 바로 이어짐
    \begin{minipage}[c]{\blockwidth}
        \centering
        \begin{subfigure}[c]{0.49\linewidth}
            \includegraphics[width=\linewidth]{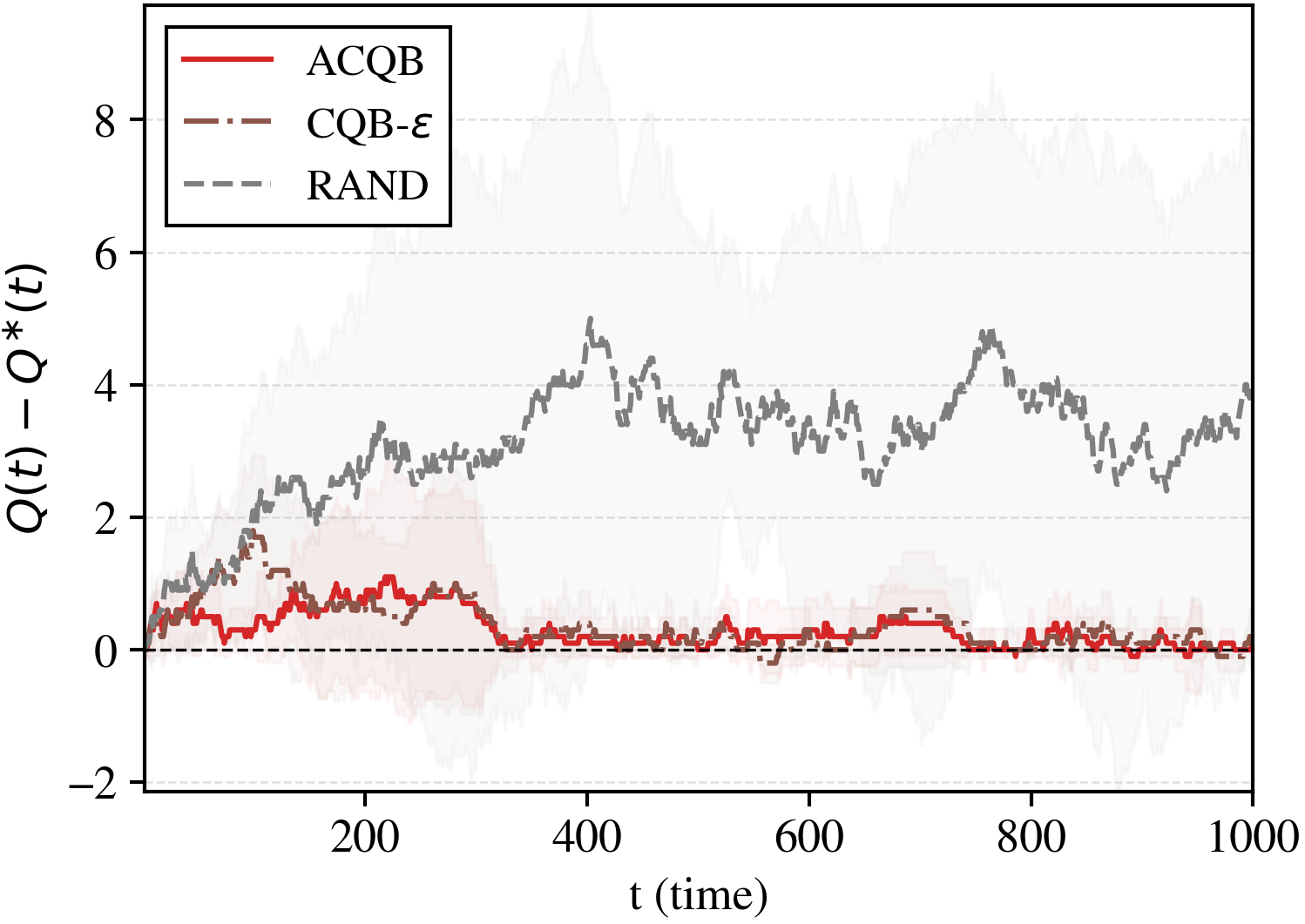}
        \end{subfigure}\hfill
        \begin{subfigure}[c]{0.49\linewidth}
            \includegraphics[width=\linewidth]{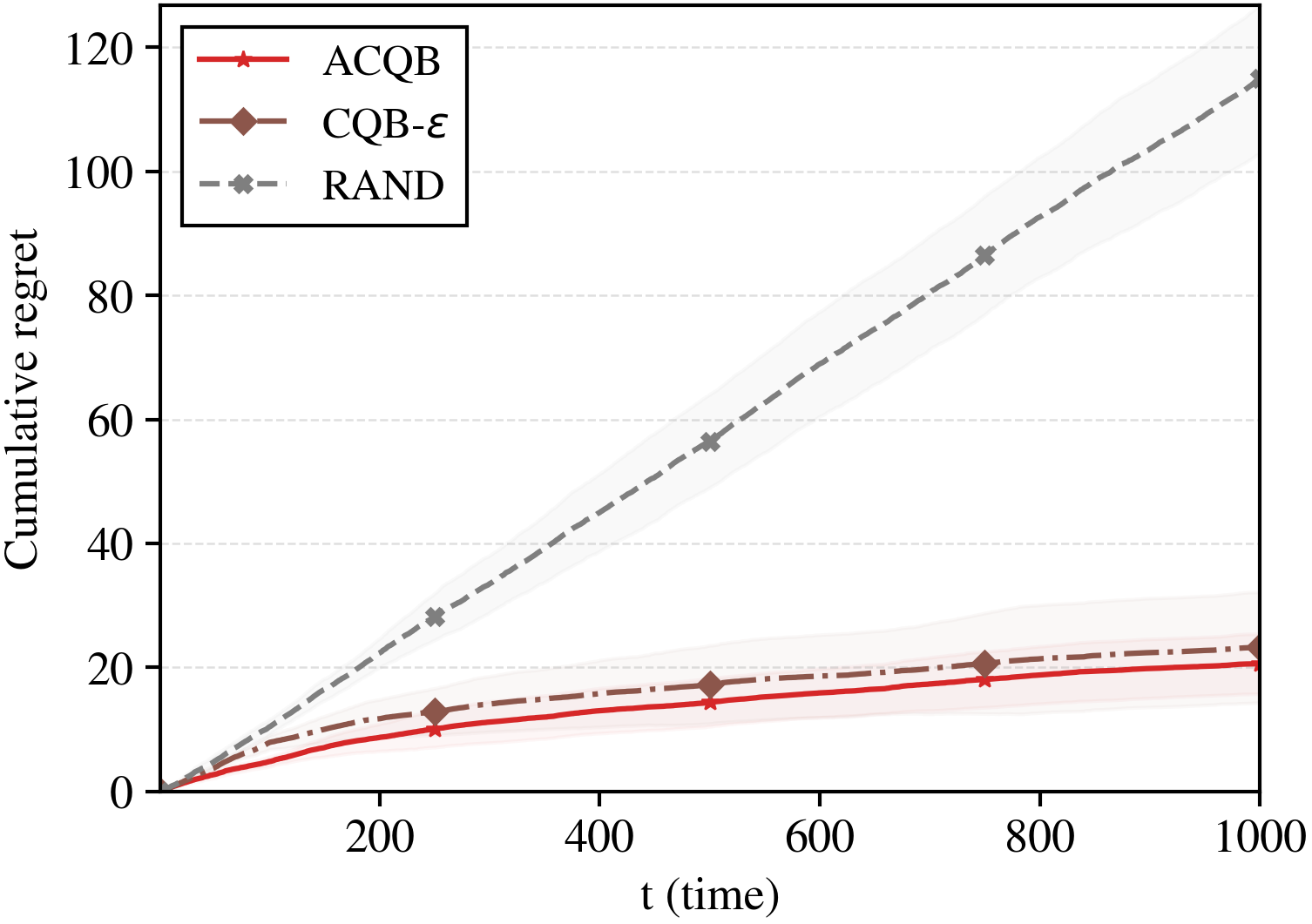}
        \end{subfigure}
    \end{minipage}%

    \vspace{0.15cm} % 행 간 간격

    % ================= ROW 2: lambda=0.65 =================
    \begin{minipage}[c]{\lbwidth}
        \centering \rotatebox{90}{\scriptsize $\eps=0.03$}
    \end{minipage}%
    \hfill%
    \begin{minipage}[c]{\blockwidth}
        \centering
        \begin{subfigure}[c]{0.49\linewidth}
            \includegraphics[width=\linewidth]{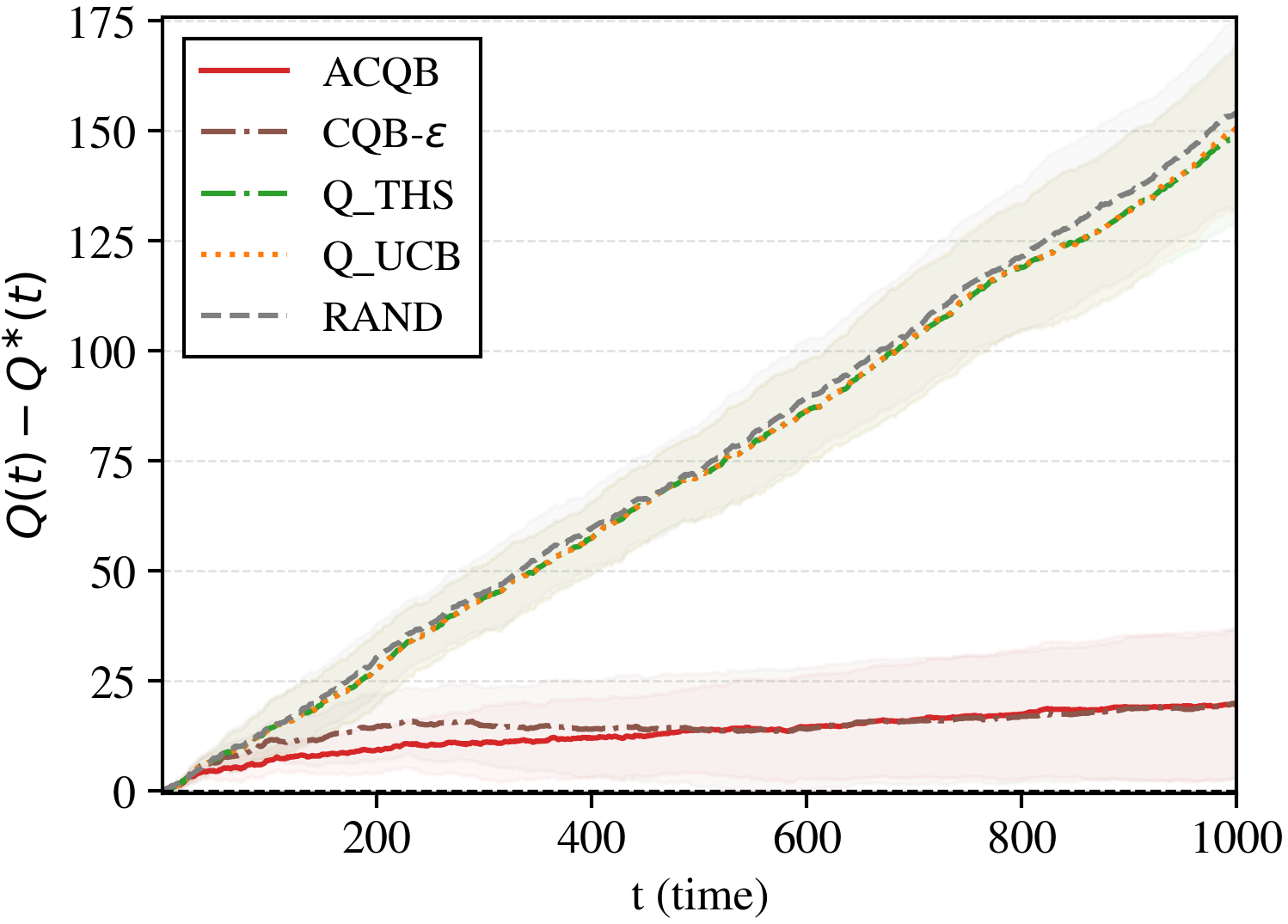}
        \end{subfigure}\hfill
        \begin{subfigure}[c]{0.49\linewidth}
            \includegraphics[width=\linewidth]{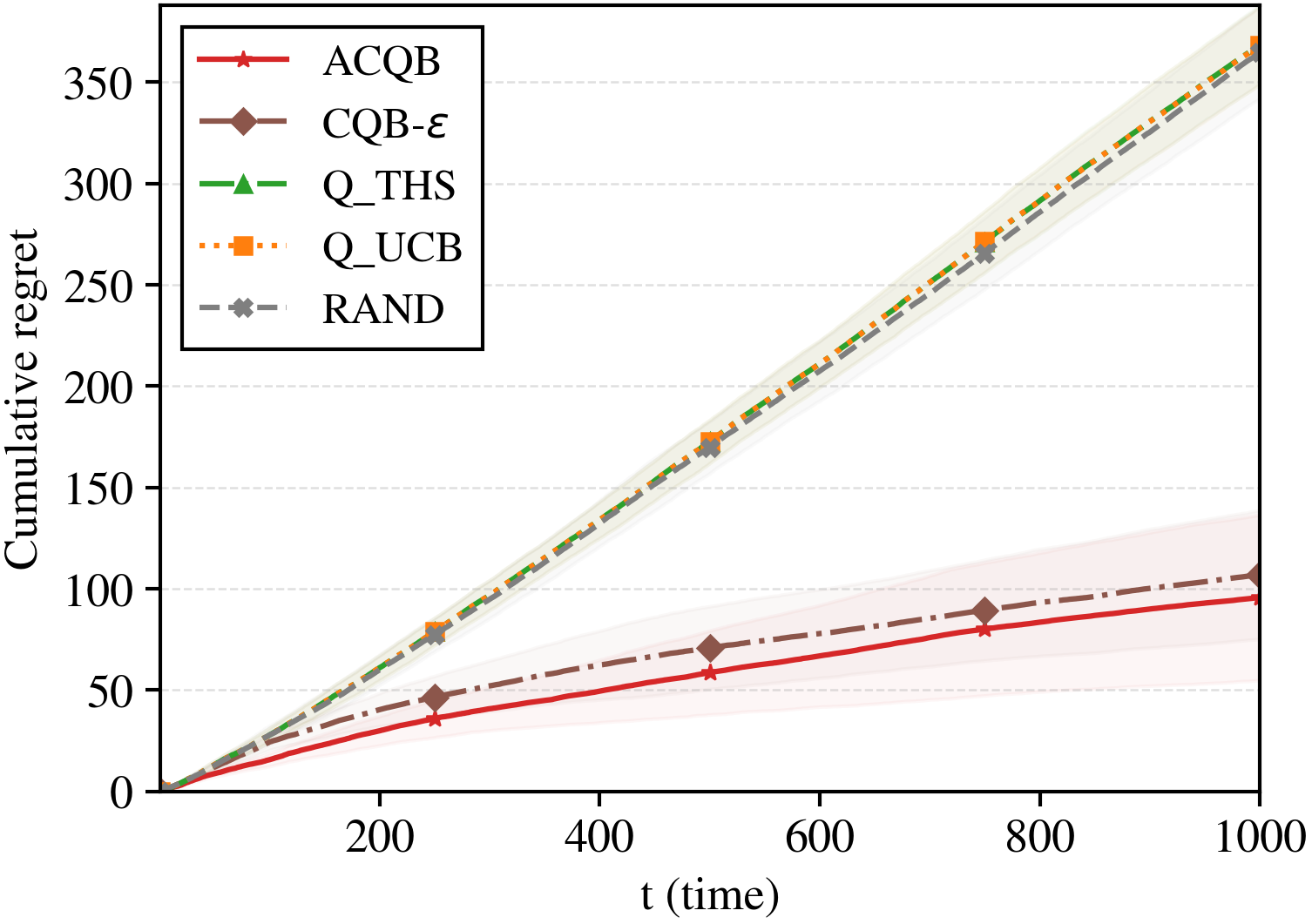}
        \end{subfigure}
    \end{minipage}%
    \hfill%
    \begin{minipage}[c]{\blockwidth}
        \centering
        \begin{subfigure}[c]{0.49\linewidth}
            \includegraphics[width=\linewidth]{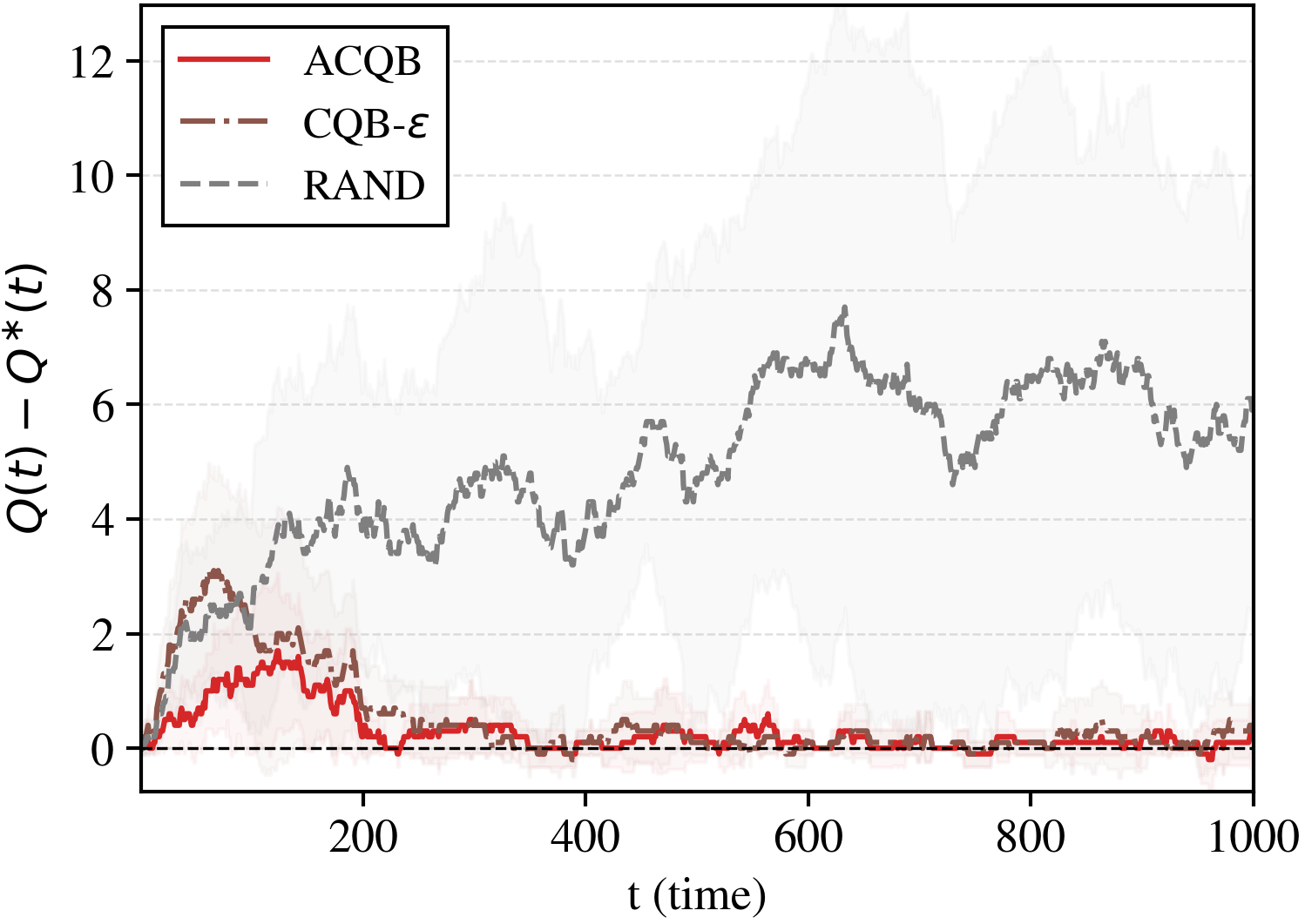}
        \end{subfigure}\hfill
        \begin{subfigure}[c]{0.49\linewidth}
            \includegraphics[width=\linewidth]{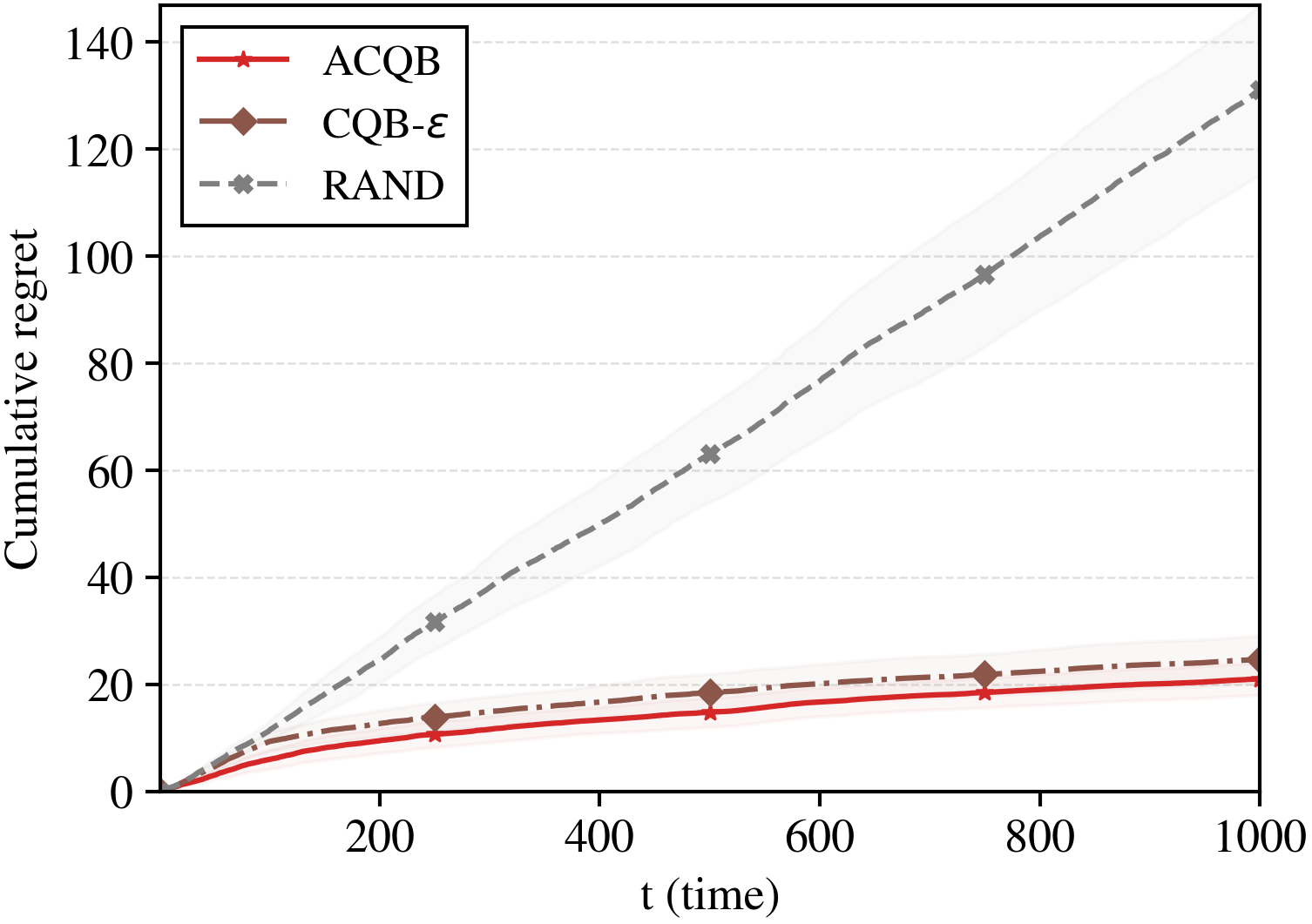}
        \end{subfigure}
    \end{minipage}%

    \vspace{0.15cm} % 행 간 간격

    % ================= ROW 3: lambda=0.75 =================
    \begin{minipage}[c]{\lbwidth}
        \centering \rotatebox{90}{\scriptsize $\eps=0.01$}
    \end{minipage}%
    \hfill%
    \begin{minipage}[c]{\blockwidth}
        \centering
        \begin{subfigure}[c]{0.49\linewidth}
            \includegraphics[width=\linewidth]{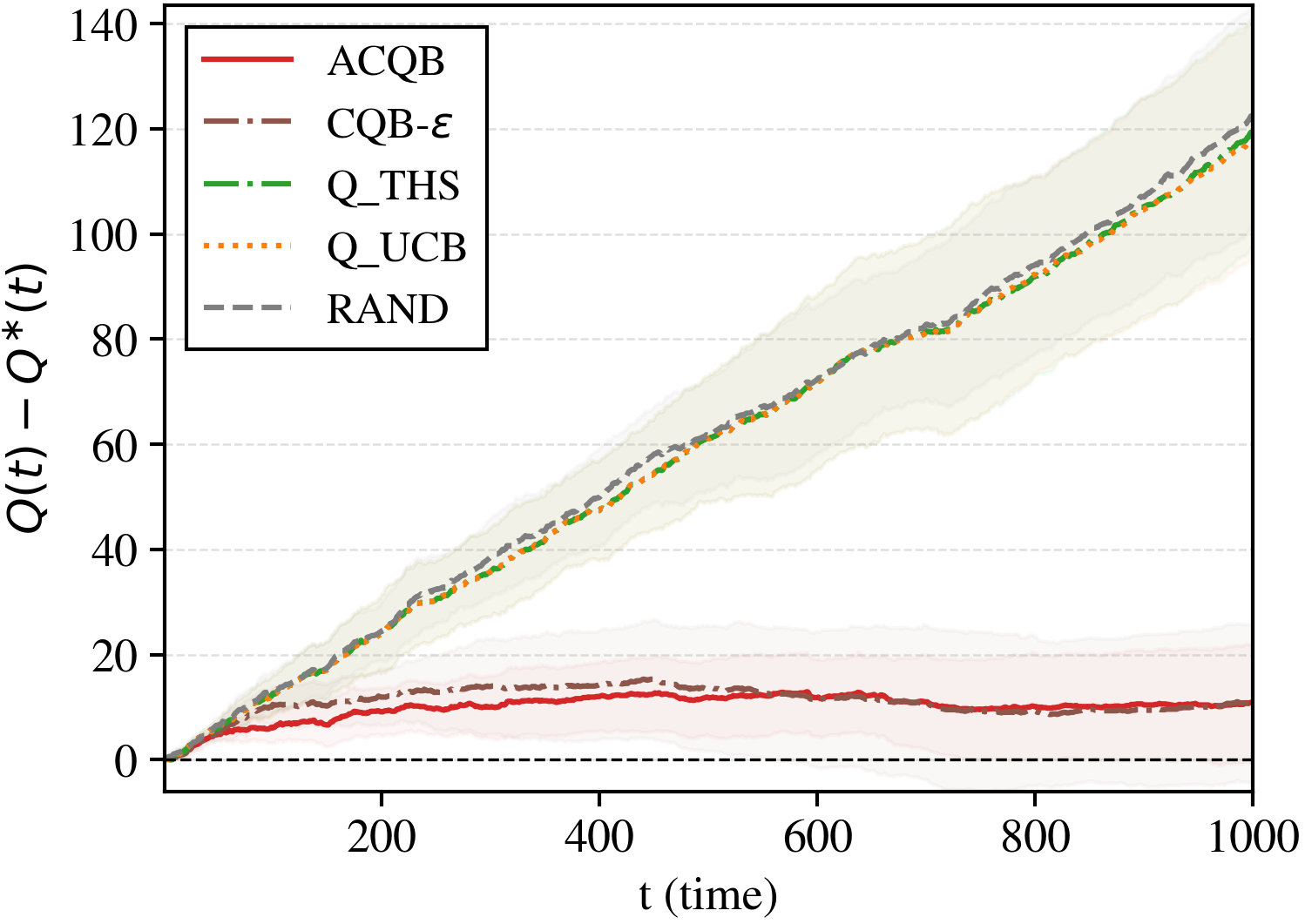}
        \end{subfigure}\hfill
        \begin{subfigure}[c]{0.49\linewidth}
            \includegraphics[width=\linewidth]{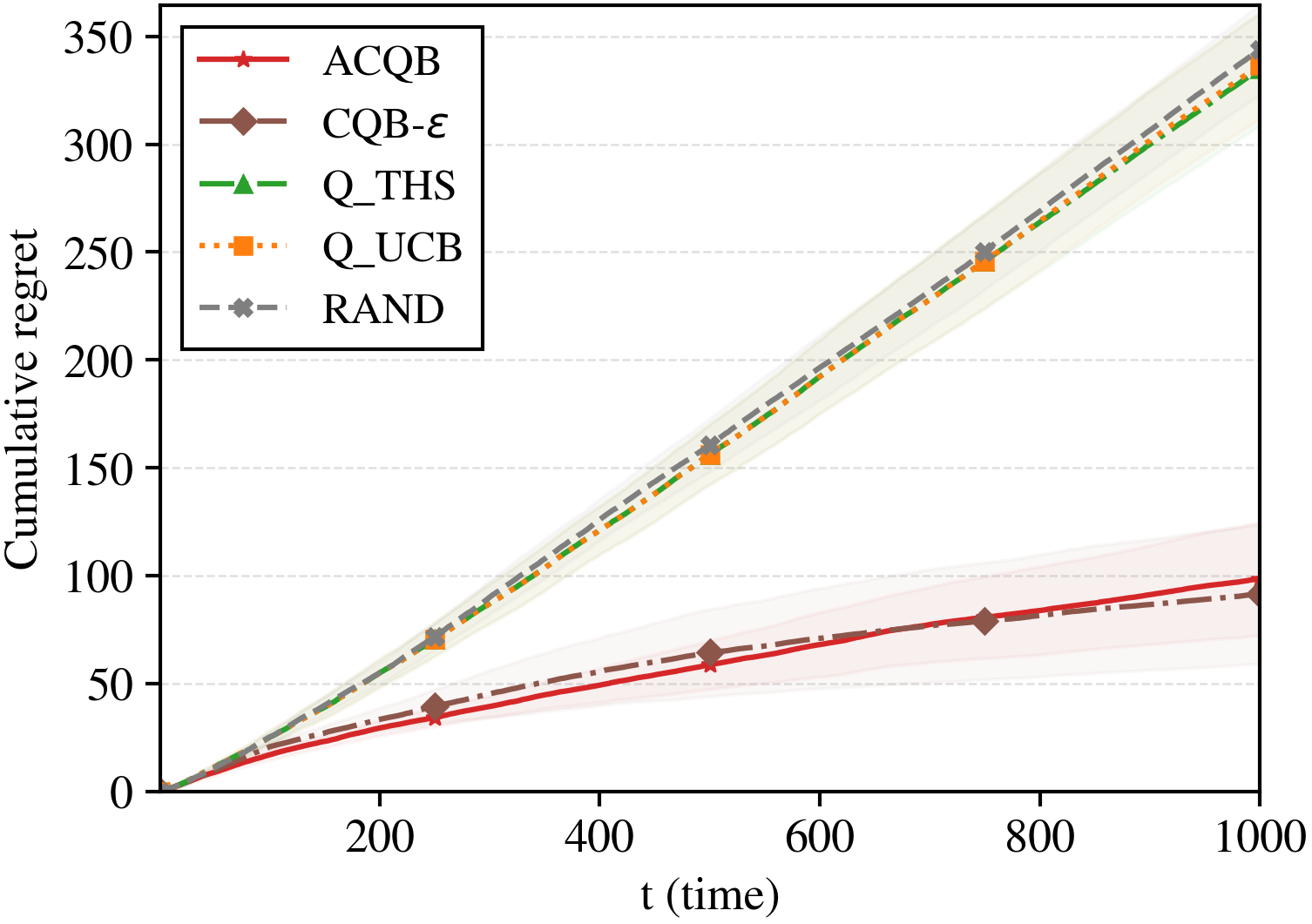}
        \end{subfigure}
    \end{minipage}%
    \hfill%
    \begin{minipage}[c]{\blockwidth}
        \centering
        \begin{subfigure}[c]{0.49\linewidth}
            \includegraphics[width=\linewidth]{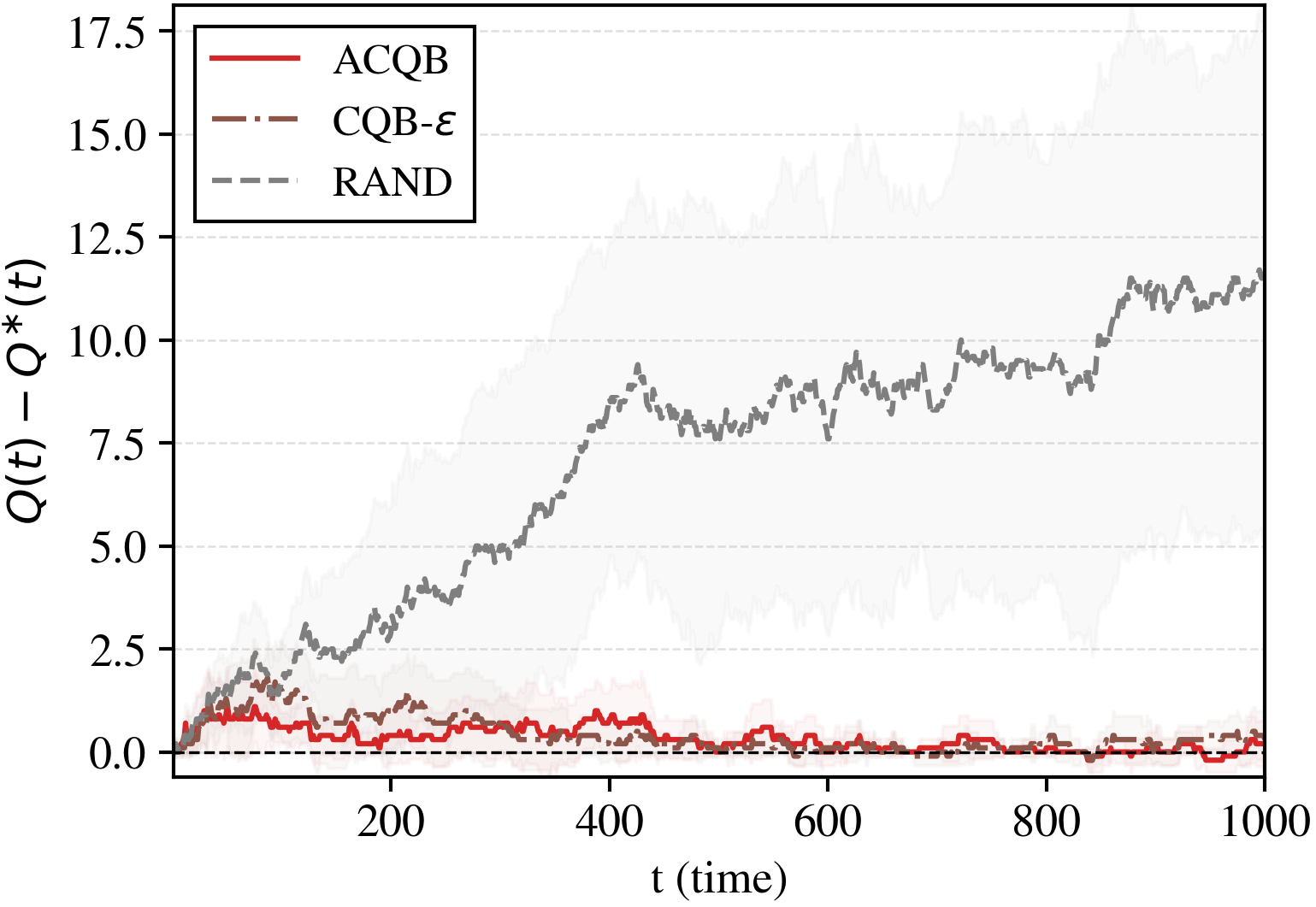}
        \end{subfigure}\hfill
        \begin{subfigure}[c]{0.49\linewidth}
            \includegraphics[width=\linewidth]{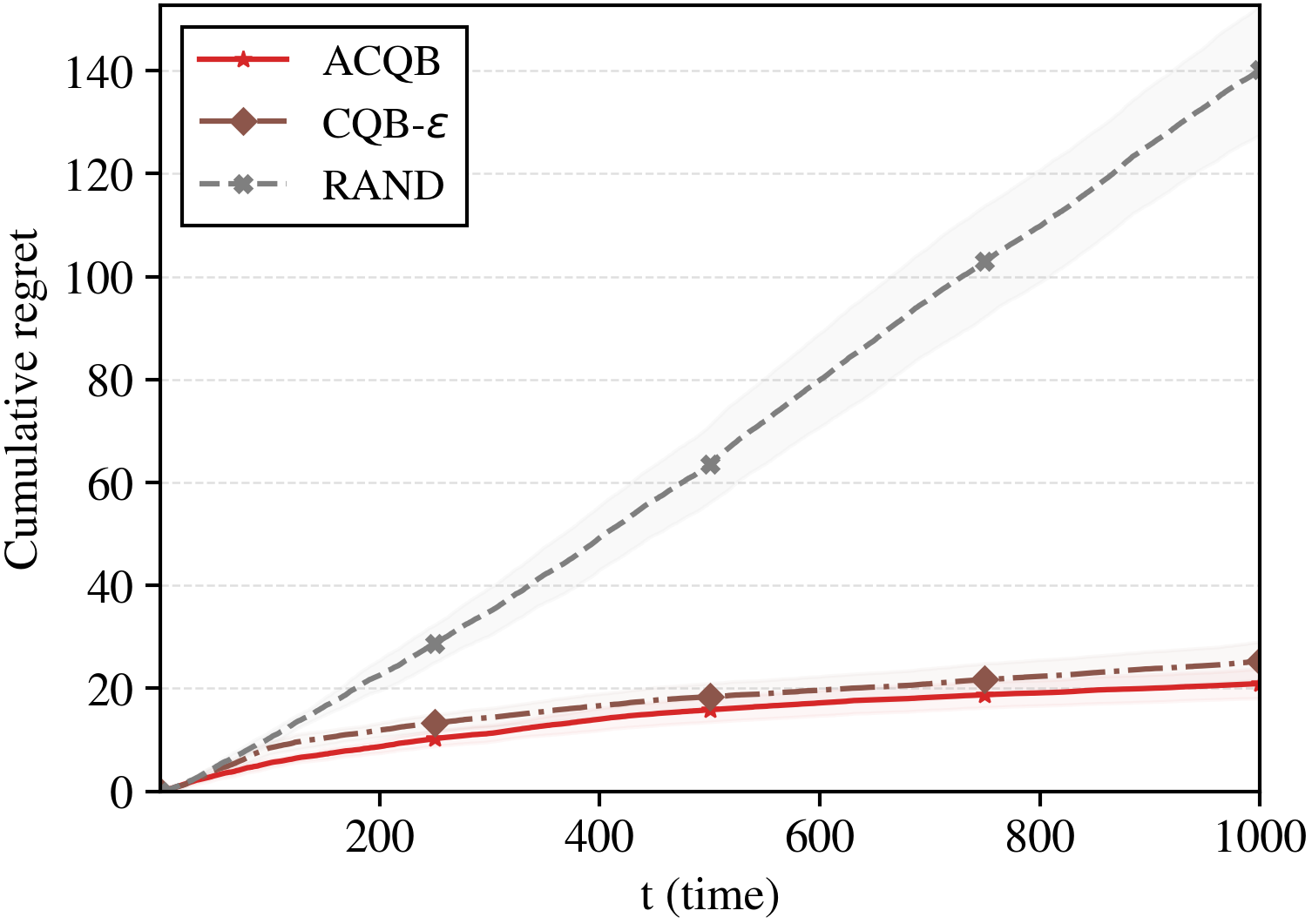}
        \end{subfigure}
    \end{minipage}%

    \vspace{-0.1cm}
    \caption{Queue length and cumulative regret on synthetic data with $\lambda=0.7$, $N=5$, and varying $\eps\in\{0.05,0.03,0.01\}$.}
    \label{fig:sim_result3}
\end{figure*}
\FloatBarrier

\subsection{Additional exploration scheduling results}
\label{sec:exp:exploration-variants}
We compare ACQB with heuristic exploration variants that trigger exploration independently of arrivals. Once exploration is triggered, the exploration query is selected by FIFO, round-robin over the least-served jobs, or random scheduling, while the routing rule is kept fixed. The results in \Cref{fig:sim_exp_n5} show that arrival-based exploration remains stable across both $K=1$ and $K=2$.

\begin{figure*}[htbp]
    \centering
    
    \newcommand{\lbwidth}{0.02\textwidth}
    \newcommand{\blockwidth}{0.48\textwidth}
    
    % ---------------- Header (K=1, K=2) ----------------
    \begin{minipage}{\lbwidth}
        \centering \phantom{L}
    \end{minipage}%
    \hfill%
    \begin{minipage}{\blockwidth}
        \centering \scriptsize $K=1$
    \end{minipage}%
    \hfill%
    \begin{minipage}{\blockwidth}
        \centering \scriptsize $K=2$
    \end{minipage}%
    
    \vspace{0.1cm}

    % ================= ROW: N=5 =================
    \begin{minipage}[c]{\lbwidth}
        \centering \rotatebox{90}{\scriptsize $N=5$}
    \end{minipage}%
    \hfill%
    \begin{minipage}[c]{\blockwidth}
        \centering
        \begin{subfigure}[c]{0.49\linewidth}
            \includegraphics[width=\linewidth]{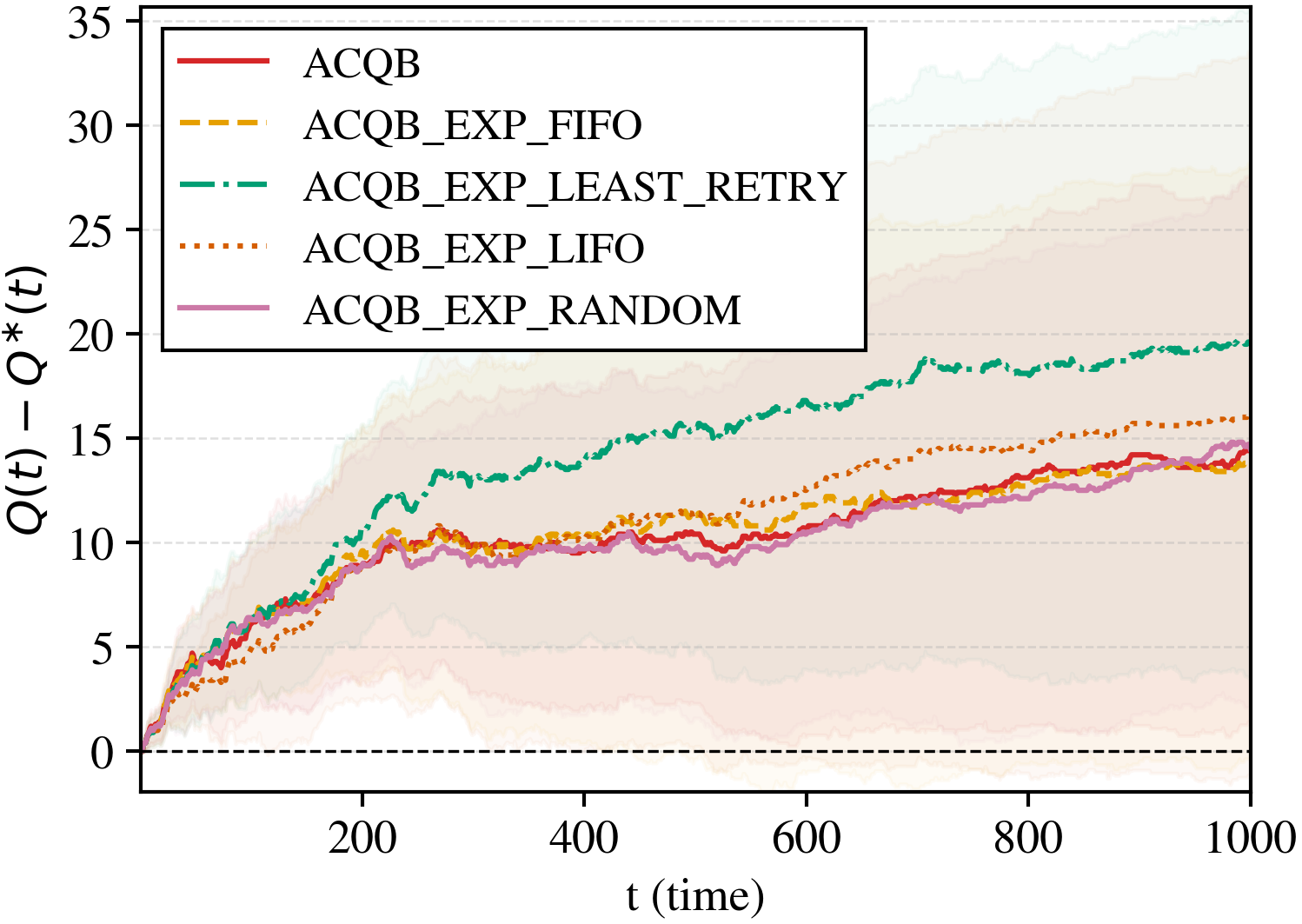}
        \end{subfigure}\hfill
        \begin{subfigure}[c]{0.49\linewidth}
            \includegraphics[width=\linewidth]{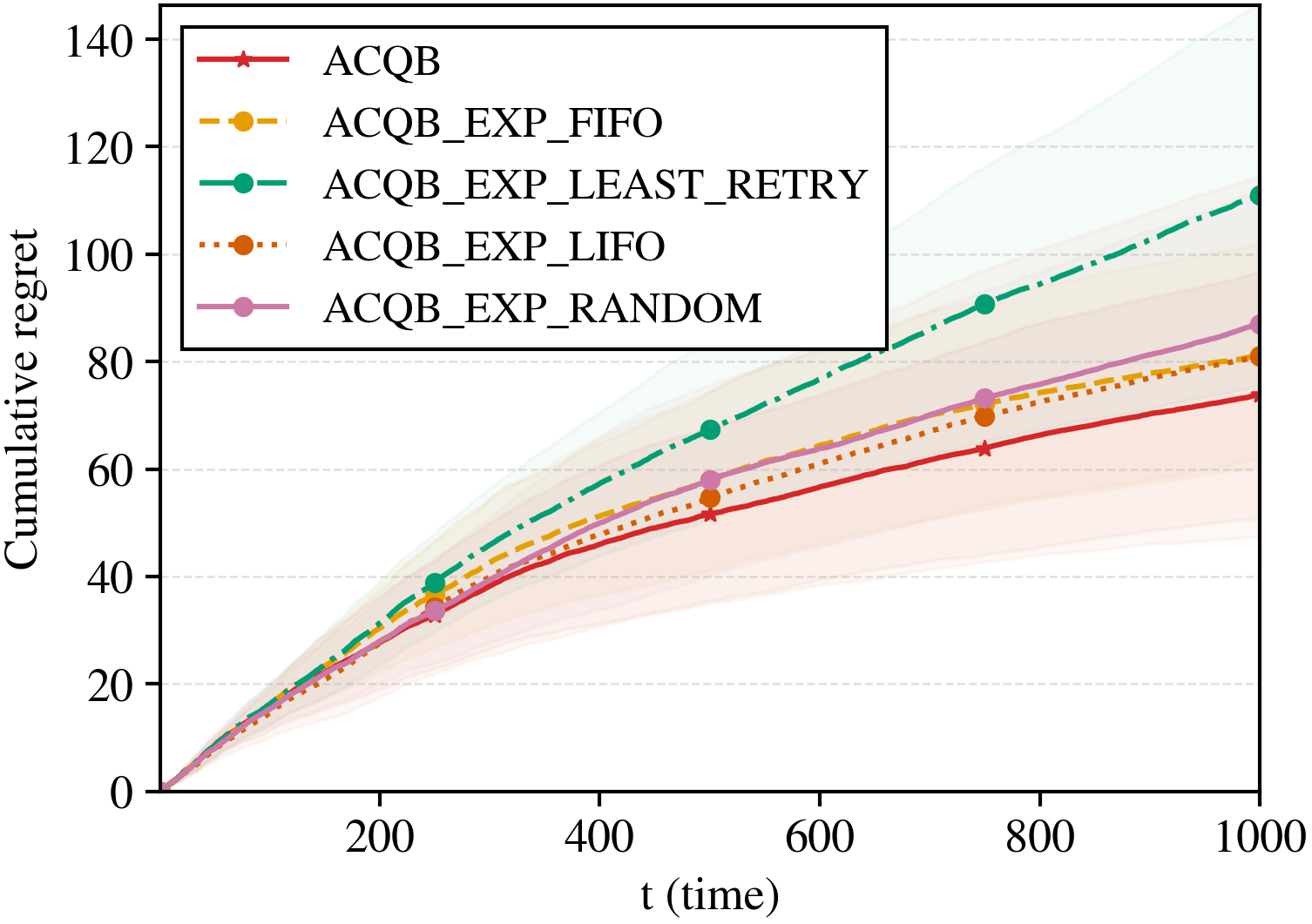}
        \end{subfigure}
    \end{minipage}%
    \hfill%
    \begin{minipage}[c]{\blockwidth}
        \centering
        \begin{subfigure}[c]{0.49\linewidth}
            \includegraphics[width=\linewidth]{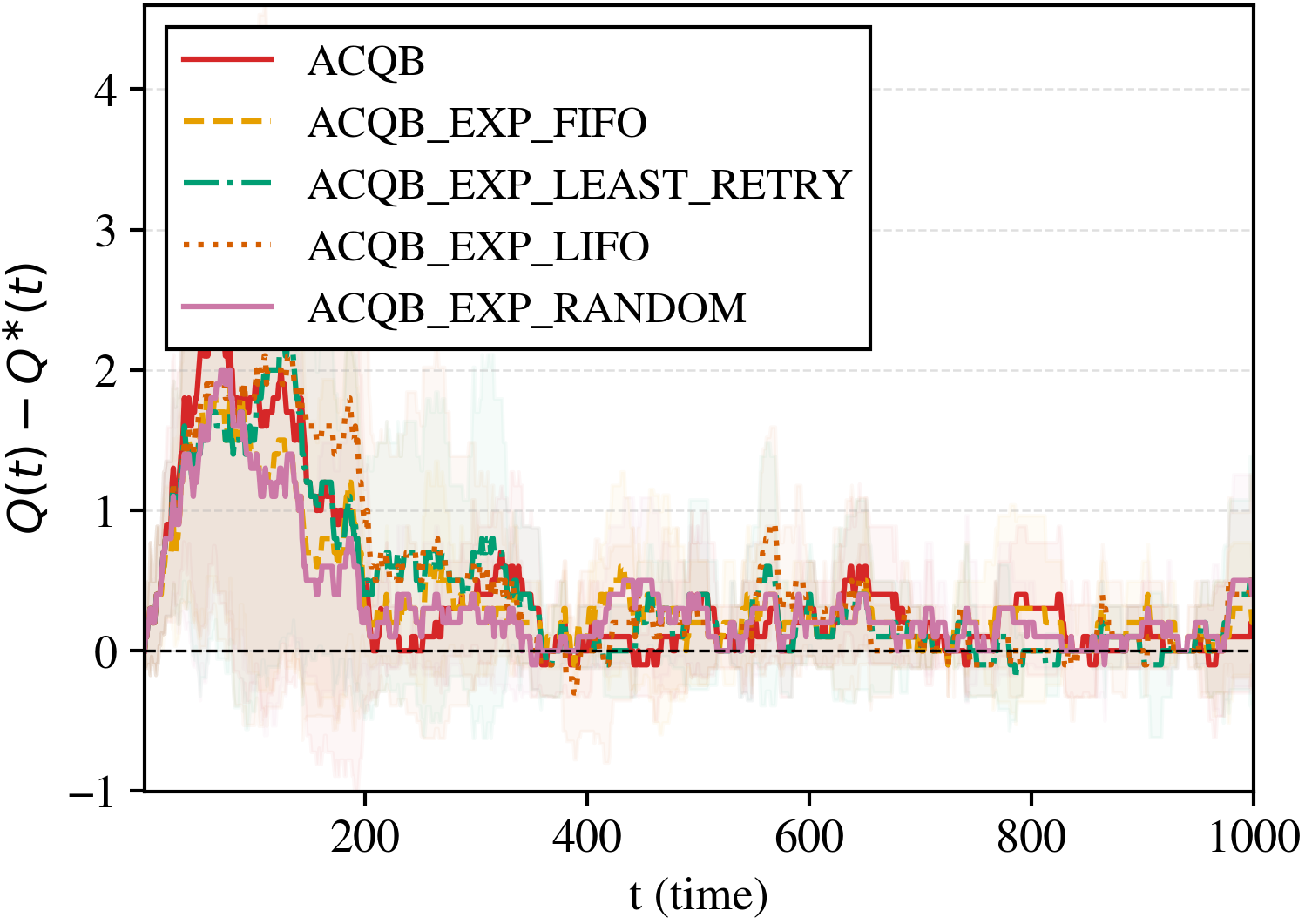}
        \end{subfigure}\hfill
        \begin{subfigure}[c]{0.49\linewidth}
            \includegraphics[width=\linewidth]{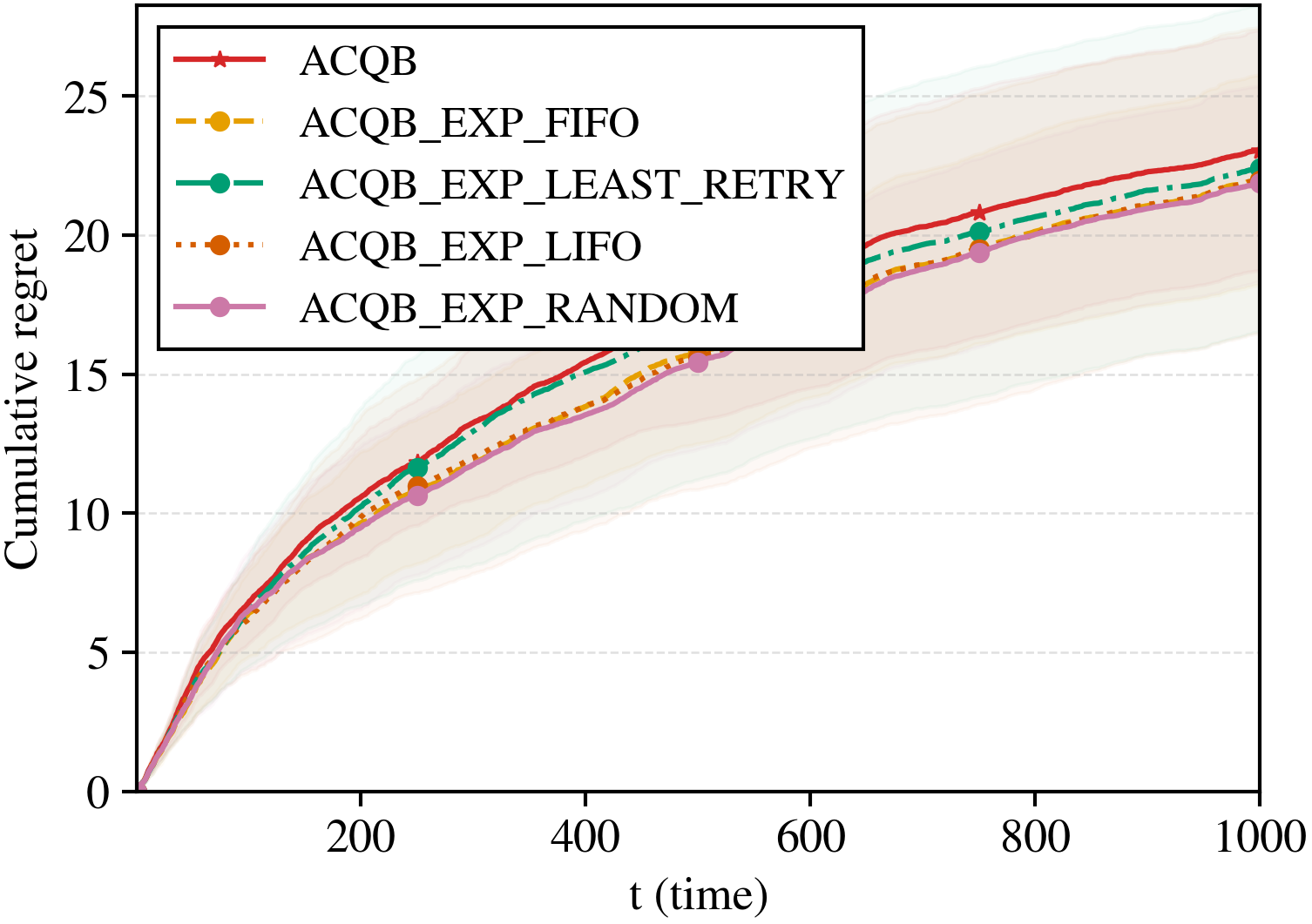}
        \end{subfigure}
    \end{minipage}%

    \vspace{-0.1cm}
    \caption{Queue length and cumulative regret for exploration-scheduling variants on synthetic data with $N=5$, $\eps=0.03$, and $K\in\{1,2\}$.}
    \label{fig:sim_exp_n5}
\end{figure*}
\FloatBarrier

\subsection{Additional offline routing dataset results}

\label{ssec:exp:real-result}
We provide additional results on \textsc{EmbedLLM} and \textsc{RouterBench}. The figures report queue length and cumulative regret across arrival rates, and the tables report the queue length gap and throughput at intermediate and final time steps.

\begin{figure*}[htbp]
    \centering
    
    % --- 설정: 너비 변수 ---
    \newcommand{\lbwidth}{0.02\textwidth} % 세로 라벨 너비
    \newcommand{\blockwidth}{0.46\textwidth} % 그림 블록 너비 (이미지 2개 포함)
    
    % ---------------- Header (K=1, K=2) ----------------
    % 1. Left Spacing
    \begin{minipage}{\lbwidth} \centering \phantom{L} \end{minipage}%
    \hfill%
    % 2. K=1 Header
    \begin{minipage}{\blockwidth} \centering \scriptsize $K=1$ \end{minipage}%
    \hfill%
    % 3. Middle Spacing
    \begin{minipage}{\lbwidth} \centering \phantom{L} \end{minipage}%
    \hfill%
    % 4. K=2 Header
    \begin{minipage}{\blockwidth} \centering \scriptsize $K=2$ \end{minipage}%
    
    \vspace{0.1cm} % 헤더와 첫 줄 사이 간격

    % ================= ROW 1: lambda=0.55 =================
    \begin{minipage}[c]{\lbwidth}
        \centering \rotatebox{90}{\scriptsize $\lambda=0.3$}
    \end{minipage}%
    \hfill%
    \begin{minipage}[c]{\blockwidth}
        \centering
        \begin{subfigure}[c]{0.49\linewidth}
            \includegraphics[width=\linewidth]{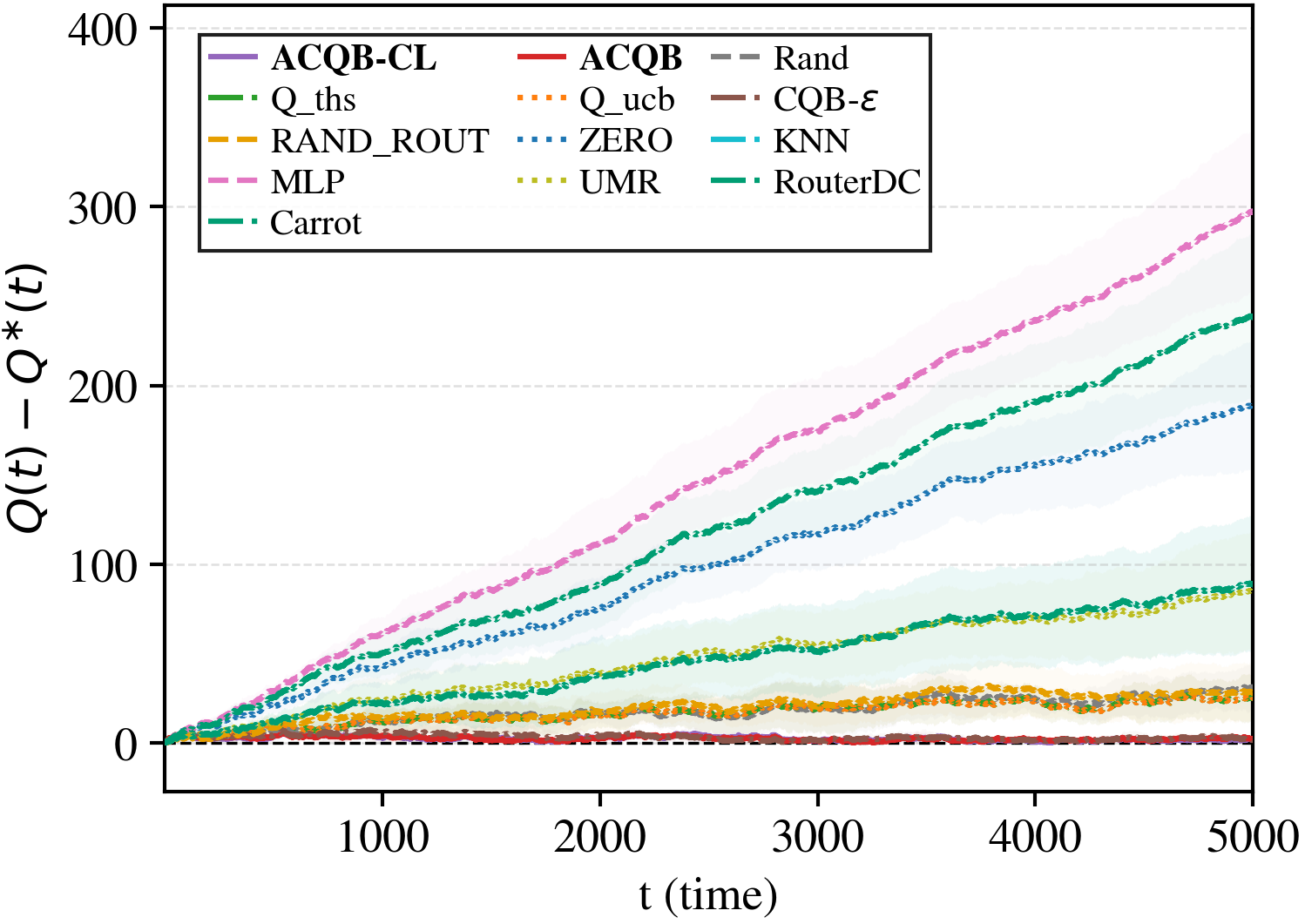}
        \end{subfigure}\hfill
        \begin{subfigure}[c]{0.49\linewidth}
            \includegraphics[width=\linewidth]{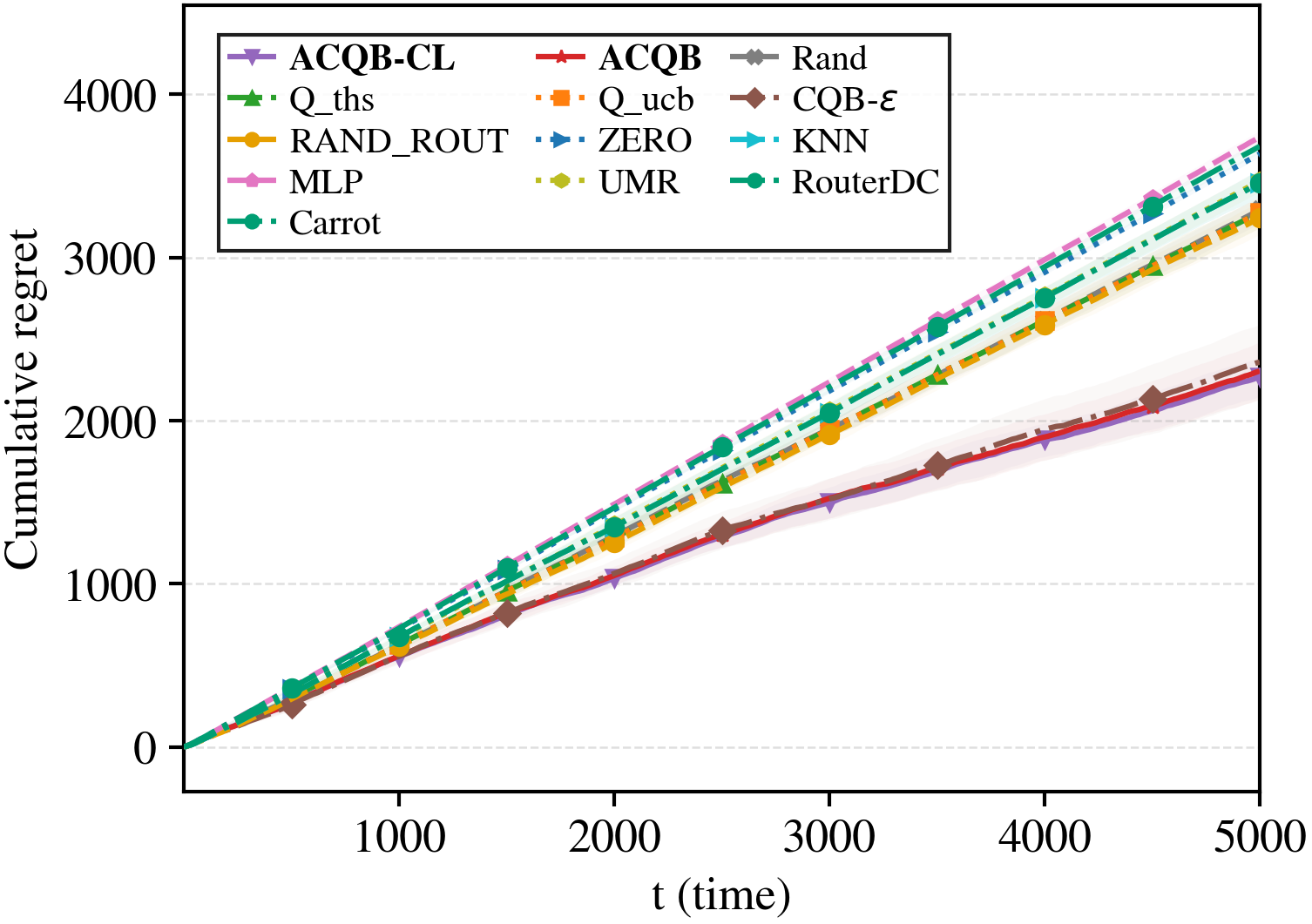}
        \end{subfigure}
    \end{minipage}%
    \hfill%
    \begin{minipage}[c]{\lbwidth} % 중앙 라벨 (필요 없다면 삭제 후 \blockwidth 조정)
        \centering \rotatebox{90}{\scriptsize $\lambda=0.45$}
    \end{minipage}%
    \hfill%
    \begin{minipage}[c]{\blockwidth}
        \centering
        \begin{subfigure}[c]{0.49\linewidth}
            \includegraphics[width=\linewidth]{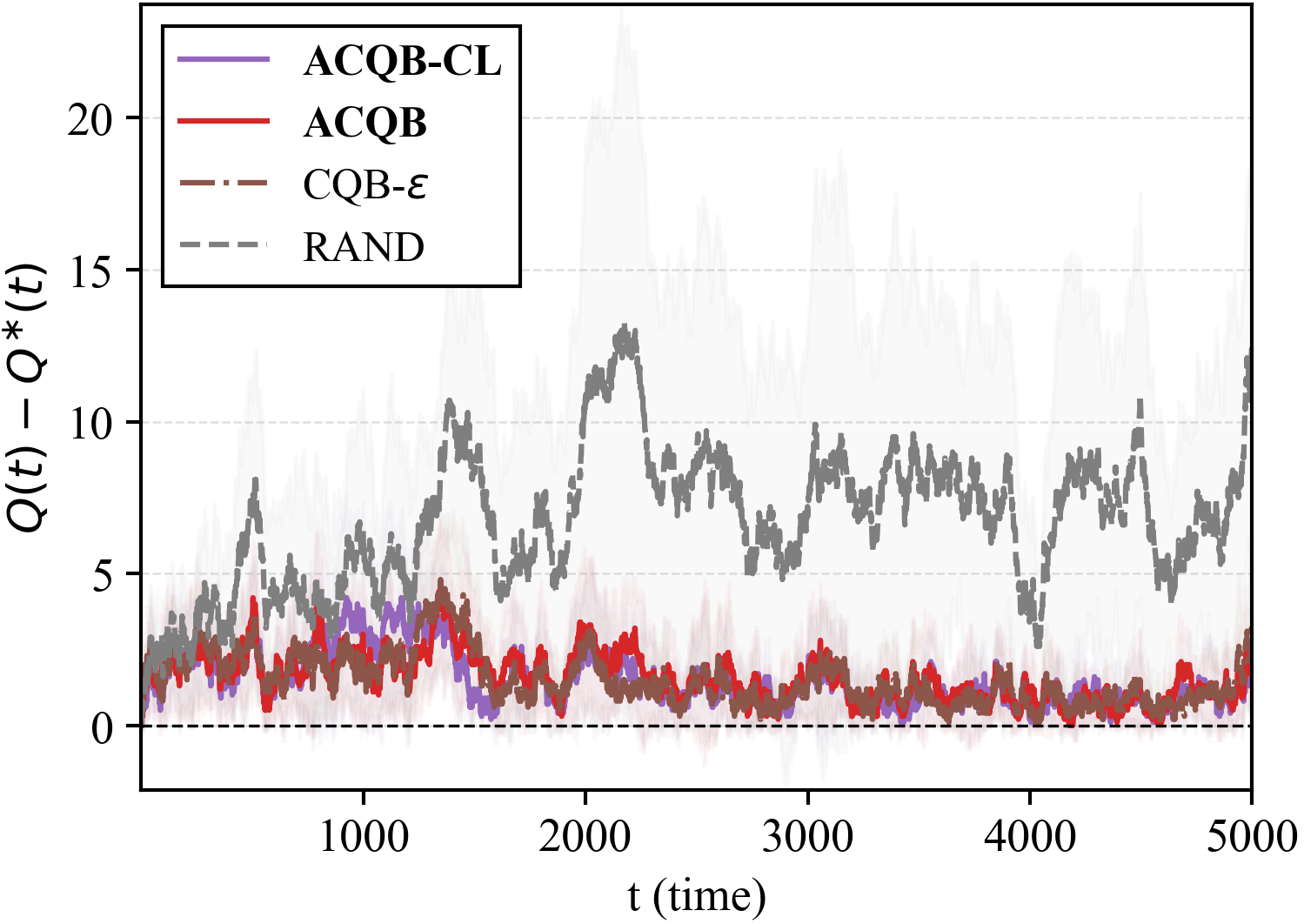}
        \end{subfigure}\hfill
        \begin{subfigure}[c]{0.49\linewidth}
            \includegraphics[width=\linewidth]{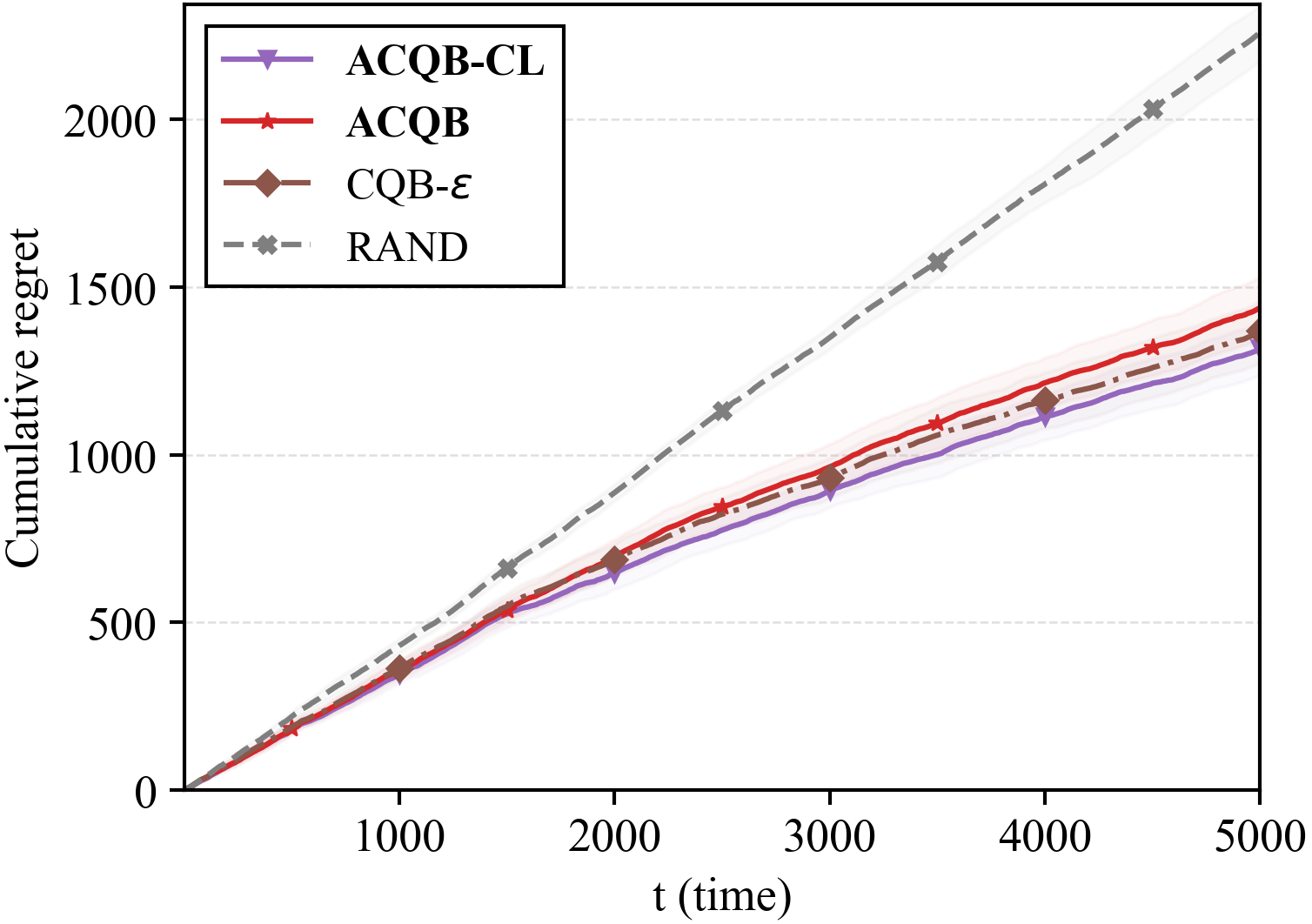}
        \end{subfigure}
    \end{minipage}%

    \vspace{0.15cm} % 행 간 간격

    % ================= ROW 2: lambda=0.65 (예시) =================
    \begin{minipage}[c]{\lbwidth}
        \centering \rotatebox{90}{\scriptsize $\lambda=0.4$}
    \end{minipage}%
    \hfill%
    \begin{minipage}[c]{\blockwidth}
        \centering
        \begin{subfigure}[c]{0.49\linewidth}
            \includegraphics[width=\linewidth]{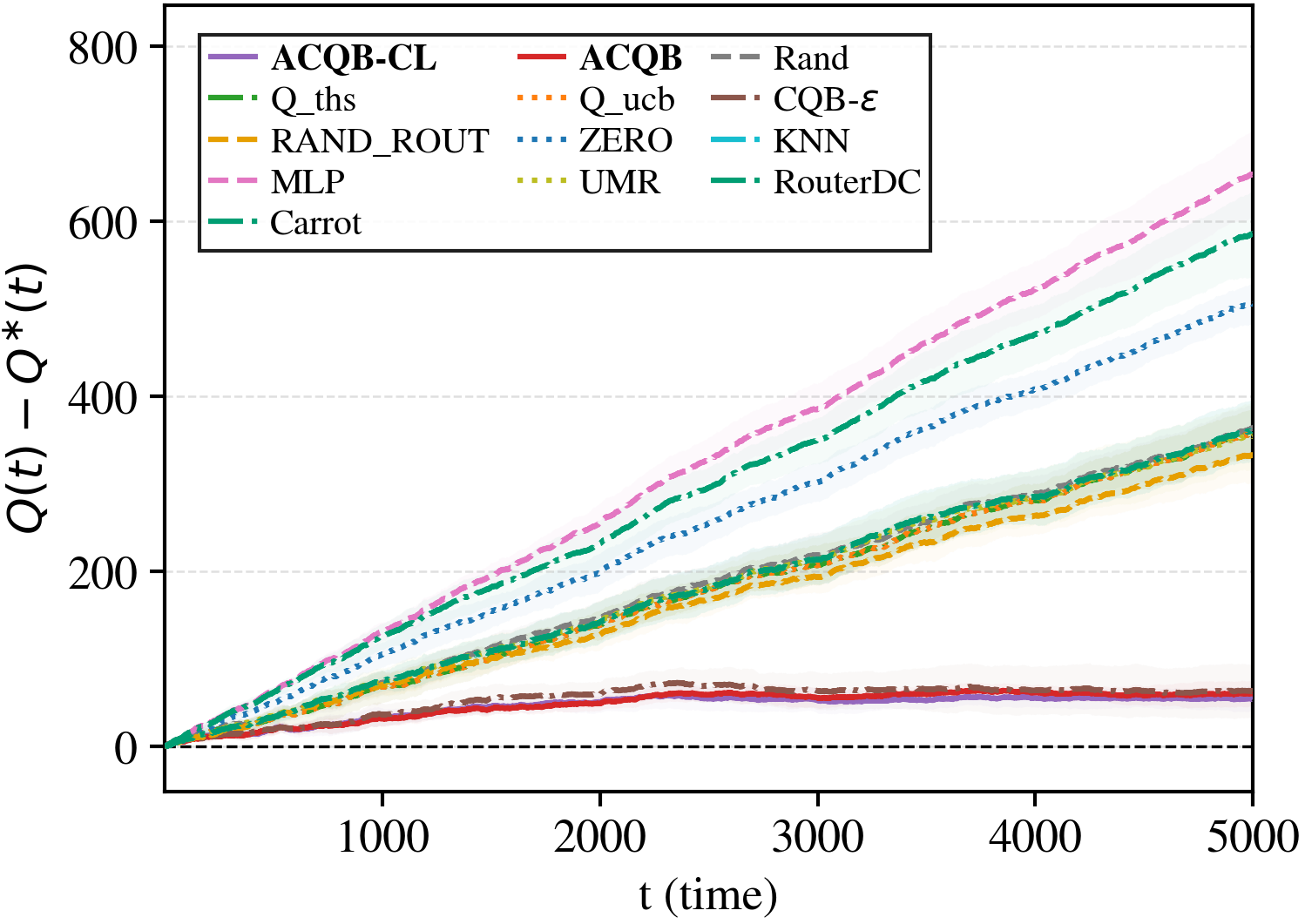}
        \end{subfigure}\hfill
        \begin{subfigure}[c]{0.49\linewidth}
            \includegraphics[width=\linewidth]{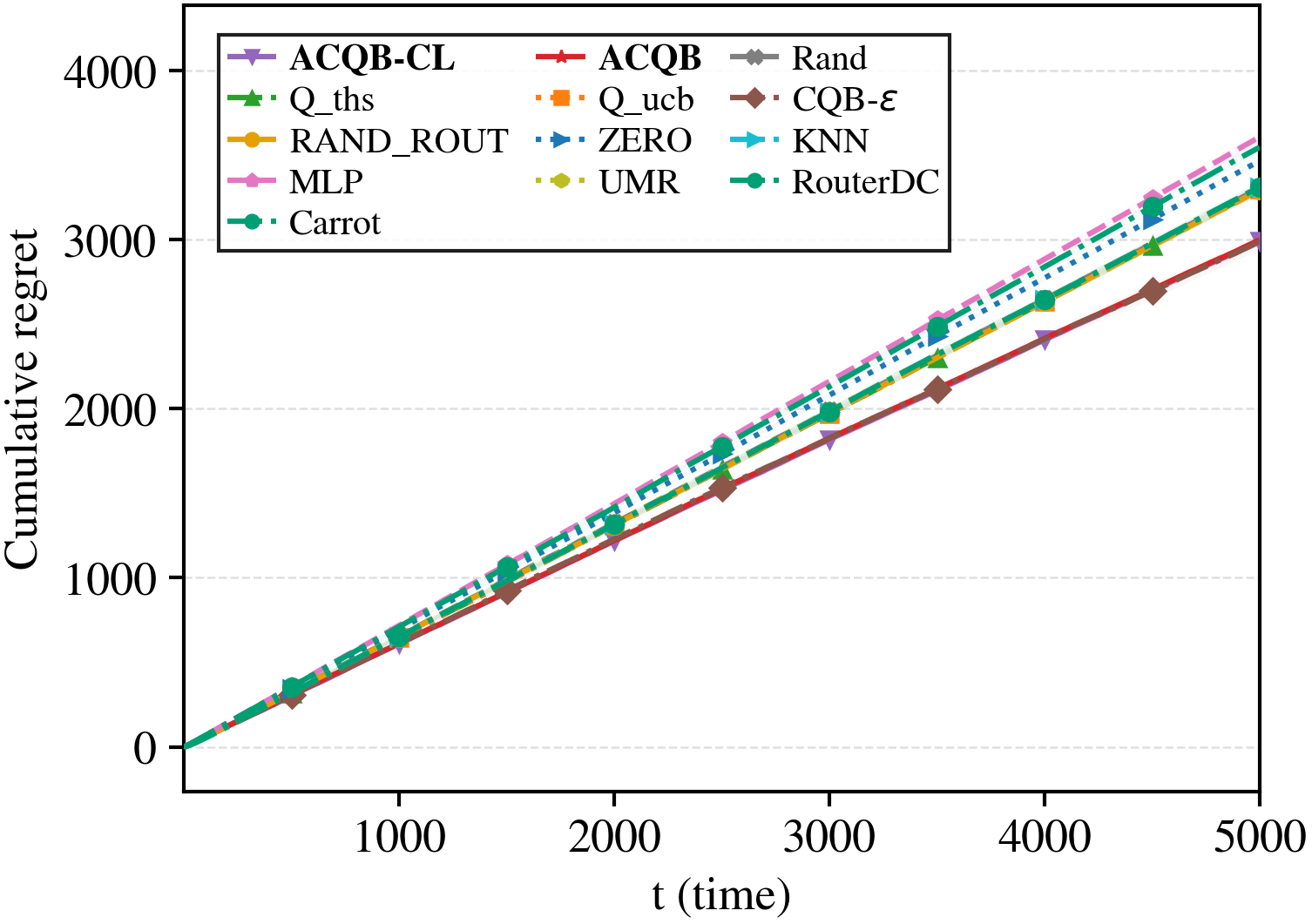}
        \end{subfigure}
    \end{minipage}%
    \hfill%
    \begin{minipage}[c]{\lbwidth}
        \centering \rotatebox{90}{\scriptsize $\lambda=0.55$}
    \end{minipage}%
    \hfill%
    \begin{minipage}[c]{\blockwidth}
        \centering
        \begin{subfigure}[c]{0.49\linewidth}
            \includegraphics[width=\linewidth]{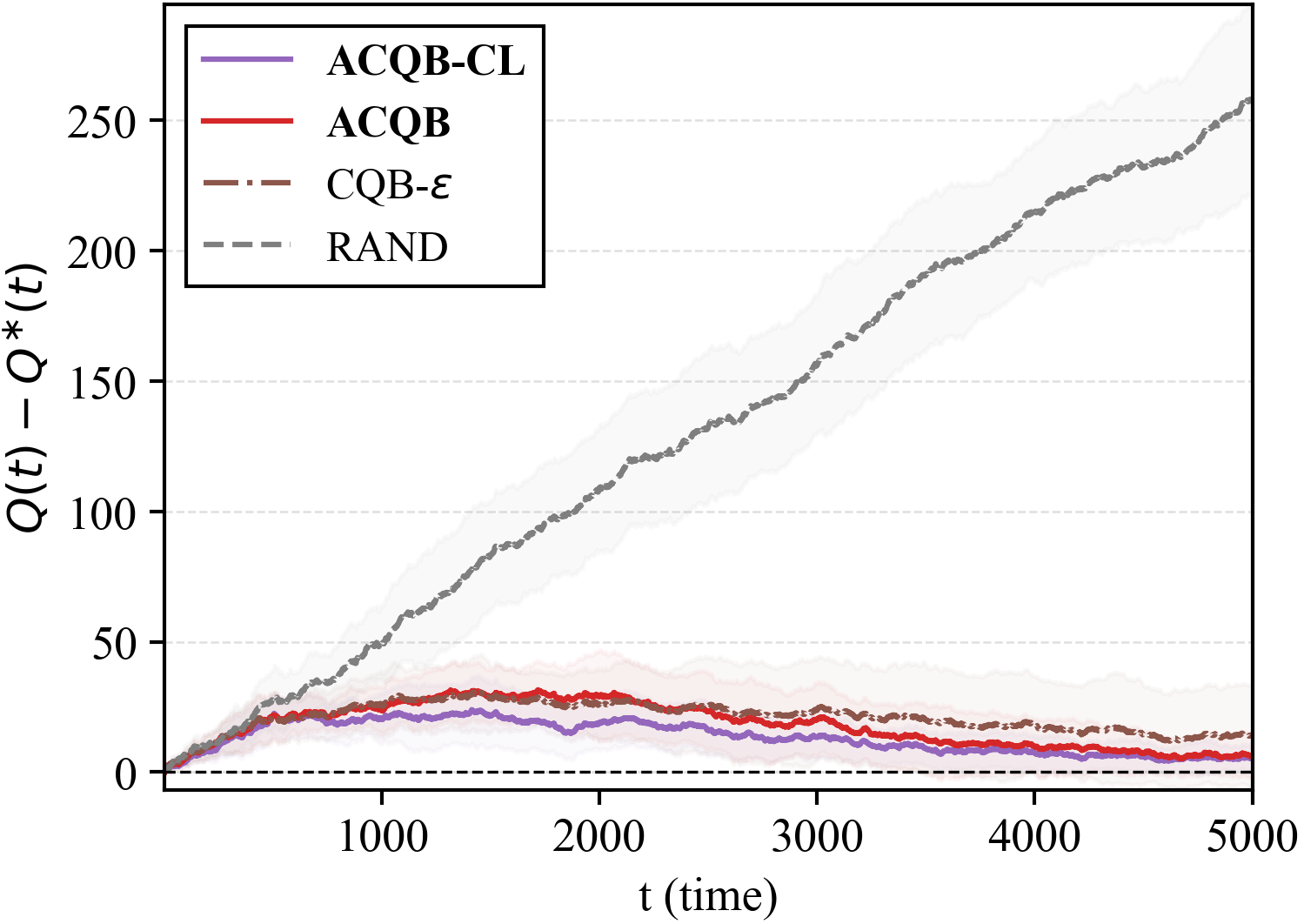}
        \end{subfigure}\hfill
        \begin{subfigure}[c]{0.49\linewidth}
            \includegraphics[width=\linewidth]{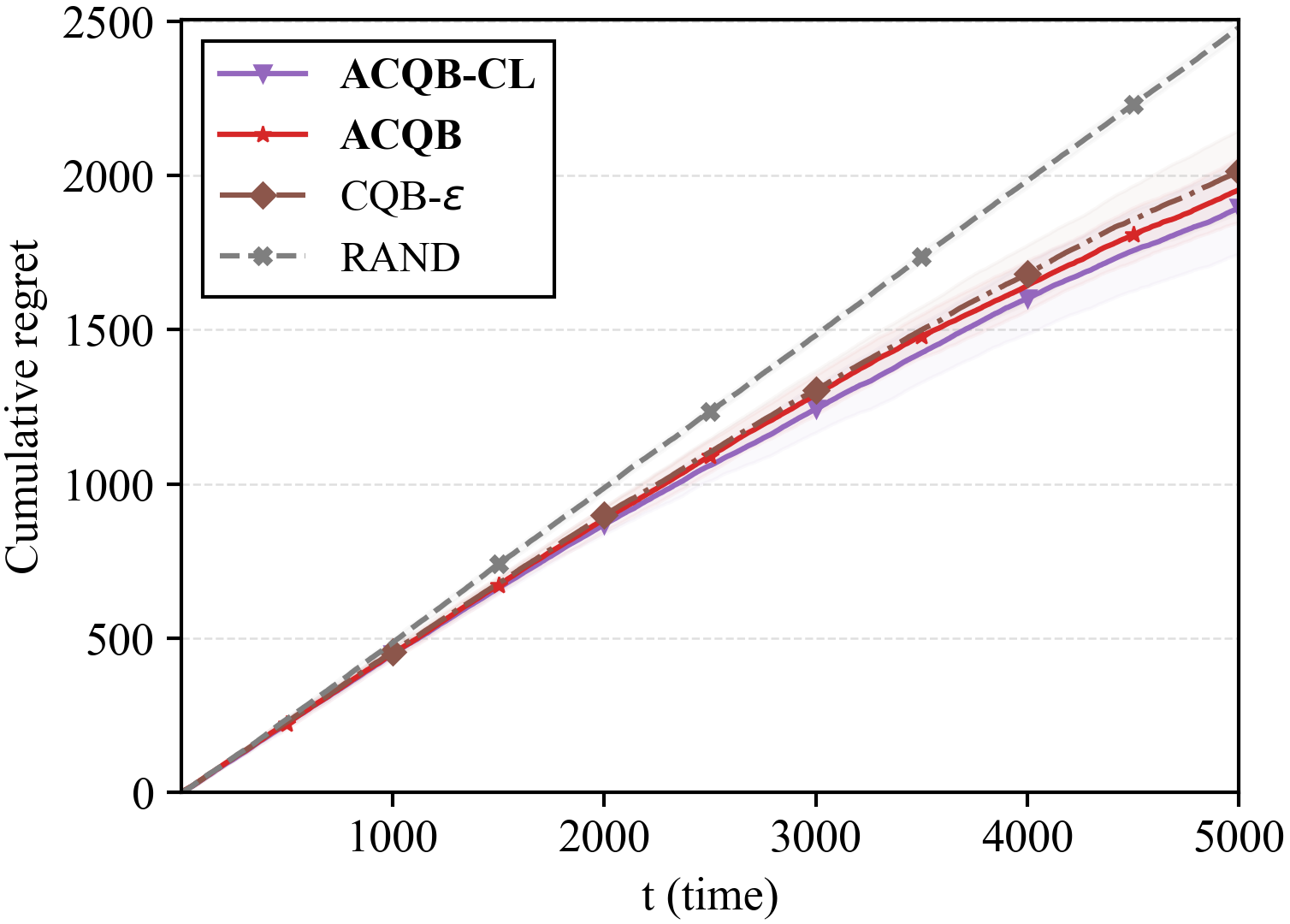}
        \end{subfigure}
    \end{minipage}%

    \vspace{0.15cm} % 행 간 간격

    % ================= ROW 3: lambda=0.75 (예시) =================
    \begin{minipage}[c]{\lbwidth}
        \centering \rotatebox{90}{\scriptsize $\lambda=0.5$}
    \end{minipage}%
    \hfill%
    \begin{minipage}[c]{\blockwidth}
        \centering
        \begin{subfigure}[c]{0.49\linewidth}
            \includegraphics[width=\linewidth]{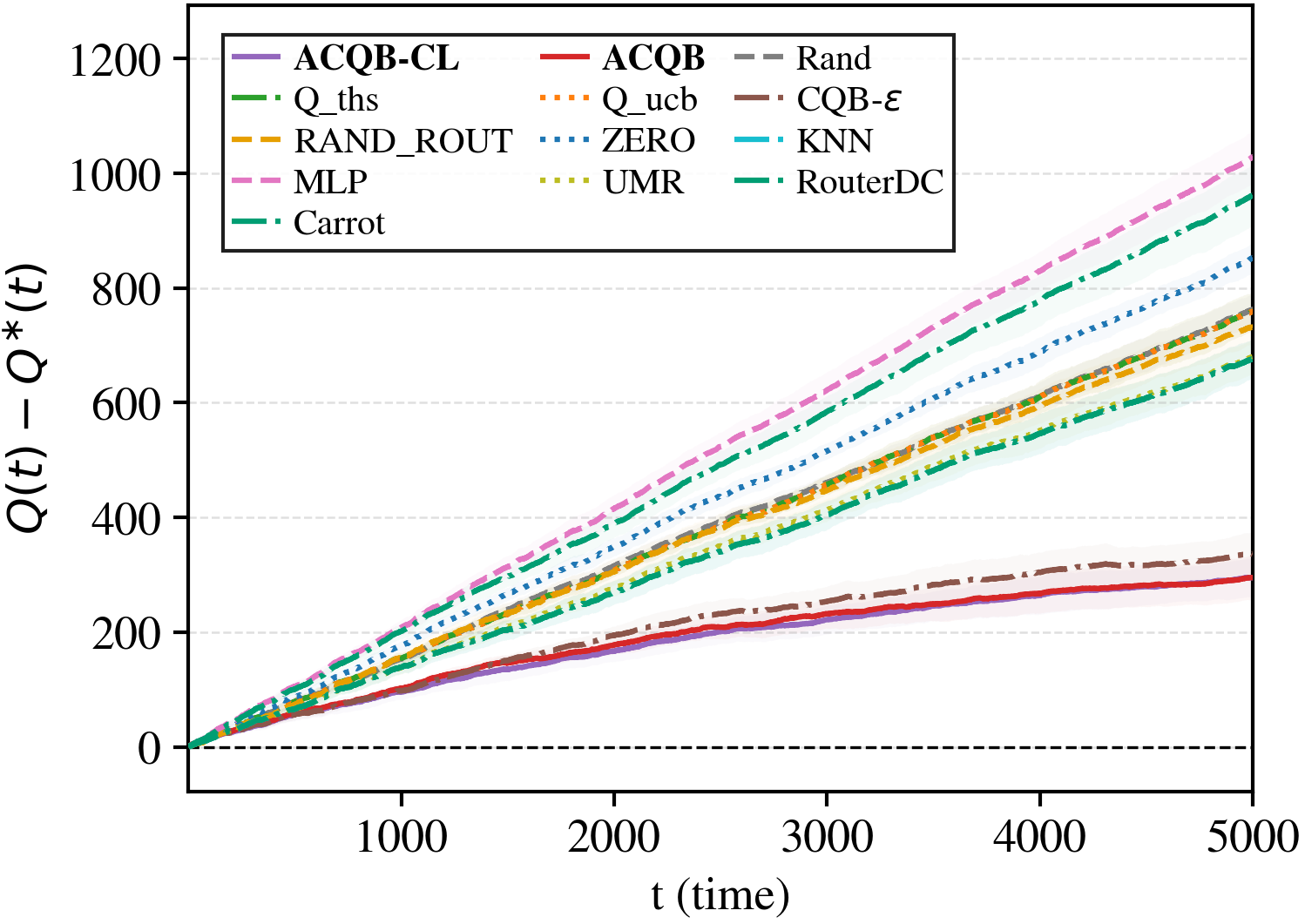}
        \end{subfigure}\hfill
        \begin{subfigure}[c]{0.49\linewidth}
            \includegraphics[width=\linewidth]{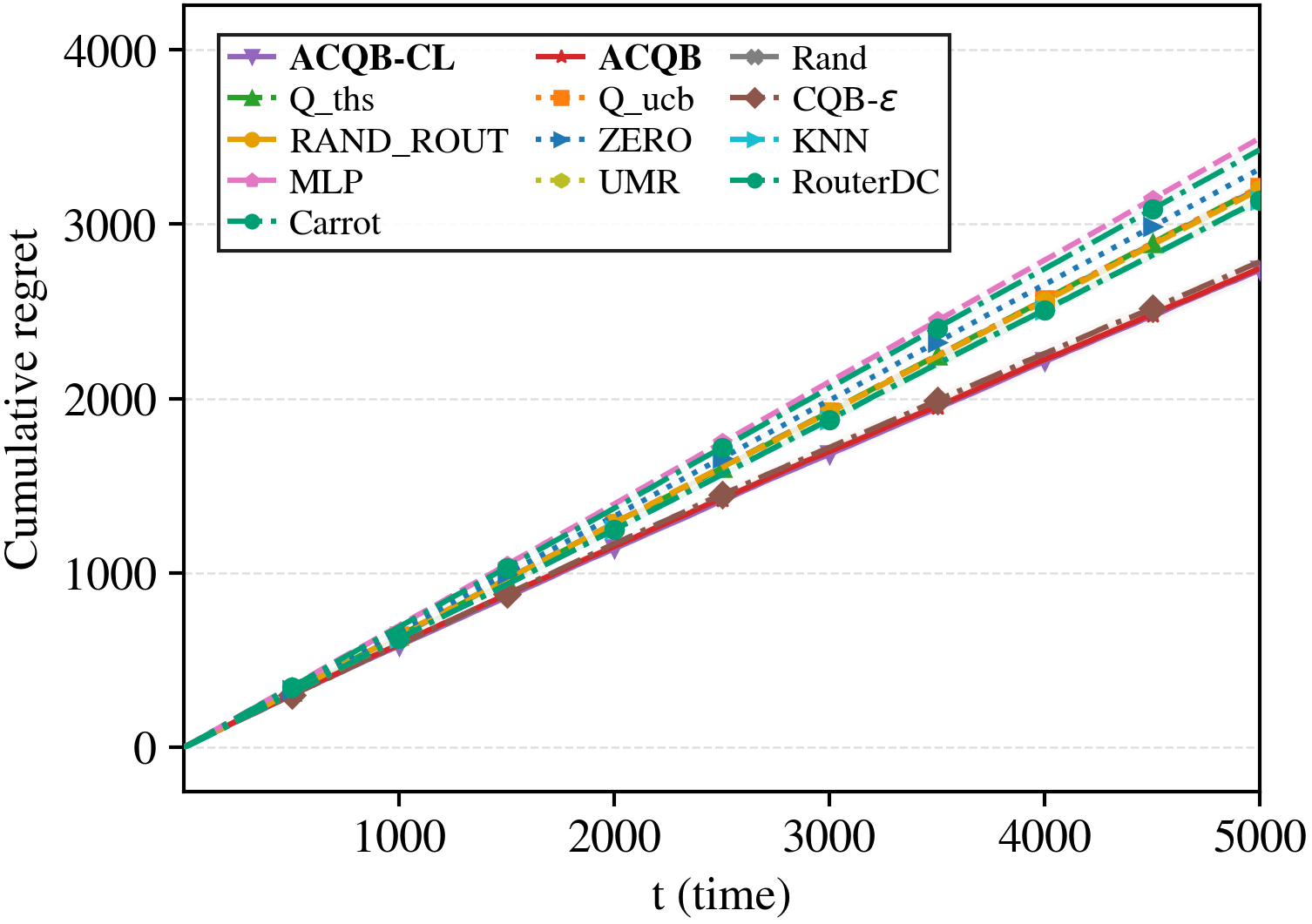}
        \end{subfigure}
    \end{minipage}%
    \hfill%
    \begin{minipage}[c]{\lbwidth}
        \centering \rotatebox{90}{\scriptsize $\lambda=0.65$}
    \end{minipage}%
    \hfill%
    \begin{minipage}[c]{\blockwidth}
        \centering
        \begin{subfigure}[c]{0.49\linewidth}
            \includegraphics[width=\linewidth]{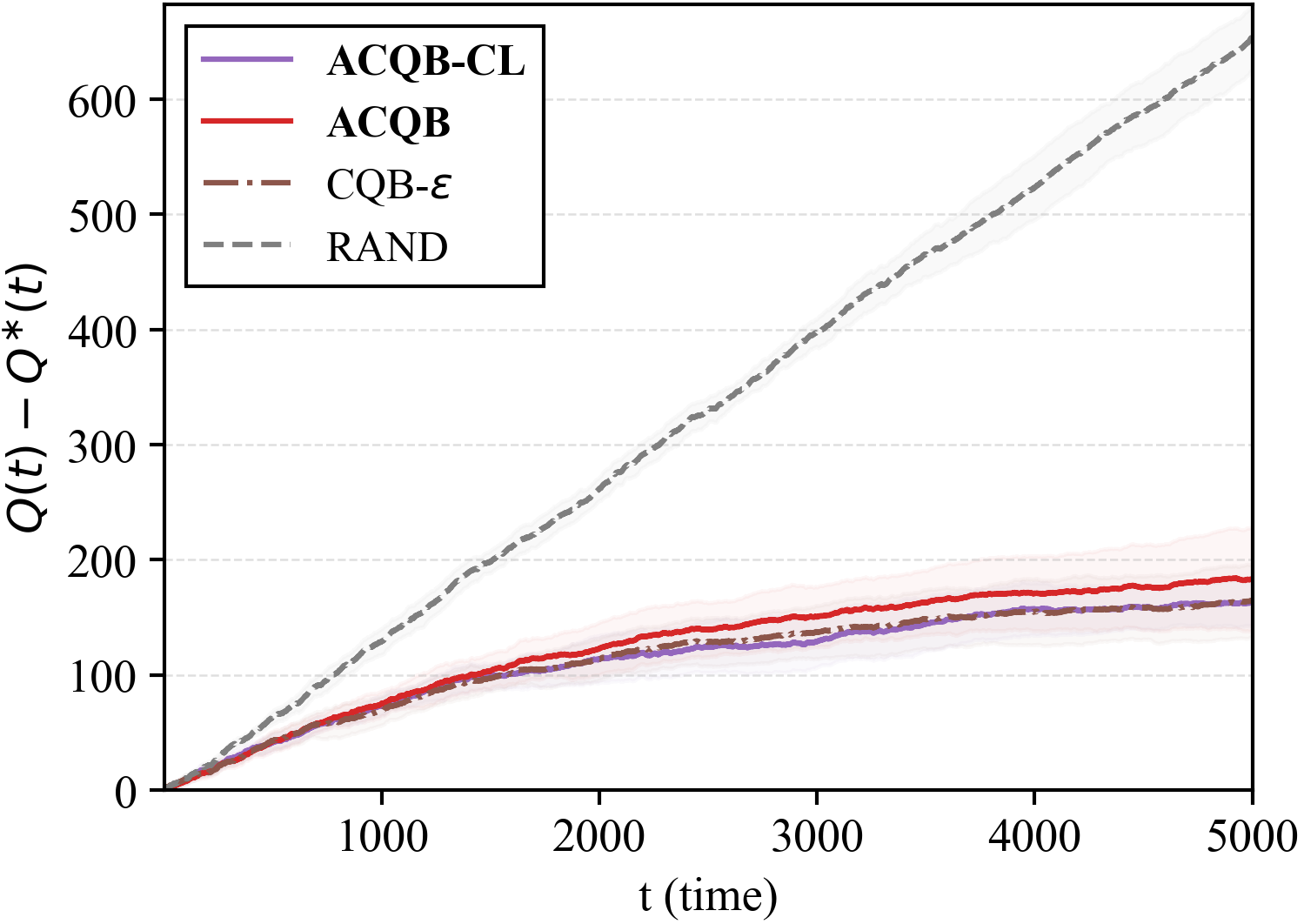}
        \end{subfigure}\hfill
        \begin{subfigure}[c]{0.49\linewidth}
            \includegraphics[width=\linewidth]{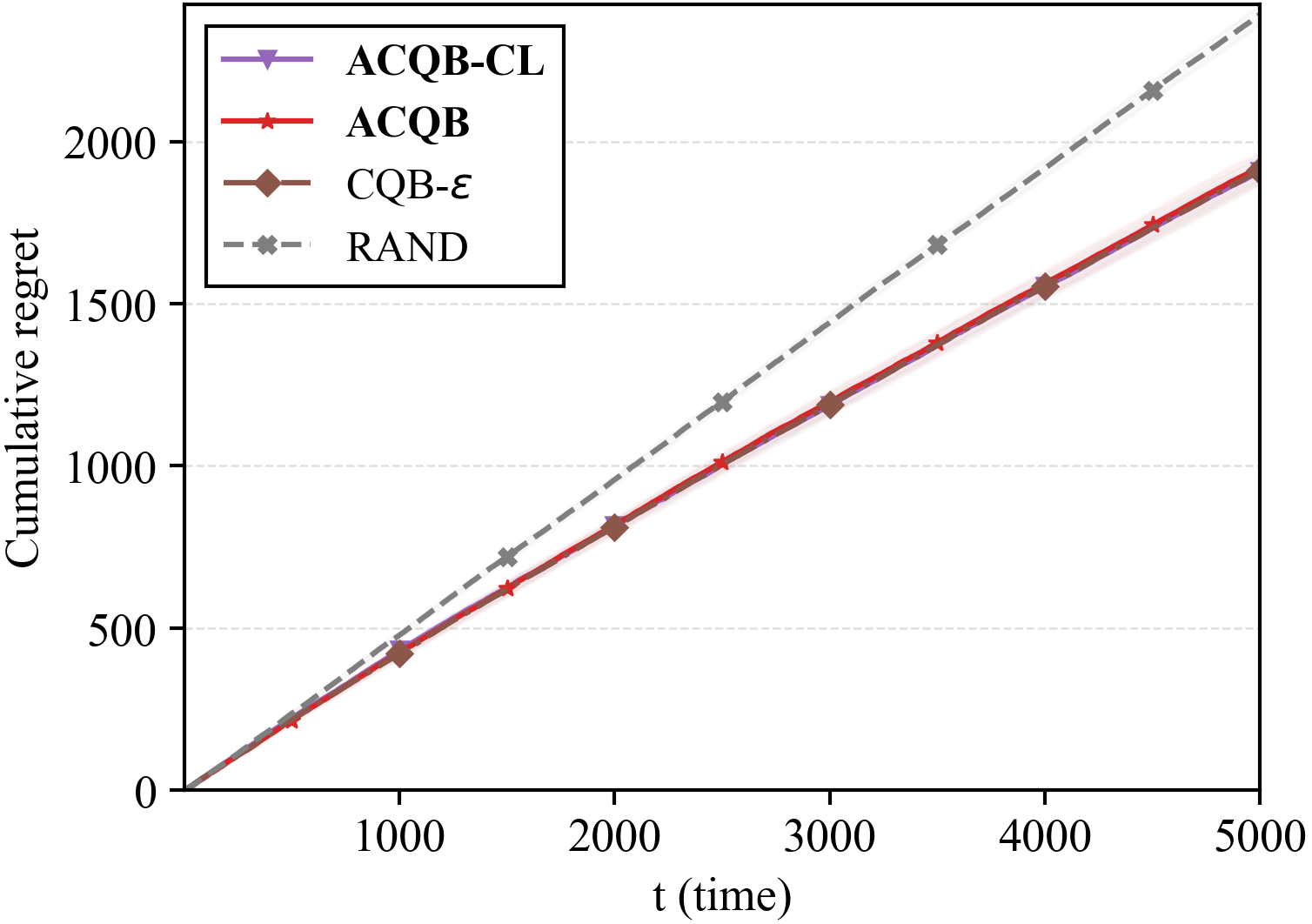}
        \end{subfigure}
    \end{minipage}%

    \vspace{-0.1cm}
    \caption{Queue length and cumulative regret on the \textsc{EmbedLLM} dataset.}
    \label{fig:data:emb}
    
\end{figure*}

\begin{figure*}[htbp]
    \centering
    
    % --- 설정: 너비 변수 ---
    \newcommand{\lbwidth}{0.02\textwidth} % 세로 라벨 너비
    \newcommand{\blockwidth}{0.46\textwidth} % 그림 블록 너비 (이미지 2개 포함)
    
    % ---------------- Header (K=1, K=2) ----------------
    % 1. Left Spacing
    \begin{minipage}{\lbwidth} \centering \phantom{L} \end{minipage}%
    \hfill%
    % 2. K=1 Header
    \begin{minipage}{\blockwidth} \centering \scriptsize $K=1$ \end{minipage}%
    \hfill%
    % 3. Middle Spacing
    \begin{minipage}{\lbwidth} \centering \phantom{L} \end{minipage}%
    \hfill%
    % 4. K=2 Header
    \begin{minipage}{\blockwidth} \centering \scriptsize $K=2$ \end{minipage}%
    
    \vspace{0.1cm} % 헤더와 첫 줄 사이 간격

    % ================= ROW 1: lambda=0.55 =================
    \begin{minipage}[c]{\lbwidth}
        \centering \rotatebox{90}{\scriptsize $\lambda=0.5$}
    \end{minipage}%
    \hfill%
    \begin{minipage}[c]{\blockwidth}
        \centering
        \begin{subfigure}[c]{0.49\linewidth}
            \includegraphics[width=\linewidth]{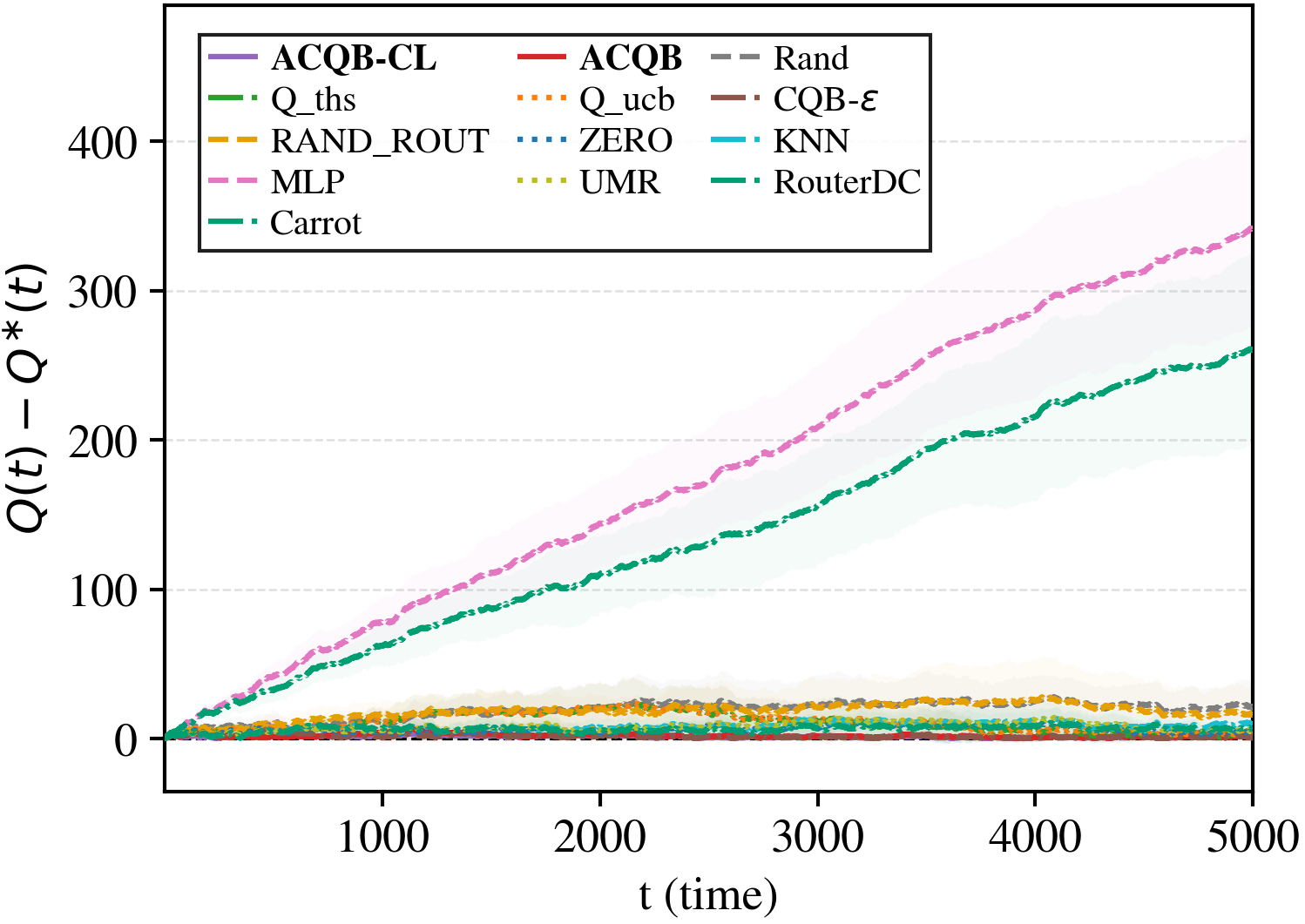}
        \end{subfigure}\hfill
        \begin{subfigure}[c]{0.49\linewidth}
            \includegraphics[width=\linewidth]{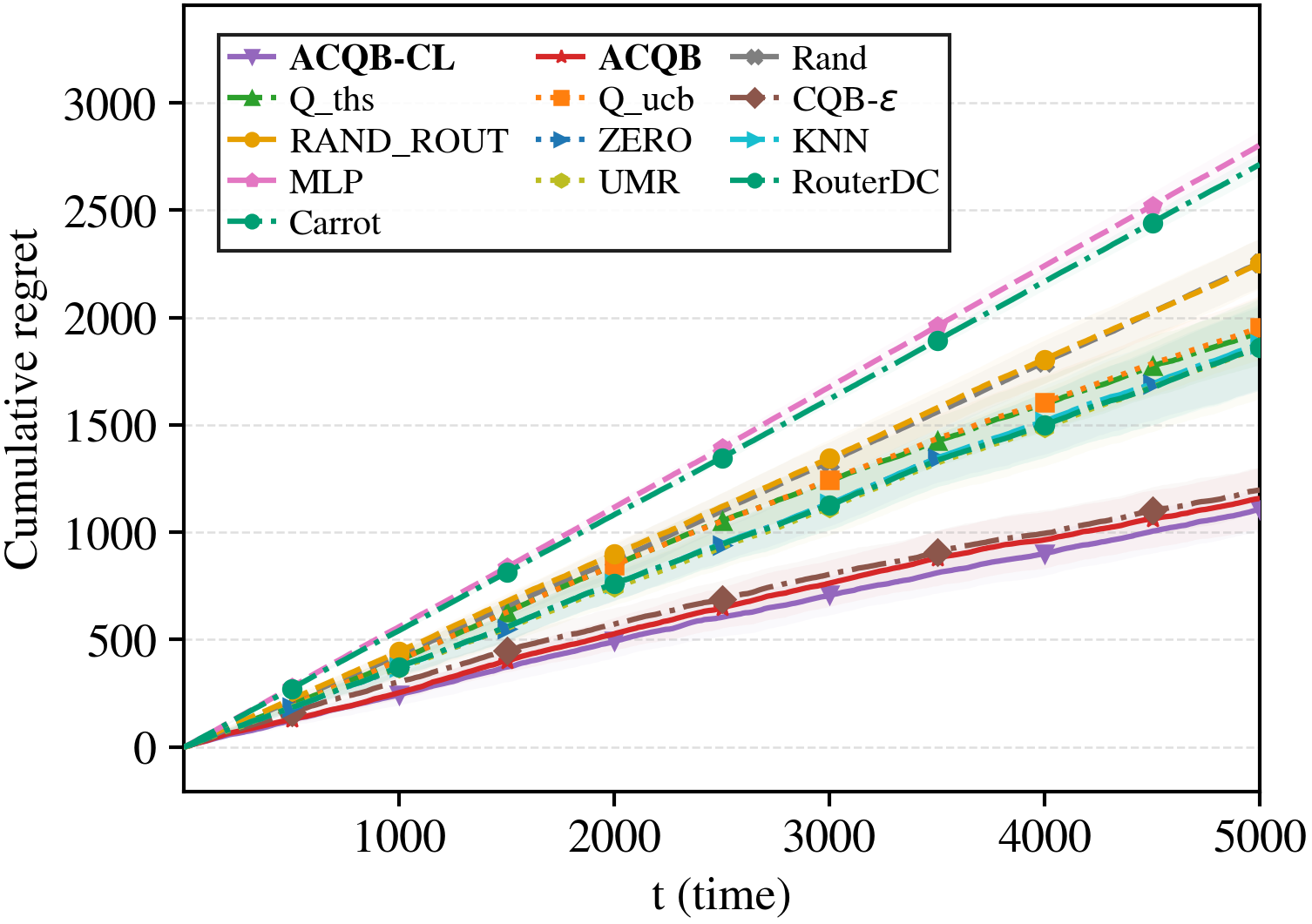}
        \end{subfigure}
    \end{minipage}%
    \hfill%
    \begin{minipage}[c]{\lbwidth} % 중앙 라벨 (필요 없다면 삭제 후 \blockwidth 조정)
        \centering \rotatebox{90}{\scriptsize $\lambda=0.65$}
    \end{minipage}%
    \hfill%
    \begin{minipage}[c]{\blockwidth}
        \centering
        \begin{subfigure}[c]{0.49\linewidth}
            \includegraphics[width=\linewidth]{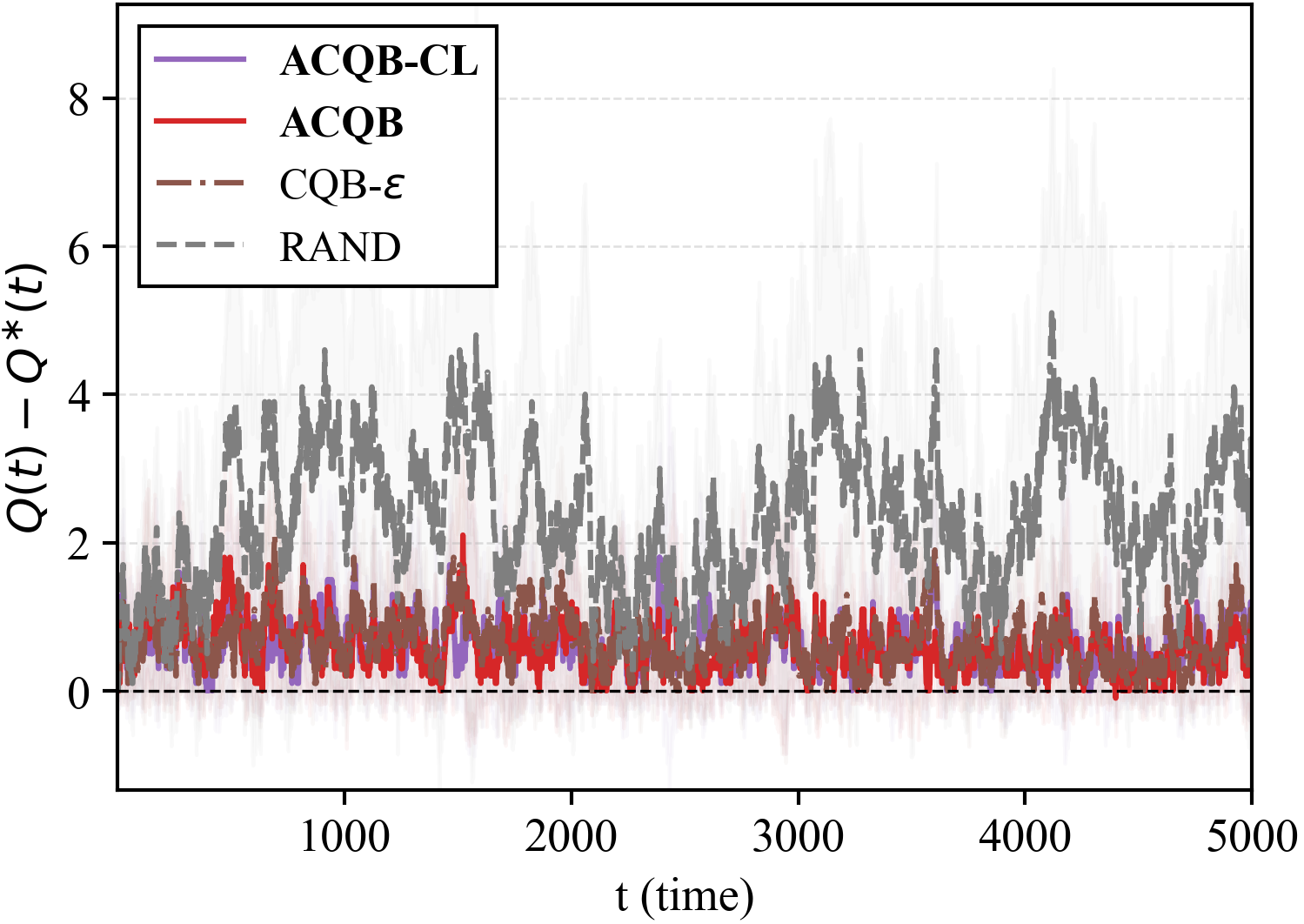}
        \end{subfigure}\hfill
        \begin{subfigure}[c]{0.49\linewidth}
            \includegraphics[width=\linewidth]{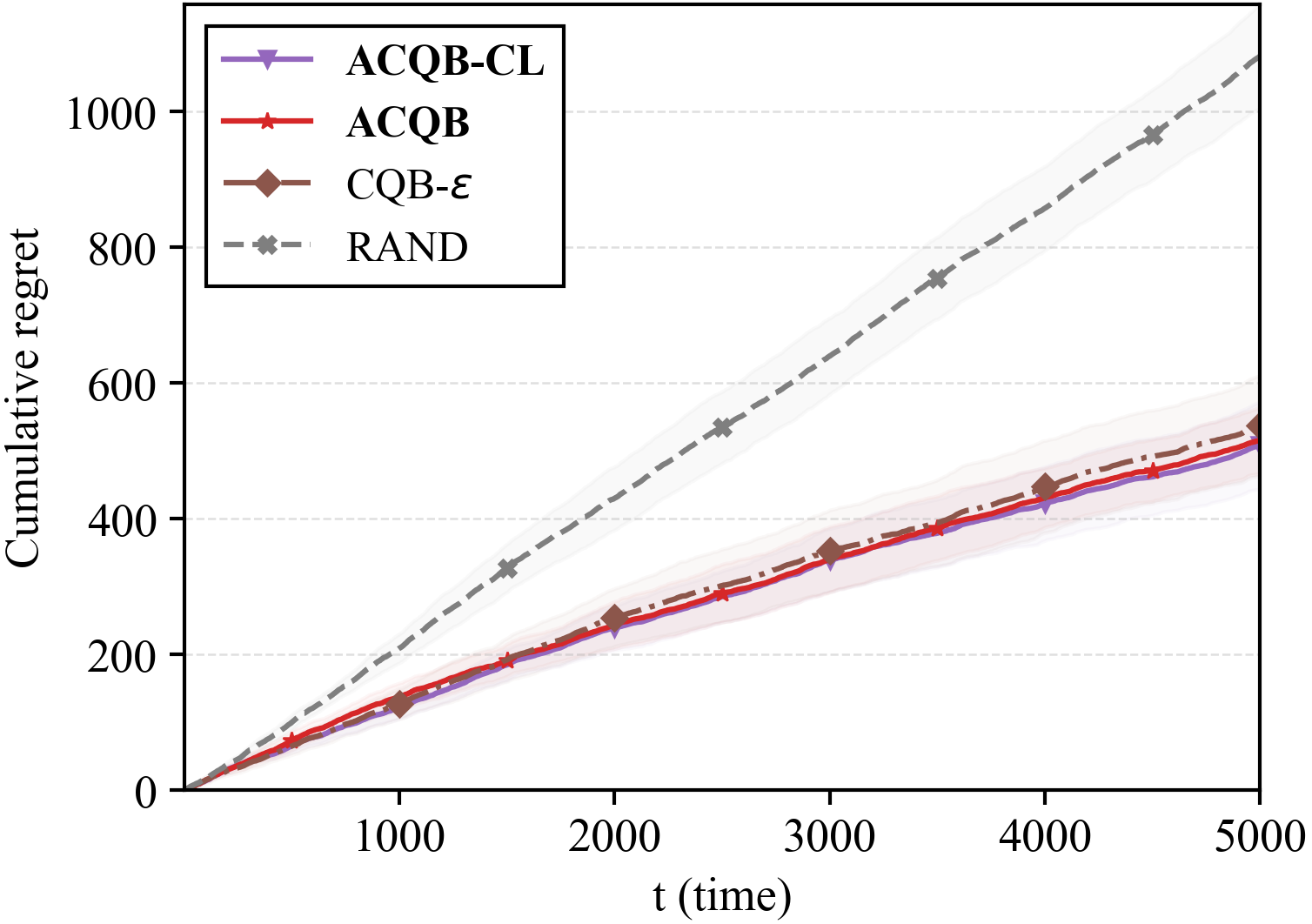}
        \end{subfigure}
    \end{minipage}%

    \vspace{0.15cm} % 행 간 간격

    % ================= ROW 2: lambda=0.65 (예시) =================
    \begin{minipage}[c]{\lbwidth}
        \centering \rotatebox{90}{\scriptsize $\lambda=0.6$}
    \end{minipage}%
    \hfill%
    \begin{minipage}[c]{\blockwidth}
        \centering
        \begin{subfigure}[c]{0.49\linewidth}
            \includegraphics[width=\linewidth]{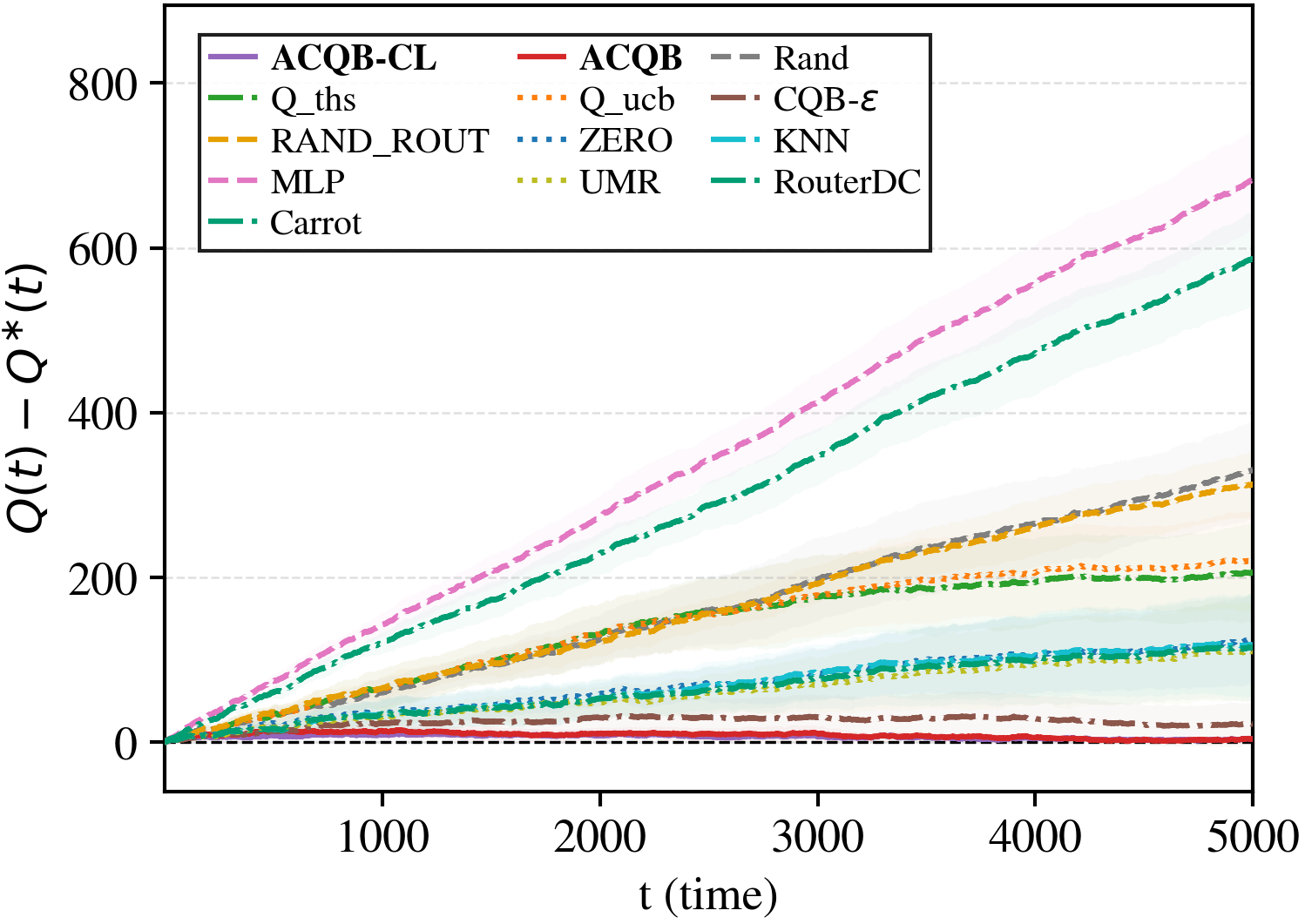}
        \end{subfigure}\hfill
        \begin{subfigure}[c]{0.49\linewidth}
            \includegraphics[width=\linewidth]{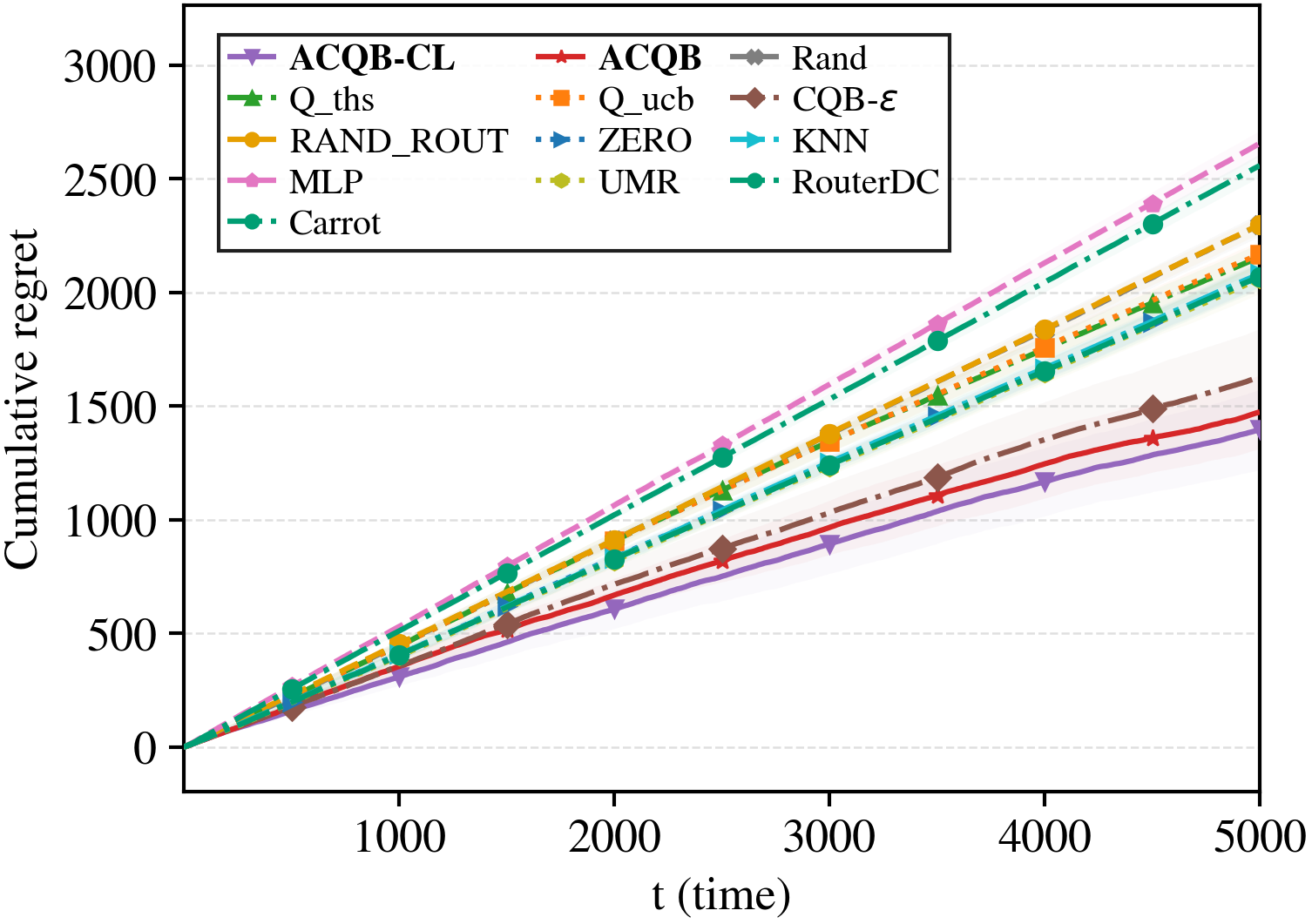}
        \end{subfigure}
    \end{minipage}%
    \hfill%
    \begin{minipage}[c]{\lbwidth}
        \centering \rotatebox{90}{\scriptsize $\lambda=0.75$}
    \end{minipage}%
    \hfill%
    \begin{minipage}[c]{\blockwidth}
        \centering
        \begin{subfigure}[c]{0.49\linewidth}
            \includegraphics[width=\linewidth]{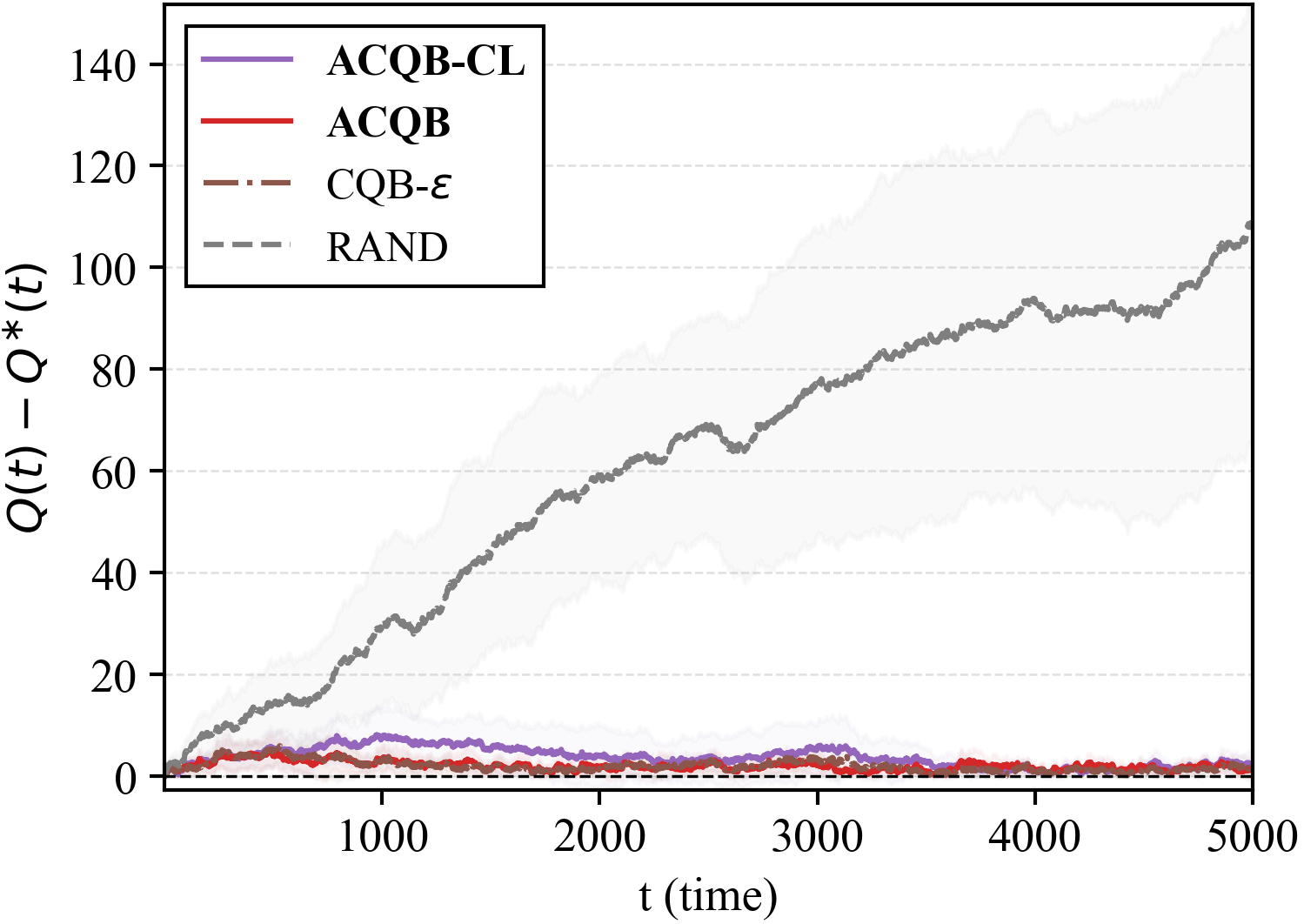}
        \end{subfigure}\hfill
        \begin{subfigure}[c]{0.49\linewidth}
            \includegraphics[width=\linewidth]{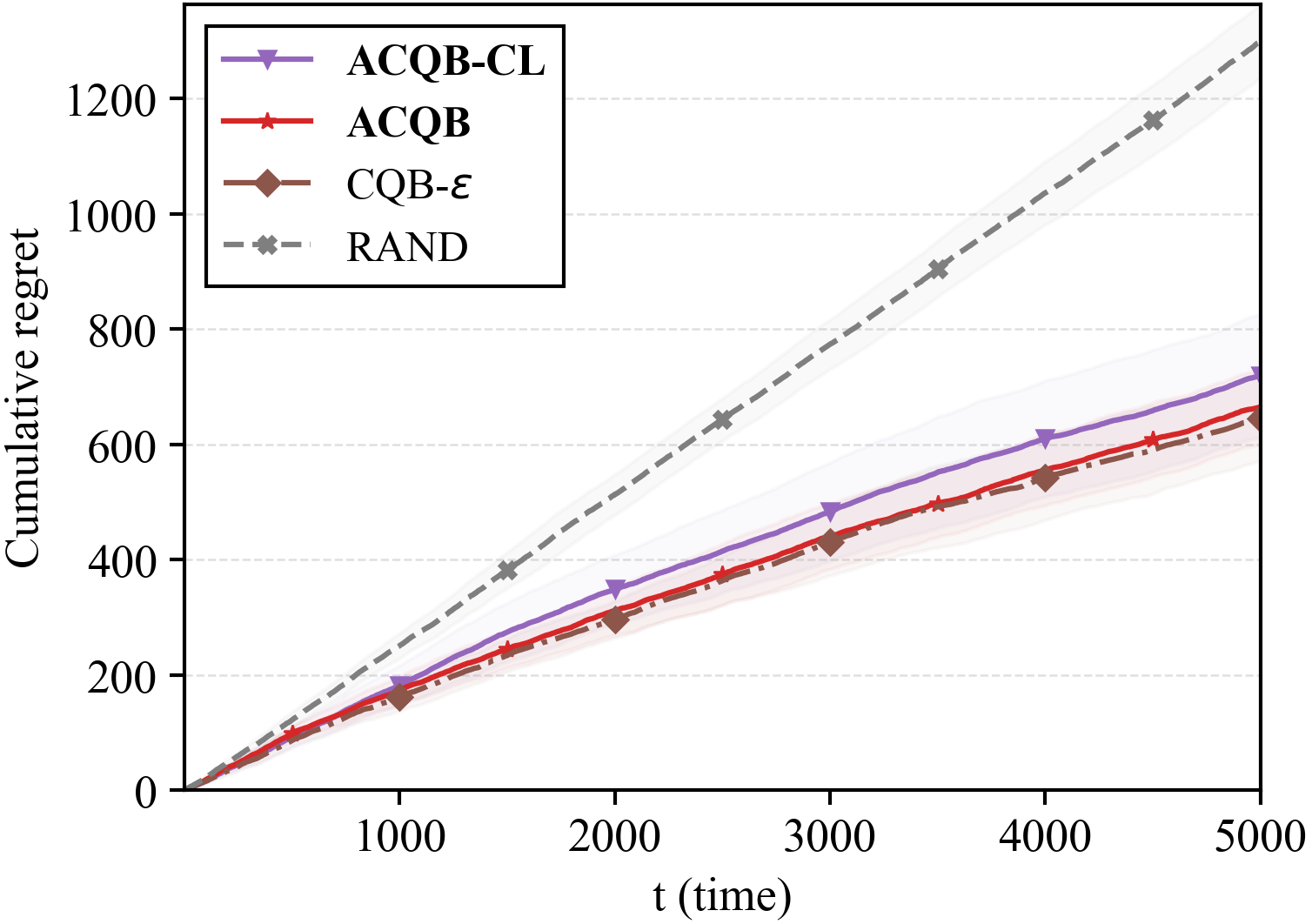}
        \end{subfigure}
    \end{minipage}%

    \vspace{0.15cm} % 행 간 간격

    % ================= ROW 3: lambda=0.75 (예시) =================
    \begin{minipage}[c]{\lbwidth}
        \centering \rotatebox{90}{\scriptsize $\lambda=0.7$}
    \end{minipage}%
    \hfill%
    \begin{minipage}[c]{\blockwidth}
        \centering
        \begin{subfigure}[c]{0.49\linewidth}
            \includegraphics[width=\linewidth]{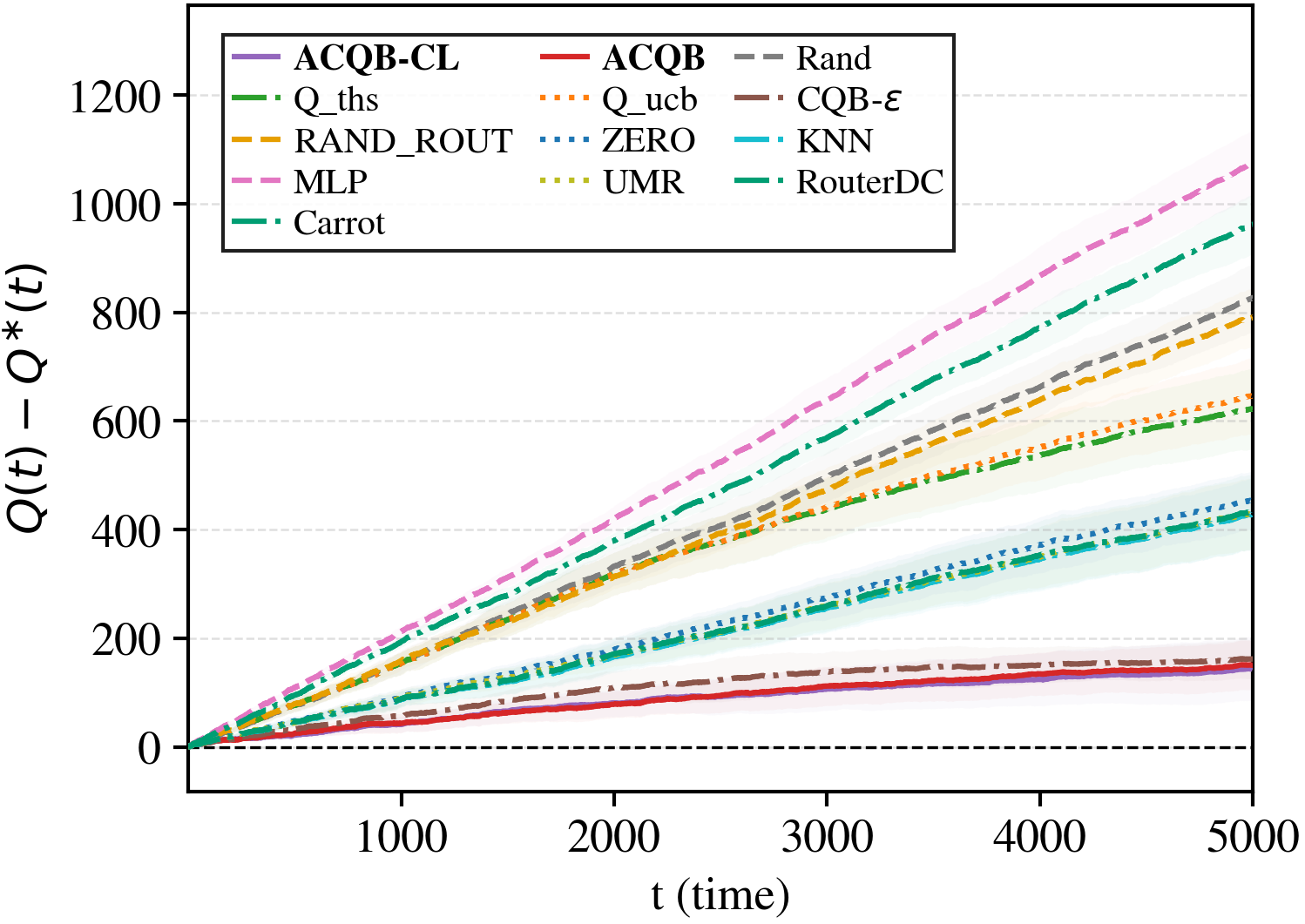}
        \end{subfigure}\hfill
        \begin{subfigure}[c]{0.49\linewidth}
            \includegraphics[width=\linewidth]{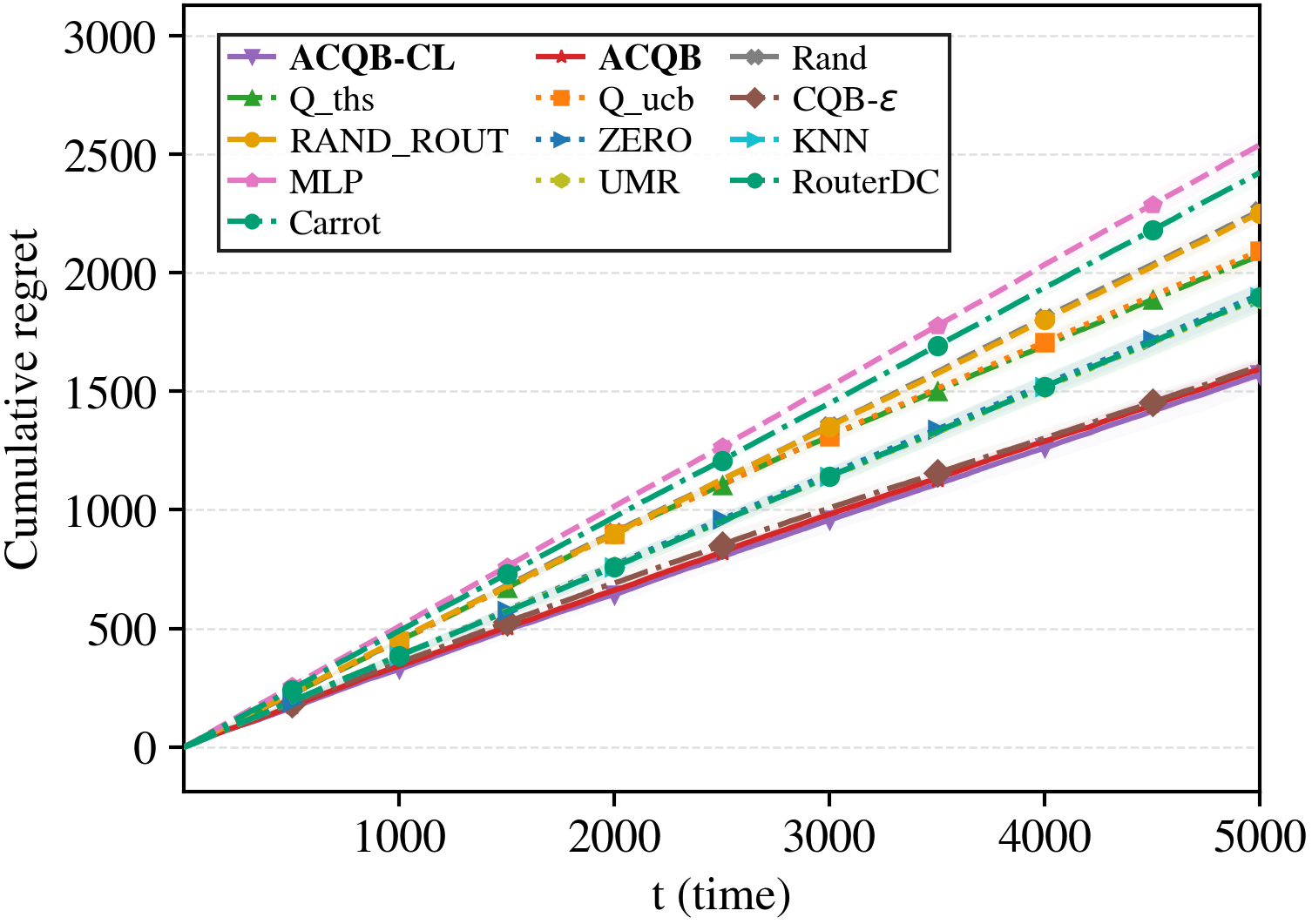}
        \end{subfigure}
    \end{minipage}%
    \hfill%
    \begin{minipage}[c]{\lbwidth}
        \centering \rotatebox{90}{\scriptsize $\lambda=0.85$}
    \end{minipage}%
    \hfill%
    \begin{minipage}[c]{\blockwidth}
        \centering
        \begin{subfigure}[c]{0.49\linewidth}
            \includegraphics[width=\linewidth]{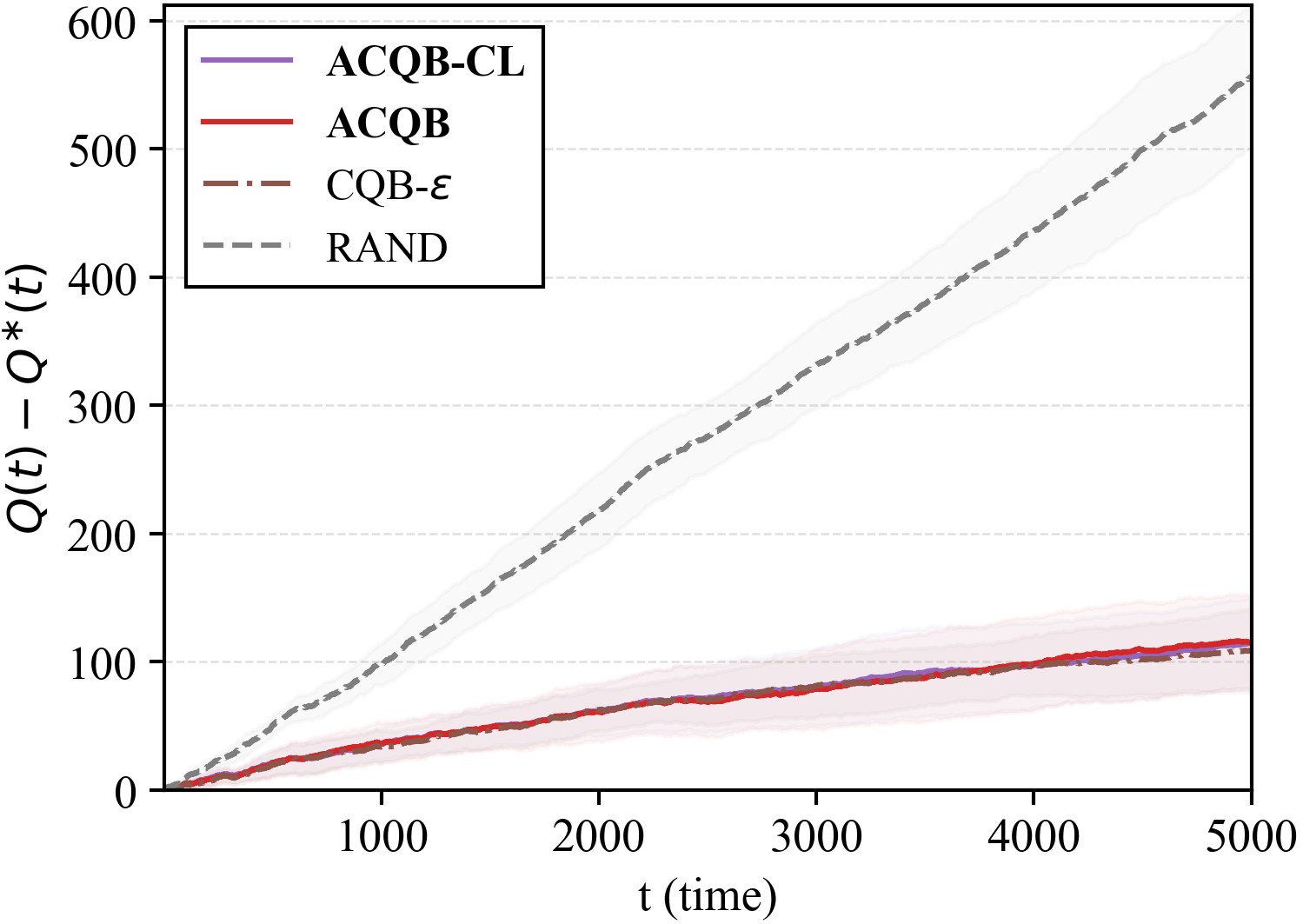}
        \end{subfigure}\hfill
        \begin{subfigure}[c]{0.49\linewidth}
            \includegraphics[width=\linewidth]{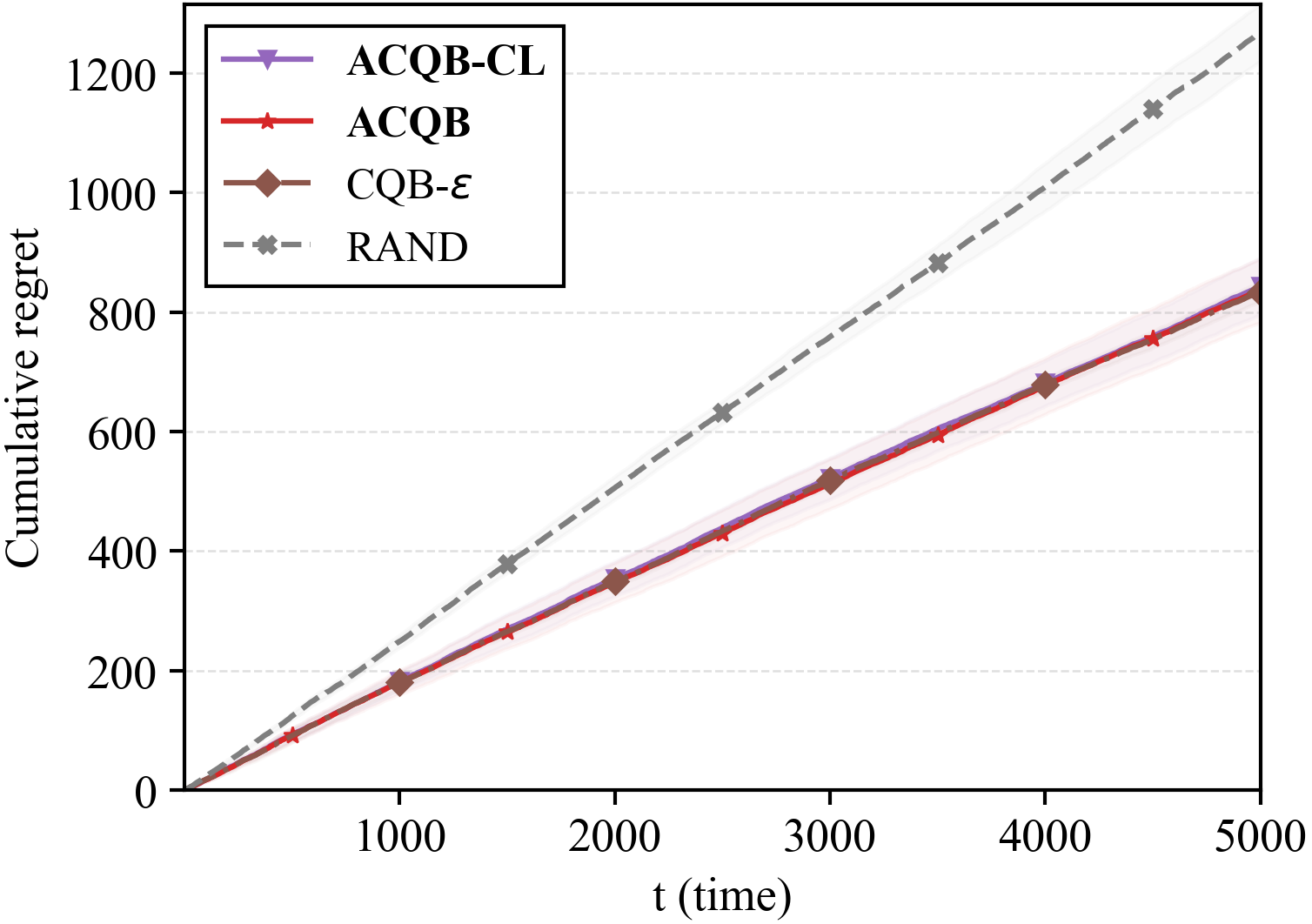}
        \end{subfigure}
    \end{minipage}%

    \vspace{-0.1cm}
    \caption{Queue length and cumulative regret on the \textsc{RouterBench} dataset.}
    \label{fig:data:rout}
    
\end{figure*}
\FloatBarrier

\begin{table*}[t]
\centering
\caption{Mean ($\mu$), standard deviation ($\sigma$) of queue length gap, and throughput (T.P.) at time steps 2500 and 5000 for $K=1$ (low-arrival setting: $\lambda_{\mathrm{Sprout}}=0.7$, $\lambda_{\mathrm{RouterBench}}=0.5$, $\lambda_{\mathrm{EmbedLLM}}=0.3$).}
\label{tab:k1-low-lam5-qgap-throughput}
\scriptsize
\setlength{\tabcolsep}{2.4pt}
\renewcommand{\arraystretch}{1.15}
\resizebox{\textwidth}{!}{%
\begin{tabular}{lllrrrrrrrrrrrrr}
\toprule
Dataset & Time & Stat.
& ACQB-CL & ACQB & RAND & Q-ThS & Q-UCB & CQB-$\v\veps$ & RAND\_ROUT & ZERO & KNN & MLP & UMR & RouterDC & CARROT \\
\midrule
Sprout & 2500 & $\mu$
& \second{0.4} & \best{0.2} & 2.0 & 2.7 & 2.7 & 0.7 & 3.2 & 1.7 & 1.4 & 19.0 & 1.1 & 19.0 & 1.1 \\
& & $\sigma$
& 1.0 & 0.4 & 1.8 & 3.0 & 3.0 & 1.1 & 4.3 & 2.2 & 2.3 & 16.1 & 1.7 & 17.7 & 1.3 \\
& & T.P.
& \second{0.696} & \best{0.696} & 0.695 & 0.695 & 0.695 & 0.696 & 0.695 & 0.695 & 0.696 & 0.689 & 0.696 & 0.689 & 0.696 \\

& 5000 & $\mu$
& \best{0.3} & 0.6 & 5.0 & 0.6 & 1.3 & 0.7 & 1.6 & \second{0.5} & 0.6 & 28.3 & 1.0 & 34.1 & 0.9 \\
& & $\sigma$
& 0.5 & 1.3 & 7.3 & 0.8 & 1.3 & 0.8 & 1.3 & 1.6 & 1.1 & 25.1 & 1.9 & 30.4 & 1.7 \\
& & T.P.
& \best{0.699} & 0.698 & 0.698 & 0.698 & 0.698 & 0.698 & 0.698 & \second{0.699} & 0.698 & 0.693 & 0.698 & 0.692 & 0.698 \\
\midrule

EmbedLLM & 2500 & $\mu$
& 3.9 & \second{3.4} & 14.9 & 16.1 & 16.1 & \best{2.6} & 20.3 & 98.7 & 47.3 & 146.6 & 50.6 & 117.6 & 47.3 \\
& & $\sigma$
& 4.4 & 3.8 & 14.2 & 14.4 & 14.4 & 3.0 & 12.1 & 25.9 & 31.1 & 36.3 & 23.9 & 29.8 & 31.1 \\
& & T.P.
& 0.297 & \second{0.297} & 0.292 & 0.292 & 0.292 & \best{0.297} & 0.290 & 0.259 & 0.280 & 0.240 & 0.278 & 0.251 & 0.280 \\

& 5000 & $\mu$
& \second{2.4} & 2.5 & 30.7 & 25.2 & 25.2 & \best{1.8} & 28.4 & 189.0 & 88.9 & 297.0 & 84.5 & 238.3 & 88.9 \\
& & $\sigma$
& 2.2 & 2.3 & 18.5 & 18.6 & 18.6 & 2.5 & 22.4 & 51.1 & 54.1 & 64.3 & 46.3 & 65.8 & 54.1 \\
& & T.P.
& \second{0.300} & 0.300 & 0.295 & 0.296 & 0.296 & \best{0.300} & 0.295 & 0.263 & 0.283 & 0.241 & 0.284 & 0.253 & 0.283 \\
\midrule

RouterBench & 2500 & $\mu$
& \best{0.8} & \second{1.2} & 23.1 & 17.3 & 17.2 & 2.1 & 21.7 & 8.3 & 8.7 & 172.5 & 8.9 & 130.5 & 6.4 \\
& & $\sigma$
& 1.1 & 1.4 & 28.2 & 24.8 & 24.9 & 2.2 & 23.1 & 9.3 & 12.8 & 50.9 & 11.1 & 50.5 & 9.3 \\
& & T.P.
& \best{0.501} & \second{0.501} & 0.492 & 0.494 & 0.494 & 0.500 & 0.492 & 0.498 & 0.498 & 0.432 & 0.498 & 0.449 & 0.499 \\

& 5000 & $\mu$
& \best{0.7} & \second{1.2} & 20.5 & 4.0 & 3.2 & 1.5 & 15.2 & 4.2 & 9.1 & 341.5 & 4.5 & 260.2 & 5.9 \\
& & $\sigma$
& 1.3 & 1.4 & 24.4 & 3.7 & 4.5 & 2.3 & 26.4 & 6.0 & 14.1 & 93.2 & 6.9 & 91.8 & 8.1 \\
& & T.P.
& \best{0.498} & \second{0.498} & 0.494 & 0.497 & 0.498 & 0.498 & 0.495 & 0.497 & 0.496 & 0.430 & 0.497 & 0.446 & 0.497 \\

\bottomrule
\end{tabular}%
}
\end{table*}

\FloatBarrier

\begin{table*}[t]
\centering
\caption{Mean ($\mu$), standard deviation ($\sigma$) of queue length gap, and throughput (T.P.) at time steps 2500 and 5000 for $K=1$ (high-arrival setting: $\lambda_{\mathrm{Sprout}}=0.9$, $\lambda_{\mathrm{RouterBench}}=0.7$, $\lambda_{\mathrm{EmbedLLM}}=0.5$).}
\label{tab:k1-high-lam5-qgap-throughput}
\scriptsize
\setlength{\tabcolsep}{2.4pt}
\renewcommand{\arraystretch}{1.15}
\resizebox{\textwidth}{!}{%
\begin{tabular}{lllrrrrrrrrrrrrr}
\toprule
Dataset & Time & Stat.
& ACQB-CL & ACQB & RAND & Q-ThS & Q-UCB & CQB-$\v\veps$ & RAND\_ROUT & ZERO & KNN & MLP & UMR & RouterDC & CARROT \\
\midrule
Sprout & 2500 & $\mu$
& \best{100.6} & 125.4 & 337.9 & 326.2 & 326.2 & 153.2 & 330.6 & 138.1 & 121.8 & 400.9 & 125.9 & 412.8 & \second{115.6} \\
& & $\sigma$
& 39.3 & 20.8 & 31.1 & 32.6 & 32.6 & 40.0 & 43.2 & 41.2 & 33.5 & 45.2 & 16.8 & 48.4 & 32.4 \\
& & T.P.
& \best{0.856} & 0.846 & 0.761 & 0.766 & 0.766 & 0.835 & 0.765 & 0.842 & 0.848 & 0.736 & 0.846 & 0.732 & \second{0.851} \\

& 5000 & $\mu$
& \best{130.3} & \second{172.6} & 679.1 & 598.4 & 599.1 & 241.4 & 656.4 & 276.5 & 256.3 & 803.1 & 261.1 & 826.2 & 231.3 \\
& & $\sigma$
& 64.4 & 40.5 & 49.6 & 42.9 & 46.6 & 85.1 & 57.9 & 63.3 & 42.5 & 82.3 & 31.2 & 72.6 & 54.3 \\
& & T.P.
& \best{0.873} & \second{0.864} & 0.763 & 0.779 & 0.779 & 0.851 & 0.768 & 0.843 & 0.848 & 0.738 & 0.847 & 0.734 & 0.853 \\
\midrule

EmbedLLM & 2500 & $\mu$
& \best{201.0} & \second{208.7} & 393.1 & 386.5 & 386.5 & 232.3 & 381.0 & 436.1 & 340.4 & 523.1 & 350.2 & 492.2 & 340.4 \\
& & $\sigma$
& 30.1 & 29.1 & 19.9 & 21.5 & 21.5 & 35.4 & 37.1 & 24.1 & 35.8 & 36.1 & 21.9 & 35.2 & 35.8 \\
& & T.P.
& \best{0.420} & \second{0.417} & 0.343 & 0.346 & 0.346 & 0.407 & 0.348 & 0.326 & 0.365 & 0.292 & 0.361 & 0.304 & 0.365 \\

& 5000 & $\mu$
& \best{294.8} & \second{295.3} & 762.4 & 759.0 & 759.0 & 336.1 & 731.9 & 852.0 & 676.5 & 1028.3 & 680.1 & 961.5 & 676.5 \\
& & $\sigma$
& 48.7 & 55.7 & 38.9 & 47.1 & 47.1 & 54.7 & 42.7 & 39.1 & 49.4 & 65.2 & 42.5 & 76.9 & 49.4 \\
& & T.P.
& \best{0.439} & \second{0.439} & 0.346 & 0.346 & 0.346 & 0.431 & 0.352 & 0.328 & 0.363 & 0.293 & 0.362 & 0.306 & 0.363 \\
\midrule

RouterBench & 2500 & $\mu$
& \best{92.3} & \second{94.3} & 407.2 & 376.4 & 377.0 & 123.1 & 393.0 & 227.4 & 209.3 & 524.1 & 211.4 & 469.8 & 212.8 \\
& & $\sigma$
& 47.8 & 45.2 & 59.7 & 69.6 & 68.5 & 58.2 & 46.4 & 41.7 & 53.2 & 44.5 & 45.9 & 33.3 & 53.4 \\
& & T.P.
& \best{0.659} & \second{0.658} & 0.533 & 0.545 & 0.545 & 0.646 & 0.539 & 0.605 & 0.612 & 0.486 & 0.611 & 0.508 & 0.611 \\

& 5000 & $\mu$
& \best{143.1} & \second{150.5} & 826.9 & 622.1 & 645.8 & 161.9 & 791.1 & 453.5 & 427.8 & 1075.0 & 429.1 & 962.0 & 431.9 \\
& & $\sigma$
& 83.5 & 64.5 & 89.4 & 105.3 & 99.0 & 51.6 & 76.6 & 75.0 & 92.7 & 85.5 & 93.5 & 77.8 & 99.4 \\
& & T.P.
& \best{0.670} & \second{0.668} & 0.533 & 0.574 & 0.569 & 0.666 & 0.540 & 0.608 & 0.613 & 0.483 & 0.613 & 0.506 & 0.612 \\

\bottomrule
\end{tabular}%
}
\end{table*}

\begin{table*}[t]
\centering
\caption{Mean ($\mu$), standard deviation ($\sigma$) of queue length gap, and throughput (T.P.) at time steps 2500 and 5000 for $K=2$ (low-arrival setting: $\lambda_{\mathrm{Sprout}}=0.75$, $\lambda_{\mathrm{RouterBench}}=0.65$, $\lambda_{\mathrm{EmbedLLM}}=0.45$).}
\label{tab:k2-low-lam5-qgap-throughput}
\scriptsize
\setlength{\tabcolsep}{2.4pt}
\renewcommand{\arraystretch}{1.15}
\resizebox{\textwidth}{!}{%
\begin{tabular}{lllrrrrrrrrrrrrr}
\toprule
Dataset & Time & Stat.
& ACQB-CL & ACQB & RAND & Q-ThS & Q-UCB & CQB-$\v\veps$ & RAND\_ROUT & ZERO & KNN & MLP & UMR & RouterDC & CARROT \\
\midrule
Sprout & 2500 & $\mu$
& \second{0.1} & 0.1 & 0.3 & - & - & \best{0.0} & - & - & - & - & - & - & - \\
& & $\sigma$
& 0.3 & 0.3 & 0.8 & - & - & 0.0 & - & - & - & - & - & - & - \\
& & T.P.
& \second{0.745} & 0.745 & 0.745 & - & - & \best{0.745} & - & - & - & - & - & - & - \\

& 5000 & $\mu$
& \best{0.1} & 0.3 & 0.3 & - & - & \second{0.1} & - & - & - & - & - & - & - \\
& & $\sigma$
& 0.3 & 0.7 & 0.7 & - & - & 0.3 & - & - & - & - & - & - & - \\
& & T.P.
& \best{0.747} & 0.747 & 0.747 & - & - & \second{0.747} & - & - & - & - & - & - & - \\
\midrule

EmbedLLM & 2500 & $\mu$
& 1.5 & \second{1.3} & 8.6 & - & - & \best{1.2} & - & - & - & - & - & - & - \\
& & $\sigma$
& 2.0 & 2.5 & 9.2 & - & - & 1.8 & - & - & - & - & - & - & - \\
& & T.P.
& 0.450 & \second{0.450} & 0.447 & - & - & \best{0.450} & - & - & - & - & - & - & - \\

& 5000 & $\mu$
& \best{1.6} & \second{2.3} & 12.3 & - & - & 2.9 & - & - & - & - & - & - & - \\
& & $\sigma$
& 1.3 & 2.5 & 10.4 & - & - & 2.8 & - & - & - & - & - & - & - \\
& & T.P.
& \best{0.450} & \second{0.449} & 0.447 & - & - & 0.449 & - & - & - & - & - & - & - \\
\midrule

RouterBench & 2500 & $\mu$
& 0.6 & \best{0.4} & 1.5 & - & - & \second{0.4} & - & - & - & - & - & - & - \\
& & $\sigma$
& 0.8 & 1.0 & 1.5 & - & - & 0.8 & - & - & - & - & - & - & - \\
& & T.P.
& 0.645 & \best{0.646} & 0.645 & - & - & \second{0.646} & - & - & - & - & - & - & - \\

& 5000 & $\mu$
& \best{0.1} & \second{0.5} & 2.9 & - & - & 0.9 & - & - & - & - & - & - & - \\
& & $\sigma$
& 0.3 & 0.7 & 4.3 & - & - & 1.2 & - & - & - & - & - & - & - \\
& & T.P.
& \best{0.647} & \second{0.646} & 0.646 & - & - & 0.646 & - & - & - & - & - & - & - \\

\bottomrule
\end{tabular}%
}
\end{table*}

\begin{table*}[t]
\centering
\caption{Mean ($\mu$), standard deviation ($\sigma$) of queue length gap, and throughput (T.P.) at time steps 2500 and 5000 for $K=2$ (middle-arrival setting: $\lambda_{\mathrm{Sprout}}=0.85$, $\lambda_{\mathrm{RouterBench}}=0.75$, $\lambda_{\mathrm{EmbedLLM}}=0.55$).}
\label{tab:k2-mid-lam5-qgap-throughput}
\scriptsize
\setlength{\tabcolsep}{2.4pt}
\renewcommand{\arraystretch}{1.15}
\resizebox{\textwidth}{!}{%
\begin{tabular}{lllrrrrrrrrrrrrr}
\toprule
Dataset & Time & Stat.
& ACQB-CL & ACQB & RAND & Q-ThS & Q-UCB & CQB-$\v\veps$ & RAND\_ROUT & ZERO & KNN & MLP & UMR & RouterDC & CARROT \\
\midrule
Sprout & 2500 & $\mu$
& \second{0.1} & \best{0.0} & 1.0 & - & - & 0.2 & - & - & - & - & - & - & - \\
& & $\sigma$
& 0.3 & 0.0 & 1.2 & - & - & 0.4 & - & - & - & - & - & - & - \\
& & T.P.
& \second{0.850} & \best{0.850} & 0.849 & - & - & 0.849 & - & - & - & - & - & - & - \\

& 5000 & $\mu$
& \best{0.1} & \second{0.1} & 1.5 & - & - & 0.3 & - & - & - & - & - & - & - \\
& & $\sigma$
& 0.3 & 0.3 & 1.7 & - & - & 0.7 & - & - & - & - & - & - & - \\
& & T.P.
& \second{0.850} & \best{0.850} & 0.849 & - & - & 0.850 & - & - & - & - & - & - & - \\
\midrule

EmbedLLM & 2500 & $\mu$
& \best{15.9} & \second{23.4} & 132.7 & - & - & 25.3 & - & - & - & - & - & - & - \\
& & $\sigma$
& 14.0 & 21.1 & 41.4 & - & - & 26.2 & - & - & - & - & - & - & - \\
& & T.P.
& \best{0.542} & \second{0.539} & 0.495 & - & - & 0.538 & - & - & - & - & - & - & - \\

& 5000 & $\mu$
& \best{5.7} & \second{6.4} & 257.6 & - & - & 14.2 & - & - & - & - & - & - & - \\
& & $\sigma$
& 7.2 & 12.3 & 52.3 & - & - & 27.4 & - & - & - & - & - & - & - \\
& & T.P.
& \best{0.546} & \second{0.546} & 0.496 & - & - & 0.544 & - & - & - & - & - & - & - \\
\midrule

RouterBench & 2500 & $\mu$
& 3.5 & \best{2.4} & 68.3 & - & - & \second{2.8} & - & - & - & - & - & - & - \\
& & $\sigma$
& 6.3 & 2.2 & 31.6 & - & - & 2.6 & - & - & - & - & - & - & - \\
& & T.P.
& 0.744 & \best{0.744} & 0.718 & - & - & \second{0.744} & - & - & - & - & - & - & - \\

& 5000 & $\mu$
& 2.2 & \best{1.2} & 108.4 & - & - & \second{1.5} & - & - & - & - & - & - & - \\
& & $\sigma$
& 3.6 & 2.6 & 61.8 & - & - & 3.4 & - & - & - & - & - & - & - \\
& & T.P.
& 0.746 & \best{0.747} & 0.725 & - & - & \second{0.746} & - & - & - & - & - & - & - \\

\bottomrule
\end{tabular}%
}
\end{table*}

\begin{table*}[t]
\centering
\caption{Mean ($\mu$), standard deviation ($\sigma$) of queue length gap, and throughput (T.P.) at time steps 2500 and 5000 for $K=2$ (high-arrival setting: $\lambda_{\mathrm{Sprout}}=0.95$, $\lambda_{\mathrm{RouterBench}}=0.85$, $\lambda_{\mathrm{EmbedLLM}}=0.65$).}
\label{tab:k2-high-lam5-qgap-throughput}
\scriptsize
\setlength{\tabcolsep}{2.4pt}
\renewcommand{\arraystretch}{1.15}
\resizebox{\textwidth}{!}{%
\begin{tabular}{lllrrrrrrrrrrrrr}
\toprule
Dataset & Time & Stat.
& ACQB-CL & ACQB & RAND & Q-ThS & Q-UCB & CQB-$\v\veps$ & RAND\_ROUT & ZERO & KNN & MLP & UMR & RouterDC & CARROT \\
\midrule
Sprout & 2500 & $\mu$
& \best{24.4} & \second{34.4} & 114.0 & - & - & 36.4 & - & - & - & - & - & - & - \\
& & $\sigma$
& 18.2 & 22.6 & 32.7 & - & - & 24.4 & - & - & - & - & - & - & - \\
& & T.P.
& \best{0.940} & \second{0.936} & 0.904 & - & - & 0.935 & - & - & - & - & - & - & - \\

& 5000 & $\mu$
& \best{23.6} & \second{36.2} & 230.2 & - & - & 41.3 & - & - & - & - & - & - & - \\
& & $\sigma$
& 18.5 & 35.7 & 38.0 & - & - & 32.0 & - & - & - & - & - & - & - \\
& & T.P.
& \best{0.946} & \second{0.943} & 0.904 & - & - & 0.942 & - & - & - & - & - & - & - \\
\midrule

EmbedLLM & 2500 & $\mu$
& \best{121.3} & 139.3 & 330.6 & - & - & \second{128.8} & - & - & - & - & - & - & - \\
& & $\sigma$
& 29.2 & 33.6 & 15.1 & - & - & 29.5 & - & - & - & - & - & - & - \\
& & T.P.
& \best{0.597} & 0.590 & 0.513 & - & - & \second{0.594} & - & - & - & - & - & - & - \\

& 5000 & $\mu$
& \best{158.0} & 183.2 & 654.1 & - & - & \second{164.4} & - & - & - & - & - & - & - \\
& & $\sigma$
& 27.8 & 64.2 & 40.3 & - & - & 45.1 & - & - & - & - & - & - & - \\
& & T.P.
& \best{0.615} & 0.610 & 0.516 & - & - & \second{0.614} & - & - & - & - & - & - & - \\
\midrule

RouterBench & 2500 & $\mu$
& \second{68.9} & \best{68.4} & 275.4 & - & - & 70.2 & - & - & - & - & - & - & - \\
& & $\sigma$
& 32.7 & 39.4 & 38.5 & - & - & 28.5 & - & - & - & - & - & - & - \\
& & T.P.
& \second{0.821} & \best{0.822} & 0.739 & - & - & 0.821 & - & - & - & - & - & - & - \\

& 5000 & $\mu$
& \second{113.4} & 115.2 & 557.1 & - & - & \best{109.0} & - & - & - & - & - & - & - \\
& & $\sigma$
& 45.1 & 51.5 & 78.7 & - & - & 45.9 & - & - & - & - & - & - & - \\
& & T.P.
& \second{0.826} & 0.826 & 0.738 & - & - & \best{0.827} & - & - & - & - & - & - & - \\

\bottomrule
\end{tabular}%
}
\end{table*}

\FloatBarrier

\newpage

\section{Regret analysis} \label{sec:regret}

In this section, we provide a proof sketch for \Cref{thm:regret1,thm:regret2}. The full proofs are deferred to \Cref{sec:regret_formal}.

\subsection{Proof sketch of \cref{thm:regret1}}

\para{Queue state misalignment} 
We start by addressing \emph{queue state misalignment}, a phenomenon unique to contextual queueing bandits with arbitrary contexts. To analyze the queue length difference $Q(t) - Q^*(t)$, we need to upper bound the expected departure rate gap $\mathbb{E}[D^*(i) - D(i)]$ for each round $i < t$.
If the queue states (i.e., the sets of pending queries) were identical under both policies at round $i$, we could control this gap following an optimistic rule from the standard bandit techniques---specifically, by selecting the query-assortment pair with the highest optimistic estimate of the departure rate (as in Line 7 of \Cref{alg:1}).
However, a suboptimal decision at round $i$ affects the queue state for the subsequent round. Consequently, the queue state at round $i+1$ under our policy diverges from that under the optimal policy. 
We call this phenomenon queue state misalignment. This misalignment invalidates standard bandit analysis because following the optimistic rule over the current queue does not guarantee optimism; the optimal policy's queue may contain a superior query that is currently unavailable to the learner.

\para{Coupling process, and policy switching queues} 
To address such misalignment, we employ a coupling argument and define policy switching queues. 
We construct a collection of $t$ coupled queueing processes $\{Q_i\}_{i=0}^{t-1}$, where each process $Q_i$ follows our policy $\pi$ up to round $i$ and then switches to the optimal policy $\pi^*$ from rounds $i+1$ to $t-1$. Denote the corresponding queue lengths by $\{Q(i,t)\}_{i=0}^{t-1}$. These $t$ processes are coupled through shared randomness. Specifically, all processes experience identical query arrivals. Furthermore, if the same query and the LLM assortment are selected across processes at any given round, the realized feedback (i.e., departure or retrial) is identical. A formal definition is provided in \Cref{ssec:1}.

\para{Queue length regret decomposition}
Utilizing the coupling construction, we decompose the queue length regret via the following telescoping sum: Define $\psi(i,t) := Q(i,t) - Q(i-1,t)$. Then
\begin{align*}
    R_t = \E[Q(t) - Q^*(t)] = \E[Q(t-1,t) - Q(0,t)] = \sum_{i=1}^{t-1} \E[\psi(i,t)].
\end{align*}
Consider the term $\psi(i,t)$. Since both processes $Q_i$ and $Q_{i-1}$ follow the same policy (our algorithm) up to round $i-1$, they share the same trajectory up to round $i-1$. Consequently, at the beginning of round $i$, the two processes $Q_i$ and $Q_{i-1}$ observe the same queue state, which implies that the queue state misalignment does not exist at round $i$. This alignment enables us to upper bound the expected departure rate gap at round $i$ using the optimistic rule of our algorithm. Building on this, we can decompose $\E[\psi(i,t)]$ as follows: From the result of \Cref{lem:regret_decomp}, for all $i\in[0,t-1]$,
\begin{align*}
    \E[\psi(i,t)] \quad \leq \udef{\sqrt{\E\sqbr{\br{R(x_i^*, S_i^*,\theta^*) - R(x_i,S_i,\theta^*)}^2}}}{e_i} \udef{\sqrt{\E\sqbr{\wtilde \psi(i,t)}}}{p_i}
\end{align*}
where $(x_i^*,S_i^*) = \argmax_{x\in\cal X_i, S\in\cal C} R(x,S,\theta^*)$, is the optimal choice under current queue state. The term $\wtilde \psi (i,t)$ is defined as $\mathbb{E}[\psi(i,t)\mid \mathcal{F}_i^+, \v D(i,i)=0, \v D(i-1,i)=1]$, where the conditioning represents the history $\mathcal{F}_i^+$ combined with the specific divergence event where the process $Q_{i-1}$ successfully departure a query ($D(i-1,i)=1$) while $Q_i$ fails ($D(i,i)=0$).
In this decomposition, $e_i$ quantifies the per-round departure rate gap at round $i$, representing the probability that the two processes diverge at round $i+1$. Meanwhile, $p_i$ captures the expected long-term impact of this divergence on the queue length at time $t$. Substituting this back yields the final bound $R_t \leq \sum_{i=1}^{t-1} e_i p_i$.

\para{Bounding strategy} 
We proceed to bound the regret term using Chebyshev's sum inequality (\Cref{lem:cheb}). Suppose there exist bounding sequences $\{E_i\}$ and $\{P_i\}$ such that $e_i \leq E_i$ and $p_i \leq P_i$ for all $i$. Provided that $\{E_i\}$ is non-increasing and $\{P_i\}$ is non-decreasing (with $E_i, P_i > 0$), applying the Chebyshev's sum inequality yields
\begin{align*}
    R_t \leq \sum_{i=0}^{t-1} E_i P_i \leq \frac{1}{t-1}\br{\sum_{i=0}^{t-1} E_i}\br{\sum_{i=0}^{t-1} P_i}.
\end{align*}
Finally, by demonstrating that the product of $\sum_{i=0}^{t-1} E_i$ and $\sum_{i=0}^{t-1} P_i$ is sublinear, we establish the decaying queue length regret.

\para{Construction of the monotonic sequence $E_i$} 
We first establish an upper bound on $e_i$ by constructing a monotonically decreasing sequence $\{E_i\}$.
\case 1  If random exploration is triggered at round $i$ (with probability $\lambda\eta(i-1)$), the departure rate gap is trivially bounded by $1$, i.e., $R(x_i^*, S_i^*,\theta^*) - R(x_i,S_i,\theta^*) \leq 1$. Since the exploration rate $\eta(i-1)$ decreases as $i$ increases, the expected regret contribution from this case monotonically decreases in $i$. 
\case 2 Otherwise, the agent selects $x_i$ and $S_i$ according to the optimistic rule. In this case, the Thompson Sampling analysis yields (\Cref{prop:per_round_1})
\begin{align*}
    R(x_i^*, S_i^*, \theta^*) - R(x_i,S_i, \theta^*) \leq 49\beta_{i-1} \lambda_{\min}^{-1/2}(V_{i-1}),
\end{align*}
where $\beta_l = \alpha_l \min \{\sqrt{6d\log (M t)}, ~ \sqrt{2\log(2M)} + \sqrt{6\log(Klt)}\}$ and $\lambda_{\min}(V)$ denotes the minimum eigenvalue of $V$.
Furthermore, leveraging the forced $\eta(i)$-exploration and the feature distribution assumption (\Cref{ass:iid}), we guarantee that $\mathbb{E}[\lambda_{\min}^{-1/2}(V_{i-1})]$ is monotonically decreasing in $i$.
Combining these two cases, we obtain
\begin{align*}
    e_i \leq \min \cbr{1,~ \sqrt{\underbrace{c_0 t^{-2}}_{\text{(bad)}} + \underbrace{2 \lambda \eta(i-1)}_{\case 1} + \underbrace{(49\beta_{t-1})^2  \nu(i)}_{\case 2}}}.
\end{align*}
where the term (bad) accounts for the failure of the high-probability event in the bandit analysis. $\nu(i)$ corresponds to the upper bound on $\mathbb{E}[\lambda_{\min}^{-1/2}(V_{i-1})]$, which is a decreasing function of $i$ (defined in \Cref{lem:mono_decrease}). We define $E_i$ as the right-hand side of this inequality, which forms a monotonically decreasing sequence by construction.

\para{Construction of the monotonic sequence $P_i$}

Next, we upper bound $p_i$ by constructing a monotonically increasing sequence $\{P_i\}$.
Recall that $\wtilde \psi(i,t)$ is the conditional expectation of the queue length difference $\psi(i,t)$, given that $Q_i$ fails to depart while $Q_{i-1}$ succeeds at round $i$. This event creates an one-job discrepancy in the queue states at round $i+1$.
If $Q_{i-1}$ hits queue length $0$ at any round $l\in[i+1,t]$, then the shared randomness in our coupling construction ensures that $Q_i$ also hits queue length $0$ at round $l$. From that round onward, $Q_{i-1}$ and $Q_i$ follow the same trajectory, resulting in $\psi(i,t)=0$. Therefore, the event $\{\wtilde \psi(i,t)=1\}$ implies that $Q_i$ never hits queue length $0$ over rounds $i+1,\dots,t$. Consequently, it suffices to upper bound $\wtilde\psi(i,t)$ with the probability that $Q_i$ never hits queue length $0$ over rounds $i+1,\dots,t$. 

Since $Q_i$ follows the optimal policy from round $i+1$ onward, the traffic slackness assumption (\Cref{ass:slackness}) guarantees that at each step, the agent can select an assortment with an expected departure rate exceeding the arrival rate by at least $\eps$. This induces a negative drift of $-\eps$ in the queue length. Specifically, by analyzing the probability that the queue length $Q(i, i+1)$ reduces to $0$ over the remaining $t-i-1$ rounds, we obtain the following bound: if $Q(i,i+1) \leq (t-i-1)\epsilon + 1$, then
\begin{align*}
    \wtilde \psi(i,t) \leq 2 \exp \br{-\frac{(Q(i,i+1) - (t-i-1)\eps - 1)^2}{8(t-i-1)}}
\end{align*}
where the lemma can be found in \Cref{lem:cond_exp_psi}.

It remains to bound $Q(i,i+1)$ on the right-hand side, where $Q(i,i+1)$ is the queue length after following our policy up to round $i$. Unlike the optimal policy, our policy may incur \emph{bad rounds} where the expected departure rate is not sufficiently close to the optimal, causing the queue length to grow. To address this, we carefully choose $\tau(t)$ as an upper bound on the number of such bad rounds (see \Cref{prop:small_per_round}), 
% \begin{align*}
%     \tau(t) = \max \cbr{\frac{2M(t)}{\lambda},~\frac{16\log(t)}{\lambda},~\frac{4M(t)^2}{c_1^2\lambda^2},~ \frac{256\log^2 (t)}{c_1^2\lambda^2}},
% \end{align*}
% and $M(t) = c_2 \br{\frac{d + \log(t)}{ \sigma_0^4} + \frac{8(\alpha_{t-1} + 48\beta_{t-1})^2}{ \sigma_0^2(\eps - 2\eta(\tau(t))}}$,
so that for all other rounds $l \in [\tau(t)+1, t]$, the departure rate gap is controlled as
\begin{align}
    R(x_l^*, S_l^*, \theta^*) - R(\wtilde x_l, \wtilde S_l,\theta^*) \leq \eps/2 - \eta(\tau(t)), \label{eq:drg}
\end{align}
thereby guaranteeing the negative drift and decreasing the queue length. Here, $(\wtilde x_l, \wtilde S_l)$ is a query-assortment pair with the highest optimistic departure-rate estimate. 
% \begin{remark} \label{remark:tau}
%     \citet{bae2026queuelengthregretbounds} also derive a departure rate gap bound of $\eps/2 - \eta$ similar to \Cref{eq:drg}, where $\eta = T^{-1/2}$. Their method, however, necessitates a pure-exploration phase at the beginning, calculating the required exploration amount based on the horizon $T$ and slackness $\eps$. In contrast, we employ a time-varying exploration rate $\eta(t)$ to bypass this dependence. We show that setting the threshold $\tau(t)$ as defined in \Cref{eq:tau} ensures that the accumulated random exploration is sufficient to satisfy the departure rate upper bound in \Cref{eq:drg}, thereby achieving an anytime guarantee without prior knowledge.
% \end{remark}
\Cref{eq:drg} allows us to derive a tail bound on $Q(i, i+1)$ with $\cal B(i)$ the number of bad rounds up to round $i$ (\Cref{lem:q_tail})
\begin{align*}
    \P(Q(i,i+1) \geq a\cal B(i) + b,~\cal E_g) \leq 17\eps^{-2} \exp(-b\eps/2),
\end{align*}
where $\cal E_g$ denotes the event where the bandit analysis holds. $a$ and $b$ are parameters satisfying $a\geq 1+\eps/4$ and $b\geq 0$.
Combining the upper bound on $\wtilde \psi(i,t)$ with the tail bound on $Q(i, i+1)$ yields the final result: Let $\omega = 4\tau(t)/\eps$. Then, 
\begin{align*}
    p_i \leq \min \cbr{1,~\sqrt{\frac{c_0}{t^2} + \frac{19}{\eps^2} \exp\br{-\frac{\eps^2}{32} \br{t-i-1 - \omega}}}}, 
\end{align*}
where the lemma can be found in \Cref{lem:tilde_psi}. We denote the right-hand side as $P_i$. Since the term $-(t-i-1-\omega)$ increases as $i$ increases, $P_i$ is monotonically increasing in $i$.

\para{Completing the proof}
We have constructed sequences $\{E_i\}$ and $\{P_i\}$ that upper bound $e_i$ and $p_i$, respectively, and satisfy the conditions required for Chebyshev's sum inequality. In \Cref{ssec:7}, we demonstrate that the cumulative per-round regret shows $\sum_{i=1}^{t-1} E_i = \tildeO{t^{3/4}}$, while the cumulative long-term impact shows $\sum_{i=1}^{t-1} P_i = \tildeO{1}$. Consequently, applying the inequality (which introduces a $t^{-1}$ factor) yields a decaying queue length regret of $\wtilde{\mathcal{O}}(t^{-1/4})$.

\subsection{Proof sketch of \cref{thm:regret2}}

Recall the definition of the cumulative regret as $\text{Regret}_t = \sum_{i=1}^t \E\sqbr{R(x_i, S_i^*, \theta^*) - R(x_i, S_i, \theta^*)}$. Since the optimal pair $S_i^*$ is selected from the current queue state $\mathcal{X}_i$ of the agent (i.e., $S_i^* = \argmax_{ S \in\mathcal{C}} R(x_i, S, \theta^*)$), the departure-rate comparison is performed within the same queue state. Consequently, unlike the queue length regret analysis, this definition allows us to avoid queue state misalignment, allowing us to employ standard bandit analysis techniques.

At each round $i$, the agent performs random exploration with probability $\eta(i-1)$ (indicated by $E(i-1)=1$) and otherwise follows the Thompson sampling-based optimistic rule (with probability $1-\eta(i-1)$). Since the per-round regret is bounded by $1$, the expected cumulative regret from random exploration is bounded by $\sum_{i=1}^t \eta(i-1) = \bigO{\sqrt{t}}$. For rounds using the optimistic choice, we follow the analysis for MNL-bandits, which yields a regret bound of $\tildeO{d^{3/2}\sqrt{t}}$. Combining these two components, we obtain an overall cumulative regret bound of $\tildeO{d^{3/2}\sqrt{t}}$. A detailed proof is provided in \Cref{ssec:8}.

\section{Formal regret analysis} \label{sec:regret_formal}

In this section, following \Cref{sec:regret}, we introduce the formal proof for the queue length regret \Cref{thm:regret1}, and for the cumulative regret \Cref{thm:regret2}. The outline of this section is as follows:

\begin{enumerate}
    \item \Cref{ssec:1}:~~ Coupling Process and Policy-Switching Queues
    \item \Cref{ssec:2}:~~ Queue Length Regret Decomposition
    \item \Cref{ssec:3}:~~ Events
    \item \Cref{ssec:4}:~~ Monotonic Decrease in Per-Round Regret
    \item \Cref{ssec:5}:~~ How to Set Bad Rounds $\tau(t)$
    \item \Cref{ssec:6}:~~ Monotonic Increase in the Effect of Queue Length Difference
    \item \Cref{ssec:7}:~~ Queue Length Regret Analysis (Proof of \Cref{thm:regret1})
    \item \Cref{ssec:8}:~~ Cumulative Regret Analysis (Proof of \Cref{thm:regret2})
\end{enumerate}

\subsection{Coupling process and policy-switching queues} \label{ssec:1}

We use the following definition of the policy switching queue, where $\wtilde Q(t',t'')$ denotes the length of the queue at the beginning of time step $t'' $ under our policy applied from time steps from $1$ to $t'$ and the optimal policy applied from $t'+1$ to $t''-1$. We denote such a queueing process and the corresponding policy as $\wtilde Q_{t'}$ and $\wtilde \pi_{t'}$. By definition $\wtilde Q(t'-1,t')=Q(t')$ and $\wtilde Q(0,t'')=Q^*(t'')$. Then, we can see that the queue length regret can be decomposed as the telescoping sum of the difference between two queueing processes as $R_t = \sum_{i=1}^{t-1}\E[\wtilde Q(i,t)- \wtilde Q(i-1,t)]$.

Next, we construct a coupling process for each $\{\wtilde Q_{i}\}_{i=0}^{t-1}$. We denote these as $\{Q_i\}_{i=0}^{t-1}$ and its corresponding queue length and switching policy as $\{Q(i,t)\}_{i=0}^{t-1}$ and $\{\pi_i\}_{i=0}^{t-1}$. First, for the arrival, random exploration, and parameter sampling (which will appear in Thompson sampling), we assume that all processes $\{Q_i\}_{i=0}^{t-1}$ share the same results of randomness, e.g. if in round $t$ a new job $x^{(t)}$ arrives in $Q_{i}$, then every other $Q_{i'}$ also has a same event. We conveniently reuse the same notation of $A(t)$, $x_t$, $x^{(t)}$, $E(t)$, $\{\wtilde\theta_t^{(i)}\}_{i=1}^M$ for these coupling processes.

Now,  for the departure, we use the definition of $D(i,t')$ for the random departure of $Q_i$ in round $t'\in[t]$. In each round $t'$, we draw a shared random variable $U\sim\Unif(0,1)$. Say that for some $i$ ,$t'$, $Q_i$ choose $x_t'$ and $S_t'$. Assume that $S_t'$ is sorted in order of the servers, $S_t'[j]$ defines the $j$-th item of $S_t'$. Then
\begin{align*}
    &\text{if~~} U \leq \frac{\exp(x_{t'S_{t'}[1]}\tp\theta^*)}{1 + \sum_{j'\in S_{t'}} \exp(x_{t'j'}\tp \theta^*)}, \text{ then } ~ D(i,t') = 1 \text{ and } y_{t1}=1  \\
    &\text{if~~} \sum_{k=1}^{j-1}\frac{\exp(x_{t'S[k]}\tp\theta^*)}{1 + \sum_{j'\in S_{t'}} \exp(x_{t'j'}\tp \theta^*)} < U \leq \sum_{k=1}^{j}\frac{\exp(x_{t'S[k]}\tp\theta^*)}{1 + \sum_{j'\in S_{t'}} \exp(x_{t'j'}\tp \theta^*)}\text{ for }  j\in [2,K],\\
    &\qquad \text{then } ~ D(i,t') = 1 \text{ and } y_{tj}=1  \\
    &\text{if~~} U > R(x_{t'}, S_{t'}, \theta^*), \text{ then } ~ D(i,t') = 0 \text{ and } y_{t0}=1
\end{align*}
We can see that these coupling processes preserves the marginals, therefore,
\begin{align*}
    R_t = \sum_{i=1}^{t-1}\E[\wtilde Q(i,t)- \wtilde Q(i-1,t)] = \sum_{i=1}^{t-1}\E[\udef{Q(i,t)- Q(i-1,t)}{\psi(i,t)}],
\end{align*}
and we continue to use this definition afterwards.

Now we introduce the filtration corresponding to these coupling processes of $\{\pi_i\}_{i=0}^{t-1}$: We define the arrival tuple $\v A(t)$, the departure tuple $\v D(t) = \{\v D(i, t)\}_{i=0}^{t-1}$, and the sampled parameter tuple $\v\theta(t)$ (which comes from the Thompson sampling algorithm) as
\begin{align*}
    \v A(t): = (A(t),\wtilde x^{(t)}), \quad \v D(t):= \{\v D(i, t)\}_{i=0}^{t-1} =  \{(y_t^{(i)}, (x_t^{(i)}, S_t^{(i)}))\}_{i=1}^{t-1}, \quad \v\theta(t) := (\{\wtilde \theta_t^{(i)}\}_{i=1}^M).
\end{align*}
Here, $\widetilde x^{(t)}$ is a masked feature defined as $\widetilde x^{(t)} = \widetilde x$ if $A(t)=0$ where $\widetilde x \in\R^d$ is a fixed symbol for the sign of no arrival, and $\widetilde x^{(t)} = x^{(t)}$ if $A(t)=1$. $\v D_i(t)$ denotes to the departure tuple of $\pi_i$, and $(y_t^{(i)}, (x_t^{(i)}, S_t^{(i)}))$ denotes its corresponding choice of context, assortment, and reward in round $t$. We omit the superscript of $\cdot^{(i)}$ if the situation is clear. $\{\wtilde \theta_t^{(i)}\}_{i=1}^M$ is $M$ sampled parameters in round $t$. Then, we define the filtration as
\begin{align*}
    \cal F_t := \sigma (\cal X_1, \v A(1), \v D(1), \v \theta(1), E(1),\dots, \v A(t-1), \v D(t-1), \v \theta(t-1)).
\end{align*}
Notice that $\cal F_t$ is short of $E(t-1)$. Also, for the notational convenience, if we write $\v D(i,t') = 0$ (or $\v D(i,t') = 1$), it means the observations of $D(i,t')=0$ (or $D(i,t')=1$) including $y_{t'}^{(i)}$, $x_{t'}^{(i)}$, and  $S_{t'}^{(i)}$.

Finally, for the augmented $\sigma$-algebra $\cal G$ with the filtration $\cal F$ and the random variable $X$, which is $\cal G:=\cal F \lor \sigma(X)$, we use the shorthand notation of $\cal F,X$, e.g. $\E[Z\mid \cal F, X]$ denotes the conditional expectation of $Z$ with respect to $\cal F \lor \sigma(X)$ and $\E[Z\mid \cal F, X=1]$ denotes the conditional expectation conditioning on the event $\{X=1\}$ in addition to $\cal F$.

\subsection{Queue length regret decomposition} \label{ssec:2}

Now we focus on this quantity $\psi(i,t):= Q(i,t)- Q(i-1,t)$. As the two coupled queues follow the same randomness and the same policy up to round $t-1$, their queue states at time step $t$ are identical. With this alignment, the following lemma characterize $\psi(i,t)$:
\begin{lemma} \label{lem:psi}
    We have $\psi(i,t)\in\{-1,0,1\}$ for all $i\in[0,t-1]$. Especially, if $D(i,t)=D(i-1,t)=0$ or $1$, we have $\psi(i,t)\in\{-1,0\}$, and if $D(i,t)=0$, $D(i-1,t)=1$, we have $\psi(i,t)\in\{0,1\}$ for all $i\in[0,t-1]$. 
\end{lemma}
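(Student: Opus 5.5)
The approach is to compare the two coupled processes $Q_i$ and $Q_{i-1}$ directly. I read the conditions in the statement as referring to the divergence round, i.e. $D(i,i)$ versus $D(i-1,i)$. Since $Q_i$ and $Q_{i-1}$ apply the same policy with the same randomness through round $i-1$, their queue states coincide at the start of round $i$, say $\cal X_i$. In round $i$, $Q_i$ plays $(x_i,S_i)$ and $Q_{i-1}$ plays the optimal pair $(x_i^*,S_i^*)$. From round $i+1$ to $t-1$, both play $\pi^*$ with shared arrivals and a shared uniform $U$.

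Write $r(x):=\max_{S\in\cal C}R(x,S,\theta^*)$. I fix a deterministic tie-breaking rule for $\pi^*$, so two queues that share their best job make the same choice. Then $\pi^*$ always serves a job of maximal $r$. The coupling gives the key monotonicity: a process departs in a round iff $U\le r(\text{served job})$. Hence a queue whose best $r$-value is at least the other's departs whenever the other does.

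I then define three relations between the state $A$ of $Q_i$ and the state $B$ of $Q_{i-1}$:
\begin{itemize}
\item[(I)] $A=B$;
\item[(II)] $A=C\cup\{a\}$ and $B=C\cup\{b\}$ with $r(a)\ge r(b)$;
\item[(III$_A$)] $A=B\cup\{a\}$; symmetrically, (III$_B$) means $B=A\cup\{b\}$.
\end{itemize}

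First I check round $i$, where $x_i^*$ maximizes $r$ over $\cal X_i$.
\begin{itemize}
\item If neither process departs, we get (I).
\item If both depart and $x_i=x_i^*$, we get (I). If both depart and $x_i\ne x_i^*$, then $A=\cal X_i\setminus\{x_i\}$ still contains $x_i^*$ and $B=\cal X_i\setminus\{x_i^*\}$ contains $x_i$. This is (II) with $a=x_i^*$ and $b=x_i$.
\item If $D(i,i)=0$ and $D(i-1,i)=1$, we get (III$_A$) with $a=x_i^*$.
\item If $D(i,i)=1$ and $D(i-1,i)=0$, we get (III$_B$) with $b=x_i$. This case is excluded by the statement, but it still satisfies $\psi\le 0$.
\end{itemize}
Arrivals add the same job to both sets and preserve every relation.

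The core step is an invariant check under $\pi^*$ with coupled $U$, showing the following transitions.
\begin{itemize}
\item (I) is absorbing.
\item From (II), $A$'s best value dominates $B$'s, so ``$B$ departs and $A$ does not'' is impossible. If both depart, they either serve the same job and (II) is kept, or $A$ serves $a$ and $B$ serves some $y$. In the latter case the new states are $C\setminus\{y\}\cup\{y\}$-type swaps: one gets (I) if $y=b$, and otherwise (II) with the pair $(y,b)$, where $r(y)\ge r(b)$ because $B$ preferred $y$. If only $A$ departs, we move to (III$_B$) or to (I).
\item From (III$_A$), $A\supseteq B$ again gives domination. The cases are: the same served job, which stays in (III$_A$); $A$ serves $a$ and only $A$ departs, giving (I); $A$ serves $a$ and both depart, giving (III$_A$) with the extra element $y$.
\item (III$_B$) is symmetric, moving only to (III$_B$) or (I).
\end{itemize}
Since the lengths differ by $+1$, $0$ and $-1$ in (III$_A$), (I)/(II) and (III$_B$) respectively, this yields the claims. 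Starting from equal departures, the reachable relations are (I), (II) and (III$_B$), so $\psi(i,t)\in\{-1,0\}$. Starting from $D(i,i)=0$ and $D(i-1,i)=1$, the reachable relations are (III$_A$) and (I), so $\psi(i,t)\in\{0,1\}$.

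The main obstacle I expect is bookkeeping. When both queues serve the same job, I must confirm it is the same $\pi^*$ choice under tie-breaking. The empty-queue and dummy-job convention also needs care: if the smaller queue is empty, it serves $x_0$, and feedback is irrelevant to its length because $[\cdot]^+$ keeps it at $0$. I would also double-check that the larger queue's departure does not break the relation in that case, which holds because it only moves toward (I).
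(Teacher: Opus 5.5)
Your argument is correct and is essentially the paper's proof. Your relations (I), (III$_A$), (III$_B$), (II) are the paper's states $S_{l,0},S_{l,1},S_{l,2},S_{l,3}$, and the closure-plus-reachability check from the two possible outcomes of round $i$ is the same; you just track jobs under a fixed tie-breaking order instead of sets of departure rates, and omit the never-reached reversed-swap state $S_{l,4}$.

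Two small bookkeeping fixes. First, state (II) as $a\succ b$ in your tie-breaking order rather than $r(a)\ge r(b)$; otherwise the tied case where $A$ serves some $y\in C$ while $B$ serves $b$ is missing from your enumeration. The stronger form is what your transitions actually maintain, since $x_i^*\succ x_i$ and $y=\max_{\succ}B\succ b$. Second, the dummy job's feedback is irrelevant because of the set update $\mathcal X_{t+1}=\mathcal X_t\setminus\{x_t:D(t)=1\}\cup\{x^{(t)}:A(t)=1\}$, not because of $[\cdot]^+$: with an arrival, $[0+1-D]^+$ does depend on $D$, and under that literal length recursion a high-rate dummy could even open a gap of $2$.
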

Moreover, the expected value of $\psi(i,t)$ can be decomposed even further as follows:
\begin{lemma} \label{lem:regret_decomp}
Let $x_l^*, S_l^* = \argmax_{x\in\cal X_l, S \in \cal C} R(x,S,\theta^*)$. For all $l\in[t]$, $i\in[0,t-1]$, define
\begin{align*}
    \cal F_l^+ &:= \cal F_l \lor  \sigma(E(l-1), \v A(l), \v\theta(l)),  \\
    \wtilde \psi (i,t) &:= \E[\psi(i,t)\mid \cal F_i^+, \v D(i,i)=0, \v D(i-1,i)=1].
\end{align*}
Then, we have
\begin{align*}
    \E[\psi(i,t)] \leq \sqrt{\E\sqbr{\br{R(x_i^*, S_i^*,\theta^*) - R(x_i,S_i,\theta^*)}^2}} \sqrt{\E\sqbr{\wtilde \psi(i,t)}}.
\end{align*}    
\end{lemma}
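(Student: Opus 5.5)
We plan to condition on $\mathcal F_i^+$, split according to the joint departure outcome of the two coupled processes at round $i$, and then apply Cauchy--Schwarz.

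\textbf{Step 1: common history up to round $i$.} Processes $Q_i$ and $Q_{i-1}$ follow the same policy with the same shared randomness through round $i-1$. Hence at round $i$ they see the same queue state $\mathcal X_i$ and the same $\mathcal F_i^+$ (arrivals, exploration flag, Thompson samples). At round $i$, $Q_i$ plays our choice $(x_i,S_i)$, which is $\mathcal F_i^+$-measurable. Meanwhile $Q_{i-1}$ plays the optimal choice $(x_i^*,S_i^*)$, also $\mathcal F_i^+$-measurable.

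\textbf{Step 2: split on the departure outcome.} We decompose
\begin{align*}
\E[\psi(i,t)\mid\mathcal F_i^+]=\sum_{(a,b)}\P\br{D(i,i)=a,\,D(i-1,i)=b\mid\mathcal F_i^+}\,\E[\psi(i,t)\mid \mathcal F_i^+, D(i,i)=a, D(i-1,i)=b].
\end{align*}
By \Cref{lem:psi}, every case other than $(a,b)=(0,1)$ gives $\psi(i,t)\le 0$, so those terms can be dropped. In the $(0,1)$ case the conditional expectation equals $\wtilde\psi(i,t)$, which lies in $[0,1]$.

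To bound the probability of the $(0,1)$ case, we use the shared uniform $U$. The departure event is $\{U\le R\}$, so
\begin{align*}
\P\br{D(i,i)=0,\,D(i-1,i)=1\mid\mathcal F_i^+}=\P\br{R(x_i,S_i,\theta^*)<U\le R(x_i^*,S_i^*,\theta^*)}\le R(x_i^*,S_i^*,\theta^*)-R(x_i,S_i,\theta^*).
\end{align*}
This uses that $R^*\ge R$, by optimality over the common queue $\mathcal X_i$. The right-hand side is nonnegative and at most $1$.

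One subtlety is the finer conditioning in $\wtilde\psi$, which fixes the full tuple $\v D$ (the choices $y$) rather than just the departure bits. If needed, we sum over the possible $y$ outcomes: each sub-case still has $\psi\in\{0,1\}$, and $\wtilde\psi$ is the corresponding average.

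\textbf{Step 3: Cauchy--Schwarz.} Taking expectations gives
\begin{align*}
\E[\psi(i,t)]\le\E\sqbr{\br{R(x_i^*,S_i^*,\theta^*)-R(x_i,S_i,\theta^*)}\,\wtilde\psi(i,t)}.
\end{align*}
Cauchy--Schwarz then yields
\begin{align*}
\E[\psi(i,t)]\le\sqrt{\E[(R^*-R)^2]}\,\sqrt{\E[\wtilde\psi(i,t)^2]}.
\end{align*}
Since $\wtilde\psi(i,t)\in[0,1]$, we have $\wtilde\psi^2\le\wtilde\psi$, which gives the claimed bound.

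\textbf{Main obstacle.} The delicate point is checking that $\wtilde\psi$ is well defined and nonnegative on the relevant event. We need $\psi(i,t)\ge 0$ there, from \Cref{lem:psi}, and we need the conditioning to be on an event of positive probability; otherwise we define $\wtilde\psi=0$ and note that the term vanishes. We also need the coupling to make the $(0,1)$ event exactly the interval event for $U$, which relies on $\mathcal F_i^+$ fixing both choices before $U$ is drawn.
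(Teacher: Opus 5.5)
Your proposal is correct and follows essentially the same route as the paper: condition on $\mathcal F_i^+$, discard the agreement cases via \Cref{lem:psi}, identify the disagreement probability with $R(x_i^*,S_i^*,\theta^*)-R(x_i,S_i,\theta^*)$ through the shared uniform $U$, and finish with Cauchy--Schwarz plus $\wtilde{\psi}^2\le\wtilde{\psi}$. The paper gets this last inequality via conditional Jensen and $\psi^2=\psi$ on the disagreement event, which is equivalent; one small precision is that the case $(a,b)=(1,0)$ is not covered by \Cref{lem:psi} but is simply empty under the coupling, since $\{U\le R(x_i,S_i,\theta^*)\}\subseteq\{U\le R(x_i^*,S_i^*,\theta^*)\}$.
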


% \subsection{Useful lemmas for Thompson sampling in MNL contextual bandits}

\subsection{Events} \label{ssec:3}

We use the following definition of confidence radius: for all $l\in[t]$
\begin{align*}
    \alpha_l &= \frac{\kappa}{2} \sqrt{d \log \br{1 + lK / (d\lambda_0)} + 4 \log l} + \kappa \sqrt{\lambda_0} \\
    \beta_l &= \alpha_l \min \cbr{\sqrt{6d\log (M t)}, ~ \sqrt{2\log(2M)} + \sqrt{6\log(Klt)}}
\end{align*}
where $M = \lceil 1 - \frac{\log (K)}{\log(1-1/(4\sqrt{e\pi}))} \rceil$. We use the following definition of the events: for all $l\in[t]$
\begin{align*}
    &\cal E_0(l) := \{A(l-1)=1, E(l-1)=1\} \\
    &\cal E_1 := \{\forall l\in[t],~ \|\what\theta_{l-1} - \theta^*\|_{V_{l-1}} \leq \alpha_{l-1}\} \\
    &\cal E_2(l) := \{\forall x\in\cal X_l, j\in[N],~ \wtilde u_{lj}(x) - x_{-j}\tp \what\theta_{l-1} \leq \beta_{l-1} \|x_{-j}\|_{V_{l-1}^{-1}}\} \\
    &\cal E_3 := \cbr{\lambda_{\min}(V_{\tau(t)}) \geq \frac{4(\alpha_{t-1} + 48\beta_{t-1})^2}{(\eps - 2\eta(\tau(t))}}
\end{align*}
By definition, $\P(\cal E_0(l))=\P(A(l)=1) \P(E(l-1)=1) =\lambda \eta(l-1)$. By \Cref{lem:alpha,lem:beta}, we have $\P(\cal E_1) = 1 - \bigO{1/t^2}$ and $\P(\cal E_2(l)) = 1 - \bigO{1/t^3}$. Denote $\cal E_2 := \bigcap_{l=1}^t\cal E_2(l)$. For $\cal E_2(l)$, taking union bound for $l\in[t]$, we have $\P(\cal E_2) = 1 - \bigO{1/t^2}$. By \Cref{prop:min_eig}, $\P(\cal E_3) \geq 1 -\bigO{1/t^2}$. Finally, for simplicity, we denote 
\begin{align}
    \cal E_g := \cal E_1 \cap \cal E_2 \cap \cal E_3, \quad \P(\cal E_g) \geq  1-c_0 t^{-2} \label{eq:good_event}
\end{align}
for some absolute constant $c_0>0$.

% {\color{red}
% For round $t$, we denote a 'bad round' phase $\tau(t)$ as
% \begin{align*}
%     &M(t) = c_2 \br{\frac{d + \log(1/\delta)}{ \sigma_0^4} + \frac{8(\alpha_t + 48\beta_t)^2}{ \sigma_0^2(\eps - 2\eta(\tau(t)))^2}} \\
%     &\tau(t) = \max\cbr{M(t)^2\frac{4}{c_1^2 \lambda^2},~ M(t)^2 \br{ \frac{c_1\lambda^2 + \sqrt{2 \log(1/\delta) - 15c_1^2 \lambda^4}}{8 c_1^2 \lambda^4 - \log(1/\delta)}}^2}.
% \end{align*}
% }

We introduce new definitions for 'bad round' phase $\tau(t)$:
\begin{equation}
\begin{aligned} \label{eq:tau}
    &\wtilde \tau(M) := \max \cbr{\frac{2M}{\lambda},~\frac{16\log(t)}{\lambda},~\frac{4M^2}{c_1^2\lambda^2},~ \frac{256\log^2 (t)}{c_1^2\lambda^2}}\\
    &\wtilde M(u) := c_2 \br{\frac{d + \log(t)}{ \sigma_0^4} + \frac{8(\alpha_{t-1} + 48\beta_{t-1})^2}{ \sigma_0^2(\eps - 2\eta(u))^2}} \\
    &\tau(t):= \min\cbr{u\in\N:~ u \geq \wtilde \tau(\wtilde M(u))} \\
    & M(t) := \wtilde M(\tau(t))
\end{aligned}
\end{equation}
Notice that these new definitions are not required in the algorithm and only appear in the analysis.

\subsection{Monotonic decrease in per-round regret} \label{ssec:4}

We show the monotonic decrease in per-round regret in the following sequence:
\begin{enumerate}
    \item Upper bound the \emph{expected} (squared) per-round regret with bonus term
    \item We show that our $\eta(l)$-exploration policy can monotonically decrease the bonus term, thereby decrease the upper bound of the expected squared per-round regret 
\end{enumerate}

Define new filtration $\cal F_t^-$ which shorts of $\v\theta(t-1)$ as follows:
\begin{align*}
    \cal F_l^- := \sigma (\cal X_1, \v A(1), \v D(1), \v \theta(1), E(1),\dots, \v A(l-1), \v D(l-1)).
\end{align*}
The following proposition shows the upper bound of the per-round regret and another upper bound with the minium eigenvalue of the design matrix $V_{l-1}$:

\begin{proposition} [Per-round regret] \label{prop:per_round_1}
    For all $l\in[t]$, define $x_l^*, S_l^* = \argmax_{x\in\cal X_l, S\in\cal C} R(x, S, \theta^*)$. On the event $\cal E_1, \cal E_2$ and $\cal E_0(l)^c$. Then, we have
    \begin{align*}
        R(x_l^*, S_l^*, \theta^*) - R(x_l,S_l, \theta^*) \leq 16\sqrt{e\pi} \beta_{l-1} \E \sqbr{\max_{j\in S(x_l, \wtilde\theta_{l-1}^{1:M})} \|x_{lj}\|_{V_{l-1}^{-1}}\given \cal F_l^-} + (\alpha_{l-1}+\beta_{l-1}) \max_{j\in S_l}  \|x_{lj}\|_{V_{l-1}^{-1}} 
    \end{align*}
    Also, we have
    \begin{align*}
        R(x_l^*, S_l^*, \theta^*) - R(x_l,S_l, \theta^*) \leq (\alpha_{l-1} + 48\beta_{l-1}) \lambda_{\min}^{-1/2}(V_{l-1}).
    \end{align*}
\end{proposition}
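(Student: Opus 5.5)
The plan follows the Thompson-sampling analysis for MNL bandits of \citet{oh2019thompson}, adapted to the joint maximization over $(x,S)\in\cal X_l\times\cal C$. Work on $\cal E_1\cap\cal E_2\cap\cal E_0(l)^c$, so that $(x_l,S_l)$ maximizes $\wtilde R(x,S)$ over the current queue state $\cal X_l$. Since $(x_l^*,S_l^*)$ is also chosen from $\cal X_l$, there is no queue-state misalignment.

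The first step is to split the gap as
\begin{align*}
R(x_l^*,S_l^*,\theta^*)-R(x_l,S_l,\theta^*) = \big[R(x_l^*,S_l^*,\theta^*)-\wtilde R(x_l,S_l)\big] + \big[\wtilde R(x_l,S_l)-R(x_l,S_l,\theta^*)\big].
\end{align*}
\textbf{Second bracket.} The map $u\mapsto \sum_{k\in S}e^{u_k}/(1+\sum_{j\in S}e^{u_j})$ has partial derivatives $p_k p_0\in[0,1/4]$ and is monotone in each coordinate. Hence this bracket is at most $\max_{j\in S_l}|\wtilde u_{lj}(x_l)-x_{lj}\tp\theta^*|$. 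By $\cal E_2(l)$, $\wtilde u_{lj}-x_{lj}\tp\what\theta_{l-1}\le\beta_{l-1}\|x_{lj}\|_{V_{l-1}^{-1}}$. By $\cal E_1$ and Cauchy--Schwarz, $|x_{lj}\tp(\what\theta_{l-1}-\theta^*)|\le\alpha_{l-1}\|x_{lj}\|_{V_{l-1}^{-1}}$. Only the upper deviation matters for the positive part, and the lower side is absorbed via monotonicity. This gives the term $(\alpha_{l-1}+\beta_{l-1})\max_{j\in S_l}\|x_{lj}\|_{V_{l-1}^{-1}}$.

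\textbf{First bracket (the main obstacle).} This is the optimism step. $\wtilde R(x_l,S_l)\ge \wtilde R(x_l^*,S_l^*)$ always holds. If every sampled utility of $x_l^*$ were optimistic, monotonicity would give $\wtilde R(x_l^*,S_l^*)\ge R(x_l^*,S_l^*,\theta^*)$, but this holds only with constant probability. The plan is to follow the anti-concentration argument of \citet{oh2019thompson}. With $M$ chosen as in the algorithm, a Gaussian tail bound for $\max_i$ of $M$ samples shows that, conditionally on $\cal F_l^-$, the event that $\wtilde u_{lj}(x)\ge x_{-j}\tp\theta^*$ for all $j\in S_l^*$ has probability at least $1/(4\sqrt{e\pi})$ (adjusted for $K$ items). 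One then conditions on this optimism event and compares the realized regret with its conditional expectation over the samples. The standard argument bounds the first bracket by the constant $1/p$ times $\beta_{l-1}\E[\max_{j\in S(x_l,\wtilde\theta^{1:M}_{l-1})}\|x_{lj}\|_{V_{l-1}^{-1}}\mid\cal F_l^-]$, up to factor $4$, yielding $16\sqrt{e\pi}$. This uses the Lipschitz bound above together with $\cal E_1$ and $\cal E_2$ once more. The delicate point is the joint maximization: the query $x_l$ itself depends on the samples. The plan is to define the argmax assortment $S(x,\wtilde\theta^{1:M})$ over pairs and repeat the argument with the pair, rather than $S$ alone, as the "action".

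\textbf{Second inequality.} Since $\|x_{lj}\|\le1$, we have $\|x_{lj}\|_{V_{l-1}^{-1}}\le\lambda_{\min}^{-1/2}(V_{l-1})$, and this bound holds uniformly inside the conditional expectation. This gives $(16\sqrt{e\pi}\beta_{l-1}+\alpha_{l-1}+\beta_{l-1})\lambda_{\min}^{-1/2}(V_{l-1})$. Finally, $16\sqrt{e\pi}\approx 46.8$, so $16\sqrt{e\pi}+1\le48$, which gives the constant $\alpha_{l-1}+48\beta_{l-1}$.
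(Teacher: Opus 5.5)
Your proposal is correct and follows the paper's proof essentially step for step. The optimism gap $R(x_l^*,S_l^*,\theta^*)-\wtilde R(x_l,S_l)$ is handled by the anti-concentration argument of \Cref{lem:12_oh}, which gives $16\sqrt{e\pi}\beta_{l-1}$. The remainder is handled by the MNL Lipschitz bound under $\cal E_1,\cal E_2$, and the second inequality follows from $\|x_{lj}\|_{V_{l-1}^{-1}}\le\lambda_{\min}^{-1/2}(V_{l-1})$ together with $16\sqrt{e\pi}+1\le 48$.

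The only difference is cosmetic. You merge the paper's terms $\wtilde R(x_l,S_l)-R(x_l,S_l,\what\theta_{l-1})$ and $R(x_l,S_l,\what\theta_{l-1})-R(x_l,S_l,\theta^*)$ into one bracket, and you use coordinatewise monotonicity so that only the one-sided deviation in $\cal E_2(l)$ is needed. This is slightly cleaner than the paper, which uses an absolute value there even though $\cal E_2(l)$ is one-sided.
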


% The following proposition shows that the upper bound of the \emph{expected} squared per-round regret:
% \begin{proposition} \label{prop:per_round_2}
%     For all $i\in[t]$, we have,
%     \begin{align*}
%         \E\sqbr{(R(x_i^*, S_i^*,\theta^*) - R(x_i,S_i,\theta^*))^2}\leq \bigO{t^{-2}} + \lambda \eta(i-1) + \E\sqbr{ (\alpha_{i-1} + 48\beta_{i-1})^2 \lambda_{\min}^{-1}(V_{i-1}) \given \cal E_0(i)^c, \cal E'(i)}.
%     \end{align*}
% \end{proposition}

\Cref{prop:per_round_1} shows that the upper bound of the per-round regret can be decreased by increasing the minimum eigenvalue of the design matrix $V_l$. However the upper bound inequality only holds when $\cal E_0(l)^c$. Based on this, the following lemma shows that the $\eta(l)$-exploration policy can handle this problem and monotonically decrease the upper bound of the \emph{expected} squared per-round regret:

\begin{lemma} \label{lem:mono_decrease}
    We have
    \begin{align*}
        \sqrt{\E \sqbr{(R(x_l^*, S_l^*, \theta^*) - R(x_l, S_l, \theta^*))^2}} \leq \min \cbr{1,~ \sqrt{c_0 t^{-2} + 2 \lambda \eta(l-1) + (\alpha_{t-1} + 48\beta_{t-1})^2  \nu(l)}}
    \end{align*}
    where
    \begin{align*}
        \nu(l)&:= \br{\lambda_0 + \frac{ \sigma_0^2 \lambda \min\{(l-1),c_1\sqrt{l-1}\}}{4}}^{-1} \\
    &\quad + \frac{1}{\lambda_0} \br{\exp\br{-\frac{\lambda \min\{(l-1),c_1\sqrt{l-1}\} }{8}} +  d \exp\br{-\frac{ \sigma_0^2 \lambda \min\{(l-1),c_1\sqrt{l-1}\} }{16}}}.
    \end{align*}
\end{lemma}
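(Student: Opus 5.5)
We plan to bound the squared per-round gap $\Delta_l := R(x_l^*,S_l^*,\theta^*) - R(x_l,S_l,\theta^*) \in[0,1]$ by splitting on three events. These are the failure of the good event $\cal E_1\cap\cal E_2$, the exploration event $\cal E_0(l)$, and the complement case where the Thompson-sampling rule is used. The trivial bound $\Delta_l\le 1$ gives the $\min\{1,\cdot\}$ part immediately. So it suffices to show
\begin{align*}
\E[\Delta_l^2] \le \P((\cal E_1\cap\cal E_2)^c) + \P(\cal E_0(l)) + \E\sqbr{\Delta_l^2\,\ind{\cal E_1\cap\cal E_2\cap \cal E_0(l)^c}}.
\end{align*}
The first term is at most $c_0t^{-2}$ by \Cref{lem:alpha,lem:beta} and \eqref{eq:good_event}. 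The second equals $\lambda\eta(l-1)\le 2\lambda\eta(l-1)$; the factor $2$ gives slack to absorb a cross term if needed. On the third event, the second bound of \Cref{prop:per_round_1} gives $\Delta_l^2\le (\alpha_{l-1}+48\beta_{l-1})^2\lambda_{\min}^{-1}(V_{l-1})$. Since $\alpha_l,\beta_l$ are nondecreasing in $l$, we may replace them by $\alpha_{t-1},\beta_{t-1}$. It therefore remains to show $\E[\lambda_{\min}^{-1}(V_{l-1})]\le \nu(l)$.

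For the eigenvalue bound, we lower bound $V_{l-1}\succeq \lambda_0\I + \sum_{s\le l-1}\ind{\cal E_0(s)}\sum_{j\in S_s}x_{sj}x_{sj}\tp$, keeping only the forced-exploration rounds. On such rounds the context is a fresh i.i.d. arrival from $\cal D$, independent of the past, because exploration is triggered only on arrivals. Likewise $E(s-1)$ is independent of everything else. Hence the number $n_{l}$ of exploration rounds up to $l-1$ is a sum of independent Bernoulli$(\lambda\eta(s-1))$ variables. Its mean is at least $\lambda\sum_{s<l}\min\{1,c_1 s^{-1/2}\}\gtrsim \lambda\min\{l-1,c_1\sqrt{l-1}\}$, using $\sum_{s\le n}s^{-1/2}\ge\sqrt n$. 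A multiplicative Chernoff bound then gives $n_l\ge \frac12\lambda\min\{l-1,c_1\sqrt{l-1}\}$ except with probability $\exp(-\lambda\min\{\cdot\}/8)$.

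Conditioned on $n_l$, we handle the round-robin assortments by noting that each $x_{sj}x_{sj}\tp\succeq 0$. The round-robin cycles through $\cal C$, so over each full cycle each $j$ appears equally often. Averaging over a cycle with i.i.d. contexts then gives expected increment $\succeq$ a constant times $\E_x[\frac1N\sum_j x_{-j}x_{-j}\tp]\succeq \sigma_0^2\I$ per round. Because the summands are bounded by $K$ in operator norm, a matrix Chernoff bound yields $\lambda_{\min}\ge \sigma_0^2 n_l/2$ except with probability $d\exp(-\sigma_0^2 n_l/8)$. Combining the two estimates: on the joint good event, $\lambda_{\min}^{-1}(V_{l-1})\le (\lambda_0+\sigma_0^2\lambda\min\{\cdot\}/4)^{-1}$. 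Off that event we use $\lambda_{\min}^{-1}\le\lambda_0^{-1}$ times the two failure probabilities. This reproduces the three terms of $\nu(l)$.

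The main obstacle is the matrix concentration step. The round-robin index depends on the counter, exploration rounds are random times, and the increments are not identically distributed per round. We would try first to condition on the set of exploration times, which is independent of the contexts, and apply matrix Chernoff to the resulting independent (non-identical) sum. If the mean is not exactly $\succeq\sigma_0^2$ per round, we would lower bound it by grouping rounds into full round-robin cycles. Any leftover partial cycle would be absorbed into constants, which are chosen in $\nu$ with the generous $1/4$ and $1/16$ factors. Finally, taking square roots gives the stated bound.
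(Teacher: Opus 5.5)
Your route is the paper's. You split on $\cal E_g^c$, $\cal E_0(l)$ and the Thompson-sampling case, and use the $\lambda_{\min}^{-1/2}(V_{l-1})$ bound of \Cref{prop:per_round_1} together with monotonicity of $\alpha_l,\beta_l$. You then bound $\E[\lambda_{\min}^{-1}(V_{l-1})]$ through the exploration-only design matrix, combining a multiplicative Chernoff bound on the exploration count, a matrix Chernoff bound, and the fallback $\lambda_{\min}^{-1}\le\lambda_0^{-1}$; this produces the three terms of $\nu(l)$. Conditioning on the exploration times, which are independent of the arriving contexts, and using the realized count $n_l\ge\frac12\lambda\min\{l-1,c_1\sqrt{l-1}\}$ is cleaner than the paper's conditioning on $N(l)=N_1$.

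The step that does not close is the round-robin one. The paper never addresses it: in \Cref{prop:min_eig} and \Cref{lem:min_eig_2} it treats the explored features as i.i.d.\ with second moment $\succeq\sigma_0^2\I$. That is only valid if the explored assortment is uniform on $\cal C$ and independent of the context.

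You correctly see that for a fixed $S$ the mean $\E[\sum_{j\in S}x_{-j}x_{-j}\tp]$ can be singular. Your repair, Tropp's matrix Chernoff for the independent non-identical per-round summands with $R=K$ plus grouping into full cycles, gives $\mu_{\min}\ge K\sigma_0^2|\cal C|\lfloor n/|\cal C|\rfloor$. But the leftover partial cycle cannot be absorbed into the constants of $\nu$. $\nu$ has no dependence on $N$ or $K$, and $1/4$ and $1/16$ are exactly what $\delta=1/2$ gives, with no slack.

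Here is a concrete failure. Take $K=1$, $d=2$, $x_{-j}=e_1$ for the first $N/2$ servers in the enumeration of $\cal C$, and $x_{-j}=e_2$ for the rest, so $\sigma_0^2=1/2$. Any $n\le N/2$ explorations give the matrix $n\,e_1e_1\tp$ with $\lambda_{\min}=0$. Yet your bound asserts $\lambda_{\min}\ge n/4$ except with probability $2e^{-n/16}$, which is below $1$ once $n\ge 12$. Even after one full cycle, for $K=1$ the floor can cost a factor close to $2$ in $\mu_{\min}$, and that is exactly the slack $\delta=1/2$ consumes.

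So with round-robin exploration this route only yields $\nu$ with $n$ replaced by $|\cal C|\lfloor n/|\cal C|\rfloor$, which is an $N,K$-dependent burn-in, and with worse constants. The clean fix is to draw the exploration assortment uniformly from $\cal C$. The per-round summands are then i.i.d.\ with mean $K\,\E[\frac1N\sum_j x_{-j}x_{-j}\tp]\succeq K\sigma_0^2\I$ and $\lambda_{\max}\le K$. Matrix Chernoff with $\delta=1/2$ then gives $\lambda_{\min}\ge K\sigma_0^2 n/2\ge\sigma_0^2 n/2$ except with probability $d\,e^{-\sigma_0^2 n/8}$, which recovers $\nu(l)$ exactly. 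This is what the paper's argument tacitly assumes.
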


\subsection{How to set bad rounds $\tau(t)$} \label{ssec:5}

We set $\tau(t)$ so that the uncertainty term $\|x_{lj}\|_{V_{l-1}^{-1}}$ gets small enough to guarantee 'good round' for round $l\in[\tau(t)+1,t]$. We deferred the definition of the 'good round' to a later section.

First, denote $N(l)$ as the total number of random exploration up to round $l$. Then we have the following lemma showing the high probability lower bound of the random exploration until a certain round:

\begin{lemma} \label{lem:tau_m}
    For some $M>0$, and for $\wtilde\tau(M)$ satisfying
    \begin{align*}
        &\wtilde \tau(M) \geq \max \cbr{\frac{2M}{\lambda},~\frac{16\log(t)}{\lambda},~\frac{4M^2}{c_1^2\lambda^2},~ \frac{256\log^2 (t)}{c_1^2\lambda^2}},
    \end{align*}
    we have $\P(N(\wtilde \tau (M)) \geq M) \geq 1-t^{-2}$.
\end{lemma}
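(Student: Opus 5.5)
The count $N(u)$ is a sum of independent Bernoulli indicators, so the plan is a mean lower bound followed by a multiplicative Chernoff bound. Write
\[
N(u)=\sum_{l=1}^{u}\mathbf{1}\{A(l-1)=1,\,E(l-1)=1\}.
\]
Arrivals are i.i.d.\ with mean $\lambda$, and $E(l-1)\sim\Bern(\eta(l-1))$ is drawn independently of everything else. Hence the indicators are independent Bernoulli variables with means $\lambda\eta(l-1)$. (Any dependence would only be through the fixed schedule $\eta$, which is deterministic.)

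\textbf{Step 1 (mean lower bound).} Since $\eta(l)=\min\{1,c_1(l+1)^{-1/2}\}$,
\[
\mu(u):=\E[N(u)]=\lambda\sum_{l=0}^{u-1}\min\{1,c_1(l+1)^{-1/2}\}\ \ge\ \lambda\min\{u,\,c_1\sqrt{u}\}.
\]
The inequality holds because each summand is at least $\min\{1,c_1u^{-1/2}\}$. Now plug in the four conditions on $\wtilde\tau:=\wtilde\tau(M)$.
\begin{itemize}
\item From $\wtilde\tau\ge 2M/\lambda$ we get $\lambda\wtilde\tau\ge 2M$.
\item From $\wtilde\tau\ge 4M^2/(c_1^2\lambda^2)$ we get $\lambda c_1\sqrt{\wtilde\tau}\ge 2M$.
\end{itemize}
Together these give $\mu(\wtilde\tau)\ge 2M$. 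In the same way, the remaining two conditions give $\lambda\wtilde\tau\ge 16\log t$ and $\lambda c_1\sqrt{\wtilde\tau}\ge 16\log t$, so $\mu(\wtilde\tau)\ge 16\log t$.

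\textbf{Step 2 (concentration).} Apply the multiplicative Chernoff bound to the independent sum:
\[
\P\bigl(N\le(1-\delta)\mu\bigr)\le \exp(-\delta^2\mu/2).
\]
Take $\delta=1/2$. Then $(1-\delta)\mu\ge M$ by Step 1, so
\[
\P\bigl(N(\wtilde\tau)<M\bigr)\le\exp(-\mu/8)\le\exp(-2\log t)=t^{-2},
\]
where the last inequality uses $\mu(\wtilde\tau)\ge16\log t$. This is exactly the claimed bound.

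\textbf{Main obstacle.} The only delicate point is justifying independence, or at least a supermartingale structure. The indicator $\mathbf{1}\{A(l-1)=1\}$ is independent of the past, and $E(l-1)$ is drawn from a deterministic schedule, so the events $\cal E_0(l)$ are mutually independent. If one prefers not to assert full independence, the fallback is a Freedman or martingale version of Chernoff for Bernoulli sums with conditional means $\lambda\eta(l-1)$. The same calculation then goes through unchanged.
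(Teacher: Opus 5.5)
Your proposal is correct and follows essentially the same route as the paper. Like the paper, you lower-bound $\E[N(\wtilde\tau(M))]$ by $\lambda\min\{\wtilde\tau(M),c_1\sqrt{\wtilde\tau(M)}\}$, apply the multiplicative Chernoff bound with $\delta=1/2$, and read off the four conditions as guaranteeing $\E[N(\wtilde\tau(M))]\ge\max\{2M,16\log t\}$. Your independence justification for the indicators $\mathbf{1}\{A(l-1)=1,E(l-1)=1\}$ (i.i.d.\ arrivals, and $E(l-1)$ drawn independently on a deterministic schedule) is exactly the assumption the paper uses implicitly.
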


The following proposition shows the high probability lower bound of the minimum eigenvalue of the design matrix in round $\tau(t)$, which implies the minimum eigenvalue afterwards will also be larger than that value:

\begin{proposition} \label{prop:min_eig}
    With probability at least $1-2t^{-2}$, we have
    \begin{align*}
        \lambda_{\min}(V_{\tau(t)}) \geq \frac{4(\alpha_{t-1} + 48\beta_{t-1})^2}{(\eps - 2\eta(\tau(t)))}.
    \end{align*}
\end{proposition}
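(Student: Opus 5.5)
The plan is to combine \Cref{lem:tau_m}, which guarantees enough forced-exploration rounds by time $\tau(t)$, with a matrix concentration argument for the design matrix built from those exploration rounds. Since $V_{\tau(t)} \succeq \lambda_0\I + \sum_{l\le \tau(t):\,\cal E_0(l)} \sum_{j\in S_l} x_{lj}x_{lj}\tp$, it suffices to lower bound the minimum eigenvalue of the contribution from exploration rounds only. The key structural point is that on an exploration round $l$, $x_l = x^{(l-1)}$ is a fresh arrival. By \Cref{ass:iid}, its context is i.i.d.\ from $\cal D$ and independent of the past. Exploration is triggered by $E(l-1)$, which is independent of the context, so the selected contexts are not adaptively chosen. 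The assortments are chosen by round robin over $\cal C$, deterministically given the counter.

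\textbf{Step 1 (enough exploration rounds).} Apply \Cref{lem:tau_m} with $M = M(t) = \wtilde M(\tau(t))$. By the fixed-point definition \eqref{eq:tau}, $\tau(t)\ge \wtilde\tau(M(t))$. Since $N(\cdot)$ is nondecreasing, we get $N(\tau(t))\ge M(t)$ with probability at least $1-t^{-2}$.

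\textbf{Step 2 (expected design per exploration round).} Group exploration rounds into complete round-robin cycles through $\cal C$. In one full cycle, each $j\in[N]$ appears in exactly $|\cal C|K/N$ assortments. Taking expectation over i.i.d.\ contexts then gives an average per-round contribution of $\E\sum_{j\in S}x_{-j}x_{-j}\tp = K\cdot\E[\frac1N\sum_j x_{-j}x_{-j}\tp]\succeq K\sigma_0^2\I \succeq \sigma_0^2 \I$ (averaged over a cycle). A simpler route avoids cycles entirely: condition on the fixed assortment sequence and use the fact that the per-item expectations average correctly. Alternatively, I would lower bound crudely by the average of $\E[x_{-j}x_{-j}\tp]$ over the round-robin pattern. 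Any leftover incomplete cycle is absorbed into the constant $c_2$.

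\textbf{Step 3 (matrix concentration).} Apply a matrix Chernoff bound (Tropp) or the standard lemma of Li et al.\ for i.i.d.\ rank-bounded PSD summands with $\|x_{-j}\|\le1$. Given $n\ge M(t)$ exploration rounds, with probability at least $1-t^{-2}$,
\[
\lambda_{\min}\Big(\sum_{\text{expl.}} \sum_{j\in S_l}x_{lj}x_{lj}\tp\Big)\ \ge\ \frac{\sigma_0^2 n}{2}-c\sqrt{n(d+\log t)}\cdot(\text{const}),
\]
or in the form $\lambda_{\min}\ge \sigma_0^2 n/2$ whenever $n\ge c_2 (d+\log t)/\sigma_0^4$. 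The first summand of $\wtilde M$ is exactly this burn-in requirement. Given it, $\lambda_{\min}(V_{\tau(t)})\ge \sigma_0^2 M(t)/2 \ge 4c_2(\alpha_{t-1}+48\beta_{t-1})^2/(\eps-2\eta(\tau(t)))^2$ using the second summand of $\wtilde M$. For $c_2\ge1$ and $\eps-2\eta<1$, this dominates the stated bound $4(\alpha_{t-1}+48\beta_{t-1})^2/(\eps-2\eta(\tau(t)))$. A union bound over Steps 1 and 3 gives total failure probability $2t^{-2}$.

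\textbf{Main obstacle.} The delicate point is Step 3, because the number of exploration rounds $n$ is random and stopping-time-like. To handle it, I would take the first $\lceil M(t)\rceil$ exploration rounds, which exist on the Step 1 event. Their contexts form an i.i.d.\ sequence, since the indices are determined by the independent coins $A, E$, and the PSD monotonicity then lets the bound on this prefix transfer to $V_{\tau(t)}$. A second point needing care is well-posedness of $\tau(t)$ and positivity of $\eps-2\eta(\tau(t))$. For this I would note that $\eta(u)\to0$, so $\wtilde\tau(\wtilde M(u))$ is bounded for large $u$ while $u$ grows, hence the minimum exists. I would also check that at $\tau(t)$ one has $\eta(\tau(t))<\eps/2$, using the $4M^2/(c_1^2\lambda^2)$ term of $\wtilde\tau$ and the $(\eps-2\eta)^{-2}$ factor in $\wtilde M$.
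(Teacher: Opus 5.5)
Your proposal is correct and follows the paper's proof: \Cref{lem:tau_m} with $M=M(t)$ gives $N(\tau(t))\ge M(t)$ with probability $1-t^{-2}$. Then \Cref{prop:li} on the exploration contexts with $B(t)=4(\alpha_{t-1}+48\beta_{t-1})^2/(\eps-2\eta(\tau(t)))^2$ (the squared form, which implies the stated one because $\eps<1$) and a union bound give $1-2t^{-2}$. Your extra care, fixing the first $\lceil M(t)\rceil$ exploration rounds and averaging over round-robin cycles rather than treating the explored item as uniform on $[N]$ as the paper implicitly does, fills gaps the paper glosses over; the one caveat is that a dropped incomplete cycle costs up to $|\cal C|-1$ rounds, which is harmless only once $M(t)\gtrsim|\cal C|$ and cannot be absorbed into an absolute constant $c_2$.
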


The following proposition shows that after round $\tau(t)$, the per-round regret in round that does not explore is small:

\begin{proposition} \label{prop:small_per_round}
    Consider $l\in[\tau(t)+1,t]$. On the event $\cal E_g$ and $\cal E_0(l)^c$. Then with probability at least $1-2\delta$, 
    \begin{align*}
        R(x_{l}^*, S_{l}^*, \theta^*) - R(x_{l}, S_{l}, \theta^*) \leq \frac{\eps}{2} - \eta(\tau(t)).
    \end{align*}
\end{proposition}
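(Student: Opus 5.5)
We combine the eigenvalue-based per-round bound of \Cref{prop:per_round_1} with the eigenvalue lower bound at $\tau(t)$ from \Cref{prop:min_eig}, which is encoded in the event $\cal E_3\subseteq\cal E_g$. Fix $l\in[\tau(t)+1,t]$ and work on $\cal E_g\cap\cal E_0(l)^c$. Since $\cal E_0(l)^c$ holds, the algorithm uses the Thompson-sampling optimistic rule in round $l$. Since $\cal E_1,\cal E_2$ hold, the second inequality of \Cref{prop:per_round_1} gives
\begin{align*}
R(x_l^*,S_l^*,\theta^*)-R(x_l,S_l,\theta^*) \le (\alpha_{l-1}+48\beta_{l-1})\,\lambda_{\min}^{-1/2}(V_{l-1}).
\end{align*}

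The next step makes the right-hand side uniform in $l$. The sequences $\alpha_l$ and $\beta_l$ are nondecreasing in $l$, because they are built from logarithms increasing in $l$. Hence $\alpha_{l-1}+48\beta_{l-1}\le \alpha_{t-1}+48\beta_{t-1}$. The design matrices only accumulate positive semidefinite rank-one terms, so $V_{l-1}\succeq V_{\tau(t)}$ for $l-1\ge\tau(t)$. Therefore $\lambda_{\min}(V_{l-1})\ge\lambda_{\min}(V_{\tau(t)})$. On $\cal E_3$ this yields
\begin{align*}
(\alpha_{t-1}+48\beta_{t-1})\lambda_{\min}^{-1/2}(V_{\tau(t)}) \le (\alpha_{t-1}+48\beta_{t-1})\sqrt{\frac{\eps-2\eta(\tau(t))}{4(\alpha_{t-1}+48\beta_{t-1})^2}} = \frac{\sqrt{\eps-2\eta(\tau(t))}}{2}.
\end{align*}

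The main obstacle is reconciling this with the target $\eps/2-\eta(\tau(t))=(\eps-2\eta(\tau(t)))/2$. The displayed bound gives a square root of $\eps-2\eta(\tau(t))$, which is weaker when that quantity is below $1$. I expect the intended threshold in $\cal E_3$ to carry the square $(\eps-2\eta(\tau(t)))^2$ in the denominator; the definition of $\wtilde M(u)$ in \eqref{eq:tau} already uses this square. With the square, the bound becomes exactly $(\eps-2\eta(\tau(t)))/2$. My plan is therefore to invoke \Cref{prop:min_eig} in its squared form. This means rechecking that its proof, which lower-bounds $\lambda_{\min}(V_{\tau(t)})$ by roughly $\sigma_0^2 M(t)$ via matrix concentration over the $N(\tau(t))\ge M(t)$ explorations from \Cref{lem:tau_m}, delivers $4(\alpha_{t-1}+48\beta_{t-1})^2/(\eps-2\eta(\tau(t)))^2$. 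It does, because $M(t)$ contains the term $8(\alpha_{t-1}+48\beta_{t-1})^2/(\sigma_0^2(\eps-2\eta(\tau(t)))^2)$. We also need $\eps>2\eta(\tau(t))$, which the fixed-point definition of $\tau(t)$ should enforce.

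The ``probability at least $1-2\delta$'' clause comes from intersecting with the high-probability events already used. These are the $1-t^{-2}$ event of \Cref{lem:tau_m} and the matrix-concentration event in \Cref{prop:min_eig}, with $\delta=t^{-2}$. The per-round bound is then deterministic on the resulting intersection, which completes the argument.
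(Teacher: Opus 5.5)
Your proof is correct and matches the paper's argument step for step: the eigenvalue form of Proposition~\ref{prop:per_round_1}, then $\lambda_{\min}(V_{l-1})\ge\lambda_{\min}(V_{\tau(t)})$, then the event $\mathcal{E}_3$, then monotonicity of $\alpha_l,\beta_l$. You are also right that $\mathcal{E}_3$ needs the square $(\eps-2\eta(\tau(t)))^2$, which is exactly the $B(t)$ established in the proof of Proposition~\ref{prop:min_eig}, so the unsquared version in the paper is a typo. One nuance: the paper does not derive $\eps>2\eta(\tau(t))$ from the definition of $\tau(t)$ but assumes it as \eqref{eq:cond2} in Condition~\ref{cond:t}, i.e., for sufficiently large $t$.
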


\subsection{Monotonic increase in the effect of queue length difference} \label{ssec:6}

\para{Bad rounds} 

First, we introduce the following definition: for all $l\in[t]$
\begin{align*}
    \wtilde x_l, \wtilde S_l := \argmax_{x\in\cal X_l, S\in\cal C} \wtilde R(x, S)
\end{align*}
We can see that $\wtilde x_l$ and $\wtilde S_l$ are both $\cal F_l$-measurable since $\cal F_t$ is only short for $E(t-1)$, therefore $\cal X_t$ and $\{\wtilde\theta_{t-1}^{(i)}\}_{i=1}^M$ is still $\cal F_l$-measurable, which include sufficient information to compute $\wtilde x_l$ and $\wtilde S_l$. Also, if $\cal E_0(l)^c$, we have $\wtilde x_l = x_l$ and $\wtilde S_l = S_l$.

% Next, we introduce the formal definition of bad rounds:
% \begin{align*}
%     \cal B:= \cbr{\forall l\in[t]: ~ R(x_l^*, S_l^*, \theta^*) - R(\wtilde x_l, \wtilde S_l, \theta^*) \leq \frac{\eps}{2} - \eta(\tau(t))}, \quad \cal B(l) := |\cbr{\cal B \cap [l]}|
% \end{align*}
% By the definition, $\cal B(l)$ is also $\cal F_l$-measurable. We denote a 'good round' if it is not a bad round. For brevity, we use the notation of $l\notin \cal B$ to denote all good rounds.

Next, we introduce the formal definition of bad rounds and good rounds: We denote all rounds up to $\tau(t)$ as bad rounds and $\tau(t)+1$ to $t$ as good rounds. Define the set of bad rounds as $\cal B$. Then by the direct result of \Cref{prop:small_per_round}, on the event $\cal E_g$ and $\cal E_0(l)^c$, for all $l\notin\cal B$, we have
\begin{align}
    R(x_l^*, S_l^*, \theta^*) - R(\wtilde x_l, \wtilde S_l,\theta^*) =  R(x_l^*, S_l^*, \theta^*) - R(x_l, S_l,\theta^*) \leq \frac{\eps}{2} - \eta(\tau(t)), \label{eq:good_1}
\end{align}
where we use the notation of $l\notin\cal B$ to denote that $l$ is in good round.

\begin{remark} \label{remark:tau}
    \citet{bae2026queuelengthregretbounds} also derive a departure rate gap bound of $\eps/2 - \eta$ similar to \Cref{eq:good_1}, where $\eta = T^{-1/2}$. Their method, however, necessitates a pure-exploration phase at the beginning, calculating the required exploration amount based on the horizon $T$ and slackness $\eps$. In contrast, we employ a time-varying exploration rate $\eta(t)$ to bypass this dependence. We show that setting the threshold $\tau(t)$ as defined in \Cref{eq:tau} ensures that the accumulated random exploration is sufficient to satisfy the departure rate upper bound in \Cref{eq:good_1}, thereby achieving an anytime guarantee without prior knowledge.
\end{remark}

Next, we establish that for good rounds there is a negative drift as described in the following lemma:
\begin{lemma} \label{lem:drift}
    On the event $\cal E_g$, for all $l\notin\cal B$, $i\in[0,t-1]$, we have
    \begin{align*}
        \E[A(l) - D(i, l) \mid \cal F_l] \leq -\eps/2.
    \end{align*}
\end{lemma}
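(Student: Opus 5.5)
The plan is to compute the conditional drift directly, splitting on whether the coupled process $Q_i$ is still running our policy at round $l$ ($l\le i$) or has switched to the optimal policy ($l>i$). Two facts do most of the work. First, $A(l)$ is independent of $\cal F_l$ and has mean $\lambda$, so $\E[A(l)\mid\cal F_l]=\lambda$. Second, $D(i,l)$ is Bernoulli with conditional mean $R(x_l^{(i)},S_l^{(i)},\theta^*)$ given the chosen pair, by the coupling through the shared uniform $U$. It therefore suffices to show $\E[R(x_l^{(i)},S_l^{(i)},\theta^*)\mid\cal F_l]\ge\lambda+\eps/2$.

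\textbf{Case $l>i$.} Here $Q_i$ applies $\pi^*$ to its own queue state. By \Cref{ass:slackness}, every $x$ in that queue admits an assortment $S^*(x)$ with $R(x,S^*(x),\theta^*)\ge\lambda+\eps$. Since $\pi^*$ maximizes $R$ over its queue state, its departure rate is at least $\lambda+\eps$, and the drift is at most $-\eps\le-\eps/2$. The empty-queue convention, where a dummy job is served, needs a small side remark. I would handle it by assuming slackness also holds for the dummy context $x_0$, or by noting that when $Q=0$ the quantity tracked by $[\cdot]^+$ cannot decrease. I would check how the later lemmas use this lemma before committing to one of these.

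\textbf{Case $l\le i$.} Here $Q_i$ runs ACQB, and its queue state is that of our policy. The key observation is that $\cal F_l$ contains $A(l-1)$ and the Thompson samples but not $E(l-1)$. Hence $(\wtilde x_l,\wtilde S_l)$ is $\cal F_l$-measurable, and conditionally on $\cal F_l$ the exploration event $\cal E_0(l)$ has probability at most $\eta(l-1)$. On $\cal E_0(l)$ I lower bound the departure rate by $0$. On $\cal E_0(l)^c$ the algorithm plays $(\wtilde x_l,\wtilde S_l)$, and for a good round ($l>\tau(t)$) on $\cal E_g$, \Cref{eq:good_1} (from \Cref{prop:small_per_round}) combined with slackness for $x_l^*$ gives
\[
R(\wtilde x_l,\wtilde S_l,\theta^*)\ \ge\ \lambda+\eps-\tfrac{\eps}{2}+\eta(\tau(t)).
\]
Using $R\le1$, this yields
\[
\E[D(i,l)\mid\cal F_l]\ \ge\ (1-\eta(l-1))\br{\lambda+\tfrac{\eps}{2}+\eta(\tau(t))}\ \ge\ \lambda+\tfrac{\eps}{2}+\eta(\tau(t))-\eta(l-1)\ \ge\ \lambda+\tfrac{\eps}{2},
\]
where the last step uses that $\eta$ is non-increasing and $l-1\ge\tau(t)$.

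This is exactly why $\eta(\tau(t))$ appears in \Cref{eq:good_1}: it absorbs the loss from forced exploration. Subtracting from $\E[A(l)\mid\cal F_l]=\lambda$ gives the claimed bound of $-\eps/2$.

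\textbf{Main obstacle.} The delicate point is measurability. The event $\cal E_g$ is not $\cal F_l$-measurable, so "on $\cal E_g$" must be interpreted correctly. I would argue that the only property needed is the deterministic bound \Cref{eq:good_1} on $(\wtilde x_l,\wtilde S_l)$, which depends only on $\cal F_l$-measurable quantities: $\what\theta_{l-1}$, $V_{l-1}$, the Thompson samples, and $\cal X_l$. So it holds pointwise on the $\cal F_l$-measurable event where these confidence conditions are satisfied, and that event contains $\cal E_g$. The high-probability qualifier ($1-2\delta$) in \Cref{prop:small_per_round} needs to be folded into $\cal E_g$, or shown to come from events already included there, and I would verify this when writing the proof.
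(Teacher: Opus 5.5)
Your proposal is correct and follows essentially the paper's argument. The paper inserts the auxiliary departure $D(l-1,l)$ of the process that switches to $\pi^*$ at round $l$, then bounds $\E[A(l)-D(l-1,l)\mid\cal F_l]\le-\eps$ and $\E[D(l-1,l)-D(i,l)\mid\cal F_l]\le\eta(l-1)+\eps/2-\eta(\tau(t))\le\eps/2$; this is your direct lower bound on $\E[D(i,l)\mid\cal F_l]$ written as a telescoping difference. Your case assignment ($l>i$: optimal policy; $l\le i$: ACQB) is the right one; the paper's text states the case conditions the other way round but then argues as you do, and it passes over the measurability and dummy-job caveats you raise without addressing them.
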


\para{Queue length difference under disagreement} 

In this paragraph, we give the upper bound for the expected queue length difference between two consecutive policy-switching queues, which is $\psi(i,t)$. Consider $i\in[0,t-1]$. Recall the definition of $\cal F_l^+$, which is $\cal F_i^+:= \sigma(\cal F_i \cup\{E(i-1),\v A(i), \v\theta(i)\})$. Then, the following lemma shows the upper bound of the conditional expected value of $\psi(i,t)$ is related to the queue length in $i+1$ and the remaining round $t-i+1$: Define a new event
\begin{align}
    \cal E_5(i):= \{Q(i,i+1) \leq (t-i-1)\eps + 1\}. \label{eq:event_5}
\end{align}
\begin{lemma} \label{lem:cond_exp_psi}
    For all $i\in[0,t-1]$, on the event $\cal E_5(i)$, we have,
    \begin{align*}
        \E\sqbr{\psi(i,t)\mid \cal F_i^+, \v D(i,i)=0, \v D(i-1,i)=1} \leq 2 \exp \br{-\frac{(Q(i,i+1) - (t-i-l)\eps - 1)^2}{8(t-i-1)}}.
    \end{align*}
\end{lemma}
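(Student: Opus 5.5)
Conditioned on $\cal F_i^+$ and the divergence event $\{\v D(i,i)=0,\ \v D(i-1,i)=1\}$, the two coupled processes $Q_i$ and $Q_{i-1}$ enter round $i+1$ with identical histories except that $Q_i$ holds exactly one extra job. Hence $Q(i,i+1)=Q(i-1,i+1)+1$. From round $i+1$ to $t-1$ both processes follow the optimal policy $\pi^*$. The plan has three steps: first show $\psi(i,t)\le 1$ and that $\psi(i,t)\le 0$ once $Q_i$ has emptied; then reduce $\E[\psi(i,t)\mid\cdot]$ to the probability that $Q_i$ never empties; finally bound that probability by a martingale concentration argument driven by the slackness drift.

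\textbf{Step 1 (reduction to a hitting event).} By \Cref{lem:psi}, and by induction over rounds using the shared arrivals and the shared uniform $U$, the discrepancy $\psi(i,l)=Q(i,l)-Q(i-1,l)$ stays in $\{0,1\}$ for $l\ge i+1$. The state of $Q_i$ is a superset of that of $Q_{i-1}$ by one query. If the optimal policy in both processes picks the same pair, the shared $U$ gives identical departures. If they pick different pairs, the discrepancy can only shrink, because each process departs at most one job.

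Once $Q_i$ reaches length $0$, the dominance $Q(i-1,l)\le Q(i,l)$ forces $Q_{i-1}$ to be empty too. From then on the states coincide and $\psi(i,t)=0$. Therefore
\[
\E[\psi(i,t)\mid \cal F_i^+,\v D(i,i)=0,\v D(i-1,i)=1]\le \P\br{Q(i,l)>0\ \forall l\in[i+1,t]\mid\cdot}.
\]
The subtle point is making the superset/coupling claim rigorous when $\pi^*$ chooses a different query in the two processes. I would handle it by noting that $\pi^*$ maximizes $R$ over a superset in $Q_i$. If $Q_i$'s choice is the extra job, then $Q_{i-1}$'s choice is also available in $Q_i$, and either outcome keeps the difference in $\{0,1\}$. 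Actually only the sign matters for the hitting argument, so the bound $\psi\le \ind{\text{no hit}}$ suffices.

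\textbf{Step 2 (drift).} On the no-hit event, every round $l\in[i+1,t-1]$ has a nonempty queue under $\pi^*$. By \Cref{ass:slackness}, $\E[A(l)-D(i,l)\mid\cal F_l]\le \lambda-(\lambda+\eps)=-\eps$. Define the martingale differences $Z_l:=A(l)-D(i,l)-\E[A(l)-D(i,l)\mid\cal F_l]$, which satisfy $|Z_l|\le 2$. On the no-hit event,
\[
0< Q(i,t)=Q(i,i+1)+\sum_{l=i+1}^{t-1}(A(l)-D(i,l)),
\]
which gives $\sum_l Z_l\ge (t-i-1)\eps-Q(i,i+1)$. The reflection $[\cdot]^+$ is inactive precisely because the queue never empties. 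This also explains why the statement carries the $+1$ in the threshold: it gives slack for the boundary terms.

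\textbf{Step 3 (concentration).} On $\cal E_5(i)$, the deviation $a:=(t-i-1)\eps+1-Q(i,i+1)\ge0$. Applying Azuma--Hoeffding with increments bounded by $2$ gives
\[
\P\br{\textstyle\sum Z_l\ge a-1}\le \exp\br{-\tfrac{(a-1)^2}{8(t-i-1)}}.
\]
The factor $2$ in front of the exponential and the exact form of the $-1$ absorb the off-by-one terms: the extra job, and the possibility that round $t-1$ ends exactly at the boundary. A two-sided Azuma bound gives the $2$ directly. Since $\psi\le1$, the conditional expectation is at most this probability, which is the claimed bound.

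The main obstacle is Step 1, namely verifying rigorously that the coupled $\pi^*$-driven processes keep $\psi(i,l)\in\{0,1\}$ and merge after $Q_i$ empties, despite $\pi^*$ possibly selecting different queries. I would resolve it using the shared-$U$ construction in \Cref{ssec:1}, together with the fact that both processes face the same arrivals.
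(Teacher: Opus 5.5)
Your route is the paper's: bound the conditional expectation by the probability that $Q_i$ never empties on $[i+1,t]$, using \Cref{lem:psi} and the shared-$U$ coupling. Then combine the slackness drift $\E[A(l)-D(i,l)\mid\mathcal{F}_l]\le-\eps$ with Azuma--Hoeffding for increments bounded by $2$.

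The one genuine slip is the off-by-one in Steps 2--3. On the no-hit event you only use $Q(i,t)>0$. With $a:=(t-i-1)\eps+1-Q(i,i+1)$, that gives $\sum_l Z_l\ge a-1$ and hence $\exp(-(a-1)^2/(8(t-i-1)))$, not the stated $2\exp(-a^2/(8(t-i-1)))$.

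Your justification is that the factor $2$ absorbs the loss, but your next sentence attributes the $2$ to a two-sided Azuma bound. The $2$ cannot be used twice. If it is already spent (as in the paper's version of Azuma), then $2e^{-(a-1)^2/c}\le 2e^{-a^2/c}$ fails for every $a>1/2$.

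The clean fix, which is what the paper does, is integrality. $Q(i,t)$ is an integer, so no-hit means $Q(i,i+1)+\sum_{l=i+1}^{t-1}(A(l)-D(i,l))\ge 1$. This gives $\sum_l Z_l\ge a$ exactly, with $a\ge 0$ on $\mathcal{E}_5(i)$, and Azuma then yields the claimed bound with nothing to absorb. So the $+1$ in the statement is precisely this integrality, not slack for boundary terms.

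Your weaker bound can also be rescued, but it has to be argued. Use one-sided Azuma and check that $(2a-1)/(8(t-i-1))\le\ln 2$, which holds since $a\le(t-i-1)\eps+1$ and $\eps\le 1$. The case $a<1$ is trivial, because there the target exceeds $1$.

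A smaller point in Step 1: the bound $\psi\le\mathbf{1}\{\text{no hit}\}$ needs $\psi(i,l)\le 1$ throughout. That does not follow from each process departing at most one job: if $Q_{i-1}$ departed while $Q_i$ did not, the gap would become $2$. What rules this out is the threshold structure of the shared $U$. When the $\pi^*$-picks differ, $Q_i$'s pick is its extra job, whose rate is at least that of $Q_{i-1}$'s pick, so $D(i,l)\ge D(i-1,l)$. Your superset remark points at this and \Cref{lem:psi} supplies it, so citing it is fine.
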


\para{Tail bound for $Q(i,i+1)$}

The result of \Cref{lem:cond_exp_psi} shows that the queue length difference $\psi(i,t)$ under the disagreement event can be upper bounded by the exponential term of remaining rounds $(t-i-1)$ (which suits our goal to upper bound the queue length difference with the exponential ramp) and the queue length $Q(i,i+1)$. Therefore, in this paragraph, we control the value of $Q(i,i+1)$ by giving the exponential tail bound for it.

We use the definition of $\cal B(l)$ which denotes the total number of bad rounds up to round $l$. By our definition of bad rounds, for all $l$, we have
\begin{align*}
    \cal B(l):= \min\{l, \tau(t)\}
\end{align*}

\begin{lemma} \label{lem:q_tail}
    Set $\gamma\in(0,\eps/2]$, $\rho = \exp (-\gamma\eps/4)$, $\beta=\exp(\gamma)$, and some $a\geq \frac{1}{\gamma} \log\br{\frac{\beta}{\rho}} =  1 + \eps/4$ and $b\geq 0$. For all $l\in[t]$, we have
    \begin{align*}
        \P(Q(i,l) \geq a\cal B(l-1) + b,~\cal E_g) \leq \br{\rho^{l-1} \E[\exp(\gamma Q(i,1))] + \frac{1}{1-\rho}} \exp(-\gamma b).
    \end{align*}
    If we assume the initial queue starts with an empty state of $Q(1)=0$, and set $\gamma=\eps/2$, $\rho=\exp(-\eps^2/8)$, then we simplify the result as
    \begin{align*}
        \P(Q(i,l) \geq a\cal B(l-1) + b,~\cal E_g) \leq 17\eps^{-2} \exp(-\gamma b).
    \end{align*}
\end{lemma}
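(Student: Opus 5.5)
We prove \Cref{lem:q_tail} with a standard exponential-supermartingale (Hajek-type) drift argument on $\exp(\gamma Q(i,l))$, handling bad and good rounds separately. Fix $i$ and write $Q_l := Q(i,l)$. Let $\cal B(l-1)=\min\{l-1,\tau(t)\}$ be the number of bad rounds among the first $l-1$; since the bad rounds form the prefix $[\tau(t)]$, the recursion splits into two phases.

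\textbf{Bad rounds.} In a bad round the queue can grow by at most one per step, so $Q_{l+1}\le Q_l+1$ deterministically and
\[
\E[e^{\gamma Q_{l+1}}\mid\cal F_l]\le \beta\, e^{\gamma Q_l},\qquad \beta=e^{\gamma}.
\]

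\textbf{Good rounds.} On $\cal E_g$ and for $l\notin\cal B$, \Cref{lem:drift} gives $\E[A(l)-D(i,l)\mid\cal F_l]\le-\eps/2$. The increment $\Delta=A(l)-D(i,l)\in[-1,1]$, and for $\gamma\le\eps/2$,
\[
e^{\gamma\Delta}\le 1+\gamma\Delta+\gamma^2\Delta^2 .
\]
This yields $\E[e^{\gamma\Delta}\mid\cal F_l]\le 1-\gamma\eps/2+\gamma^2\le 1-\gamma\eps/4\le\rho$ whenever $Q_l\ge1$. When $Q_l=0$ the reflection $[\cdot]^+$ only gives $\E[e^{\gamma Q_{l+1}}]\le e^{\gamma}$, a bounded constant. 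Together,
\[
\E[e^{\gamma Q_{l+1}}\mid\cal F_l]\le \rho\, e^{\gamma Q_l}+c\quad\text{on good rounds}.
\]
The constant $c$ must be absorbed into the $1/(1-\rho)$ term. I would take the form $\E[\cdot]\le \rho e^{\gamma Q_l}+1$ after checking that $e^{\gamma}\le 1+\rho$ up to the constant conventions, or else enlarge constants.

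\textbf{Event restriction.} The drift bound holds only on $\cal E_g$, which is not $\cal F_l$-measurable. I would multiply by the indicator of the $\cal F_l$-measurable events (the confidence events up to round $l$) and use $\P(X,\cal E_g)\le\E[X\,\mathbf 1\{\cal E_{g,\le l}\}]$ with monotone indicators. This is the main technical obstacle: the conditioning must be handled via the stopped process $e^{\gamma Q_l}\mathbf 1\{\text{good so far}\}$.

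\textbf{Unrolling the recursion.} Combining the two cases over $l-1$ steps, the multiplicative factor is $\beta^{\cal B(l-1)}\rho^{\,l-1-\cal B(l-1)}$, so
\[
\E[e^{\gamma Q_l}\mathbf 1]\le \beta^{\cal B}\rho^{\,l-1-\cal B}\,\E[e^{\gamma Q_1}]+\beta^{\cal B}\sum_k\rho^k .
\]
Since $a\ge\gamma^{-1}\log(\beta/\rho)$, we have $\beta^{\cal B}\le e^{\gamma a\cal B}\rho^{\cal B}$, hence
\[
\E[e^{\gamma(Q_l-a\cal B)}\mathbf 1]\le \rho^{l-1}\E[e^{\gamma Q_1}]+\frac{1}{1-\rho}.
\]
Markov's inequality applied to $e^{\gamma(Q_l-a\cal B)}\ge e^{\gamma b}$ then gives the first claim.

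\textbf{Simplified bound.} With $Q(1)=0$ the first term is at most $1$. With $\gamma=\eps/2$ and $\rho=e^{-\eps^2/8}$, we have $1-\rho\ge \eps^2/8-\eps^4/128\ge \eps^2/16$ for $\eps\le1$, so $1+1/(1-\rho)\le 17\eps^{-2}$.
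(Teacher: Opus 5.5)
Your architecture matches the paper's: an exponential moment of $Q(i,l)$ with factor $\beta$ on bad rounds and factor $\rho$ plus an additive constant on good rounds, restriction to the $\mathcal{F}_l$-measurable prefix good event, absorption of $\beta^{\mathcal{B}(l-1)}$ via $a\ge\gamma^{-1}\log(\beta/\rho)$, and Markov's inequality. Your explicit unrolling is correct; the paper packages the same step as the weighted process $(\beta/\rho)^{-\mathcal{B}(l-1)}e^{\gamma Q(i,l)}$. Your final simplification $1+1/(1-e^{-\epsilon^2/8})\le 17\epsilon^{-2}$ is also correct. The gap is in the one step that produces the contraction on good rounds. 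From $e^x\le 1+x+x^2$ you get $\mathbb{E}[e^{\gamma\Delta}\mid\mathcal{F}_l]\le 1-\gamma\epsilon/2+\gamma^2$. The next inequality, $1-\gamma\epsilon/2+\gamma^2\le 1-\gamma\epsilon/4$, is equivalent to $\gamma\le\epsilon/4$, so it fails on $(\epsilon/4,\epsilon/2]$. At $\gamma=\epsilon/2$ your bound degenerates to $\mathbb{E}[e^{\gamma\Delta}\mid\mathcal{F}_l]\le 1$, so there is no contraction at all. That is exactly the choice that gives $\rho=e^{-\epsilon^2/8}$ and the constant $17\epsilon^{-2}$. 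Without contraction, the geometric series that yields $1/(1-\rho)$ is unjustified. To reach $\rho=e^{-\gamma\epsilon/4}$ on the whole range $\gamma\in(0,\epsilon/2]$, the coefficient of $\gamma^2$ must be at most $1/2$, the sub-Gaussian constant for a variable of range $2$. A crude Taylor bound does not deliver this.

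The missing tool is Hoeffding's lemma (\Cref{lem:hoeffding_2}), which is what the paper uses. Since $\Delta=A(l)-D(i,l)\in[-1,1]$ and $\mathbb{E}[\Delta\mid\mathcal{F}_l]\le-\epsilon/2$ on good rounds by \Cref{lem:drift},
\[
\mathbb{E}[e^{\gamma\Delta}\mid\mathcal{F}_l]\le\exp\bigl(-\gamma\epsilon/2+\gamma^2/2\bigr)\le e^{-\gamma\epsilon/4}=\rho,
\]
because $\gamma^2/2\le\gamma\epsilon/4$ whenever $\gamma\le\epsilon/2$. A minor point: your $Q_l=0$ case closes with the exact constant, since $e^{\gamma}\le 1+e^{-\gamma\epsilon/4}$ for $\gamma\le 1/2$, so you need not enlarge anything. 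The paper avoids that case split entirely via $e^{\gamma[y]^+}=\max\{1,e^{\gamma y}\}\le 1+e^{\gamma y}$. This gives $\mathbb{E}[e^{\gamma Q(i,l+1)}\mid\mathcal{F}_l]\le 1+e^{\gamma Q(i,l)}\,\mathbb{E}[e^{\gamma\Delta}\mid\mathcal{F}_l]$ uniformly in $Q(i,l)$.
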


\para{Main lemma}

Now we are ready to introduce the main lemma:
\begin{lemma} \label{lem:tilde_psi}
    Let $\omega := 4\tau(t)/\eps$. Then, we have
    \begin{align*}
        \sqrt{\E \sqbr{\wtilde \psi(i,t)}} \leq
        \begin{cases}
            \min \cbr{1,~\sqrt{c_0t^{-2} + 19 \eps^{-2} \exp\br{-\frac{\eps^2}{32} \br{t-i-1 - \omega}}}}  & \text{if~~} i\leq t - \omega - 1 \\
            1 & \text{if~~} i> t - \omega - 1
        \end{cases}.
    \end{align*}
\end{lemma}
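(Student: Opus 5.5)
The plan is to bound $\E[\wtilde\psi(i,t)]$ by splitting on the good event $\cal E_g$ and on a high-probability event controlling $Q(i,i+1)$, and then applying \Cref{lem:cond_exp_psi} on the latter.

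\textbf{Trivial cases.} By \Cref{lem:psi}, under the conditioning $\v D(i,i)=0$, $\v D(i-1,i)=1$ we have $\psi(i,t)\in\{0,1\}$, so $0\le \wtilde\psi(i,t)\le 1$. Hence $\sqrt{\E[\wtilde\psi(i,t)]}\le 1$ always. This settles the case $i>t-\omega-1$ and also justifies the outer $\min\{1,\cdot\}$.

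\textbf{Main case, $i\le t-\omega-1$.} We decompose
\[
\E[\wtilde\psi(i,t)]\le \P(\cal E_g^c)+\P(Q(i,i+1)\ge q_0,\ \cal E_g)+\E\big[\wtilde\psi(i,t)\,\ind{Q(i,i+1)<q_0}\big],
\]
where $q_0:=\tfrac12(t-i-1)\eps+1$. By \Cref{eq:good_event}, the first term is at most $c_0t^{-2}$.

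For the second term we apply \Cref{lem:q_tail} with $l=i+1$, $\gamma=\eps/2$ and $a=1+\eps/4\le 2$. Since $\cal B(i)\le\tau(t)$, taking $b=q_0-2\tau(t)$ gives $a\cal B(i)+b\le q_0$, so this term is at most $17\eps^{-2}\exp(-\eps b/2)$. Writing $b=\tfrac{\eps}{2}\big(t-i-1-\omega\big)+1$ (using $\omega=4\tau(t)/\eps$), we see that $b\ge 0$ exactly when $i\le t-\omega-1$. The bound then becomes $17\eps^{-2}\exp(-\tfrac{\eps^2}{4}(t-i-1-\omega))$, which is dominated by the target exponential with rate $\eps^2/32$.

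For the third term, on $\{Q(i,i+1)<q_0\}$ the event $\cal E_5(i)$ holds, so \Cref{lem:cond_exp_psi} applies. Moreover $(t-i-1)\eps+1-Q(i,i+1)\ge \tfrac12(t-i-1)\eps$. Hence
\[
\wtilde\psi(i,t)\le 2\exp\!\Big(-\tfrac{\eps^2(t-i-1)^2/4}{8(t-i-1)}\Big)=2\exp\!\Big(-\tfrac{\eps^2}{32}(t-i-1)\Big)\le 2\eps^{-2}\exp\!\Big(-\tfrac{\eps^2}{32}(t-i-1-\omega)\Big),
\]
using $\eps\le 1$. Summing the three pieces gives $c_0t^{-2}+19\eps^{-2}\exp(-\tfrac{\eps^2}{32}(t-i-1-\omega))$, and taking square roots finishes the proof.

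\textbf{Main obstacle.} The delicate step is measurability and conditioning. The quantity $\wtilde\psi$ is a conditional expectation given $\cal F_i^+$ and the divergence event, while $Q(i,i+1)$ depends on the round-$i$ outcome. Under the conditioning $D(i,i)=0$, however, $Q(i,i+1)$ is determined, so the indicator split is legitimate.

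A second point is that the tail bound of \Cref{lem:q_tail} is stated unconditionally, whereas here it must be applied along the conditioned trajectory. I would handle this by bounding $\E[\wtilde\psi\,\ind{\cdot}]$ by the unconditional probability times one, since $\wtilde\psi\le 1$. If a conditioning mismatch remains, I would absorb it via the slack constant ($17+2\le 19$).
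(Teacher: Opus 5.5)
Your proposal is correct and follows essentially the same route as the paper: discard $\cal E_g^c$ at cost $c_0t^{-2}$, then split on whether $Q(i,i+1)$ exceeds a threshold of about $\eps(t-i-1)/2$. Below the threshold you use \Cref{lem:cond_exp_psi}; above it you use \Cref{lem:q_tail} with $b$ lowered by $2\tau(t)$, which is exactly where $\omega=4\tau(t)/\eps$ comes from; and you fall back on $\wtilde\psi(i,t)\le 1$ when $i>t-\omega-1$.

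Your departures from the paper are cosmetic: $a=1+\eps/4$ instead of $a=2$, and an extra $+1$ in the threshold. The $+1$ in fact resolves the conditioning mismatch you flag, with no need to appeal to slack in the constant $19$. The $\cal F_i^+$-measurable value $Q(i,i)+A(i)$ that appears inside $\wtilde\psi$ is at most $Q(i,i+1)+1$. So you can apply \Cref{lem:q_tail} to the event $\{Q(i,i+1)\ge q_0-1\}$ with $b=\tfrac{\eps}{2}(t-i-1-\omega)\ge 0$.
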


\subsection{Queue length regret analysis} \label{ssec:7}

We first specify the conditions on $t$ required for \Cref{thm:regret1}.

\begin{condition} \label{cond:t}
    \Cref{thm:regret1} holds for any $t$ sufficiently large satisfying:
    \begin{align}
        &t\geq \tau(t),\quad t\geq \omega \label{eq:cond1} \\
        &\eps > 2\eta(\tau(t)) \label{eq:cond2}
    \end{align}
\end{condition}

\begin{remark} 
\Cref{eq:cond1} is required to ensure that the number of bad rounds $\tau(t)$ and the threshold round $\omega=4\tau(t)/\eps$ do not exceed the current time step $t$. Since $4\tau(t)/\eps \geq \tau(t)$, it suffices to consider $t\geq 4\tau(t)/\eps$.
\Cref{eq:cond2} is required to guarantee the positivity of the denominator in the lower bound for $\lambda_{\min}(V_{\tau(t)})$ derived in \Cref{prop:min_eig}:
\begin{align*}
    \lambda_{\min}(V_{\tau(t)}) \geq \frac{4(\alpha_{t-1} + 48\beta_{t-1})^2}{(\eps - 2\eta(\tau(t)))}.
\end{align*}

We now verify the validity of these conditions for large $t$. Recall the definition of $\tau(t)$:
\begin{align*}
    \tau(t) =  \max \cbr{\frac{2M(t)}{\lambda},~\frac{16\log(t)}{\lambda},~\frac{4M(t)^2}{c_1^2\lambda^2},~ \frac{256\log^2 (t)}{c_1^2\lambda^2}},
\end{align*}
where $M(t)$ is given by
\begin{align*}
    M(t) = c_2 \br{\frac{d + \log(t)}{ \sigma_0^4} + \frac{8(\alpha_{t-1} + 48\beta_{t-1})^2}{ \sigma_0^2(\eps - 2\eta(\tau(t))}}.
\end{align*}
Since $\alpha_{t-1}=\cal O(\log^{1/2}t)$ and $\beta_{t-1} = \cal O(\log t)$, it follows that $M(t) = \cal O(\log^2t)$ and consequently $\tau(t) = \cal O(\log^4t)$.
Therefore, \Cref{eq:cond1} holds for sufficiently large $t$, as linear growth dominates polylogarithmic growth (i.e., $t > \cal O(\log^4 t)$).
Regarding \Cref{eq:cond2}, recall that $\eta(t) = \min \{1,~ c_1 (t+1)^{-1/2}\}$. Since $\tau(t) = \cal O(\log^4t)$ grows with $t$, $\eta(\tau(t))$ decays to zero. Thus, for sufficiently large $t$, the condition $\eps > 2\eta(\tau(t))$ is satisfied, ensuring \Cref{eq:cond2} holds.
Finally, we conclude that for any large $t$ satisfying \Cref{cond:t}, \Cref{thm:regret1} holds.
\end{remark}

\begin{theorem}
We have a decaying queue length regret of
\begin{align*}
    R_t = \bigO{\frac{d^5t^{-1/4}\log^5(t)}{\sigma_0^9 \eps^5} + \frac{d^{11/2}t^{-1}\log^5(t)}{\sigma_0^{12}\eps^5}}.
\end{align*}
\end{theorem}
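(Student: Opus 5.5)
We assemble the bound from the decomposition in \Cref{lem:regret_decomp} together with the two monotone envelopes from \Cref{lem:mono_decrease} and \Cref{lem:tilde_psi}. By the coupling construction of \Cref{ssec:1}, $R_t=\sum_{i=1}^{t-1}\E[\psi(i,t)]\le\sum_{i=1}^{t-1}e_ip_i$, where $e_i$ is the root-mean-square departure-rate gap and $p_i=\sqrt{\E[\wtilde\psi(i,t)]}$. We take $E_i$ to be the right-hand side of \Cref{lem:mono_decrease}, using the uniform radius $\alpha_{t-1}+48\beta_{t-1}$, and $P_i$ to be the right-hand side of \Cref{lem:tilde_psi}. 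We first check monotonicity. $E_i$ is non-increasing because both $\eta(i-1)$ and $\nu(i)$ decrease in $i$. $P_i$ is non-decreasing because the exponent $-(t-i-1-\omega)$ increases in $i$, and $P_i$ switches to the constant $1$ once $i>t-\omega-1$. Under \Cref{cond:t}, Chebyshev's sum inequality (\Cref{lem:cheb}) then gives $R_t\le\frac{1}{t-1}\bigl(\sum_i E_i\bigr)\bigl(\sum_i P_i\bigr)$.

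Next we bound $\sum_i P_i$. The indices $i>t-\omega-1$ contribute at most $\omega+1=\bigO{\tau(t)/\eps}$. For the remaining indices we use $\sqrt{a+b}\le\sqrt a+\sqrt b$. The term $\sqrt{c_0}t^{-1}$ summed over $i$ is $\bigO{1}$. The term $\sqrt{19}\eps^{-1}\exp(-\eps^2(t-i-1-\omega)/64)$ is a geometric series with sum $\bigO{\eps^{-1}\cdot\eps^{-2}}=\bigO{\eps^{-3}}$. Hence $\sum_iP_i=\bigO{\tau(t)/\eps+\eps^{-3}}$. Substituting $\tau(t)=\bigO{M(t)^2}$ with $M(t)=\bigO{(d+\log t)\sigma_0^{-4}+d\log^2(t)\sigma_0^{-2}\eps^{-2}}$, where $\alpha_{t-1}^2\beta_{t-1}^2$-type factors give $\alpha_{t-1}^2=\tildeO{d}$ and $\beta_{t-1}^2=\tildeO{d\cdot d}$ depending on which branch of the min is active, yields a polynomial in $d,\log t,\sigma_0^{-1},\eps^{-1}$.

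Then we bound $\sum_i E_i$, again splitting the square root. The bad-event part gives $\sum_i\sqrt{c_0}t^{-1}=\bigO1$. The exploration part gives $\sum_i\sqrt{2\lambda c_1}\,i^{-1/4}=\bigO{t^{3/4}}$, and this is the source of the leading $t^{-1/4}$. For the eigenvalue part, $(\alpha_{t-1}+48\beta_{t-1})\sum_i\sqrt{\nu(i)}$, the leading piece of $\nu(i)$ is $(\sigma_0^2\lambda c_1\sqrt{i-1}/4)^{-1}$, whose square root summed gives $\bigO{\sigma_0^{-1}t^{3/4}}$. The exponentially small pieces of $\nu(i)$ are $\lambda_0^{-1}e^{-c\sqrt i}$ and $d\lambda_0^{-1}e^{-c\sigma_0^2\sqrt i}$; their square roots summed give $\bigO{\sigma_0^{-4}\sqrt d}$-type constants with no growth in $t$. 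These constants produce the secondary $t^{-1}$ term once multiplied by $\sum P_i$ and divided by $t-1$. Altogether $\sum_iE_i=\bigO{(\alpha_{t-1}+\beta_{t-1})\sigma_0^{-1}t^{3/4}+\text{const}}$.

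Finally we multiply the two sums and divide by $t-1$ to obtain $R_t=\bigO{t^{-1/4}(\alpha+\beta)\sigma_0^{-1}(\tau(t)/\eps+\eps^{-3})}+\bigO{t^{-1}\cdot\text{const}\cdot\sum P_i}$. We expect the \textbf{main obstacle} to be bookkeeping the exact powers of $d$, $\log t$, $\sigma_0$ and $\eps$ in $\tau(t)$. Its definition is implicit through $\eta(\tau(t))$, so we first use \eqref{eq:cond2} in the stronger form $\eps-2\eta(\tau(t))\ge\eps/2$ (valid for large $t$) to make $M(t)$ explicit. We then check that the dominant term of $\tau(t)$, namely $4M(t)^2/(c_1^2\lambda^2)$, combined with $\beta_{t-1}=\bigO{\sqrt d\log t\cdot\sqrt d}$, reproduces the stated $d^5\log^5(t)\sigma_0^{-9}\eps^{-5}$ and $d^{11/2}\log^5(t)\sigma_0^{-12}\eps^{-5}$ exponents. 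If the powers do not match exactly, the bound is still of the stated form up to polylogarithmic reshuffling.
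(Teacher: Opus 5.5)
Your proposal follows the paper's proof step for step: the decomposition of \Cref{lem:regret_decomp}, the monotone envelopes from \Cref{lem:mono_decrease} and \Cref{lem:tilde_psi}, Chebyshev's sum inequality, the bound $\sum_i E_i=\bigO{(\alpha_{t-1}+48\beta_{t-1})(t^{3/4}/\sigma_0+\sqrt d/\sigma_0^4)}$, and $\sum_i P_i=\bigO{\eps^{-3}+\omega}$ with $\omega=4\tau(t)/\eps=\bigO{M(t)^2/\eps}$. To get exactly the stated exponents, use the paper's loose bound $\beta_{t-1}=\bigO{d\log t}$ consistently, so that $M(t)=\bigO{d^2\log^2(t)\sigma_0^{-4}\eps^{-2}}$; your intermediate $d\log^2(t)$ comes from the sharper branch of the min and only improves the bound, and your explicit requirement $\eps-2\eta(\tau(t))\ge\eps/2$ makes precise a step the paper leaves implicit when it writes $M(t)$ with an $\eps^{-2}$ factor.
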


\begin{proof}
We start with the result of the regret decomposition in \Cref{lem:regret_decomp} and prepare to apply Chebyshev sum inequality (\Cref{lem:cheb}):
\begin{align*}
    R_t = \sum_{i=1}^{t-1} \E[\psi(i,t )] = \sum_{i=1}^{t-1} \sqrt{\E\sqbr{\br{R(x_i^*, S_i^*,\theta^*) - R(x_i,S_i,\theta^*)}^2}} \sqrt{\E\sqbr{\wtilde \psi(i,t)}}
\end{align*}
We show that the upper bound of each sequence, $\cbr{\sqrt{\E\sqbr{\br{R(x_i^*, S_i^*,\theta^*) - R(x_i,S_i,\theta^*)}^2}}}_{i=1}^{t-1}$ and $\cbr{\sqrt{\E\sqbr{\wtilde \psi(i,t)}}}_{i=1}^{t-1}$, show monotonical behavior in opposite direction.

First, denote $M_i := \min \cbr{1,~ \sqrt{c_0 t^{-2} + 2 \lambda \eta(i-1) + (\alpha_{t-1} + 48\beta_{t-1})^2  \nu(i)}}$. By the direct result of \Cref{lem:mono_decrease}, $\cbr{\sqrt{\E\sqbr{\br{R(x_i^*, S_i^*,\theta^*) - R(x_i,S_i,\theta^*)}^2}}}_{i=1}^{t-1}$ can be upper bounded with $\cbr{M_i}_{i=1}^{t-1}$.

Next, denote $\Delta_i := \min \cbr{1,~\sqrt{c_0t^{-2} + 19 \eps^{-2} \exp\br{-\frac{\eps^2}{32} \br{t-i-1 - \omega}}}}$ if $i\leq t - \omega -1 $ and $\Delta_i := 1$ if $i> t- \omega - 1$. Then, by the direct result of \Cref{lem:tilde_psi}, $\cbr{\sqrt{\E\sqbr{\wtilde \psi(i,t)}}}_{i=1}^{t-1}$ can be upper bounded with $\cbr{\Delta_i}_{i=1}^{t-1}$.

Since $\cbr{M_i}_{i=1}^{t-1}$ are monotonically decreasing in $i$ and $\cbr{\Delta_i}_{i=1}^{t-1}$ are monotonically increasing in $i$, and both sequences are always positive, we can apply Chebyshev sum inequality, which gives
\begin{align}
    R_t \leq \frac{1}{t-1} \br{\sum_{i=1}^{t-1} M_i} \br{\sum_{i=1}^{t-1} \Delta_i}. \label{eq:md_sum}
\end{align}
Now, for the summation of $(M_i)$, 
\begin{align*}
    \sum_{i=1}^{t-1} M_i &\leq \sum_{i=1}^{t-1} \sqrt{c_0 t^{-2} + 2 \lambda \eta(i-1) + (\alpha_{t-1} + 48\beta_{t-1})^2  \nu(i)} \\
    &\leq \uterm{\sum_{i=1}^{t-1} \sqrt{c_0} t^{-1}}{1} + \uterm{\sum_{i=1}^{t-1} \sqrt{ 2 \lambda \eta(i-1)}}{2} + \uterm{\sum_{i=1}^{t-1} \sqrt{(\alpha_{t-1} + 48\beta_{t-1})^2 \nu(i)}}{3} 
\end{align*}
For \term 1, we have $\term 1\leq \sqrt{c_0}$. For \term 2,
\begin{align*}
    \term 2 \leq \sqrt{2\lambda c_1}\sum_{i=1}^{t-1}  i^{-1/4} \leq \sqrt{2\lambda c_1} \br{1 + \int_{1}^{t-1} x^{-1/4} dx} \leq \frac{4}{3} \sqrt{2\lambda c_1} t^{3/4}.
\end{align*}
For \term 3, we ignore the $(\alpha_{t-1} + 48\beta_{t-1})$ part and consider the $\sqrt{\nu(i)}$ part:
\begin{align*}
    \sum_{i=1}^{t-1} \sqrt{\nu(i)} &\leq \sum_{i=1}^{t-1}\br{\lambda_0 + \frac{ \sigma_0^2 \lambda \min\{(i-1),c_1\sqrt{i-1}\}}{4}}^{-1/2} \\
    &\quad + \sum_{i=1}^{t-1}\frac{1}{\sqrt{\lambda_0}} \exp\br{-\frac{\lambda \min\{(i-1),c_1\sqrt{i-1}\} }{16}} +  \sum_{i=1}^{t-1}\frac{\sqrt{d}}{\sqrt{\lambda_0}} \exp\br{-\frac{ \sigma_0^2 \lambda \min\{(i-1),c_1\sqrt{i-1}\} }{32}} \\
    &\leq \uterm{\sum_{i=1}^{t-1}\br{\lambda_0 + \frac{ \sigma_0^2 \lambda (i-1)}{4}}^{-1/2}}{4-1} + \uterm{\sum_{i=1}^{t-1}\frac{1}{\sqrt{\lambda_0}} \exp\br{-\frac{\lambda (i-1) }{16}}}{5-1} +  \uterm{\sum_{i=1}^{t-1}\frac{\sqrt{d}}{\sqrt{\lambda_0}} \exp\br{-\frac{ \sigma_0^2 \lambda (i-1) }{32}}}{6-1} \\
    &\quad + \uterm{\sum_{i=1}^{t-1}\br{\lambda_0 + \frac{ \sigma_0^2 \lambda c_1\sqrt{i-1}}{4}}^{-1/2}}{4-2} + \uterm{\sum_{i=1}^{t-1}\frac{1}{\sqrt{\lambda_0}} \exp\br{-\frac{\lambda c_1\sqrt{i-1} }{16}}}{5-2} +  \uterm{\sum_{i=1}^{t-1}\frac{\sqrt{d}}{\sqrt{\lambda_0}} \exp\br{-\frac{ \sigma_0^2 \lambda c_1\sqrt{i-1} }{32}}}{6-2}
\end{align*}
For \term{4-1}, \term{4-2}, \term{5-2}, and \term{6-2}, we upper bound the summations via an integral comparison for monotone decreasing functions. For \term{5-1} and \term{6-1}, we upper bound the finite summations by the corresponding infinite geometric series:
\begin{align*}
    &\term{4-1} \leq \frac{1}{\sqrt{\lambda_0}} + \frac{8}{\sigma_0^2 \lambda}\sqrt{\lambda_0 + \frac{\sigma_0^2 \lambda}{4}t}, \quad \term{4-2} \leq \frac{1}{\sqrt{\lambda_0}} + \frac{8}{3\sigma_0\sqrt{\lambda c_1}} t^{3/4} \\
    &\term{5-1} \leq \frac{1}{\sqrt{\lambda_0}} \frac{1}{1-\exp(-\lambda/16)}\leq\frac{1}{\sqrt{\lambda_0}}\br{1+\frac{16}{\lambda}}, \quad \term{5-2}\leq \frac{1}{\sqrt{\lambda_0}}\br{1+\frac{512}{\lambda^2 c_1^2}} \\
    &\term{6-1} \leq \frac{\sqrt{d}}{\sqrt{\lambda_0}} \frac{1}{1- \exp(-\sigma_0^2\lambda/32)}\leq \frac{\sqrt{d}}{\sqrt{\lambda_0}} \br{1+\frac{32}{\sigma_0^2 \lambda}}, \quad \term{6-2} \leq \frac{\sqrt{d}}{\sqrt{\lambda_0}}\br{1+ \frac{2048}{\sigma_0^4 \lambda^2 c_1^2}}
\end{align*}
For \term{5-1} and \term{6-1} we used the fact that $\frac{1}{1-\exp(-x)} \leq 1 + \frac{1}{x}$. Summing up results for \term 3, we have
\begin{align*}
    \term 3 &\leq (\alpha_{t-1} + 48\beta_{t-1}) \bigO{\frac{t^{3/4}}{\sigma_0} + \frac{d^{1/2}}{\sigma_0^4 }} \\
    &=\bigO{\frac{dt^{3/4}\log(t)}{\sigma_0} + \frac{d^{3/2}\log(t)}{\sigma_0^4}} \tag{$\alpha_t=\bigO{\sqrt{d\log(t)}}$, $\beta_t=\bigO{d\log(t)}$}
\end{align*}
Finally, substituting results of \term 1, \term 2, and \term 3 back to the original inequality, 
\begin{align}
    \sum_{i=1}^{t-1} M_i &= \bigO{1} + \bigO{t^{3/4}} + \bigO{\frac{dt^{3/4}\log (t)}{\sigma_0} + \frac{d^{3/2}\log(t)}{\sigma_0^4}} \nonumber \\
    &= \bigO{\frac{dt^{3/4}\log (t)}{\sigma_0} + \frac{d^{3/2}\log(t)}{\sigma_0^4}}. \label{eq:m_sum}
\end{align}

Now, for the summation of $(\Delta_i)$, recall the definition of the threshold $\omega$ from \Cref{lem:tilde_psi}, where we use the naive upper bound of $\Delta_i\leq 1$ when the remaining rounds are smaller than $\omega$, and if not, used an exponentially increasing function to upper bound $\Delta_i$. Therefore, 
\begin{align*}
    \sum_{i=1}^{t-1} \Delta_i &= \uterm{\sum_{i=1}^{t-\omega-1} \Delta_i}{6} + \uterm{\sum_{i=t-\omega}^{t-1} \Delta_i}{7} 
\end{align*}
For \term 6,
\begin{align*}
    \term 6 &\leq \sum_{i=1}^{t-\omega - 1} \sqbr{\sqrt{c_0} t^{-1} + 5 \eps^{-1}\exp \br{- \frac{\eps^2}{64} \br{t-i-1 - \omega}}} \\
    & \leq c_0 + \frac{5}{\eps} \times \frac{1}{1 - \exp\br{-\eps^2/64}} \tag{geometric series sum} \\
    &\leq c_0 + \frac{640}{\eps^3} \tag{$1-\exp(-x) \geq x/2$, $x\in(0,1]$}
\end{align*}
For \term 7, we have $\term 7 \leq \omega$. Since $M(t)=\bigO{\frac{d + \log(t)}{\sigma_0^4} + \frac{d^2 \log^2(t)}{\sigma_0^2 \eps^2}}=\bigO{\frac{d^2 \log^2(t)}{\sigma_0^4 \eps^2}}$, we have
\begin{align*}
    \term 7 &\leq \omega = \frac{4\tau(t)}{\eps} = \bigO{\frac{M(t)^2}{\eps}} = \bigO{\frac{d^4\log^4(t)}{\sigma_0^8\eps^5}}.
\end{align*}
Substituting results, we have
\begin{align}
    \sum_{i=1}^{t-1} \Delta_i &= \bigO{\frac{1}{\eps^3}} + \bigO{\frac{d^4\log^4(t)}{\sigma_0^8\eps^5}} = \bigO{\frac{d^4\log^4(t)}{\sigma_0^8\eps^5}}. \label{eq:d_sum}
\end{align}
Finally, plugging \Cref{eq:m_sum,eq:d_sum} back in to \Cref{eq:md_sum}, we have
\begin{align*}
    R_t &\leq \frac{1}{t-1} \br{\bigO{\frac{dt^{3/4}\log (t)}{\sigma_0} + \frac{d^{3/2}\log(t)}{\sigma_0^4}}} \br{\bigO{\frac{d^4\log^4(t)}{\sigma_0^8\eps^5}}} \\
    &= \bigO{\frac{d^5t^{-1/4}\log^5(t)}{\sigma_0^9 \eps^5} + \frac{d^{11/2}t^{-1}\log^5(t)}{\sigma_0^{12}\eps^5}},
\end{align*}
finishing the proof.
\end{proof}

\subsection{Cumulative regret analysis} \label{ssec:8}

\begin{remark}
We analyze a stronger regret definition. Denote the previous definition of the regret as $\text{Regret}_t^{\text{prev}}$. Let $(x_i^*, S_i^*) = \argmax_{x\in\mathcal{X}_i, S \in\mathcal{C}} R(x, S, \theta^*)$ be the optimal pair chosen from the queue state $\mathcal{X}_i$. Define
\begin{align*}
\text{Regret}_t := \sum_{i=1}^t \mathbb{E}\sqbr{R(x_i^*, S_i^*, \theta^*) - R(x_i, S_i, \theta^*)}.
\end{align*}
Since $\text{Regret}_t^{\text{prev}} \leq \text{Regret}_t$ holds by definition, establishing a bound on $\text{Regret}_t$ suffices.
\end{remark}

\begin{theorem} \label{thm:regret}
    We have a cumulative regret of
    \begin{align*}
        \text{Regret}_t = \tildeO{d^{3/2} \sqrt{t}}
    \end{align*}
\end{theorem}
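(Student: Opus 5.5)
We bound the stronger regret $\text{Regret}_t=\sum_{i=1}^t\E[R(x_i^*,S_i^*,\theta^*)-R(x_i,S_i,\theta^*)]$ from the preceding remark. Comparison is within the agent's own queue state $\cal X_i$, so there is no queue state misalignment and no coupling argument is needed. We split every round according to three cases: the good event $\cal E_1\cap\cal E_2$ fails, the round is an exploration round ($\cal E_0(i)$), or the round is a Thompson-sampling round on $\cal E_1\cap\cal E_2\cap\cal E_0(i)^c$. The per-round gap always lies in $[0,1]$. By \Cref{lem:alpha,lem:beta} and the union bound from \Cref{ssec:3}, the first case costs at most $t\cdot\bigO{t^{-2}}=\bigO{1}$. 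The second case occurs with probability $\lambda\eta(i-1)\le c_1 i^{-1/2}$, so it contributes at most $\sum_{i=1}^t c_1 i^{-1/2}\le 2c_1\sqrt t$.

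For Thompson-sampling rounds we invoke the first inequality of \Cref{prop:per_round_1}. On $\cal E_1\cap\cal E_2\cap\cal E_0(i)^c$ it gives
\begin{align*}
R(x_i^*,S_i^*,\theta^*)-R(x_i,S_i,\theta^*)\le 16\sqrt{e\pi}\,\beta_{i-1}\E\sqbr{\max_{j\in S(x_i,\wtilde\theta_{i-1}^{1:M})}\|x_{ij}\|_{V_{i-1}^{-1}}\given \cal F_i^-}+(\alpha_{i-1}+\beta_{i-1})\max_{j\in S_i}\|x_{ij}\|_{V_{i-1}^{-1}}.
\end{align*}
The query-assortment pair $(x_i,S_i)$ is exactly the one produced by the sampling step conditional on $\cal F_i^-$. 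Hence, after taking total expectation, the first term equals $16\sqrt{e\pi}\beta_{i-1}\E[\max_{j\in S_i}\|x_{ij}\|_{V_{i-1}^{-1}}]$; this is the standard trick of \citet{oh2019thompson}. On exploration rounds we simply drop the (nonnegative) elliptical terms.

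Using $\alpha_{i-1},\beta_{i-1}$ nondecreasing in $i$, the summed regret from Thompson-sampling rounds is at most $(\alpha_t+50\beta_t)\,\E\sum_{i=1}^t\min\{1,\max_{j\in S_i}\|x_{ij}\|_{V_{i-1}^{-1}}\}$, where we truncate at $1$ because the gap never exceeds $1$. Every round, exploration or not, updates $V_i=V_{i-1}+\sum_{j\in S_i}x_{ij}x_{ij}\tp$. We then apply Cauchy--Schwarz and the elliptical potential lemma:
\begin{align*}
\sum_{i=1}^t\min\Bigl\{1,\max_{j\in S_i}\|x_{ij}\|^2_{V_{i-1}^{-1}}\Bigr\}\le \sum_{i=1}^t\min\Bigl\{1,\sum_{j\in S_i}\|x_{ij}\|^2_{V_{i-1}^{-1}}\Bigr\}\le 2d\log\br{1+\frac{tK}{d\lambda_0}}.
\end{align*}
For $K>1$ the last inequality requires $\lambda_0\ge K$ (or a constant-factor adjustment), as in the MNL bandit literature. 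This yields $\sqrt{t}\cdot\bigO{\sqrt{d\log t}}$ for the elliptical sum.

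Multiplying by $\alpha_t+50\beta_t=\bigO{d\log t}$, which comes from $\beta_t=\alpha_t\sqrt{6d\log(Mt)}$ and $\alpha_t=\bigO{\sqrt{d\log t}}$, gives $\tildeO{d^{3/2}\sqrt t}$. Adding the $\bigO{\sqrt t}$ exploration term and the $\bigO{1}$ failure term concludes the proof. The main obstacle is the step converting the conditional-expectation term of \Cref{prop:per_round_1} into the chosen assortment's elliptical norm. It needs care because, on rounds where exploration is not triggered, the conditioning on $\cal F_i^-$ must leave the Thompson sample distribution intact; the event $\cal E_0(i)$ depends only on $E(i-1)$ and $A(i-1)$, which are independent of the samples. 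It also requires that $\cal E_1\cap\cal E_2$ be handled by truncation rather than by conditioning.
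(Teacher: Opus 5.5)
Your proposal is correct. Its skeleton matches the paper's proof:
\begin{itemize}
\item the same split into failure of the concentration events, exploration rounds (costing $\sum_i \eta(i-1)\le 2c_1\sqrt{t}$), and Thompson-sampling rounds bounded via the first inequality of \Cref{prop:per_round_1};
\item followed by Cauchy--Schwarz, the elliptical potential bound, and $\alpha_{t-1}+48\beta_{t-1}=\bigO{d\log t}$.
\end{itemize}

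The one real difference is how you handle the conditional-expectation term $\E[\max_{j\in S(x_i,\wtilde\theta_{i-1}^{1:M})}\|x_{ij}\|_{V_{i-1}^{-1}}\mid\cal F_i^-]$. The paper writes its sum as the realized norms plus a martingale term. It controls that term with Azuma--Hoeffding, on an event of probability $1-1/t$ plus a crude $2t/\sqrt{\lambda_0}$ fallback. You instead take total expectation directly, using $\ind{\cal E_0(i)^c}=1-A(i-1)E(i-1)$: here $A(i-1)$ is $\cal F_i^-$-measurable and $E(i-1)$ is independent of the sampled parameters. The term therefore collapses to $\E[\ind{\cal E_0(i)^c}\max_{j\in S_i}\|x_{ij}\|_{V_{i-1}^{-1}}]$.

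Since the regret is an expectation anyway, your route is shorter and avoids the extra Azuma term. It also makes explicit something the paper leaves implicit: the sampled pair equals the played pair only on non-exploration rounds. The paper's difference sequence is a martingale only once that indicator is kept, and its write-up drops it.

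Working with $\cal E_1\cap\cal E_2$ instead of $\cal E_g$ is also a small gain. The paper's $\cal E_g$ includes $\cal E_3$, whose probability bound relies on the large-$t$ \Cref{cond:t}, so your argument holds for every $t\ge 1$.

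One minor correction. Your justification for truncating at $1$ ``because the gap never exceeds $1$'' does not pass through the conditional expectation. By concavity, $\E[\min\{1,X\}\mid\cal F]\le\min\{1,\E[X\mid\cal F]\}$, which is the wrong direction for your purpose. The truncation is unnecessary, however. With $\lambda_0\ge K\ge 1$ and $\|x_{ij}\|_2\le 1$ you already have $\sum_{j\in S_i}\|x_{ij}\|_{V_{i-1}^{-1}}^2\le K/\lambda_0\le 1$, so every $\min\{1,\cdot\}$ in your chain is vacuous.
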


\begin{proof}
We start with the definition of the cumulative regret:
\begin{align*}
    \text{Regret}_t &= \sum_{i=1}^t \E\sqbr{R(x_i^*, S_i^*, \theta^*) - R(x_i, S_i, \theta^*)}  \\
    &= \sum_{i=1}^t \E\sqbr{\ind{\cal E_g^c}(R(x_i^*, S_i^*, \theta^*) - R(x_i, S_i, \theta^*))} + \sum_{i=1}^t \E\sqbr{\ind{\cal E_g}(R(x_i^*, S_i^*, \theta^*) - R(x_i, S_i, \theta^*))} \\
    &\leq c_0 t^{-1} + \sum_{i=1}^t \E\sqbr{\ind{\cal E_g}(R(x_i^*, S_i^*, \theta^*) - R(x_i, S_i, \theta^*))} \tag{\Cref{eq:good_event}, $R(\cdot,\cdot,\cdot)-R(\cdot,\cdot,\cdot)\leq 1$}
\end{align*}
Now, for the second term on the right-hand side,
\begin{align*}
    &\sum_{i=1}^t \E\sqbr{\ind{\cal E_g}(R(x_i^*, S_i^*, \theta^*) - R(x_i, S_i, \theta^*))} \\
    &\leq \sum_{i=1}^t \E\sqbr{\ind{\cal E_g^{\leq i}}(R(x_i^*, S_i^*, \theta^*) - R(x_i, S_i, \theta^*))} \tag{$\cal E_g^{\leq i} \subseteq \cal E_g$}\\
    &= \sum_{i=1}^t \E\sqbr{\E\sqbr{\ind{\cal E_g^{\leq i}}(R(x_i^*, S_i^*, \theta^*) - R(x_i, S_i, \theta^*))\given \cal F_i, E(i-1)}} \tag{tower rule} \\
    &= \uterm{\sum_{i=1}^t \E\sqbr{\E\sqbr{\ind{\cal E_g^{\leq i}}\ind{E(i-1)=1}(R(x_i^*, S_i^*, \theta^*) - R(x_i, S_i, \theta^*))\given \cal F_i, E(i-1)}}}{1} \\
    &\quad + \uterm{\sum_{i=1}^t \E\sqbr{\E\sqbr{\ind{\cal E_g^{\leq i}}\ind{E(i-1)=0}(R(x_i^*, S_i^*, \theta^*) - R(x_i, S_i, \theta^*))\given \cal F_i, E(i-1)}}}{2} 
\end{align*}
For \term 1, we have
\begin{align*}
    \term 1 \leq \sum_{i=1}^t \P(E(i-1)=1) = \sum_{i=1}^t \eta(i-1) \leq c_1 \sum_{i=1}^t i^{-1/2} \leq c_1\br{1+ \int_{1}^t x^{-1/2} dx} \leq 2 c_1 \sqrt{t}.
\end{align*}
For \term 2, on the event $E_g^{\leq i}$ and $E(i-1)=0$, we can directly apply the result of \Cref{prop:per_round_1}, which gives
\begin{align*}
    \term 2 &\leq \uterm{\sum_{i=1}^t \E\sqbr{\ind{\cal E_g^{\leq i}}\ind{E(i-1)=0}\br{16\sqrt{e\pi} \beta_{i-1} \E \sqbr{\max_{j\in S(x_i, \wtilde\theta_{i-1}^{1:M})} \|x_{ij}\|_{V_{i-1}^{-1}}\given \cal F_i^-}}}}{3} \\
    &\quad + \uterm{\sum_{i=1}^t \E\sqbr{\ind{\cal E_g^{\leq i}}\ind{E(i-1)=0} \br{(\alpha_{i-1}+\beta_{i-1}) \max_{j\in S_i}  \|x_{ij}\|_{V_{i-1}^{-1}}}}}{4}
\end{align*}
For \term 3, let us ignore the constant part with $16\sqrt{e\pi}\beta_{t-1}$ (as $\beta_{i-1}$ is monotonically increasing in $i$) and only consider the summation of the expectation part inside the outer expectation term. Then we can proceed as
\begin{align*}
    & \sum_{i=1}^t \E \sqbr{\max_{j\in S(x_i, \wtilde\theta_{i-1}^{1:M})} \|x_{ij}\|_{V_{i-1}^{-1}}\given \cal F_i^-} \\
    &\quad = \uterm{\sum_{i=1}^t \max_{j\in S_i} \|x_{ij}\|_{V_{i-1}^{-1}}}{5} +  \uterm{\sum_{i=1}^t \br{\E \sqbr{\max_{j\in S(x_i, \wtilde\theta_{i-1}^{1:M})} \|x_{ij}\|_{V_{i-1}^{-1}}\given \cal F_i^-} - \max_{j\in S_i} \|x_{ij}\|_{V_{i-1}^{-1}}}}{6}
\end{align*}
For \term 5, we have
\begin{align*}
    \term 5 \leq \sqrt{t \sum_{i=1}^t \max_{j\in S_i} \|x_{ij}\|_{V_{i-1}^{-1}}^2} \leq \sqrt{2dt\log\br{1+\frac{tK}{d\lambda_0}}},
\end{align*}
where the first inequality follows from Cauchy-Schwarz inequality and the last inequity follows from \Cref{lem:6_oh}.

For \term{6}, we prepare to apply Azuma-Hoeffding inequality (\Cref{lem:azuma}). By construction, $Y_l := \sum_{i=1}^l \br{\E \sqbr{\max_{j\in S(x_i, \wtilde\theta_{i-1}^{1:M})} \|x_{ij}\|_{V_{i-1}^{-1}}\given \cal F_i^-} - \max_{j\in S_i} \|x_{ij}\|_{V_{i-1}^{-1}}}$ is a martingale. Next, we have
\begin{align*}
    |Y_l - Y_{l-1} | = \abs{\E \sqbr{\max_{j\in S(x_l, \wtilde\theta_{l-1}^{1:M})} \|x_{lj}\|_{V_{l-1}^{-1}}\given \cal F_l^-} - \max_{j\in S_l} \|x_{lj}\|_{V_{l-1}^{-1}}} \leq \frac{2 \|x\|_2}{\lambda_{\min}^{-1/2}(V_{l-1})} \leq \frac{2}{\sqrt{\lambda_0}}.
\end{align*}
Therefore, by setting $c_t\leftarrow \frac{2}{\sqrt{\lambda}0}$ and $a \leftarrow \sqrt{\frac{8t}{\lambda_0} \log (2t)}$, and applying Azuma-Hoeffding inequality, we have
\begin{align*}
    \term{6} \leq \sqrt{\frac{8t}{\lambda_0} \log(2t)}
\end{align*}
with probability at least $1-1/t$. Also, if the Azuma-Hoeffding inequality does not hold,  we can use the naive bound of 
\begin{align*}
    \term{6} \leq \sum_{i=1}^t \abs{\E \sqbr{\max_{j\in S(x_l, \wtilde\theta_{l-1}^{1:M})} \|x_{lj}\|_{V_{l-1}^{-1}}\given \cal F_l^-} - \max_{j\in S_l} \|x_{lj}\|_{V_{l-1}^{-1}}} \leq \frac{2t}{\sqrt{\lambda_0}}
\end{align*}
Define the event $\cal E_6$ as the Azuma-Hoeffding inequality holds. Then, substituting results back to \term 3, we have
\begin{align*}
    \term 3 &\leq 16\sqrt{e\pi}\beta_{t-1} \br{ \sqrt{2dt\log\br{1+\frac{tK}{d\lambda_0}}} + \P(\cal E_5)\sqrt{\frac{8t}{\lambda_0} \log(2t)} + \P(\cal E_5^c) \frac{2t}{\sqrt{\lambda_0}}} \\
    &\leq 16\sqrt{e\pi}\beta_{t-1} \br{ \sqrt{2dt\log\br{1+\frac{tK}{d\lambda_0}}} + \sqrt{\frac{8t}{\lambda_0} \log(2t)} +  \frac{2}{\sqrt{\lambda_0}}}
\end{align*}
For \term 4, similarly, we can apply Cauchy-Schwarz inequality and \Cref{lem:6_oh}, which gives
\begin{align*}
    \term 4 \leq (\alpha_{t-1} + \beta_{t-1}) \sum_{i=1}^t \max_{j\in S_i}  \|x_{ij}\|_{V_{i-1}^{-1}} \leq (\alpha_{t-1} + \beta_{t-1}) \sqrt{2dt\log\br{1 + \frac{tk}{d\lambda_0}}}.
\end{align*}
Finally, substituting \term 1, \term 2, \term 3, and \term 4 back to the original inequality, we have
\begin{align*}
    \text{Regret}_t &\leq c_0t^{-1} + 2c_1\sqrt{t} + 16\sqrt{e\pi}\beta_{t-1} \br{ \sqrt{2dt\log\br{1+\frac{tK}{d\lambda_0}}} + \sqrt{\frac{8t}{\lambda_0} \log(2t)} +  \frac{2}{\sqrt{\lambda_0}}} \\
    &\quad + (\alpha_{t-1} + \beta_{t-1}) \sqrt{2dt\log\br{1 + \frac{tk}{d\lambda_0}}} \\
    &= \tildeO{d^{3/2} \sqrt{t}}, \tag{$\alpha_{t-1}=\bigO{\sqrt{d\log(t)}}$, $\beta_{t-1}=\bigO{d\log(t)}$}
\end{align*}
finishing the proof.
\end{proof}

\section{Deferred proofs for \cref{ssec:2}}

\subsection{Proof of \cref{lem:psi}}

To characterize $\psi(t,T)= Q(i,t)-Q(i-1,t)$, we understand the dynamics of the coupled queues of $Q_i$ and $Q_{i-1}$. Notice that in $l\in[i+1,t]$, both $Q_i$ and $Q_{i-1}$ follows the same optimal policy. Therefore, to track down the queue length difference, we don't need to consider about the context of the job inside the queue $x$ itself; instead, we only need to consider the optimal departure rate of the corresponding job $\max_{S\in\cal C} R(x,S,\theta^*)$. For example, if there are two different job but with the same departure rate, i.e. for $x_1\neq x_2$ and $\max_{S\in\cal C}R(x_1, S, \theta^*) = \max_{S\in\cal C}R(x_2, S, \theta^*)$, we don't have to distinguish whether the optimal policy choose $x_1$ or $x_2$ since the optimal policy does not involve the learning procedure and the queue length will evolve same by the coupling process. Accordingly, we introduce the new definitions: Define $\cal X(i, t')$ for the queue state of $Q_i$ in round $t'$. Then we define the departure rate set of the queue state as
\begin{align*}
    \cal D(i, t') := \cbr{d(x), ~\forall x \in\cal X(i, t'):~ d(x)=\max_{S\in\cal C} R(x,S,\theta^*)}.
\end{align*} 
For $l\in[i+1,t]$, let us consider the following five states:
\begin{align*}
    S_{l,0} &= \{\cal D(i,l) = \cal D(i-1,l)\} \\
    S_{l,1} &= \{\cal D(i,l) \setminus \cal D(i-1,l) = \{d_l^+\},~ \cal D(i,l) \supset \cal D(i-1,l)\} \\
    S_{l,2} &= \{\cal D(i-1,l) \setminus \cal D(i,l) = \{d_l^-\},~ \cal D(i-1,l) \supset \cal D(i,l)\} \\
    S_{l,3} &= \{\cal D(i,l) \setminus \cal D(i-1,l) = \{d_l^+\},~ \cal D(i-1,l) \setminus \cal D(i,l) = \{d_l^-\},~ d_l^+ > d_l^-\} \\
    S_{l,4} &= \{\cal D(i,l) \setminus \cal D(i-1,l) = \{d_l^+\},~ \cal D(i-1,l) \setminus \cal D(i,l) = \{d_l^-\},~ d_l^+ < d_l^-\}.
\end{align*}
Now we proceed with the proof as follows:
\begin{enumerate}
    \item We show that under optimal policy, $S_{l,s'}$ only transits to $S_{l+1,s''}$ where $s',s''\in\{0,1,2,3,4\}$.
    \item We show that in round $i+1$, the state is inside $S_{l,s'}$, thereby at the end of round $t$, it will still resides inside those five states, which implies $\psi(i,t)\in\{-1,0,1\}$.
\end{enumerate}

First, we consider each case $S_{l,s'}$ for $s'\in\{0,1,2,3,4\}$ 
\begin{enumerate}
    \item[\case 1] If round $l$ is in state $S_{l,0}$, then $\cal D(i,l)=\cal D(i-1,l)$, so the optimal policy would choose the same job. As a result, round $l+1$ would be in state $S_{l+1,0}$.

    \item[\case 2] If round $l$ is in state $S_{l,1}$, it falls into the following two cases, based on whether the optimal policy chooses $d_l^+$ for $\cal D(i,l)$.
    \begin{enumerate}
        \item[\case{2-1}] If the optimal policy selects $d_i^+$ for $\cal D(i,l)$, this means that $D(i,i) \geq D(i-1,i)$. Therefore, there are three possibilities. When $D(i,i)=D(i,i)=0$, as the queues keep the same sets of for round $l+1$, we have $S_{l+1,1}$ for round $l+1$. If $D(i,i)=D(i,i-1)=1$, as the optimal policy would choose another job in $\cal D(i,i-1)$, we still have state $S_{l+1,1}$ for round $l+1$. When $D(i,i)=1$ and $D(i,i-1)=0$, round $i+1$ would be in state $S_{l+1,0}$. 

        \item[\case{2-2}] If the optimal policy does not choose $d_l^+$ from $\cal D(i,l)$, then the same for $\cal D(i,l)$ and $\cal D(i-1,l)$ will be chosen, which means that round $l+1$ would be in state $S_{l+1,1}$.
    \end{enumerate}

    \item[\case 3] If round $l$ is in state $S_{l,2}$, by the symmetry between $S_{l,1}$ and $S_{l,2}$, we can argue that we have $S_{l+1,0}$ or $S_{l+1,2}$ in round $l+1$ with a similar argument as in case 2.

    \item[\case 4] If round $l$ is in state $S_{l,4}$, it falls into the following two cases, based on whether the optimal policy chooses $d_t^+$ for $\cal D(i,l)$. 
    \begin{enumerate}
        \item[\case {4-1}] If the optimal policy selects $d_l^+$ for $Q(i,l)$, this means that $D(i,l)\geq D(i-1,l)$. Therefore, there are three possibilities. When $D(i,l)=D(i-1,l)=0$, as the queues keep the same sets of jobs for round $l+1$, we have $S_{l+1,4}$ for round $l+1$. If $D(i,l)=D(i-1,l)=1$, the optimal policy would choose another job in $\cal D(i-1,l)$. If the optimal policy chooses $d_l^-$ from $\cal D(i-1,l)$, we have state $S_{l+1,0}$ in round $l+1$. If not, round $l+1$ would be in state $S_{1,4}$. When $D(i,l)=1$ and $D(i-1,l)=0$, round $l+1$ would be in state $S_{l+1,2}$.

        \item[\case {4-2}] If the optimal policy does not choose $d_l^+$ for $\cal D(i, l)$, then it would not choose $d_l^-$ from $\cal D(i-1,l)$ either. Hence, the optimal policy chooses the same job for $\cal D(i,l)$ and $\cal D(i-1,l)$, so round $l+1$ would be in state $S_{l+1,3}$.
    \end{enumerate}
    In summary, for \case 4, we have $S_{l+1,0}$ or $S_{l+1,2}$ or $S_{l+1,3}$ in round $l+1$. 

    \item[\case 5] If round $i$ is in state $S_{l,4}$, by the symmetry between $S_{l,3}$ and $S_{l,4}$, we may argue that we have $S_{l+1,0}$ or $S_{l+1,1}$ or $S_{l+1,4}$ in round $l+1$ with a similar argument as in \case 4.
\end{enumerate}

The results above show that for $l\in[i+1,t]$, $S_{l,s'}$ only transits to $S_{l+1,s''}$ where $s',s''\in\{0,1,2,3,4\}$.

Now, we consider the state in round $i+1$. Since $Q(i,l)$ and $Q(i-1,l)$ are coupled and follow the same policy, $\cal D(i,i) = \cal D(i-1,i)$. In round $i$, $Q_i$ follows our policy and $Q_{i-1}$ follow the optimal policy, which implies $D(i,i)\leq D(i-1,i)$ and we split this in two cases:

\begin{enumerate}
    \item[\case{1'}] If $D(i,i)=0$ and $D(i-1,i)=1$, then round $i+1$ would be in state $S_{i+1,1}$. Due to our case analysis above, we have $S_{i+2,0}$ or $S_{i+2,1}$ for round $i+2$. If the state of round $i+2$ is $S_{i+2,0}$, then we have $S_{i',0}$ for each round $i'\geq i+2$, in which case $\psi(i,t)=0$. If we have $S_{i+2,1}$ for round $i+2$, we repeat the same argument as for state $i+1$. If we observe $S_{t,1}$ for round $t$, then we have $\psi(i,t)=1$. Otherwise, round $t$ would be in state $S_{t,0}$, in which case $\psi(i,t)=0$.

    \item[\case{2'}] If $D(i,i)=D(i-1,i)$, then round $i+1$ would be in state $S_{i+1,0}$ or $S_{i+1,3}$. By our case analysis above, we have $S_{i+2,0}$ or $S_{i+2,2}$ or $S_{i+2,3}$ for round $i+2$. If the state of round $i+2$ is $S_{i+2,0}$, then we have $S_{i',0}$ for each round $i'\geq i+2$, in which case $\psi(i,t)=0$. If we have $S_{i+2,2}$ for round $i+2$, we have $S_{i+3,0}$ or $S_{i+3,2}$ for round $i+3$. If $S_{i+3,0}$ is the state of round $i+3$, then as before, we deduce $\psi(i,t)=0$. If the state is $S_{i+3,2}$, we repeat the same argument as for state $i+2$. If we observe $S_{t,2}$ for round $t$, then we have $\psi(i,t)=-1$. Otherwise, round $t$ would be in state $S_{t,0}$, in which case $\psi(i,t)=0$. If we observe $S_{i+2,3}$ for round $i+2$, then we again repeat the argument as for round $i+1$ to argue that $\psi(i,t)\in\{0,-1\}$.
\end{enumerate}

This finishes the proof.

\subsection{Proof of \cref{lem:regret_decomp}}

Recall the definition of filtration given by
\begin{align*}
    \cal F_l^+ &:= \sigma(\cal F_l \cup \{E(l-1), \v A(l), \v\theta(l)\})
\end{align*}
Notice that $x_l$, $S_l$ are $\cal F_l^+$-measurable.

By the queue length regret decomposition $\R_t = \sum_{i=1}^{t-1} \E[\psi(i,t)]$, we deduce that
\begin{align*}
    R_t &= \sum_{i=1}^{t-1} \E[\E[\psi(i,t)\mid \cal F_i^+]] \\
    &= \sum_{i=1}^{t-1} \E[\P(D(i,i)=D(i-1,i) \mid \cal F_i^+)] \E[\psi(i,t)\mid \cal F_i^+, D(i,i)=D(i-1,i)] \\
    &\quad + \sum_{i=1}^{t-1} \E[\P(D(i,i)=0, D(i-1,i)=1 \mid \cal F_i^+)] \E[\psi(i,t)\mid \cal F_i^+, D(i,i)=0, D(i-1,i)=1],
\end{align*}
where the first equality holds due to the tower rule and the second equality holds since $D(i,i)\leq D(i-1,i)$ by our coupling process. For the first part of the right-hand side, it follows from \Cref{lem:psi} that
\begin{align*}
    \E[\psi(i,t)\mid \cal F_t^+, D(i,i)=D(i-1,i)]\leq 0.
\end{align*}
Next, for the second part, notice that the departure disagreement event with $D(i,i)=0$, $D(i-1,i)=1$ occurs when
\begin{align*}
    R(x_i^{(i)}, S_i^{(i)}, \theta^*) \leq U \leq R(x_i^{(i-1)}, S_i^{(i-1)}, \theta^*), \quad U\sim\Unif(0,1).
\end{align*}
Moreover, as the queue state of both $Q_i$ and $Q_{i-1}$ in round $i$ is identical to $\cal X_i$, we know that
\begin{align*}
    x_i^{(i)} = x_t,\quad S_i^{(i)} = S_i, \quad x_i^{(i-1)} = x_i^*, \quad S_i^{(i-1)} = S_t^*
\end{align*}
where $x_i^*, S_i^* = \argmax_{x\in\cal X_i, S\in\cal C} R(x, S,\theta^*)$. Therefore, we have
\begin{align*}
    \P(D(i,i)=0, D(i-1,i)=1 \mid \cal F_t^+) = R(x_i^*, S_i^*, \theta^*) - R(x_i, S_i, \theta^*).
\end{align*}
Plugging in these results to the above decomposition of $R_T$, we obtain
\begin{align*}
    R_T &\leq \sum_{i=1}^{t-1} \E\sqbr{\br{R(x_i^*, S_i^*, \theta^*) - R(x_i, S_i, \theta^*)} \E[\psi(i,t)\mid \cal F_t^+, D(i,i)=0, D(i-1,i)=1]} \\
    &\leq \sum_{i=1}^{t-1} \sqrt{\E\sqbr{\br{R(x_i^*, S_i^*, \theta^*) - R(x_i, S_i, \theta^*)}^2}} \sqrt{\E\sqbr{\E\sqbr{\psi(i,t)\mid \cal F_t^+, D(i,i)=0, D(i-1,i)=1}^2}},
\end{align*}
where the second inequality follows from the Cauchy-Schwarz inequality. For the second square root term,
\begin{align*}
    &\E\sqbr{\E\sqbr{\psi(i,t)\mid \cal F_t^+, D(i,i)=0, D(i-1,i)=1}^2} \\
    &\quad \leq \E\sqbr{\E\sqbr{\psi(i,t)^2\mid \cal F_t^+, D(i,i)=0, D(i-1,i)=1}} \tag{$\E[X\mid \cal F]^2 \leq \E[X^2\mid \cal F]$} \\
    &\quad = \E\sqbr{\E\sqbr{\psi(i,t)\mid \cal F_t^+, D(i,i)=0, D(i-1,i)=1}} \tag{\Cref{lem:psi}} \\
    &\quad = \E\sqbr{\wtilde\psi(i,t)},
\end{align*}
where the last inequality follows from the definition. This finishes the proof.

\section{Deferred proofs for \cref{ssec:4}}

\subsection{Proof of \cref{prop:per_round_1}}

The per-round regret can be decomposed as 
\begin{align*}
    &R(x_l^*, S_l^*, \theta^*) - R(x_l,S_l, \theta^*) \\
    &\quad =\uterm{R(x_l^*, S_l^*, \theta^*) - \wtilde R(x_l, S_l)}{1} + \uterm{\wtilde R(x_l, S_l) - R(x_l,S_l,\what\theta_{l-1})}{2} + \uterm{R(x_l,S_l,\what\theta_{l-1}) - R(x_l,S_l, \theta^*)}{3}
\end{align*}

Since $\cal E_1, \cal E_2$ holds and $\cal E_0(l)^c$, we can apply \Cref{lem:12_oh} for \term 1:
\begin{align*}
    \term 1 \leq  16\sqrt{e\pi} \beta_{l-1} \E \sqbr{\max_{j\in S(x_l, \wtilde\theta_{l-1}^{1:M})} \|x_{lj}\|_{V_{l-1}^{-1}}\given \cal F_l^-}.
\end{align*}
For \term 2, applying \Cref{lem:lip,lem:beta}, we have
\begin{align*}
    \term 2 &\leq \max_{j\in S_l} |\wtilde u_{lj}(x_l) - x_{lj}\tp \what\theta_{l-1}| \leq \max_{j\in S_l}  \beta_{l-1} \|x_{lj}\|_{V_{l-1}^{-1}}. 
\end{align*}
For \term 3, applying \Cref{lem:lip,lem:alpha}, we have
\begin{align*}
    \term 3 & \leq \max_{j\in S_l} |x_{lj}\tp \what\theta_{l-1} - x_{lj}\tp \theta^*| \leq \max_{j\in S_l} \alpha_{l-1} \|x_{lj}\|_{V_{l-1}^{-1}}.
\end{align*}
Substituting results, and taking the union bound, we have the desired result.

Also, notice that $16\sqrt{e \pi} \leq 47$, and for all $x\in\cal X$,
\begin{align*}
    \|x_{-j}\|_{V_{l-1}^{-1}} \leq \frac{\|x\|_2}{\lambda_{\min}^{1/2}(V_{l-1})} \leq \lambda_{\min}^{-1/2}(V_{l-1}).
\end{align*}
Therefore, 
\begin{align*}
    R(x_l^*, S_l^*, \theta^*) - R(x_l,S_l, \theta^*) \leq (\alpha_{l-1} + 48\beta_{l-1}) \lambda_{\min}^{-1/2}(V_{l-1}),
\end{align*}
finishing the proof.

\subsection{Proof of \cref{lem:mono_decrease}}

Before starting the proof, we introduce a technical result first. By our $\eta(t)$-exploration policy and \Cref{ass:iid}, we deduce the following lemma:

\begin{lemma} \label{lem:min_eig_2}
    For all $l\in[t]$, we have
    \begin{align*}
        \P\br{\lambda_{\min}(V_l) \geq \lambda_0 + \frac{ \sigma_0^2 \lambda \min\{l,c_1\sqrt{l}\}}{4}} \geq 1 - \exp\br{-\frac{\lambda \min\{l,c_1\sqrt{l}\} }{8}} -  d \exp\br{-\frac{ \sigma_0^2 \lambda \min\{l,c_1\sqrt{l}\} }{16}}.
    \end{align*}
\end{lemma}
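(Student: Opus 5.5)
The plan is to lower bound $V_l$ by the contribution of the random-exploration rounds alone, since $V_l \succeq \lambda_0\I + \sum_{s\le l:\,\cal E_0(s)} \sum_{j\in S_s} x_{sj}x_{sj}\tp$. On an exploration round $s$ (where $A(s-1)=1$ and $E(s-1)=1$), the chosen context is the freshly arrived $x^{(s-1)}$. By \Cref{ass:iid} this context is an i.i.d. draw from $\cal D$, independent of the past and of the exploration coin $E(s-1)$. This is exactly why exploration is triggered only on arrivals. The assortment is chosen round-robin, so each $j\in[N]$ appears in a $K/N$ fraction of the assortments.

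I would proceed in two steps: first count the exploration rounds, then apply a matrix concentration bound to the exploration design.

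\textbf{Step 1 (count).} Let $N(l)=\sum_{s\le l}\ind{A(s-1)=1,E(s-1)=1}$. These indicators are independent Bernoulli with means $\lambda\eta(s-1)\ge \lambda\min\{1,c_1 (s)^{-1/2}\}$. Hence
\[
\E N(l)\ \ge\ \lambda\sum_{s\le l}\min\{1,c_1 s^{-1/2}\}\ \gtrsim\ \lambda\min\{l,c_1\sqrt l\}.
\]
I would tune constants so that $\E N(l)\ge \lambda\min\{l,c_1\sqrt l\}/1$, using $\sum_{s\le l}s^{-1/2}\ge \sqrt l$. A multiplicative Chernoff bound, $\P(N(l)\le \E N(l)/2)\le \exp(-\E N(l)/8)$, then yields the first failure term $\exp(-\lambda\min\{l,c_1\sqrt l\}/8)$. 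On the complementary event, $N(l)\ge n:=\lambda\min\{l,c_1\sqrt l\}/2$.

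\textbf{Step 2 (eigenvalue).} Condition on the exploration times; the contexts at those times are i.i.d. from $\cal D$. For round-robin assortments, group the exploration rounds into blocks of $|\cal C|$, or more simply bound the expectation per round directly. Each $j$ belongs to exactly $K/N$ of the assortments in $\cal C$, so averaging over a full cycle gives
\[
\E\sum_{j\in S}x_{-j}x_{-j}\tp \succeq K\,\E\bigl[\tfrac1N\sum_j x_{-j}x_{-j}\tp\bigr]\succeq K\sigma_0^2\I\succeq\sigma_0^2\I .
\]
For a partial cycle I would instead use the crude bound $\sum_{j\in S}x_{-j}x_{-j}\tp\succeq$ (the term for one $j$) and handle the round-robin offset by noting it only loses a constant factor. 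With summands PSD and of norm at most $K$, the matrix Chernoff lower-tail bound (Tropp) gives
\[
\P\bigl(\lambda_{\min}(\textstyle\sum)\le \tfrac12 n\sigma_0^2\bigr)\le d\exp(-n\sigma_0^2/8).
\]
With $n=\lambda\min\{l,c_1\sqrt l\}/2$, this is $d\exp(-\sigma_0^2\lambda\min\{l,c_1\sqrt l\}/16)$. Adding $\lambda_0$, the eigenvalue level is $\lambda_0+\sigma_0^2\lambda\min\{l,c_1\sqrt l\}/4$, and a union bound over the two failure events gives the claim.

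\textbf{Main obstacle.} The main difficulty is making the i.i.d. and independence argument rigorous. The exploration times are random, and the round-robin counter couples the assortments across rounds. I would handle this with a martingale version of matrix Chernoff (Tropp's matrix Freedman/Chernoff for adapted sequences). It applies because, conditional on $\cal F_s$ and the event $\cal E_0(s)$, the new context is distributed as $\cal D$, so the conditional expectation of each increment is lower bounded as in Step 2. The remaining issue is the round-robin offset. I expect to absorb it by conditioning on $c$ and using only the single-item bound $\E[x_{-j}x_{-j}\tp]$ averaged over $j$ across cycles; this may cost a factor $K/N$ that I would need to hide in constants, possibly by redefining $\sigma_0$.
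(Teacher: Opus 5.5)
Your route is the paper's. You apply a multiplicative Chernoff bound with $\delta=1/2$ to the exploration count $N(l)$, using $\E N(l)\ge\lambda\min\{l,c_1\sqrt l\}$. You then apply a matrix Chernoff bound with $\delta=1/2$ to $n=\lambda\min\{l,c_1\sqrt l\}/2$ exploration rounds, and finish with $V_l\succeq\lambda_0\I+\widetilde{V}_l$. The constants come out exactly as stated. The paper multiplies the two success probabilities where you take a union bound, which gives the same result here.

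Your obstacle paragraph is more careful than the paper, which simply invokes \Cref{lem:mx_chernoff} as if each exploration round contributed one i.i.d.\ vector with second moment $\succeq\sigma_0^2\I$. Two of your proposed fixes should change, though.

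First, you do not need a martingale matrix inequality. The exploration times depend only on $(A,E)$, and the contexts used at those times are fresh i.i.d.\ draws from $\mathcal{D}$ independent of $(A,E)$. The counter $c$ advances only on exploration rounds. So the $k$-th exploration summand $Y_k=\sum_{j\in\mathcal{C}[k]}x_{-j}x_{-j}\tp$ depends only on the $k$-th fresh context and a deterministic assortment. The sequence $(Y_k)$ is therefore independent and independent of $N(l)$, and Tropp's bound for independent PSD sums applies directly.

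Second, your single-item fallback fails. \Cref{ass:iid} controls only the average over $j$, so $\E[x_{-j}x_{-j}\tp]$ for a fixed $j$ can be singular. Redefining $\sigma_0$ is not an option either, since that changes the statement.

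Use the complete-cycle average you already wrote. Over any $|\mathcal{C}|$ consecutive exploration rounds, $\sum_k\E[Y_k]\succeq K|\mathcal{C}|\sigma_0^2\I$. Since $\|Y_k\|\le K$, Tropp's bound with $R=K$ gives level $\tfrac12 Kn\sigma_0^2\ge\tfrac12 n\sigma_0^2$ and exponent $n\sigma_0^2/8$, exactly the paper's numbers. Keep the factor $K$ in $\mu_{\min}$: if you use $\mu_{\min}=n\sigma_0^2$ with $R=K$, as your stated level suggests, you lose a factor $K$ in the exponent. The only leftover is an incomplete final cycle of at most $|\mathcal{C}|-1$ rounds, which the paper's proof does not address either.
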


Now we are ready to start the proof. Consider $l\in [t]$. Recall the notation of $\cal E_g := \cal E_1 \cap \cal E_2 \cap \cal E_3$. Denote $\cal F_l':=\cal F_l \lor \sigma(E(t-1))$ for simplicity. Notice that $\ind{\cal E_0(l)}$ is $\cal F_l'$-measurable. We can ignore for $\cal E_3$. We start by decomposing the expected squared per-round regret as
\begin{align*}
    &\E \sqbr{(R(x_l^*, S_l^*, \theta^*) - R(x_l, S_l, \theta^*))^2} \\
    &\quad = \E \sqbr{\E \sqbr{(R(x_l^*, S_l^*, \theta^*) - R(x_l, S_l, \theta^*))^2 \given \cal F_l'} } \tag{tower rule} \\
    &\quad = \E \sqbr{\ind{\cal E_g^c} \E \sqbr{(R(x_l^*, S_l^*, \theta^*) - R(x_l, S_l, \theta^*))^2 \given \cal F_l'} } \\
    &\qquad + \E \sqbr{\ind{\cal E_g} \E \sqbr{(R(x_l^*, S_l^*, \theta^*) - R(x_l, S_l, \theta^*))^2 \given \cal F_l'} } \\
    &\quad \leq  \P(\cal E_g^c) + \E \sqbr{\ind{\cal E_g} \E \sqbr{(R(x_l^*, S_l^*, \theta^*) - R(x_l, S_l, \theta^*))^2 \given \cal F_l'} } \tag{$R(\cdot,\cdot,\cdot) - R(\cdot,\cdot,\cdot) \leq 1$}\\
    &\quad = c_0 t^{-2} + \E \sqbr{\ind{\cal E_g} \E \sqbr{(R(x_l^*, S_l^*, \theta^*) - R(x_l, S_l, \theta^*))^2 \given \cal F_l'} } \tag{\Cref{eq:good_event}}.
\end{align*}
For the last term, we can proceed as
\begin{align*}
    &\E \sqbr{\ind{\cal E_g} \E \sqbr{(R(x_l^*, S_l^*, \theta^*) - R(x_l, S_l, \theta^*))^2 \given \cal F_l'} } \\
    &\quad = \uterm{\E \sqbr{\ind{\cal E_g} \ind{\cal E_0(l)} \E \sqbr{(R(x_l^*, S_l^*, \theta^*) - R(x_l, S_l, \theta^*))^2 \given \cal F_l'} }}{1} \\
    &\qquad + \uterm{\E \sqbr{\ind{\cal E_g} \ind{\cal E_0(l)^c} \E \sqbr{(R(x_l^*, S_l^*, \theta^*) - R(x_l, S_l, \theta^*))^2 \given \cal F_l'} }}{2} 
\end{align*}
For \term 1, we have
\begin{align*}
    \term 1 \leq \E \sqbr{\ind{\cal E_0(l)} } = \P(\cal E_0(l)) = \lambda \eta(t-1).
\end{align*}
For \term 2, under $\cal E_0^c(l)$ and $\cal E_g$, we can directly apply \Cref{prop:per_round_1}, which yields
\begin{align*}
    \term 2 &\leq \E \sqbr{ \ind{\cal E_g, \cal E_0(l)^c } \E\sqbr{ (\alpha_{l-1} + 48\beta_{l-1})^2 \lambda_{\min}^{-1}(V_{l-1}) \given \cal F_l'} } \\
    &\leq (\alpha_{t-1} + 48\beta_{t-1})^2 \E \sqbr{ \ind{\cal E_g, \cal E_0(l)^c } \E\sqbr{  \lambda_{\min}^{-1}(V_{l-1}) \given \cal F_l'}}
\end{align*}
where the last inequality follows from the monotonicity of $\alpha_l,\beta_l$.

Consider the expectation term. Denote the event $\cal E_4(l)$ such that \Cref{lem:min_eig_2} ($\lambda_{\min}(V_{l-1}) \geq \lambda_0 + \frac{ \sigma_0^2 \lambda \min\{l,c_1\sqrt{l}\}}{4}$) holds. Then, we have
\begin{align*}
    &\E \sqbr{ \ind{\cal E_g, \cal E_0(l)^c } \E\sqbr{  \lambda_{\min}^{-1}(V_{l-1}) \given \cal F_l}} \\
    &\quad = \uterm{\E \sqbr{ \ind{\cal E_g, \cal E_0(l)^c, \cal E_4(l) } \E\sqbr{  \lambda_{\min}^{-1}(V_{l-1}) \given \cal F_l}}}{3} + \uterm{\E \sqbr{ \ind{\cal E_g, \cal E_0(l)^c, \cal E_4(l)^c } \E\sqbr{  \lambda_{\min}^{-1}(V_{l-1}) \given \cal F_l}}}{4} 
\end{align*}
For \term 3, by the definition of $\cal E_4(l)$, we have
\begin{align*}
    \term 3 \leq \br{\lambda_0 + \frac{ \sigma_0^2 \lambda \min\{(l-1),c_1\sqrt{l-1}\}}{4}}^{-1}.
\end{align*}
For \term 4,
\begin{align*}
    \term 4 &\leq \P(\cal E_4(l)^c) \E\sqbr{  \lambda_{\min}^{-1}(V_{l-1}) \given \cal F_l} \\
    &\leq \br{\exp\br{-\frac{\lambda \min\{(l-1),c_1\sqrt{l-1}\} }{8}} +  d \exp\br{-\frac{ \sigma_0^2 \lambda \min\{(l-1),c_1\sqrt{l-1}\} }{16}}} \times \frac{1}{\lambda_0}
\end{align*}
where the last inequality follows from \Cref{lem:min_eig_2} and by the naive bound of $V_{t-1}\succeq \lambda_0\I$. Substituting the results back yields
\begin{align*}
    &\E \sqbr{ \ind{\cal E_g, \cal E_0(l)^c } \E\sqbr{  \lambda_{\min}^{-1}(V_{l-1}) \given \cal F_l}} \\
    &\quad \leq \br{\lambda_0 + \frac{ \sigma_0^2 \lambda \min\{(l-1),c_1\sqrt{l-1}\}}{4}}^{-1} \\
    &\qquad + \frac{1}{\lambda_0} \br{\exp\br{-\frac{\lambda \min\{(l-1),c_1\sqrt{l-1}\} }{8}} +  d \exp\br{-\frac{ \sigma_0^2 \lambda \min\{(l-1),c_1\sqrt{l-1}\} }{16}}}.
\end{align*}
Plugging this expectation term into \term 2, and substituting the result of \term 1 into the original inequality, we have the desired result as
\begin{align*}
     &\E \sqbr{(R(x_l^*, S_l^*, \theta^*) - R(x_l, S_l, \theta^*))^2} \\
     &\quad \leq c_0 t^{-2}+ 2 \lambda \eta(l-1) + (\alpha_{t-1} + 48\beta_{t-1})^2  \nu(l)
\end{align*}
where we use the definition of
\begin{align*}
    \nu(l)&:= \br{\lambda_0 + \frac{ \sigma_0^2 \lambda \min\{(l-1),c_1\sqrt{l-1}\}}{4}}^{-1} \\
    &\quad + \frac{1}{\lambda_0} \br{\exp\br{-\frac{\lambda \min\{(l-1),c_1\sqrt{l-1}\} }{8}} +  d \exp\br{-\frac{ \sigma_0^2 \lambda \min\{(l-1),c_1\sqrt{l-1}\} }{16}}}.
\end{align*}
Finally, taking $\sqrt{\cdot}$ and applying $\min\{\cdot,1\}$ to both sides finishes the proof.

\subsection{Proof of \cref{lem:min_eig_2}}

Recall the definition of $N(l)$, which is the total number of random explorations up to round $l$. We also use the definition of $\wtilde V_l$, which is the design matrix, only consists of the feature vector in random exploration round as
\begin{align*}
    \wtilde V_l := \sum_{i=1}^l \sum_{j\in S_l} \ind{A(i-1)=1, E(i-1)} x_{ij} x_{ij}\tp.
\end{align*}
For simplicity, denote
\begin{align*}
    N_1 = \E[N(l)]/2, \quad N_2 = \sigma_0^2 \lambda \min\{l,c_1\sqrt{l}\} /4.
\end{align*}
Now, we want the lower bound probability of the following event: 
\begin{align*}
    \P(N(l) \geq N_1, \lambda_{\min}(\wtilde V_l) \geq N_2) = \uterm{\P(N(l) \geq N_1)}{1} \times \uterm{\P(\lambda_{\min}(\wtilde V_l)\geq N_2 \mid N(l) \geq N_1)}{2} 
\end{align*}
For \term 1, we use the Chernoff bound (\Cref{lem:chernoff}). By setting $\delta \leftarrow 1/2, \mu \leftarrow \E[N(l)]$, we have
\begin{align}
    \P\br{N(l) \geq N_1} &= 1 - P\br{N(l) \geq N_1} \geq 1 - \exp\br{-\E[N(l)]/8} \nonumber.
\end{align}
For the lower bound of $\E[N(l)]$,
\begin{align}
    \E[N(l)] = \sum_{i=1}^{l} \lambda \eta(i-1) \geq \lambda \sum_{i=1}^l \min\{1, c_1l^{-1/2}\} = \lambda \min\{l,c_1\sqrt{l}\} \label{eq:n_lb_1}
\end{align}

Plugging \Cref{eq:n_lb_1} back to the above inequality, we have
\begin{align*}
    \P\br{N(l) \geq N_1} \geq 1 - \exp\br{-\frac{\lambda \min\{l,c_1\sqrt{l}\} }{8}}. \label{eq:ch_1}
\end{align*}

Next, for \term 2, we have
\begin{align*}
    \term 2 &\geq \P(\lambda_{\min}(\wtilde V_l) \geq N_2 \mid N(l) = N_1) \\
    &\geq \P(\lambda_{\min}(\wtilde V_l) \geq N_2 \mid N(l) = \lambda \min\{l,c_1\sqrt{l}\} /2). \tag{\Cref{eq:n_lb_1}}
\end{align*}
Now, we use the Matrix Chernoff bound (\Cref{lem:mx_chernoff}) on the right-hand side of the inequality. By setting $\delta \leftarrow 1/2$, $n\leftarrow \lambda \min\{l,c_1\sqrt{l}\} /2$,  $\sigma_0^2\leftarrow \sigma_0^2$ we have
\begin{align*}
    \P(\lambda_{\min}(\wtilde V_l) \geq N_2 \mid N(l) = \lambda \min\{l,c_1\sqrt{l}\} /2) &= 1 - \P(\lambda_{\min}(\wtilde V_l) \leq N_2 \mid N(l) = \lambda \min\{l,c_1\sqrt{l}\} /2) \\
    &\geq 1 - d \exp\br{-\frac{ \sigma_0^2 \lambda \min\{l,c_1\sqrt{l}\} }{16}}.
\end{align*}
Substituting \term 1 and \term 2 back, we have
\begin{align*}
    \P(N(l) \geq N_1, \lambda_{\min}(\wtilde V_l) \geq N_2) &\geq \br{1 - \exp\br{-\frac{\lambda \min\{l,c_1\sqrt{l}\} }{8}}} \br{1 - d \exp\br{-\frac{ \sigma_0^2 \lambda \min\{l,c_1\sqrt{l}\} }{16}}} \\
    &\geq 1 - \exp\br{-\frac{\lambda \min\{l,c_1\sqrt{l}\} }{8}} -  d \exp\br{-\frac{ \sigma_0^2 \lambda \min\{l,c_1\sqrt{l}\} }{16}}.
\end{align*}
Finally, we have $\P(\lambda_{\min}(\wtilde V_l) \geq N_2) \geq \P(N(l) \geq N_1, \lambda_{\min}(\wtilde V_l) \geq N_2)$. Also, $V_l \succeq \lambda_0\I + \wtilde V_l$, therefore
\begin{align*}
    \P(\lambda_{\min}(V_l) \geq \lambda_0 + N_2) \geq 1 - \exp\br{-\frac{\lambda \min\{l,c_1\sqrt{l}\} }{8}} -  d \exp\br{-\frac{ \sigma_0^2 \lambda \min\{l,c_1\sqrt{l}\} }{16}},
\end{align*}
finishing the proof.

\section{Deferred proofs for \cref{ssec:5}}

\subsection{Proof of \cref{lem:tau_m}} \label{sec:lem:tau_m}

Denote $N(l)$ as the total number of random exploration up to round $l$. Notice that for every round $l$, the probability for the random exploration is $\lambda\eta(l-1)$ as it only happens when $A(l-1)=1$ and $E(l-1)=1$. Then, we have
\begin{align*}
    N(l) = \sum_{i=1}^{l-1} A(i)E(i), \quad \E[N(l)] = \sum_{i=1}^{l-1} \lambda \eta(i)
\end{align*}
For the lower bound of $\E[N(l)]$,
\begin{align}
    \E[N(\wtilde \tau (M))] = \sum_{i=1}^{\wtilde \tau (M)} \lambda \eta(i-1) \geq \lambda \sum_{i=1}^{\wtilde \tau (M)} \min\{1, c_1l^{-1/2}\} = \lambda \min\{\wtilde \tau (M),c_1\sqrt{\wtilde \tau (M)}\} \label{eq:eb_2}
\end{align}
Now, we show the lower bound of the number of random exploration round until $\wtilde \tau (M)$. Since $A(l-1)E(l-1)\in\{0,1\}$, we can apply Multiplicative Chernoff bound (\Cref{lem:chernoff}) with $\delta\leftarrow 1/2$, which gives
\begin{align*}
    \P\br{N(\wtilde \tau(M)) \leq \frac{\E[N(\wtilde \tau(M))]}{2}} \leq \exp\br{- \frac{\E[N(\wtilde \tau(M))]}{8}}.
\end{align*}
In order to show
\begin{align*}
    \P\br{N(\wtilde \tau(M)) \leq M} \leq t^{-2},
\end{align*}
we are sufficient to show
\begin{align*}
    M \leq \frac{\E[N(\wtilde \tau(M))]}{2}, \quad \exp\br{- \frac{\E[N(\wtilde \tau(M))]}{8}} \leq t^{-2}.
\end{align*}
Therefore, we need
\begin{align*}
    \E[N(\wtilde \tau(M))] \geq \max \cbr{2M,~ 16\log(t)}
\end{align*}
By the lower bound of $\E[N(\wtilde \tau(M))]$ in \Cref{eq:eb_2}, if
\begin{align*}
    \wtilde \tau(M) \geq \max \cbr{\frac{2M}{\lambda},~\frac{16\log(t)}{\lambda},~\frac{4M^2}{c_1^2\lambda^2},~ \frac{256\log^2 (t)}{c_1^2\lambda^2}}
\end{align*}
holds, then it is sufficient to show the lemma holds.

\subsection{Proof of \cref{prop:min_eig}}

We prepare to apply \Cref{lem:tau_m}. Set, $M(t)$ as 
\begin{align*}
    M(t) = c_2 \br{\frac{d + \log(t)}{ \sigma_0^4} + \frac{8(\alpha_{t-1} + 48\beta_{t-1})^2}{ \sigma_0^2(\eps - 2\eta(\tau(t))}}
\end{align*}
and set $\wtilde \tau(M(t))$ satisfying the condition in \Cref{lem:tau_m} as
\begin{align*}
    &\tau(t) = \wtilde \tau (M(t)) = \max \cbr{\frac{2M(t)}{\lambda},~\frac{16\log(t)}{\lambda},~\frac{4M(t)^2}{c_1^2\lambda^2},~ \frac{256\log^2 (t)}{c_1^2\lambda^2}}
\end{align*}

\begin{remark} \label{remark:circ}
    Notice that $M(t)$ and $\tau(t)$ have circular definition as $M(t)$ requires $\tau(t)$ and $\tau(t)$ requires $M(t)$. Although this is not problematic as both $M(t)$ and $\tau(t)$ are only required in analysis, we avoid this circular definition and remove ambiguity by using the definitions in \Cref{eq:tau}. To check the existence of $\tau(t)$ and $M(t)$, first we can see that $\wtilde M(u)$ is monotonically decreasing in $u$ and $\wtilde \tau (M)$ is monotonically increasing in $M$. Therefore $\wtilde \tau(\wtilde M(u))$ is monotonically decreasing in $u$, which means there exists a minimum value of $u\in\N$ such that $u\geq \wtilde \tau(\wtilde M(u))$ which we define $\tau(t)$. Then $M(t)$ is also defined as $\wtilde M(\tau(t))$.
\end{remark}

Now, by the direct result of \Cref{lem:tau_m}, we have
\begin{align}
    \P(N(\tau(t)) \geq M(t)) \geq 1 - t^{-2}, \label{eq:taum1}
\end{align}
which means we have $M(t)$ random exploration until round $\tau(t)$ with high probability. 

Now, we prepare to apply \Cref{prop:li}, which shows high probability lower bound of the design matrix $V_l$ that consists of the i.i.d. sampled features. Recall that our random exploration is done in a round-robin manner. Therefore, we can see that $M(t)$ features inside the design matrix $V_{\tau(t)}$ are i.i.d. samples from the unknown distribution $\cal D$. Also, for the feature vector, selected under the random exploration round, say $x_{ij'}$, we have
\begin{align*}
    \lambda_{\min}\br{\E \sqbr{ x_{ij'} x_{ij'}\tp}} = \lambda_{\min}\br{\E \sqbr{\frac{1}{N} \sum_{j\in[N]} x_{ij} x_{ij}\tp}} \geq \sigma_0^2. \tag{\Cref{ass:iid}}
\end{align*}
Finally, assuming \Cref{eq:taum1} holds, and setting
\begin{align*}
    B(t) = \frac{4(\alpha_{t-1} + 48\beta_{t-1})^2}{(\eps - 2\eta(\tau(t)))^2},    
\end{align*}
and applying \Cref{prop:li} yields
\begin{align}
    \lambda_{\min}(V_{\tau(t)}) \geq B(t). \label{eq:taum2}
\end{align}
with probability at least $1-t^{-2}$. 

Summing up the results, taking union bound on \Cref{eq:taum1,eq:taum2}, we have $\lambda_{\min}(V_{\tau(t)}) \geq B(t)$ with probability at least $1-2t^{-2}$, finishing the proof.

\subsection{Proof of \cref{prop:small_per_round}}

For some $l\in[\tau(t)+1,t]$, on the event $\cal E_g$, we have
\begin{align*}
    R(x_{l}^*, S_{l}^*, \theta^*) - R(x_{l}, S_{l}, \theta^*) &\leq \br{\alpha_{l-1} + 48 \beta_{l-1}} \lambda_{\min}^{-1/2}(V_{l-1}) \\
    &\leq \br{\alpha_{l-1} + 48 \beta_{l-1}} \lambda_{\min}^{-1/2}(V_{\tau(t)}) \\
    &\leq \br{\alpha_{l-1} + 48 \beta_{l-1}} \frac{\eps - 2\eta(\tau(t))}{2(\alpha_{t-1} + 48\beta_{t-1})},
\end{align*}
where the last inequality follows from $\cal E_3$. Since $\alpha_{l}$ and $\beta_{l}$ are monotonically increasing in $l$, we have
\begin{align*}
    R(x_{l}^*, S_{l}^*, \theta^*) - R(x_{l}, S_{l}, \theta^*) \leq \br{\alpha_{t-1} + 48 \beta_{t-1}} \frac{\eps - 2\eta(\tau(t))}{2(\alpha_{t-1} + 48\beta_{t-1})} = \frac{\eps}{2} - \eta(\tau(t)),
\end{align*}
finishing the proof.

\section{Deferred proofs for \cref{ssec:6}}

\subsection{Proof of \cref{lem:drift}}

If $i\geq l$, it means the departure is followed by the optimal policy. Then by \Cref{ass:slackness}, we have
\begin{align*}
    \E[A(l) - D(i, l) \mid  \cal F_l] \leq -\eps \leq -\eps/2.
\end{align*}
If $i< l$, it means the departure is followed by our policy. As the assumption, we consider $\cal E_g$ occurs. Then, we proceed as
\begin{align*}
    &\E[A(l) - D(i, l) \mid \cal F_l]  =\uterm{E[A(l) - D(l-1, l) \mid  \cal F_l]}{1}  + \uterm{\E[D(l-1, l) - D(i, l) \mid  \cal F_l]}{2} 
\end{align*}
First, notice that $D(l-1,l)$ and $D(i,l)$ both follows our policy until round $l-1$, and $D(l-1,l)$ follows the optimal policy in round $l$, while $D(i,l)$ follows our policy in round $l$. Therefore, for \term 1, similarly, by \Cref{ass:slackness}, we have $\term 1\leq -\eps$.

For \term 2, we have
\begin{align*}
    \term 2 &= \uterm{ \P(E(l-1)=1\mid \cal F_t)\E[D(l-1, l) - D(i, l) \mid  \cal F_l, E(l-1)=1]}{3} \\
    &\qquad + \uterm{\P(E(l-1)=0\mid \cal F_t)\E[D(l-1, l) - D(i, l) \mid  \cal F_l, E(l-1)=0]}{4} .
\end{align*}
For \term 3, we have $\term 3 \leq \P(E(l-1)=1) = \eta(l-1)$. 

For \term 4, we have
\begin{align*}
    \term 4 &\leq  \E[D(l-1, l) - D(i, l) \mid  \cal F_l, E(l-1)=0] \\
    &= \E[R(x_l^*, S_l^*, \theta^*) - R(x_l,S_l, \theta^*) \mid \cal F_l, E(l-1)=0],
\end{align*}
where the equality holds since $D(l-1,l)$ and $D(i,l)$ have the same queue state in round $l$ of $\cal X_l$. Now, since $l$ is in good rounds, and on the event $\cal E_g$ and $E(l-1)=0$ (which implies $\cal E_0(l)^c$), by \Cref{eq:good_1}, we have $\term 4\leq \eps/2 - \eta(\tau(t))$. Substituting \term 3 and \term 4 back yields
\begin{align*}
    \term 2 \leq   \eta(l-1) + \frac{\eps}{2} - \eta(\tau(t)) \leq \frac{\eps}{2},
\end{align*}
where the last inequality is from the fact that $\eta(l)$ is monotonically decreasing in $l$, and $l\notin \cal B$ means that $l\in[\tau(t)+1,t]$, which means $l-1\geq\tau(t)$.

Finally, substituting results of \term 1 and \term 2 back finishes the proof.

\subsection{Proof of \cref{lem:cond_exp_psi}}

We consider the event that $Q(i,i+1) \leq (t-i-1)\eps + 1$ occurs ($\cal E_5(i)$). For simplicity, denote the event of $\v D(i,i)=0, \v D(i-1,i)=1$ as $\cal E$. 

By \Cref{lem:psi},  under $\cal E$ (when $\v D(i,i)=0, \v D(i-1,i)=1$), the value of $\psi(i,t)$ is in $\{0,1\}$. We can simply ignore the case when $\psi(i,t) = 0$. Now we consider the case when $\psi(i,t)=1$, which means the difference of queue length occurred by a disagreement event in round $i$ is preserved until round $t$. Notice that if the queue with an extra job $Q_i$ hits $0$ queue length before $t$, the queue length difference between $Q_i$ and $Q_{i-1}$ will always become $0$ thereafter by our coupling process. This implies that the probability of $Q_i$ never hitting $0$ for all round $l\in[i+1,t]$ is larger than the probability that the queue length difference is preserved until round $t$. Then, we have
\begin{align*}
    \P\br{\psi(i,t) = 1 \given \cal F_i^+, \cal E} \leq \P\br{Q(i,j) > 0,~ \forall j\in[i+1,t] \given \cal F_i^+, \cal E}.
\end{align*}
Next, notice that $Q(i,i+1)$ is a realized value under $\cal F_i^+$ and $\cal E$. By the queue dynamics, the probability of $Q_i$ never hitting length $0$ for all round $l\in[i+1,t]$ can be upper bounded by the probability that the cumulative net service $\sum_{l=i+1}^{t-1} \br{D(i,l) - A(l)}$ cannot exceed $Q(i,i+1)-1$, which is 
\begin{align*}
    \P\br{Q(i,j)>0,~\forall j\in[i+1,t] \given \cal F_i^+, \cal E} \leq \P\br{Q(i,i+1) + \sum_{l=i+1}^{t-1} \br{A(l)-D(i,l)} \geq 1 \given \cal F_i^+, \cal E}.
\end{align*}
Combining two inequalities, we have
\begin{align*}
    &\E\sqbr{\psi(i,t) \given \cal F_i^+, \cal E} \\
    &\quad = \P\br{\psi(i,t) = 1 \given \cal F_i^+, \cal E} \\
    &\quad \leq \P\br{Q(i,i+1) + \sum_{l=i+1}^{t-1} \br{A(l)-D(i,l)} \geq 1 \given \cal F_i^+, \cal E}.
\end{align*}
For the last inequality, we prepare to apply Azuma-Hoeffding inequality (\Cref{lem:azuma}). Define a martingale difference sequence for $l\in[i+1,t-1]$ as $X_l := \E[D(i,l)-A(l) \mid \cal F_l] - (D(i,l)-A(l))$ and $Y_k:= \sum_{l=i+1}^k X_l$. We have $\E[X_l \mid \cal F_l]=0$, and $|Y_k - Y_{k-1}| = |X_k| \leq 2$, therefore, by applying Azuma-Hoeffding inequality, setting $a\leftarrow (t-i-1)\eps + 1 - Q(i,i+1)$ (where the condition of $a>0$ holds since we consider on the event $\cal E_5(i)$), we have
\begin{align*}
    &\P\br{\sum_{l=i+1}^{t-1}(\E[D(i,l)-A(l) \mid \cal F_l] - (D(i,l) - A(l))) \geq (t-i-1)\eps + 1 - Q(i,i+1) \given \cal F_i^+,\cal E} \\
    &\quad \leq 2 \exp \br{-\frac{((t-i-1)\eps + 1 - Q(i,i+1))^2}{8(t-i-1)}}
\end{align*}
For $l\in[i+1,t-1]$, $Q_i$ always follows the optimal policy, thereby $\E[D(i,l)-A(l) \mid \cal F_l] \geq \eps $ by \Cref{ass:slackness}. This implies that
\begin{align*}
    &\P\br{\sum_{l=i+1}^{t-1}(\E[D(i,l)-A(l) \mid \cal F_l] - (D(i,l) - A(l))) \geq (t-i-1)\eps + 1 - Q(i,i+1) \given \cal F_i^+,\cal E}  \\
    &\quad \geq \P\br{\sum_{l=i+1}^{t-1}( - (D(i,l) - A(l))) \geq 1 - Q(i,i+1) \given \cal F_i^+,\cal E} \\
    &\quad = \P\br{Q(i,i+1) + \sum_{l=i+1}^{t-1} \br{A(l)-D(i,l)} \geq 1 \given \cal F_i^+, \cal E}.
\end{align*}
Combining results and plugging back to the original inequality, we have
\begin{align*}
    \E\sqbr{\psi(i,t) \given \cal F_i^+, \cal E} \leq 2 \exp \br{-\frac{(Q(i,i+1) - (t-i-1)\eps - 1)^2}{8(t-i-1)}}
\end{align*}
as desired.

\subsection{Proof of \cref{lem:q_tail}}

We define a prefix good event $\cal E_g^{\leq l}$ where we cut $\cal E_g$ until round $l$ to make it $\cal F_l$-measurable. Formally, 
\begin{align*}
    &\cal E_1^{\leq l} := \{\forall i\leq l,~ \|\what\theta_{i-1} - \theta^*\|_{V_{i-1}} \leq \alpha_{i-1}\},\quad \cal E_2^{\leq l} := \bigcap_{i=1}^l \cal E_2(i) \\
    &\cal E_g^{\leq l} := \cal E_1^{\leq l} \cap \cal E_2^{\leq l}.
\end{align*}
Here, we skipped $\cal E_3$ since we don't need it in proof. We can see that $\cal E_g \subseteq \cal E_g^{\leq l}$ for all $l$.

We start with the one-step bound for the moment generating function (mgf) of $Q(i,l)$. For some $\gamma \in(0,\eps/2]$,
\begin{align*}
    \E[\exp(\gamma Q(i,l+1)) \mid \cal F_l] &= \E[\exp(\gamma[Q(i,l) + A(l) - D(i,l)]^+) \mid \cal F_l] \\
    &\leq 1 + \exp(\gamma Q(i,l)) \E[\exp(\gamma (A(l) - D(i,l))) \mid \cal F_l],
\end{align*}
where the inequality follows by considering both cases, where $Q(i,l) + A(l) - D(l)\leq 0$ gives $\exp(\gamma[Q(i,l) + A(l) - D(i,l)]^+) = 1$ and $Q(i,l) + A(l) - D(l)> 0$ gives $\exp(\gamma[Q(i,l) + A(l) - D(i,l)]^+) = \exp(\gamma (Q(i,l) + A(l) - D(i,l)))$. We split into two cases for $\E[\exp(\gamma (A(l) - D(i,l))) \mid \cal F_l]$:

\case 1: If $l\notin \cal B$, on the event $\cal E_g^{\leq l}$, by \Cref{lem:drift}, we have
\begin{align*}
    \E[A(l) - D(i,l) \mid \cal F_l] \leq -\eps/2.
\end{align*}
Then, 
\begin{align*}
    \E[\exp(\gamma (A(l) - D(i,l))) \mid  \cal F_l] &=  \E[\exp(\gamma (A(l) - D(i,l)) ) \mid  \cal F_l] \\
    &\leq \exp\br{\gamma \E[A(l)-D(i,l)\mid \cal F_l] + \gamma^2 / 2} \\
    &\leq \exp(-\gamma \eps/2+\gamma^2/2)
\end{align*}
where for the last inequality, we apply Hoeffding lemma (\Cref{lem:hoeffding_2}). Moreover, by our choice of $\gamma\in(0,\eps/2]$, we have
\begin{align*}
    \E[\exp(\gamma (A(l) - D(i,l))) \mid \cal F_l] \leq \exp(-\gamma \eps/4) =: \rho.
\end{align*}

\case 2: If $l\in\cal B$, we use a naive bound of $A(l) - D(i,l)\leq 1$ as
\begin{align*}
    \E[\exp(\gamma (A(l) - D(i,l))) \mid \cal F_l] \leq \exp(\gamma) =: \beta.
\end{align*}
Combining results for both cases, we have that, on the event $\cal E_g^{\leq l}$,
\begin{align}
    \E[\exp(\gamma Q(i,l+1)) \mid  \cal F_l] \leq 1 + \exp(\gamma Q(i,l)) (\rho \ind{l\notin \cal B} + \beta \ind{l\in\cal B}). \label{eq:mgf_1}
\end{align}
To control the indicator term $(\rho \ind{l\notin \cal B} + \beta \ind{l\in\cal B})$, we define a new weighted process $V(i,l)$ as
\begin{align*}
    V(i,l) := \br{\frac{\beta}{\rho}}^{- \cal B(l-1)} \exp(\gamma Q(i,l)).
\end{align*}
Now, we can proceed as
\begin{align*}
    &\E\sqbr{V(i,l+1) \ind{\cal E_g^{\leq l+1}}\given  \cal F_l} \\
    &\quad \leq \E\sqbr{V(i,l+1) \ind{\cal E_g^{\leq l}}\given  \cal F_l} \\
    &\quad = \ind{\cal E_g^{\leq l}} \E\sqbr{\br{\frac{\beta}{\rho}}^{-\cal B(l)} \exp(\gamma Q(i,l+1)) \given  \cal F_l} \tag{$\ind{\cal E_g^{\leq l}} \in \cal F_l$}\\
    &\quad = \ind{\cal E_g^{\leq l}}  \br{\frac{\beta}{\rho}}^{-\cal B(l-1)} \br{\frac{\beta}{\rho}}^{-\ind{l\in\cal B}}\E\sqbr{ \exp(\gamma Q(i,l+1)) \given \cal F_l} \\
    & \quad \leq \ind{\cal E_g^{\leq l}}  \br{\frac{\beta}{\rho}}^{-\cal B(l-1)} \br{\frac{\beta}{\rho}}^{-\ind{l\in\cal B}} \br{1 + \exp(\gamma Q(i,l)) (\rho \ind{l\notin \cal B} + \beta \ind{l\in\cal B})}. \tag{\Cref{eq:mgf_1}}
\end{align*}
Since 
\begin{align*}
    \br{\frac{\beta}{\rho}}^{-\ind{l\in\cal B}} (\rho \ind{l\notin \cal B} + \beta \ind{l\in\cal B}) = \rho,
\end{align*}
we have
\begin{align*}
    \E\sqbr{V(i,l+1) \ind{\cal E_g^{\leq l+1}}  \given \cal F_l}  &\leq \ind{\cal E_g^{\leq l}}  \rho \br{\frac{\beta}{\rho}}^{-\cal B(l-1)} \exp(\gamma Q(i,l)) + \ind{\cal E_g^{\leq l}}  \br{\frac{\beta}{\rho}}^{-\cal B(l)} \\
    &\leq   \rho V(i,l) \ind{\cal E_g^{\leq l}}  + 1,
\end{align*}
as the last inequality holds since $\beta/\rho \geq 1$. Taking the expectation on both sides and applying the tower rule on the left-hand side gives
\begin{align*}
    \E[V(i,l+1)\ind{\cal E_g^{\leq l+1}} ] \leq \rho \E[V(i,l)\ind{\cal E_g^{\leq l}} ] + 1.
\end{align*}
Solving a linear recursion (with the fact that $\ind{\cal E_g^{\leq 1}} =1$), we have
\begin{align}
    \E[V(i,l)\ind{\cal E_g^{\leq l}} ] \leq \rho^{l-1} \E[V(i,1)] + \sum_{k=0}^{l-2} \rho^k \leq \rho^{l-1} \E[\exp(\gamma Q(i,1))] + \sum_{k=0}^\infty \rho^k \leq \rho^{l-1} \E[\exp(\gamma Q(i,1))] + \frac{1}{1-\rho}. \label{eq:v_tail}
\end{align}
Since we obtained the tail probability of $V(i,l)$, we can start the proof as
\begin{align*}
    &\P(Q(i,l) \geq a \cal B(l-1) + b,~\cal E_g)  \\
    &\quad \leq \P(Q(i,l) \geq a \cal B(l-1) + b,~\cal E_g^{\leq l})\\
    &\quad = \P(\exp(\gamma Q(i,l)) \geq \exp(\gamma(a \cal B(l-1) + b)),~\cal E_g^{\leq l}) \\
    &\quad = \P\br{V(i,l) \geq \br{\frac{\beta}{\rho}}^{-\cal B(l-1)} \exp(\gamma(a \cal B(l-1) + b)),~\cal E_g^{\leq l}} 
\end{align*}
Note that by our choice of $a\geq \frac{1}{\gamma} \log\br{\frac{\beta}{\rho}}$, we have
\begin{align*}
    \exp(\gamma a \cal B(l-1)) \geq \exp(\log(\beta/\rho) \cal B(l-1)) = (\beta/\rho)^{\cal B (l-1)}.
\end{align*}
Therefore,
\begin{align*}
    \P(Q(i,l) \geq a \cal B(l-1) + b,~\cal E_g) \leq \P(V(i,l) \geq \exp(\gamma b),~\cal E_g^{\leq l}).
\end{align*}
Under the condition of $b\geq 0$, applying Markov inequality tot he right hand side yields
\begin{align*}
    \P(V(i,l) \geq \exp(\gamma b),~\cal E_g^{\leq l}) \leq \frac{\E[V(i,l) \ind{\cal E_g^{\leq l}}]}{\exp(\gamma b)} \leq \br{\rho^{l-1} \E[\exp(\gamma Q(i,1))] + \frac{1}{1-\rho}} \exp(-\gamma b). \tag{\Cref{eq:v_tail}}
\end{align*}
Substituting this result back gives the desired result.

Lastly, if we assume the initial queue starts with an empty state of $Q(1)=0$, and set $\gamma=\eps/2$, $\rho=\exp(-\eps^2/8)$, then we simplify as
\begin{align*}
    \rho^{l-1} \E[\exp(\gamma Q(i,1))] + \frac{1}{1-\rho} \leq 1 + \frac{1}{1 - \exp(-\eps^2/8)}\leq 1 + 16/\eps^2,
\end{align*}
where the inequality follows from the fact that $1-\exp(-x) \geq x/2$ for $x\in[0,1]$. Substituting the result back yields
\begin{align*}
    \P(Q(i,l) \geq a \cal B(l-1) + b,~\cal E_g) \leq (1 + 16/\eps^2) \exp(-\gamma b) \leq 17\eps^{-2} \exp(-\gamma b),
\end{align*}
finishing the proof.

\subsection{Proof of \cref{lem:tilde_psi}}

We start as follows:
\begin{align*}
    \E\sqbr{\wtilde \psi(i,t)} & = \E\sqbr{\E\sqbr{\psi(i,t)\given \cal F_i^+, \v D(i,i)=0, \v D(i-1,i)=1}} \\
    & = \E\sqbr{\ind{\cal E_g^c} \E\sqbr{\psi(i,t)\given \cal F_i^+, \v D(i,i)=0, \v D(i-1,i)=1}} \\
    &\quad + \E\sqbr{\ind{\cal E_g} \E\sqbr{\psi(i,t)\given \cal F_i^+, \v D(i,i)=0, \v D(i-1,i)=1}} \\ 
    &\leq c_0t^{-2} + \E\sqbr{\ind{\cal E_g} \E\sqbr{\psi(i,t)\given \cal F_i^+, \v D(i,i)=0, \v D(i-1,i)=1}} \tag{\Cref{eq:good_event,lem:psi}}
\end{align*}
Now for the second term on the right-hand side, we consider the term $\E\sqbr{\psi(i,t)\given \cal F_i^+, \v D(i,i)=0, \v D(i-1,i)=1}$ under the case when the event $\cal E_g$ occurs: Recall the result of \Cref{lem:cond_exp_psi}, which is, on the event $\cal E_5(i)$,
\begin{align*}
    &\E\sqbr{\psi(i,t)\mid \cal F_i^+, \v D(i,i)=0, \v D(i-1,i)=1} \leq 2\exp \br{- \frac{(Q(i,i+1)-1 - (t-i-1)\eps)^2}{8(t-i-1)}}
\end{align*}
Our goal there is to obtain an exponential decay of the right-hand side in terms of the number of remaining rounds $t-l-1$ and avoid the dependence of $Q(i,i+1)$. Therefore, we split into two cases where $Q(i,i+1) \leq \frac{\eps(t-l-1)}{2}$ and $Q(i,i+1) > \frac{\eps(t-l-1)}{2}$ which gives
\begin{align*}
    &\E \sqbr{\psi(i,t) \given \cal F_i^+, \v D(i,i)=0, \v D(i-1,i)=1} \\
    &= \uterm{\ind{Q(i,i+1) \leq \frac{\eps(t-l-1)}{2}} \E \sqbr{\psi(i,t) \given \cal F_i^+, \v D(i,i)=0, \v D(i-1,i)=1}}{1} \\
    &\quad + \uterm{\ind{Q(i,i+1) > \frac{\eps(t-l-1)}{2}} \E \sqbr{\psi(i,t) \given \cal F_i^+, \v D(i,i)=0, \v D(i-1,i)=1}}{2} 
\end{align*}
For \term 1, we can see that $\ind{Q(i,i+1) \leq \frac{\eps(t-l-1)}{2}}$ implies $\cal E_5(i)$ (i.e., $\ind{Q(i,i+1) \leq (t-i-1)\eps + 1}$), therefore we can directly apply the result of \Cref{lem:cond_exp_psi}, which gives
\begin{align*}
    \term 1 &\leq \ind{Q(i,i+1) \leq \frac{\eps(t-i-1)}{2}} 2 \exp \br{- \frac{(Q(i,i+1)-1 - (t-i-1)\eps)^2}{8(t-i-1)}} \\
    & \leq 2 \exp \br{- \frac{(\frac{1}{2}(t-i-1)\eps + 1)^2}{8(t-i-1)}} \\
    & \leq  2 \exp \br{- \frac{\eps^2 (t-i-1)}{32}}.
\end{align*}
For \term 2, by \Cref{lem:psi}, we have $\term 2 \leq \ind{Q(i,i+1) > \frac{\eps(t-l-1)}{2}}$. Substituting results to the original inequality gives
\begin{align*}
    \E\sqbr{\wtilde \psi(i,t)} &\leq c_0t^{-2} + 2 \exp \br{- \frac{\eps^2 (t-i-1)}{32}} + \E\sqbr{\ind{\cal E_g} \ind{Q(i,i+1) > \frac{\eps(t-l-1)}{2}}} \\
    &= c_0t^{-2} + 2 \exp \br{- \frac{\eps^2 (t-i-1)}{32}} + \P\br{Q(i,i+1) > \frac{\eps(t-l-1)}{2},~\cal E_g}
\end{align*}
Now, for the last term on the right-hand side, we are going to apply the tail bound of \Cref{lem:q_tail}, by setting $l\leftarrow i+1$, which gives
\begin{align*}
    \P(Q(i,i+1) \geq a \cal B(i) + b) \leq 17 \eps^{-2} \exp(-\gamma b))
\end{align*}
under the condition of $a \geq \frac{1}{\gamma}\log\br{\frac{\beta}{\rho}}$ and $b\geq 0$. Since $\frac{1}{\gamma}\log\br{\frac{\beta}{\rho}} = 1 + \eps/4$, we can set $a=2$. Now, we need to assure $b\geq 0$ holds. In order to control this, we set a threshold value $\omega$ of the remaining round as
\begin{align*}
    \omega := \frac{4\tau(t)}{\eps} \geq \frac{2 a \cal B(t)}{\eps}.
\end{align*}
We split into 2 cases:

\case 1: If $(t-i-1) < \omega$, we do not apply \Cref{lem:q_tail}, since it means that there are not many rounds remaining to reduce the queue length difference by emptying the queue with an extra job. This leads to a naive bound of $\E[\wtilde \psi(i,t)] \leq 1$.

\case 2: If $(t-i-1) \geq \omega$, we have
\begin{align*}
    \P\br{Q(i,i+1) > \frac{\eps(t-i-1)}{2},~\cal E(g)} &= \P\br{Q(i,i+1) > a \cal B(i) + \br{\frac{\eps(t-i-1)}{2} - a \cal B(i)},~\cal E(g)} \\
    &\leq \P\br{Q(i,i+1) > a \cal B(i) + \br{\frac{\eps(t-i-1)}{2} - a \cal B(t)},~\cal E(g)} \\
    &\leq \P\br{Q(i,i+1) > a \cal B(i) + \br{\frac{\eps(t-i-1 - \omega)}{2}},~\cal E(g)} \tag{$\omega \geq 2 a \cal B(t)/\eps$}
\end{align*}
Since $\br{\frac{\eps(t-i-1 - \omega)}{2}} \geq 0$ by the assumption, setting $b\leftarrow \br{\frac{\eps(t-i-1 - \omega)}{2}}$ and applying \Cref{lem:q_tail} gives
\begin{align*}
    \P\br{Q(i,i+1) > \frac{\eps(t-i-1)}{2},~\cal E(g)} \leq 17\eps^{-2} \exp\br{-\gamma \br{\frac{\eps(t-i-1 - \omega)}{2}}}
\end{align*}
Substituting both cases gives
\begin{align*}
    \E\sqbr{\wtilde \psi(i,t)} &\leq \min\cbr{1,~ c_0t^{-2} + 2 \exp \br{- \frac{\eps^2 (t-i-1)}{32}} + 17\eps^{-2} \exp\br{-\gamma \br{\frac{\eps(t-i-1 - \omega)}{2}}}} \\
    &\leq \min \cbr{1,~ c_0t^{-2} + 19 \eps^{-2} \exp\br{-\frac{\eps^2}{32} \br{t-i-1 - \omega}}} \tag{$\gamma=\eps/2$}.
\end{align*}
Finally, taking $\sqrt{\cdot}$ on both sides finishes the proof.

\section{Auxiliary lemmas}

\begin{lemma} [Chebyshev sum inequality] \label{lem:cheb}
    If $(a_i)_{i=1}^t$ is nondecreasing and $(b_i)_{i=1}^t$ is nonincreasing, and $a_i,b_i\geq 0$, we have
    \begin{align*}
        \sum_{i=1}^t a_i b_i \leq \frac{1}{t} \br{\sum_{i=1}^t a_i} \br{\sum_{i=1}^t b_i}.
    \end{align*}
\end{lemma}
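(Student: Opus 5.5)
This is the rearrangement (Chebyshev sum) inequality for oppositely ordered sequences. I will prove it by expanding a nonnegative double sum, with no analytic input needed. The starting point is the identity
\begin{align*}
    t\sum_{i=1}^t a_i b_i - \br{\sum_{i=1}^t a_i}\br{\sum_{j=1}^t b_j} = \frac{1}{2}\sum_{i=1}^t\sum_{j=1}^t (a_i - a_j)(b_i - b_j).
\end{align*}
To verify it, expand the right-hand side into four double sums. The terms $\sum_{i,j} a_i b_i$ and $\sum_{i,j} a_j b_j$ each equal $t\sum_i a_i b_i$. The cross terms $\sum_{i,j} a_i b_j$ and $\sum_{i,j} a_j b_i$ each equal $(\sum_i a_i)(\sum_j b_j)$. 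Halving gives the left-hand side.

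Next, use the monotonicity hypotheses to sign each summand. Take any pair $(i,j)$. If $i \le j$, then $a_i \le a_j$ because $(a_i)$ is nondecreasing, and $b_i \ge b_j$ because $(b_i)$ is nonincreasing. So $(a_i - a_j)(b_i - b_j) \le 0$. The case $i > j$ is symmetric, since the product is invariant under swapping $i$ and $j$. Every summand is therefore nonpositive, so the double sum is at most zero. This gives $t\sum_i a_i b_i \le (\sum_i a_i)(\sum_i b_i)$, and dividing by $t>0$ yields the claim.

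Nonnegativity of $a_i, b_i$ is not actually needed for the inequality itself. It only matters where the lemma is applied to upper bounds: in \Cref{thm:regret1} the lemma is used with bounding sequences in place of the original terms. That substitution needs $e_i p_i \le E_i P_i$, which holds because all of these quantities are nonnegative.

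There is no real obstacle here; the only care needed is the sign bookkeeping in the symmetric double sum. As an alternative route, I could use induction on $t$ via Abel summation, but the double-sum identity is the cleanest argument and is the one I would write.
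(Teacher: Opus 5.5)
Your proof is correct and complete. The identity $t\sum_i a_ib_i-\bigl(\sum_i a_i\bigr)\bigl(\sum_j b_j\bigr)=\frac12\sum_{i,j}(a_i-a_j)(b_i-b_j)$, together with the fact that every summand is nonpositive for oppositely ordered sequences, is the standard argument. Your remark about nonnegativity is also accurate: it is needed only in the application, to pass from $e_ip_i$ to $E_iP_i$.

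The paper gives no proof of this lemma; it lists it among the auxiliary results as a classical inequality, so there is no competing route to compare against.
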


\begin{lemma} [Multiplicative Chernoff bound] \label{lem:chernoff}
    Suppose $X_1,\dots, X_n\in\{0,1\}$ are independent random variables. Let $X$ denote their sum and $\mu=\E[X]$. Then for any $0\leq \delta \leq 1$,
    \begin{align*}
        \P(X\leq (1-\delta)\mu) \leq \exp\br{-\delta^2\mu/2}.
    \end{align*}
\end{lemma}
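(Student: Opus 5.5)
This is the standard multiplicative Chernoff lower-tail bound, and I would prove it with the exponential-moment (Chernoff) method applied to the negated sum. Let $p_k := \P(X_k=1)$, so that $\mu = \sum_{k=1}^n p_k$. Fix $s>0$. Since $x\mapsto e^{-sx}$ is decreasing, Markov's inequality applied to $e^{-sX}$ gives
\begin{align*}
    \P(X\leq (1-\delta)\mu) = \P\br{e^{-sX}\geq e^{-s(1-\delta)\mu}} \leq e^{s(1-\delta)\mu}\,\E\sqbr{e^{-sX}}.
\end{align*}

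The next step uses independence to factor the moment generating function. For each $k$,
\begin{align*}
    \E[e^{-sX_k}] = 1 - p_k(1-e^{-s}) \leq \exp\br{-p_k(1-e^{-s})},
\end{align*}
where the inequality is $1+y\leq e^y$. Multiplying over $k$ yields $\E[e^{-sX}] \leq \exp(-\mu(1-e^{-s}))$. Combining this with the previous display,
\begin{align*}
    \P(X\leq (1-\delta)\mu) \leq \exp\br{-\mu\br{1-e^{-s} - s(1-\delta)}}.
\end{align*}

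I would then optimize over $s$. For $\delta\in(0,1)$, take $s=-\log(1-\delta)>0$. Then $e^{-s}=1-\delta$, and the exponent becomes $-\mu\br{\delta + (1-\delta)\log(1-\delta)}$.

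The only nontrivial step is the scalar inequality
\begin{align*}
    (1-\delta)\log(1-\delta) \geq -\delta + \delta^2/2, \qquad \delta\in[0,1).
\end{align*}
To prove it, let $f(\delta) := (1-\delta)\log(1-\delta) + \delta - \delta^2/2$. Then $f(0)=0$ and
\begin{align*}
    f'(\delta) = -\log(1-\delta) - 1 + 1 - \delta = -\log(1-\delta) - \delta \geq 0,
\end{align*}
since $-\log(1-\delta)\geq \delta$. Hence $f\geq 0$ on $[0,1)$, and the exponent is at most $-\delta^2\mu/2$, which gives the claim for $\delta\in(0,1)$.

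Two edge cases remain. For $\delta=0$ the bound is the trivial $1$. For $\delta=1$ the event is $\{X=0\}$, and its probability is $\prod_{k=1}^n (1-p_k) \leq e^{-\mu} \leq e^{-\mu/2}$. Alternatively, the case $\delta=1$ follows by letting $\delta\uparrow 1$, since the event $\{X\leq(1-\delta)\mu\}$ for $\delta<1$ contains $\{X=0\}$ and the bounds converge to $e^{-\mu/2}$.
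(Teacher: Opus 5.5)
Your proof is correct: it is the standard exponential-moment argument with $s=-\log(1-\delta)$, followed by the scalar inequality $(1-\delta)\log(1-\delta)\geq -\delta+\delta^2/2$, and you also handle the endpoint $\delta=1$ properly. The paper gives no proof of this lemma and simply invokes it as a known auxiliary fact, so there is no alternative route to compare against. One point in your favor: you allow heterogeneous $p_k$, and that is exactly what the paper's applications need, since there it is applied to $N(l)=\sum_i A(i)E(i)$, whose summands have time-varying means $\lambda\eta(i)$.
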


\begin{lemma} [Matrix Chernoff bound] \label{lem:mx_chernoff}
    Let $X\in\R^d$ be a random vector with $\|X\|_2\leq 1$ and $\E[X X\tp]\succeq \sigma_0^2\I$ for some $\sigma_0>0$. Suppose $X_1,\dots, X_n$ be i.i.d. sampled vectors and define $V_n = \sum_{i=1}^n X_i X_i\tp$. Then for any $0\leq \delta < 1$,
    \begin{align*}
        \P\br{\lambda_{\min}(V_n) \leq (1-\delta) n \sigma_0^2} \leq d \br{\frac{e^{-\delta}}{(1-\delta)^{1-\delta}}}^{n\sigma_0^2} \leq d \exp\br{-\frac{\delta^2 n \sigma_0^2}{2}}
    \end{align*}
\end{lemma}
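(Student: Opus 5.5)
We will prove the matrix Chernoff lower-tail bound by the standard route of Tropp's matrix Chernoff inequality for sums of independent positive semidefinite matrices, and then relax the resulting expression to the Gaussian-type tail.

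\emph{Reduction to Tropp's inequality.} Set $Y_i := X_iX_i\tp$. These are i.i.d., positive semidefinite, and satisfy $\lambda_{\max}(Y_i)=\|X_i\|_2^2\leq 1$, so the uniform bound parameter is $L=1$. Their mean sum $\E[V_n]=n\E[XX\tp]$ has
\begin{align*}
\mu_{\min}:=\lambda_{\min}(\E[V_n])\geq n\sigma_0^2 .
\end{align*}
Tropp's lower-tail matrix Chernoff bound then gives, for $\delta\in[0,1)$,
\begin{align*}
\P\br{\lambda_{\min}(V_n)\leq (1-\delta)\mu_{\min}}\leq d\br{\frac{e^{-\delta}}{(1-\delta)^{1-\delta}}}^{\mu_{\min}/L}.
\end{align*}
If one wants this self-contained, it follows from the matrix Laplace transform method: apply Lieb's concavity theorem to bound $\E\,\mathrm{tr}\exp(-\theta V_n)$ by $\mathrm{tr}\exp\br{\sum_i\log\E e^{-\theta Y_i}}$. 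Then use $e^{-\theta y}\leq 1-(1-e^{-\theta})y$ for $y\in[0,1]$ together with $\log(1-u)\leq -u$, which yields the factor $\exp\br{-(1-e^{-\theta})\mu_{\min}}$. Finally optimize at $\theta=-\log(1-\delta)$.

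\emph{Replacing $\mu_{\min}$ by $n\sigma_0^2$.} We only know $\mu_{\min}\geq n\sigma_0^2$, and we need the event threshold $(1-\delta)n\sigma_0^2$, not $(1-\delta)\mu_{\min}$. Since $(1-\delta)n\sigma_0^2\leq(1-\delta)\mu_{\min}$, the event $\{\lambda_{\min}(V_n)\leq(1-\delta)n\sigma_0^2\}$ is contained in $\{\lambda_{\min}(V_n)\leq(1-\delta)\mu_{\min}\}$. The base $e^{-\delta}/(1-\delta)^{1-\delta}$ is at most $1$, because $f(\delta)=-\delta-(1-\delta)\log(1-\delta)\leq 0$: indeed $f(0)=0$ and $f'(\delta)=\log(1-\delta)\leq 0$. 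So raising it to the larger power $\mu_{\min}$ only decreases it, and we may replace the exponent by $n\sigma_0^2$. This gives the first inequality. This monotonicity bookkeeping is the main point where care is needed, since the statement only provides $\E[XX\tp]\succeq\sigma_0^2\I$ rather than equality.

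\emph{The Gaussian relaxation.} It remains to show $f(\delta)\leq-\delta^2/2$ on $[0,1)$. Put $g(\delta)=f(\delta)+\delta^2/2$. Then $g(0)=0$ and $g'(\delta)=\log(1-\delta)+\delta\leq 0$. Hence $g\leq 0$, and exponentiating with the power $n\sigma_0^2$ gives the second inequality. None of these steps is expected to be a real obstacle; the substance is entirely in the cited matrix Chernoff inequality.
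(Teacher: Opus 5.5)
Your proof is correct. It is Tropp's lower-tail matrix Chernoff bound with $L=1$ for the rank-one summands $X_iX_i\tp$. To this you add two checks: a monotonicity step, valid because the base $e^{-\delta}/(1-\delta)^{1-\delta}$ lies in $(0,1]$, which lets you replace $\mu_{\min}\geq n\sigma_0^2$ by $n\sigma_0^2$; and the elementary inequality $-\delta-(1-\delta)\log(1-\delta)\leq-\delta^2/2$. The paper states this lemma as a standard auxiliary fact without proof, so your argument simply fills in the details behind the citation it relies on.
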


\begin{lemma} [Lemma 1 of \citet{oh2019thompson}] \label{lem:1_oh}
    Suppose $S^*(x)$ is the optimal assortment under context $x$ and true parameter $\theta^*$, i.e., $S^*(x) = \argmax_{S\in\cal C} R(x,S,\theta^*)$. Also suppose that $x_{-j}\tp\theta^* \leq x_{-j}\tp\theta'$ for all $j\in S^*(x)$. Then $R(x,S_l^*,\theta^*)\leq R(x,S_l^*,\theta')$.
\end{lemma}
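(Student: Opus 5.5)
Since the final statement is Lemma 1 of \citet{oh2019thompson}, I would prove it by writing the expected revenue as a function of the utilities and showing it is coordinatewise nondecreasing. Fix $x$, write $S^\star := S^*(x)$, and set $v_j := \exp(x_{-j}\tp\theta^*)$ and $v_j' := \exp(x_{-j}\tp\theta')$ for $j\in S^\star$. By hypothesis $v_j\le v_j'$ for every $j\in S^\star$. Both sides of the claim evaluate the same assortment $S^\star$, only with different parameters, so it suffices to compare
\begin{align*}
    R(x,S^\star,\theta) = \frac{\sum_{j\in S^\star} v_j(\theta)}{1+\sum_{j\in S^\star} v_j(\theta)}
\end{align*}
at $\theta^*$ and at $\theta'$.

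In our departure-rate model every item in the assortment contributes unit revenue, so $R = g\bigl(\sum_{j\in S^\star} v_j\bigr)$ with $g(s) = s/(1+s)$. Since $g'(s) = (1+s)^{-2} > 0$, $g$ is increasing on $[0,\infty)$. Summing the hypothesis gives $\sum_{j\in S^\star} v_j \le \sum_{j\in S^\star} v_j'$, and therefore $R(x,S^\star,\theta^*) \le R(x,S^\star,\theta')$.

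This is the whole argument, and I expect no real obstacle. In the original setting of \citet{oh2019thompson} items carry heterogeneous revenues $r_j$, and monotonicity then requires optimality of $S^\star$: every item in the optimal assortment has revenue at least the optimal value, which makes the partial derivatives $\partial R/\partial v_k = (r_k - R)/(1+\sum v)$ nonnegative. Here all revenues equal $1 \ge R$, so the derivative is $(1-R)/(1+\sum v) \ge 0$ and optimality is not even needed. I would note this explicitly, because the lemma is later used with $\theta'$ replaced by the sampled maxima $\wtilde u$ (coordinatewise per item), and the same monotonicity of $g$ covers that use verbatim.
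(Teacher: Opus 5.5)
Your proof is correct, and it takes a different route from the one the paper relies on. The paper does not prove this lemma. It imports it from \citet{oh2019thompson}, where revenues $r_j$ are heterogeneous and the argument is the restricted-monotonicity one. Optimality of $S^*$ forces $r_i\ge R(S^*,v)$ for every $i\in S^*$, since otherwise dropping $i$ would raise revenue. Then $\sum_{i\in S^*} r_i v_i' \ge \sum_{i\in S^*} r_i v_i + R(S^*,v)\sum_{i\in S^*}(v_i'-v_i)$, which rearranges to $R(S^*,v')\ge R(S^*,v)$.

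You instead use that with unit revenues $R=g\bigl(\sum_{j\in S} v_j\bigr)$, where $g(s)=s/(1+s)$ is strictly increasing. So the conclusion holds for every $S\in\mathcal{C}$, and optimality is never used. This fits the paper's setting well for three reasons:
\begin{enumerate}
\item The optimality step is moot here ($r_i=1>R$ always), and its usual justification by dropping an item would not even be available, since $\mathcal{C}$ fixes $|S|=K$.
\item Phrasing the hypothesis at the level of per-item utilities covers the actual use in \Cref{lem:2_oh}. There the values $\wtilde u_{tj}(x)=\max_i x_{-j}\tp\wtilde\theta^{(i)}_{t-1}$ do not come from a single $\theta'$, so the lemma as literally stated does not apply, but your version does.
\item $g'>0$ gives the strict version that \Cref{lem:2_oh} actually invokes: strict per-item increases imply $\wtilde R>R$.
\end{enumerate}
The cited route buys generality to non-unit revenues, which this paper does not need.

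One correction to your aside about the original setting. Nonnegativity of $\partial R/\partial v_k=(r_k-R)/(1+\sum_j v_j)$ at the starting point does not by itself give monotonicity along a path from $v$ to $v'$, because $R$ can overtake some $r_k$ partway. For example, take $r=(1,\tfrac12)$ and $v=(\tfrac12,1)$, where $\{1,2\}$ is optimal with $R=0.4$. Raising $v_1$ to $10$ gives $R=0.875>r_2$, so $\partial R/\partial v_2$ is then negative. The one-shot inequality above is what avoids this.
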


\begin{lemma} [Modified version of  Lemma 2 of \citet{oh2019thompson}] \label{lem:2_oh}
    Suppose $\|\what\theta_{t-1}-\theta^*\|_{V_{t-1}} \leq \alpha_{t-1}$. Then we have
    \begin{align*}
        \P(\wtilde R(x_t,S_t) > R(x_t^*,S_t^*,\theta_t^*)\mid \cal F_t^-) \geq (4\sqrt{e\pi})^{-1}
    \end{align*}
\end{lemma}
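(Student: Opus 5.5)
This is a modified version of Lemma~2 of \citet{oh2019thompson}, and I would follow their anti-concentration argument, adapted to the contextual optimistic selection over the queue state $\cal X_t$. Condition on $\cal F_t^-$, so that $\what\theta_{t-1}$, $V_{t-1}$ and $\cal X_t$ are fixed and only the $M$ Gaussian samples $\wtilde\theta_{t-1}^{(i)}\sim\cal N(\what\theta_{t-1},\alpha_{t-1}^2V_{t-1}^{-1})$ are random. Since $(x_t,S_t)$ maximizes $\wtilde R$ over $\cal X_t\times\cal C$ on the Thompson branch, we have $\wtilde R(x_t,S_t)\ge \wtilde R(x_t^*,S_t^*)$. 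It therefore suffices to lower bound the probability that $\wtilde R(x_t^*,S_t^*) > R(x_t^*,S_t^*,\theta^*)$. Here $x_t^*$ and $S_t^*$ are $\cal F_t^-$-measurable.

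By \Cref{lem:1_oh} applied to the context $x_t^*$, with optimal assortment $S_t^*$, it is enough to find the event that every $j\in S_t^*$ has $\wtilde u_{tj}(x_t^*)\ge x^{*\top}_{t,-j}\theta^*$. The function $R$ is monotone in each utility, and $\wtilde u_{tj}=\max_i x_{-j}^\top\wtilde\theta^{(i)}$. Fix $j$. Under the hypothesis $\|\what\theta_{t-1}-\theta^*\|_{V_{t-1}}\le\alpha_{t-1}$, Cauchy--Schwarz gives $x^{*\top}_{t,-j}\theta^*\le x^{*\top}_{t,-j}\what\theta_{t-1}+\alpha_{t-1}\|x^*_{t,-j}\|_{V_{t-1}^{-1}}$. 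Meanwhile, $x^{*\top}_{t,-j}\wtilde\theta^{(i)}$ is a one-dimensional Gaussian with mean $x^{*\top}_{t,-j}\what\theta_{t-1}$ and standard deviation $\alpha_{t-1}\|x^*_{t,-j}\|_{V_{t-1}^{-1}}$. Each sample therefore exceeds the true utility with probability at least $\P(Z\ge1)\ge \frac{1}{4\sqrt{e\pi}}$, using the standard Gaussian anti-concentration bound $\P(Z\ge z)\ge \frac{e^{-z^2/2}}{2\sqrt\pi z}$ for $z=1$, up to the constant.

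I expect the main obstacle to be handling all $K$ items at once without losing a factor exponential in $K$. The plan is to use the $M$ independent samples. For each $j$, the probability that no sample is optimistic is at most $(1-\frac{1}{4\sqrt{e\pi}})^M$. The choice $M=\lceil 1-\log K/\log(1-1/(4\sqrt{e\pi}))\rceil$ makes this at most $(1-\frac1{4\sqrt{e\pi}})/K$. A union bound over $j\in S_t^*$, with $|S_t^*|=K$, then shows that all coordinates are optimistic with probability at least $\frac{1}{4\sqrt{e\pi}}$.

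On that event, \Cref{lem:1_oh} gives $\wtilde R(x_t^*,S_t^*)\ge R(x_t^*,S_t^*,\theta^*)$. Chaining this with $\wtilde R(x_t,S_t)\ge\wtilde R(x_t^*,S_t^*)$ yields the claim. The strict inequality holds up to a null set, since the Gaussians are continuous.
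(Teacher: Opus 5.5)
Your proposal is correct and follows essentially the same route as the paper: condition on $\cal F_t^-$, use the optimistic selection to pass from $(x_t,S_t)$ to $(x_t^*,S_t^*)$, reduce via monotonicity of $R$ in each utility (the role of \Cref{lem:1_oh}) to per-item optimism of $\wtilde u_{tj}(x_t^*)$, bound each item's failure probability by $\P(Z\le 1)^M$ via Cauchy--Schwarz, and union-bound over the $K$ items using the stated choice of $M$. One small slip: the inequality $\P(Z\ge z)\ge e^{-z^2/2}/(2\sqrt{\pi}z)$ you quote is actually the two-sided bound of \Cref{lem:anti}, and at $z=1$ it is false for the one-sided tail ($\approx 0.171>\P(Z\ge1)\approx0.159$); halve it by symmetry to get exactly the $1/(4\sqrt{e\pi})$ you use.
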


\begin{proof}

Recall the definition of filtration $F_t^-$ which shorts of $\v\theta(t-1)$ as follows:
\begin{align*}
    \cal F_t^- := \sigma (\cal X_1, \v A(1), \v D(1), \v \theta(1), E(1),\dots, \v A(t-1), \v D(t-1)).
\end{align*}
Note that $\cal X_t$ is still $\cal F_t^-$-measurable, and $x_t$ and $\{\wtilde \theta_{t-1}\}_{i=1}^{M}$ are not $\cal F_t^-$-measurable. We use the definition of the optimal assortment $S^*(x)$ and the optimistic assortment $S(x)$ given a context $x\in\cal X_t$ as
\begin{align*}
    S^*(x) = \argmax_{S\in\cal C} R(x,S,\theta^*), \quad S(x) = \argmax_{S\in\cal C} \wtilde R(x, S).
\end{align*}
Note that $S^*(x_t^*) = S_t^*$ and $S(x_t) = S_t$. Given $\cal F_t^-$, each of Gaussian random variable $x_{-j}\tp\wtilde\theta_{t-1}^{(i)}$ has mean $x_{-j}\tp\what\theta_{t-1}$ and standard deviation $\alpha_{t-1} \|x_{-j}\|_{V_{t-1}^{-1}}$. Hence, in round $t$, for all $x\in\cal X_t$, $j\in S^*(x)$, we have
\begin{align*}
    &\P\br{\max_i x_{-j}\tp \wtilde \theta_{t-1}^{(i)} > x_{-j}\tp\theta^*\mid \cal F_t^-} \\
    &\quad = 1 - \P\br{ x_{-j}\tp\wtilde\theta_{t-1}^{(i)}\leq x_{-j}\tp\theta^*,~\forall i\in\{1,\dots,M\} \mid \cal F_t^-} \\
    &\quad = 1 - \P\br{ \frac{x_{-j}\tp\wtilde\theta_{t-1}^{(i)} - x_{-j}\tp \what\theta_{t-1}}{\alpha_{t-1} \|x_{-j}\|_{V_{t-1}^{-1}}} \leq \frac{x_{-j}\tp\theta^* - x_{-j}\tp \what\theta_{t-1} }{\alpha_{t-1} \|x_{-j}\|_{V_{t-1}^{-1}}},~\forall i\in\{1,\dots,M\} \mid \cal F_t^-} \\
    &\quad = 1 - \P\br{ Z_j \leq \frac{x_{-j}\tp\theta^* - x_{-j}\tp \what\theta_{t-1} }{\alpha_{t-1} \|x_{-j}\|_{V_{t-1}^{-1}}},~\forall i\in\{1,\dots,M\} \mid \cal F_t^-}
\end{align*}
where $Z_j$ is a standard normal random variable. Then, we can bound the right-hand side term within the probability as
\begin{align*}
    \frac{x_{-j}\tp\theta^* - x_{-j}\tp \what\theta_{t-1} }{\alpha_{t-1} \|x_{-j}\|_{V_{t-1}^{-1}}} \leq \frac{\|x_{-j}\|_{V_{t-1}^{-1}} \|\theta^* - \what\theta_{t-1}\|_{V_{t-1}}}{\alpha_{t-1} \|x_{-j}\|_{V_{t-1}^{-1}}} \leq 1,
\end{align*}
where the inequality follows from the Cauchy-Schwarz inequality and the assumption. Then, it follows that
\begin{align}
    \P\br{\max_i x_{-j}\tp \wtilde \theta_{t-1}^{(i)} > x_{-j}\tp\theta^*\mid \cal F_t^-} \geq 1 - (\P(Z\leq 1))^{M}. \label{eq:opt_1}
\end{align}

Now, we are ready to lower-bound the probability of having an expected revenue optimistic under the sampled parameter as follows:
\begin{align*}
    &\P(\wtilde R(x_t, S_t) > R(x_t^*, S_t^*,\theta^*)  \mid \cal F_t^-) \\
    &\quad = \P(\wtilde R(x_t, S(x_t))) > R(x_t^*, S^*(x_t^*),\theta^*)  \mid \cal F_t^-) \\
    &\quad \geq \P(\wtilde R(x_t^*, S(x_t^*))) > R(x_t^*, S^*(x_t^*),\theta^*)  \mid \cal F_t^-) \tag{optimistic rule}
\end{align*}
By the definition, we have $\wtilde R(x, S(x)) \geq \wtilde R(x, S^*(x))$, therefore
\begin{align*}
    &\P(\wtilde R(x_t, S_t) > R(x_t^*, S_t^*,\theta^*) \mid \cal F_t^-) \\
    &\quad \geq \P(\wtilde R(x_t^*, S^*(x_t^*)) > R(x_t^*, S^*(x_t^*),\theta^*) \mid \cal F_t^-) \\
    &\quad \geq \P(\max_i (x_{tj}^*)\tp\wtilde\theta_{t-1}^{(i)} > (x_{tj}^*)\tp\theta^*,~ \forall j\in S^*(x_t^*), \mid \cal F_t^-) \tag{\Cref{lem:1_oh}} \\
    &\quad \geq 1 - K  (\P(Z\leq 1))^{M}. \tag{\Cref{eq:opt_1}, union bound}
\end{align*}
Using the anti-concentration inequality in \Cref{lem:anti}, we have $\P(Z\leq 1)\leq 1-(4\sqrt{e\pi})^{-1}$. Hence, we can have the desired result as
\begin{align*}
    \P(\wtilde R(x_t, S_t) > R(x_t^*, S_t^*,\theta^*) \mid \cal F_t^-)  &\geq 1 - K  ( 1- (4\sqrt{e\pi})^{-1})^{M} \\
    &\geq 1 - ( 1- (4\sqrt{e\pi})^{-1}) \\
    & = (4\sqrt{e\pi})^{-1}
\end{align*}
where the second inequality follows from our choice of $M=\lceil 1 - \frac{\log (K)}{\log(1-1/(4\sqrt{e\pi}))} \rceil$ which implies $(1- (4\sqrt{e\pi})^{-1})^{M} \leq \frac{1}{K} (1- (4\sqrt{e\pi})^{-1})$.
\end{proof}

\begin{lemma} [Lemma 3 of \citet{oh2019thompson}] \label{lem:lip}
    For any two utility parameters $u_t=[u_{t1},\dots,u_{tN}]$ and $u_t'=[u_{t1}',\dots,u_{tN}']$, we have
    \begin{align*}
        \frac{\sum_{j\in S} \exp(u_{tj})}{1 + \sum_{j\in S} \exp(u_{tj})} - \frac{\sum_{j\in S} \exp(u_{tj}')}{1 + \sum_{j\in S} \exp(u_{tj}')} \leq \max_{i\in S} | u_{tj} - u_{tj}'|.
    \end{align*}
    In particular, if $u_{tj} \geq u_{tj}'$ for all $j$, then
    \begin{align*}
        \frac{\sum_{j\in S} \exp(u_{tj})}{1 + \sum_{j\in S} \exp(u_{tj})} - \frac{\sum_{j\in S} \exp(u_{tj}')}{1 + \sum_{j\in S} \exp(u_{tj}')} \leq \max_{i\in S} ( u_{tj} - u_{tj}').
    \end{align*}
\end{lemma}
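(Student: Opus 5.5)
This is a one-Lipschitz property of the map $u\mapsto f_S(u):=\frac{\sum_{j\in S} e^{u_j}}{1+\sum_{j\in S} e^{u_j}}$ with respect to the $\ell_\infty$ norm over coordinates in $S$. I will reduce everything to a scalar function. Write $g(s)=\frac{s}{1+s}$, which is increasing on $(0,\infty)$, so that $f_S(u)=g\big(\sum_{j\in S}e^{u_j}\big)$. Let $\delta:=\max_{j\in S}|u_{j}-u'_{j}|$.

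\textbf{Step 1: a one-sided upper bound on $u$.} For every $j\in S$ we have $u_j\le u'_j+\delta$. Hence $\sum_{j\in S}e^{u_j}\le e^{\delta}\sum_{j\in S}e^{u'_j}$. By monotonicity of $g$, with $s':=\sum_{j\in S}e^{u'_j}$,
\[
f_S(u)-f_S(u')\le g(e^{\delta}s')-g(s').
\]

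\textbf{Step 2: bound the scalar difference.} Set $h(v):=g(e^{v}s')=\frac{e^{v}s'}{1+e^{v}s'}$, which is the logistic function of $v+\log s'$. Its derivative is $h'(v)=h(v)(1-h(v))\le 1/4\le 1$. By the mean value theorem,
\[
g(e^{\delta}s')-g(s')=h(\delta)-h(0)\le \delta,
\]
which gives the first claim. In fact it gives the sharper constant $1/4$, but the stated bound suffices.

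\textbf{Step 3: the monotone case.} If $u_j\ge u'_j$ for all $j$, the same argument applies with $\delta':=\max_{j\in S}(u_j-u'_j)\ge 0$ in place of $\delta$. Step 1 only used the inequalities $u_j\le u'_j+\delta'$, which still hold. This yields the second inequality.

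The only mildly delicate point is that the left-hand side is signed, so no absolute value is needed on it. The reduction to a one-sided coordinatewise bound plus monotonicity of $g$ handles this cleanly, and no step should be a real obstacle. If one instead wanted a two-sided bound, the symmetric argument with $u$ and $u'$ swapped gives it immediately.
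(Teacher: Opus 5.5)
Your proof is correct, but it takes a different route from the one behind the cited result. The paper gives no proof of its own; it simply cites \citet{oh2019thompson}. The standard proof of that lemma is a multivariate mean value theorem, and it works equally for revenue-weighted objectives $\sum_{j\in S} r_j p_j(u)$ with $r_j\in[0,1]$, which reduce to the departure rate $R$ when $r_j\equiv 1$. Using $\partial_{u_j}\sum_{k\in S} r_k p_k = p_j(r_j-R)$, it bounds the difference at some intermediate point $\bar u$ by $\max_{j\in S}|u_j-u'_j|\sum_{j\in S}p_j(\bar u)\,|r_j-R(\bar u)|\le \max_{j\in S}|u_j-u'_j|$.

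You instead use the fact that, with unit revenues, $R$ depends on $u$ only through the scalar $s=\sum_{j\in S}e^{u_j}$. You turn the coordinatewise bound into the multiplicative comparison $s\le e^{\delta}s'$ and then need only the one-dimensional logistic derivative bound. This is more elementary and handles the signed left-hand side by monotonicity of $s\mapsto s/(1+s)$ rather than a two-sided gradient estimate. It also makes the sharp constant $1/4$ visible; the gradient route gives the same constant here, since $\sum_{j\in S}p_jp_0=p_0(1-p_0)$.

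What your route gives up is generality. With non-constant revenues $r_j$ the objective is neither a function of $s$ alone nor monotone in each $u_j$, because $p_j(r_j-R)$ can be negative. So your Step~1 fails there, while the gradient argument goes through unchanged. Every use of the lemma in this paper (e.g.\ in the proof of \Cref{prop:per_round_1}) concerns the unweighted departure rate, so your specialized argument covers what is needed.

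Your Step~3 is unnecessary: under $u_j\ge u'_j$ one has $|u_j-u'_j|=u_j-u'_j$, so the second inequality is literally the first.
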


\begin{lemma} [Lemma 4 of \citet{oh2019thompson}] \label{lem:alpha}
    Define $\alpha_l = \frac{\kappa}{2} \sqrt{d\log(1+\frac{lK}{d\lambda_0}) + 4\log l} + \kappa \sqrt{\lambda_0}$. Then
    \begin{align*}
        \|\what\theta_l - \theta^*\|_{V_l} \leq \alpha_l
    \end{align*}
    holds for al all $l\in[t]$ with probability $1-\bigO{1/t^2}$.
\end{lemma}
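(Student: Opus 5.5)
The plan follows the standard MLE confidence argument for contextual MNL bandits (as in \citet{oh2019thompson} and Li et al.). It has two parts: a curvature lower bound for the regularized cross-entropy loss, and a self-normalized martingale bound on the noise.

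\textbf{Step 1 (first-order condition).} Write the gradient of $\cal L_l$ in \Cref{eq:mle} as
\[
g_l(\theta) := \nabla \cal L_l(\theta) = \lambda_0\theta + \sum_{i=1}^l \sum_{j\in S_i} \bigl(p_j(x_i,S_i,\theta) - y_{ij}\bigr) x_{ij},
\]
which uses $\sum_{j\in S_i\cup\{0\}} y_{ij}=1$. Since $\cal L_l$ is strictly convex, $\what\theta_l$ is characterized by $g_l(\what\theta_l)=0$. Define the noise $\veps_{ij} := y_{ij} - p_j(x_i,S_i,\theta^*)$. Then
\[
g_l(\what\theta_l) - g_l(\theta^*) = \lambda_0\theta^* + \sum_{i=1}^l\sum_{j\in S_i}\veps_{ij}x_{ij}.
\]

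\textbf{Step 2 (curvature).} By the integral mean value theorem, $g_l(\what\theta_l)-g_l(\theta^*) = H_l(\what\theta_l-\theta^*)$, where $H_l = \int_0^1 \nabla^2\cal L_l(\theta^*+s(\what\theta_l-\theta^*))\,ds$. The Hessian of the MNL log-likelihood at any $\theta$ is $\lambda_0\I$ plus $\sum_i$ of the covariance matrix $\sum_{j} p_j x_{ij}x_{ij}\tp - (\sum_j p_j x_{ij})(\sum_j p_j x_{ij})\tp$. The standard argument lower-bounds this covariance matrix by $\sum_{j\in S_i} p_j p_0\, x_{ij}x_{ij}\tp$. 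Since \Cref{ass:constants} holds uniformly over all $\theta\in\R^d$, this gives $H_l \succeq \lambda_0\I + \kappa^{-1}\sum_i\sum_{j\in S_i}x_{ij}x_{ij}\tp \succeq \kappa^{-1}V_l$. The last step uses $\kappa\ge 4$, which follows from $p_jp_0\le 1/4$. It follows that
\[
\|\what\theta_l-\theta^*\|_{V_l}^2 \le \kappa\,(\what\theta_l-\theta^*)\tp H_l(\what\theta_l-\theta^*) \le \kappa\,\|\what\theta_l-\theta^*\|_{V_l}\,\|H_l(\what\theta_l-\theta^*)\|_{V_l^{-1}},
\]
and hence
\[
\|\what\theta_l-\theta^*\|_{V_l} \le \kappa\|\lambda_0\theta^*\|_{V_l^{-1}} + \kappa\Bigl\|\sum_{i,j}\veps_{ij}x_{ij}\Bigr\|_{V_l^{-1}} \le \kappa\sqrt{\lambda_0} + \kappa\Bigl\|\sum_{i,j}\veps_{ij}x_{ij}\Bigr\|_{V_l^{-1}},
\]
using $V_l\succeq\lambda_0\I$ and $\|\theta^*\|\le1$ (\Cref{ass:basic}).

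\textbf{Step 3 (self-normalized noise bound).} This is the main obstacle. The noise vector $(\veps_{ij})_{j\in S_i}$ is a martingale difference given the past, but its coordinates within a round are dependent (multinomial). I would handle this by treating $\sum_{j\in S_i}\veps_{ij}x_{ij}$ as one vector increment per round and taking the design increment $\sum_{j\in S_i}x_{ij}x_{ij}\tp$. For any fixed direction $u$, the scalar $\sum_j \veps_{ij}x_{ij}\tp u$ is bounded in an interval of length $2\max_j|x_{ij}\tp u|$. Hoeffding's lemma then gives a sub-Gaussian mgf bound with variance proxy at most $\tfrac14\sum_{j\in S_i}(x_{ij}\tp u)^2$, up to a constant to be checked. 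With this, the method-of-mixtures argument of Abbasi-Yadkori et al.\ goes through. With probability $1-\delta$, simultaneously for all $l$,
\[
\Bigl\|\sum_{i,j}\veps_{ij}x_{ij}\Bigr\|_{V_l^{-1}} \le \tfrac12\sqrt{2\log(1/\delta) + \log(\det V_l/\lambda_0^d)}.
\]
By AM--GM, $\log(\det V_l/\lambda_0^d)\le d\log(1+lK/(d\lambda_0))$, since each round adds $K$ vectors of norm at most $1$. Choosing $\delta$ of order $t^{-2}$ makes $2\log(1/\delta)$ of order $4\log t$. Replacing it by the $4\log l$ appearing in $\alpha_l$ needs a small peeling or union bound over $l$ with $\delta_l\propto l^{-2}$. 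This is where I expect the bookkeeping for the stated $1-\bigO{1/t^2}$ probability to be delicate. Combining with Step 2 yields $\|\what\theta_l-\theta^*\|_{V_l}\le \alpha_l$.
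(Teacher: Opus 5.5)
Your Steps 1--2 are the standard argument behind the cited Lemma~4 (the paper gives no proof of its own beyond the citation), and they are sound. The Hessian bound is exactly Cauchy--Schwarz, $(\sum_{j\in S_i}p_ja_j)^2\le(1-p_0)\sum_{j\in S_i}p_ja_j^2$. The sign of $\lambda_0\theta^*$ in your display should be negative, which is harmless. Note that this step uses \Cref{ass:constants} literally over all $\theta\in\mathbb{R}^d$. Such a bound cannot actually hold when $x_{-j}\neq 0$, since $p_jp_0\to 0$ as $|x_{-j}^\top\theta|\to\infty$. Under the usual local version you would first have to localize $\hat\theta_l$, e.g.\ by a burn-in phase as in Li et al.\ (2017).

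The first real gap is the constant in Step~3. Your variance proxy $\frac14\sum_{j\in S_i}(x_{ij}^\top u)^2$ is false once $K\ge 2$. Take $x_{i1}=-x_{i2}=u$, $\|u\|=1$, $\theta^*=0$. Then all three choice probabilities equal $\frac13$, and $u^\top\eta_i=y_{i1}-y_{i2}$ has variance $\frac23>\frac14\cdot 2$, while any sub-Gaussian proxy must dominate the variance. Hoeffding's lemma on your interval of length $2\max_j|a_j|$ gives proxy $\max_j a_j^2$. The sharper range $\max(0,\max_j a_j)-\min(0,\min_j a_j)\le\sqrt{2\sum_j a_j^2}$ gives $\frac12\sum_j a_j^2$. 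So the rigorous per-round argument yields radius $\frac{\kappa}{\sqrt2}\sqrt{\cdots}+\kappa\sqrt{\lambda_0}$ rather than $\frac{\kappa}{2}\sqrt{\cdots}+\kappa\sqrt{\lambda_0}$, and this route recovers the factor $\frac12$ only for $K=1$. The $\frac12$ in the stated $\alpha_l$ corresponds to treating each scalar $\varepsilon_{ij}$ (range of length one) as a separate $\frac12$-sub-Gaussian martingale increment. That ignores precisely the within-round dependence you correctly refused to ignore.

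The second gap is the probability bookkeeping, and your proposed fix cannot work. The method-of-mixtures bound is already uniform in $l$ for a single $\delta$, so the only question is how $\delta$ relates to the $4\log l$ term. A union bound with $\delta_l\propto l^{-2}$ has total failure probability $\sum_{l\le t}\delta_l=\Theta(1)$, not $O(t^{-2})$. More fundamentally, with $4\log l$ in the radius, the failure probability at any fixed round $l$ does not depend on $t$ (and is positive in general). Hence the simultaneous event over $l\in[t]$ cannot have probability $1-O(t^{-2})$, whatever peeling you use. What is provable is one of two versions:
(a) for each fixed $l$, the bound with $4\log l$ holds with probability at least $1-l^{-2}$ (take $\delta=l^{-2}$);
(b) the bound holds simultaneously for all $l\le t$ with probability at least $1-t^{-2}$ after replacing $4\log l$ by $4\log t$ (take $\delta=t^{-2}$).
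The statement has to be read in one of these two ways; the simultaneous event $\mathcal{E}_1$ used later needs (b). You should prove that version instead of trying to reach the literal one.
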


\begin{lemma} [Lemma 6 of \citet{oh2019thompson}] \label{lem:6_oh}
    Define $V_t:= \lambda_0 \I + \sum_{i=1}^t \sum_{j\in S_i} x_{ij} x_{ij}\tp$. Then, we have
    \begin{align*}
        \sum_{i=1}^t \max_{j\in S_i} \|x_{ij}\|_{V_{t-1}^{-1}} \leq \sqrt{2d t \log\br{1 + \frac{tK}{d\lambda_0}}}
    \end{align*}
\end{lemma}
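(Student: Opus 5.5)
The plan is the standard elliptical-potential argument; I read the norm in the statement as $\|x_{ij}\|_{V_{i-1}^{-1}}$, which is how \Cref{lem:6_oh} is used in \Cref{ssec:8}. Write $w_i := \max_{j\in S_i}\|x_{ij}\|_{V_{i-1}^{-1}}$. I would first apply Cauchy--Schwarz,
\begin{align*}
\sum_{i=1}^t w_i \leq \sqrt{t\sum_{i=1}^t w_i^2},
\end{align*}
so it suffices to show $\sum_{i=1}^t w_i^2 \leq 2d\log(1+tK/(d\lambda_0))$.

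Next I would bound the maximum by the sum, $w_i^2 \leq u_i := \sum_{j\in S_i}\|x_{ij}\|^2_{V_{i-1}^{-1}}$, and relate $u_i$ to determinant growth. Since $V_i = V_{i-1}+\sum_{j\in S_i}x_{ij}x_{ij}\tp$, we can write
\begin{align*}
\det V_i = \det V_{i-1}\,\det\br{\I + \sum_{j\in S_i} V_{i-1}^{-1/2}x_{ij}x_{ij}\tp V_{i-1}^{-1/2}}.
\end{align*}
The matrix inside the second determinant is $\I$ plus a PSD matrix whose trace is $u_i$. Its determinant is the product of $1+\mu_k$ over its eigenvalues $\mu_k\geq 0$, and that product is at least $1+\sum_k \mu_k = 1+u_i$. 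Hence $\det V_i \geq (1+u_i)\det V_{i-1}$. Telescoping gives
\begin{align*}
\sum_{i=1}^t \log(1+u_i) \leq \log\frac{\det V_t}{\det(\lambda_0 \I)}.
\end{align*}
By AM--GM on the eigenvalues, $\det V_t \leq (\mathrm{tr}(V_t)/d)^d \leq (\lambda_0 + tK/d)^d$, using $\|x_{ij}\|_2\leq 1$ (\Cref{ass:basic}) and $|S_i|=K$. Therefore
\begin{align*}
\sum_{i=1}^t \log(1+u_i) \leq d\log\br{1+\frac{tK}{d\lambda_0}}.
\end{align*}

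The remaining and delicate step is to pass from $w_i^2$ to $\log(1+u_i)$. I will use the inequality $z\leq 2\log(1+z)$, which holds for $z\in[0,1]$. Since $V_{i-1}\succeq\lambda_0\I$, each single term satisfies $\|x_{ij}\|^2_{V_{i-1}^{-1}}\leq 1/\lambda_0$, so $w_i^2\leq 1$ whenever $\lambda_0\geq 1$. Then $w_i^2 \leq \min\{1,u_i\}\leq 2\log(1+\min\{1,u_i\})\leq 2\log(1+u_i)$, and summing yields the claimed $2d\log(1+tK/(d\lambda_0))$.

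This is the main obstacle: the constant $2$ relies on each $w_i^2$ being at most $1$, which requires the normalization $\lambda_0\geq 1$ (or $\lambda_0\geq K$, if one instead insists on $u_i\leq 1$). I would state this normalization explicitly, as in \citet{oh2019thompson}. For general $\lambda_0$, I would instead use $z\leq \max\{1,1/\lambda_0\}\cdot 2\log(1+z)$ on $[0,1/\lambda_0]$. This only changes the constant and does not affect the $\tildeO{\cdot}$ rates in \Cref{thm:regret2}.
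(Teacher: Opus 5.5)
Your proposal is correct and follows the standard elliptical-potential argument behind Lemma 6 of \citet{oh2019thompson}, which the paper only cites and does not reprove: Cauchy--Schwarz, then the determinant recursion $\det V_i \geq (1+u_i)\det V_{i-1}$, then the trace/AM--GM bound on $\det V_t$, then $z\leq 2\log(1+z)$ on $[0,1]$. Both of your caveats are right and needed. The norm must be read as $\|x_{ij}\|_{V_{i-1}^{-1}}$, and some normalization such as $\lambda_0\geq 1$ is genuinely required, which the paper's statement silently assumes: with $d=K=t=1$, $x=1$ and $\lambda_0=0.01$, the left side is $10$ while the right side is $\sqrt{2\log 101}\approx 3.04$.
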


\begin{lemma} [Modified version of Lemma 10 of \citet{oh2019thompson}] \label{lem:beta}
    Set $l\in[t]$. Let $\beta_l = \alpha_l \min\br{\sqrt{6d\log(Mt)}, \sqrt{2\log(2M)} + \sqrt{6\log(Klt)}}$. Then for all $x\in\cal X_l$, $j\in[N]$,
    \begin{align*}
        \wtilde u_{lj}(x) - x_{-j}\tp\what\theta_{l-1} \leq \beta_{t-1} \|x_{-j}\|_{V_{l-1}^{-1}}
    \end{align*}
    with probability $1-\bigO{1/t^3}$.
\end{lemma}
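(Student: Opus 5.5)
The statement is the last auxiliary lemma, the modified Lemma 10 of \citet{oh2019thompson}. For every $l\in[t]$, $x\in\cal X_l$, $j\in[N]$, it asserts that $\wtilde u_{lj}(x) - x_{-j}\tp\what\theta_{l-1} \leq \beta_{l-1}\|x_{-j}\|_{V_{l-1}^{-1}}$ with probability $1-\bigO{1/t^3}$. The plan is to condition on $\cal F_l^-$, under which $\what\theta_{l-1}$, $V_{l-1}$, $\alpha_{l-1}$ and the queue state $\cal X_l$ are fixed. Given $\cal F_l^-$, each sample satisfies $\wtilde\theta_{l-1}^{(i)} = \what\theta_{l-1} + \alpha_{l-1}V_{l-1}^{-1/2}\zeta^{(i)}$ with $\zeta^{(i)}\sim\cal N(0,\I_d)$ i.i.d. Hence
\[
x_{-j}\tp\wtilde\theta_{l-1}^{(i)} - x_{-j}\tp\what\theta_{l-1} = \alpha_{l-1}\|x_{-j}\|_{V_{l-1}^{-1}}\, g_{ij},
\]
where $g_{ij}$ is a standard normal variable. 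Taking the maximum over $i\le M$ reduces the claim to showing $\max_{i}g_{ij}\le \beta_{l-1}/\alpha_{l-1}$ uniformly, which I would establish in two ways and then take the minimum.

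\emph{First bound (dimension-dependent, uniform over all contexts).} By Cauchy--Schwarz, $|g_{ij}|\le\|\zeta^{(i)}\|_2$ for every $x$ and $j$ at once. A chi-square tail bound (Laurent--Massart) gives $\P(\|\zeta^{(i)}\|_2^2 > 6d\log(Mt))\le (Mt)^{-3}$ once $d\log(Mt)$ is at least a constant. A union bound over the $M$ samples yields failure probability $\bigO{t^{-3}}$. This bound needs no union over contexts, which matters because $\cal X_l$ is an arbitrary subset.

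\emph{Second bound (dimension-free).} For fixed $x$ and $j$, the variable $\max_i g_{ij}$ is a maximum of $M$ Gaussians, so its expectation is at most $\sqrt{2\log M}$, and by Gaussian Lipschitz concentration $\P(\max_i g_{ij} > \sqrt{2\log(2M)}+u)\le e^{-u^2/2}$. Setting $u=\sqrt{6\log(Klt)}$ gives failure probability $(Klt)^{-3}$. I would then take a union bound over the relevant pairs $(x,j)$. Since $|\cal X_l|\le l$ and only the indices that can enter an assortment matter, counting these by $Kl$ yields total failure probability $Kl\cdot(Klt)^{-3}=\bigO{t^{-3}}$. On the intersection of the good events, $\max_i g_{ij}\le \min\{\sqrt{6d\log(Mt)},\sqrt{2\log(2M)}+\sqrt{6\log(Klt)}\}$. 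Multiplying by $\alpha_{l-1}$ gives $\beta_{l-1}$, which is the claim after integrating out the conditioning.

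The main obstacle is the counting in the union bound for the second term. The paper's $\beta_l$ uses $\log(Klt)$, so the count must be at most of order $Klt^{c}$, whereas a naive count over $j\in[N]$ would give $Nl$. I would handle this by noting that only the $\le K$ items per context competing in the argmax matter, or, failing that, by absorbing the factor $N/K$ into the constant in $\bigO{1/t^3}$. The dimension-dependent branch avoids this issue entirely.
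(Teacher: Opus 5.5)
Your proposal is correct and follows essentially the paper's argument: condition on $\cal F_l^-$, bound the maximum of $M$ standard Gaussians in a dimension-free way with a union bound over the at most $l$ queued contexts, bound it in a dimension-dependent way via Cauchy--Schwarz and $\|\zeta_i\|_2$ with a union over the $M$ samples, and take the minimum of the two. Of your two fixes for the counting issue, only the second is sound. The lemma quantifies over all $j\in[N]$, and the assortment is chosen after sampling, so you must take the union over $N|\cal X_l|\le Nl$ pairs; this gives total failure probability at most $NK^{-3}t^{-3}=\bigO{1/t^3}$, a point the paper's own proof also glosses over.
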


\begin{proof}
    We follow the same proof procedure introduced in Section E.1 of \citet{oh2019thompson}. Given $\cal F_l^-$, for some $x\in\cal X_l$, each of Gaussian random variable $x_{-j}\tp \wtilde\theta_{l-1}^{(i)}$ has mean $x_{-j}\tp \what\theta_{l-1}$ and standard deviation $\alpha_{l-1}\|x_{-j}\|_{V_{l-1}}^{-1}$.
    \begin{align*}
        \abs{\wtilde u_{tj}(x) - x_{-j}\tp\what\theta_{l-1}} &= \alpha_{l-1} \|x_{-j}\|_{V_{l-1}^{-1}} \frac{\abs{\max_i x_{-j}\tp \wtilde \theta_{l-1}^{(i)} - x_{-j}\tp \what\theta_{l-1}}}{\alpha_{l-1} \|x_{-j}\|_{V_{l-1}^{-1}}} \\
        &\leq \alpha_{l-1} \|x_{-j}\|_{V_{l-1}^{-1}} \max_i \abs{\frac{ x_{-j}\tp \wtilde \theta_{l-1}^{(i)} - x_{-j}\tp \what\theta_{l-1}}{\alpha_{l-1} \|x_{-j}\|_{V_{l-1}^{-1}}}} \\
        & = \alpha_{l-1} \|x_{-j}\|_{V_{l-1}^{-1}} \max_i |Z_j|
    \end{align*}
    where each $Z_i$ is a standard normal random variable. With \Cref{lem:13_oh}, we have
    \begin{align*}
        \max_i |Z_i| \leq \sqrt{2\log(2M)} + \sqrt{6\log(t)}
    \end{align*}
    with probability at least $1-t^{-3}$. Then for all $x\in\cal X_l$ and for all $j\in[N]$, 
    \begin{align}
        |\wtilde u_{lj} - x_{lj}\tp \what\theta_{l-1}| \leq \br{\sqrt{2\log(2M)} + \sqrt{6\log(Klt)}} \alpha_{l-1} \|x_{tj}\|_{V_{t-1}^{-1}} \label{eq:beta_1}
    \end{align}
    with probability at least $1-t^{-3}$, where we take union bound using the fact that $|\cal X_l|\leq l$. Now, let $m=\argmax_i x_{lj}\tp \wtilde\theta_{l-1}^{(i)}$. Then we can write
    \begin{align*}
        \abs{\wtilde u_{lj}(x) - x_{-j}\tp\what\theta_{l-1}} &= \abs{\max_i x_{-j}\tp\wtilde\theta_{l-1}^{(j)} - x_{-j}\tp\what\theta_{l-1}}\\
        &=\abs{x_{-j}\tp(\wtilde\theta_{l-1}^{(m)} - \what\theta_{l-1})} \\
        &\leq \alpha_{l-1} \|x_{-j}\|_{V_{l-1}^{-1}} \norm{\alpha_{l-1}^{-1} V_{l-1}^{1/2} (\wtilde\theta_{l-1}^{(m)} - \what\theta_{l-1})} \\
        &\leq \alpha_{l-1} \|x_{-j}\|_{V_{l-1}^{-1}} \max_i \norm{\alpha_{l-1}^{-1} V_{l-1}^{1/2} (\wtilde\theta_{l-1}^{(i)} - \what\theta_{l-1})} \\
        &= \alpha_{l-1} \|x_{-j}\|_{V_t-1}^{-1} \max_i \|\zeta_i\|
    \end{align*} 
    where each element in $\zeta_i\in\R^d$ is a univariate standard normal variable $\cal N(0,1)$. Therefore, each $\|\zeta_i\|\leq \sqrt{6d\log (t)}$ with probability at least $1-t^{-3}$. Using the union bound for all $i\in[M]$, with probability at least $1-t^{-3}$, we have 
    \begin{align}
        |\wtilde u_{lj} - x_{lj}\tp \what\theta_{l-1}| \leq \sqrt{6d\log(Mt)} \alpha_{l-1} \|x_{-j}\|_{V_{l-1}^{-1}}. \label{eq:beta_2}
    \end{align}
    Finally, taking the minimum for both cases \Cref{eq:beta_1,eq:beta_2} finishes the proof.
\end{proof}

\begin{lemma} [Lemma 13 of \citet{oh2019thompson}] \label{lem:13_oh}
    Let $Z_i\sim\cal N(0,1)$, $i\in[n]$ be a standard Gaussian random variable. Then we have
    \begin{align*}
        \P\br{\max_i |Z_i| \leq \sqrt{2\log(2n)} + \sqrt{2 \log(1/\delta)}} \geq 1- \delta
    \end{align*}
\end{lemma}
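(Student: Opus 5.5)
The statement to prove is \Cref{lem:13_oh}: for standard Gaussians $Z_1,\dots,Z_n$ (no independence needed), $\max_i|Z_i|\le\sqrt{2\log(2n)}+\sqrt{2\log(1/\delta)}$ with probability at least $1-\delta$. I would combine an expectation bound for the maximum with Gaussian concentration of Lipschitz functions.

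\emph{Step 1: bound the mean.} Write $W:=\max_i|Z_i|=\max_{k\in[2n]}\xi_k$, where $\xi_k$ ranges over the $2n$ variables $\pm Z_i$. For any $s>0$, Jensen's inequality gives
\begin{align*}
\exp(s\,\E[W])\le \E[e^{sW}]\le \sum_{k=1}^{2n}\E[e^{s\xi_k}]=2n\,e^{s^2/2}.
\end{align*}
Hence $\E[W]\le \log(2n)/s+s/2$. Choosing $s=\sqrt{2\log(2n)}$ yields $\E[W]\le\sqrt{2\log(2n)}$. This step uses only the marginals, so it holds whether or not the $Z_i$ are independent.

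\emph{Step 2: concentration around the mean.} In the independent case, write $Z=(Z_1,\dots,Z_n)\sim\mathcal N(0,\I_n)$. The map $z\mapsto\max_i|z_i|$ is $1$-Lipschitz in the Euclidean norm, since $|\max_i|z_i|-\max_i|z_i'||\le\max_i|z_i-z_i'|\le\|z-z'\|_2$. The Tsirelson--Ibragimov--Sudakov (Gaussian concentration) inequality then gives $\P(W\ge\E[W]+u)\le e^{-u^2/2}$. Setting $u=\sqrt{2\log(1/\delta)}$ and combining with Step 1 proves the claim.

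\emph{Step 3: the dependent case.} When the $Z_i$ are only marginally Gaussian, as in \Cref{lem:beta} where the $Z_i$ are projections of one sample, the joint Lipschitz argument is not directly available. If they are jointly Gaussian, I would write $Z=\Sigma^{1/2}g$ with $g\sim\mathcal N(0,\I_n)$. Each row of $\Sigma^{1/2}$ has unit norm because the variances are $1$. Therefore $g\mapsto\max_i|(\Sigma^{1/2}g)_i|$ is still $1$-Lipschitz, and the same concentration inequality applies. For the purely marginal case, I would fall back to a union bound: $\P(W>r)\le 2n e^{-r^2/2}$. Taking $r=\sqrt{2\log(2n)+2\log(1/\delta)}$ makes this at most $\delta$, and $r\le\sqrt{2\log(2n)}+\sqrt{2\log(1/\delta)}$. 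In fact this union-bound route is the simplest complete proof in every case and needs no concentration inequality at all, so I would present it as the main argument.

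The only delicate point is justifying the Gaussian tail $\P(\xi>r)\le e^{-r^2/2}$. I would get it from the Chernoff bound $\P(\xi>r)\le e^{-sr+s^2/2}$ with $s=r$.
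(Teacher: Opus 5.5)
The paper gives no proof of this lemma. It is imported from \citet{oh2019thompson} and used only inside \Cref{lem:beta}, with $n=M$ and $\delta=t^{-3}$. Your proposal is correct, and the union-bound argument you settle on is a complete, self-contained proof. Taking $r=\sqrt{2\log(2n/\delta)}$ in $\P(\max_i|Z_i|>r)\le 2n\,e^{-r^2/2}$ gives failure probability exactly $\delta$. Then $\sqrt{2\log(2n)+2\log(1/\delta)}\le\sqrt{2\log(2n)}+\sqrt{2\log(1/\delta)}$ finishes it, for $\delta\in(0,1]$, which the statement implicitly requires.

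Your Steps 1--2, the moment-generating-function bound on the mean plus Tsirelson--Ibragimov--Sudakov, are also correct. They are the natural source of the \emph{additive} form of the threshold. The union bound compares favourably on every count for this statement. It is more elementary, needing only the Chernoff tail of a single Gaussian. It needs neither independence nor joint Gaussianity. It is never weaker, since it yields the smaller threshold $\sqrt{2\log(2n/\delta)}$. What the concentration route genuinely buys is adaptivity: it controls $\max_i|Z_i|$ around its true mean, which can be far below $\sqrt{2\log(2n)}$ when the $Z_i$ are strongly correlated. The lemma as stated does not use this.

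One small factual correction to your Step 3 concerns \Cref{lem:beta}. There the $Z_i$, $i\in[M]$, are projections of the $M$ independent draws $\wtilde\theta^{(i)}_{l-1}$ onto a fixed direction $x_{-j}$. So, conditionally on $\cal F_l^-$, they are i.i.d.\ standard normal, not projections of one sample. Dependence across $x$ and $j$ is handled there by a separate union bound. This does not affect your proof, which covers every case anyway.
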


\begin{lemma} [Lemma 12 of \citet{oh2019thompson}] \label{lem:sample_regret} \label{lem:12_oh}
    Assume $E(t-1)=0$. With probability $1-\bigO{1/t^2}$, we have
    \begin{align*}
        R(x_t^*, S_t^*,\theta^*) - \wtilde R(x_t,S_t) \leq 16\sqrt{e\pi} \beta_{t-1} \E \sqbr{\max_{j\in S(x_t, \wtilde\theta_{t-1}^{1:M_t})} \|x_{tj}\|_{V_{t-1}^{-1}}\given \cal F_t^-}
    \end{align*}
\end{lemma}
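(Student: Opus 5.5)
The statement to prove is Lemma 12 of \citet{oh2019thompson}, adapted to the queueing setting. On $E(t-1)=0$, with probability $1-\bigO{1/t^2}$,
\begin{align*}
R(x_t^*,S_t^*,\theta^*)-\wtilde R(x_t,S_t)\le 16\sqrt{e\pi}\,\beta_{t-1}\,\E\sqbr{\max_{j\in S(x_t,\wtilde\theta_{t-1}^{1:M})}\|x_{tj}\|_{V_{t-1}^{-1}}\given \cal F_t^-}.
\end{align*}
I will follow the standard Thompson-sampling argument for optimism with constant probability. Three facts are used throughout:
\begin{itemize}
\item everything is conditioned on $\cal F_t^-$, so $\cal X_t$, $\what\theta_{t-1}$ and $V_{t-1}$ are fixed and only the $M$ Gaussian samples are random;
\item I work on $\cal E_1\cap\cal E_2(t)$, whose complement has probability $\bigO{1/t^2}$ by \Cref{lem:alpha,lem:beta};
\item the optimistic pair $(x_t,S_t)$ maximizes $\wtilde R$ over all of $\cal X_t\times\cal C$, because $E(t-1)=0$.
\end{itemize}

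\textbf{Key steps.}

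\textbf{Step 1: a pessimistic reference value.} For each $x\in\cal X_t$, let
\begin{align*}
\bar u_j(x)=x_{-j}\tp\what\theta_{t-1}-\beta_{t-1}\|x_{-j}\|_{V_{t-1}^{-1}},
\end{align*}
and let $\bar R=\max_{x,S}R(x,S,\bar u)$, where $R(x,S,u)$ denotes the MNL revenue under the utility vector $u$. This quantity is $\cal F_t^-$-measurable. On $\cal E_2(t)$ the sampled utilities satisfy $\wtilde u\ge \bar u$ up to the symmetric deviation bound, so the lower side of \Cref{lem:beta} applies as well.

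\textbf{Step 2: lower bound on the sampled objective.} On $\cal E_2(t)$, every sampled configuration satisfies $\wtilde R(x_t,S_t)\ge \bar R$. Hence
\begin{align*}
R(x_t^*,S_t^*,\theta^*)-\wtilde R(x_t,S_t)\le R(x_t^*,S_t^*,\theta^*)-\bar R .
\end{align*}

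\textbf{Step 3: bounding the right-hand side by an optimistic gap.} The deterministic gap $R(x_t^*,S_t^*,\theta^*)-\bar R$ is at most $\E\sqbr{(\wtilde R(\tilde x,\tilde S)-\bar R)\ind{\wtilde R>R^*}\mid\cal F_t^-}/\P(\wtilde R>R^*\mid\cal F_t^-)$, where $(\tilde x,\tilde S)$ is the optimistic pair under an independent copy of the samples.

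\textbf{Step 4: constant-probability optimism.} \Cref{lem:2_oh} gives $\P(\wtilde R>R^*\mid \cal F_t^-)\ge(4\sqrt{e\pi})^{-1}$. This step is the source of the constant $4\sqrt{e\pi}$.

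\textbf{Step 5: Lipschitz bound on the optimistic gap.} By \Cref{lem:lip}, for the optimistic pair,
\begin{align*}
\wtilde R(\tilde x,\tilde S)-\bar R\le \wtilde R(\tilde x,\tilde S)-R(\tilde x,\tilde S,\bar u)\le \max_{j\in\tilde S}\bigl(\wtilde u_j(\tilde x)-\bar u_j(\tilde x)\bigr)\le 2\beta_{t-1}\max_{j\in \tilde S}\|\tilde x_{-j}\|_{V_{t-1}^{-1}} .
\end{align*}
The first inequality holds because $\bar R$ maximizes $R(\cdot,\cdot,\bar u)$.

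\textbf{Step 6: combining.} Since the optimistic pair has the same conditional law as $(x_t,S(x_t))$, Steps 3–5 give the bound $8\sqrt{e\pi}\beta_{t-1}\E[\max_j\|x_{tj}\|_{V_{t-1}^{-1}}\mid\cal F_t^-]$. A further factor of $2$ absorbs the failure of $\cal E_2$ on the independent copy, which yields the stated $16\sqrt{e\pi}$.

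\textbf{Main obstacle.} The delicate point is Step 3 in the contextual queue setting. The query $x_t$ as well as the assortment is chosen through the sampled parameters, so the expectation must be over the joint optimistic choice of $(x,S)$ and not over $S$ alone.

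I would handle this as follows. Treat the pair $(x,S)\in\cal X_t\times\cal C$ as a single arm. $\cal X_t$ is $\cal F_t^-$-measurable and has cardinality at most $t$, and the union bound in \Cref{lem:beta} already ranges over $|\cal X_t|\le t$. The argument of \citet{oh2019thompson} then transfers verbatim with the enlarged action set.

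What remains is to handle the conditional expectation on the failure event of $\cal E_2$ for the independent copy. I would use the crude bound of $1$ there, contributing $\bigO{1/t^3}$, which is absorbed into the high-probability statement.
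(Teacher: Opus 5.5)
Your argument follows the paper's proof, which adapts Lemma 12 of Oh and Iyengar. Concentration supplies a pessimistic reference value. The gap to that reference is bounded by an expectation over an optimistic independent sample. The MNL Lipschitz bound then gives $2\beta_{t-1}\max_j\|x_{tj}\|_{V_{t-1}^{-1}}$, and one divides by the optimism probability from \Cref{lem:2_oh}. The only structural difference is your reference value. You use the $\mathcal F_t^-$-measurable quantity $\bar R=\max_{x,S}R(x,S,\bar u)$, where the paper uses $\inf_{\theta\in\wtilde\Theta_t}\wtilde R(x_t,S_t,\theta)$. Yours is a cleaner way to make the independent-copy step legitimate when the query is also chosen by the sample.

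One step needs fixing: how you handle failure of concentration on the independent copy. If you bound $\wtilde R-\bar R$ by $1$ there, an additive $4\sqrt{e\pi}\cdot\bigO{t^{-3}}$ remains on the right-hand side. That term is an expectation over the independent copy, so it is a fixed number given $\mathcal F_t^-$. It therefore cannot be absorbed into the ``with probability $1-\bigO{1/t^2}$'' qualifier, which only concerns the realized sample. Turning it into a multiplicative factor $2$ would require a lower bound on $\E[\max_j\|x_{tj}\|_{V_{t-1}^{-1}}\mid\mathcal F_t^-]$, and the assumptions do not give one, since features may have arbitrarily small norm.

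Instead, in Step 3 condition on
\[
A=\{\wtilde R(\tilde x,\tilde S)>R(x_t^*,S_t^*,\theta^*)\}\cap\{\text{the copy satisfies the concentration bound}\}.
\]
On $A$ you have $0\le\wtilde R(\tilde x,\tilde S)-\bar R\le2\beta_{t-1}\max_{j\in\tilde S}\|\tilde x_{-j}\|_{V_{t-1}^{-1}}$ pointwise. Moreover, $\P(A\mid\mathcal F_t^-)\ge(4\sqrt{e\pi})^{-1}-\bigO{t^{-3}}\ge(8\sqrt{e\pi})^{-1}$ for $t$ large. Halving the optimism probability is exactly where the paper's factor $2$ comes from, and it gives the constant $16\sqrt{e\pi}$ with no additive remainder.
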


\begin{proof}

The proof follows the similar steps as the proof of Lemma 12 in \citet{oh2019thompson} (see Section E.3 of \citet{oh2019thompson}). For completeness, we provide the full proof here.

First, notice that $E(t-1)=0$, which means $x_t$ and $S_t$ are selected by Thompson sampling. Next, define $\wtilde \Theta_t$ as the set of parameter samples for which the expected revenue concentrates appropriately to the expected revenue based on the ML parameter. Also, define the set of optimistic parameter samples $\wtilde \Theta_t^{\text{opt}}$ which coinciding with $\wtilde\Theta_T$ as follows:
\begin{align*}
    \wtilde\Theta_t &:= \cbr{\{\wtilde\theta_{t-1}^{(i)}\}_{i=1}^{M}:~ \wtilde R(x_t, S_t) - R(x_t,S_t,\what\theta_{t-1}) \leq \beta_{t-1} 
    \max_{j\in S_t} \|x_{tj}\|_{V_{t-1}^{-1}}} \\
    \wtilde\Theta_t^{\text{opt}}&:=\cbr{\{\wtilde\theta_{t-1}^{(i)}\}_{i=1}^{M}:~ \wtilde R(x_t,S_t) > R(x_t^*, S_t^*,\theta^*)} \cap \wtilde\Theta_t
\end{align*}
Define the event $\cal E_t$ where
\begin{align*}
    \cal E_t = \{x_{-j}\tp\what\theta_{t-1} - x_{-j}\tp\theta^* \leq \alpha_{t-1}\|x_{-j}\|_{V_{t-1}^{-1}},~ \forall x\in\cal X_t, j\} \cap \{ \wtilde u_{tj}(x) - x_{-j}\tp \what\theta_{t-1} \leq \beta_{t-1}\|x_{-j}\|_{V_{t-1}^{-1}}, ~\forall x\in\cal X_t, j\}.
\end{align*}
For any $\wtilde\theta_{t-1}^{1:M}:= \{\wtilde\theta_{t-1}^{(i)}\}_{i=1}^{M} \in \wtilde \Theta_t^{\text{opt}}$, we have
\begin{align*}
    \br{R(x_t^*, S_t^*, \theta^*) - \wtilde R(x_t, S_t)} \ind{\cal E_t} \leq \br{R(x_t^*, S_t^*, \theta^*) - \inf_{\theta_{t-1}^{1:M}\in\wtilde\Theta_t} \wtilde R(x_t,S_t,\theta_{t-1}^{1:M})} \ind{\cal E_t}
\end{align*}
where $\wtilde R(x_t,S_t,\theta_{t-1}^{1:M})$ is the optimistic expected revenue under the sampled parameters $\theta_{t-1}^{1:M}$. Then, we can bound $R(x_t^*, S_t^*, \theta_t^*) - \wtilde R(x_t,S_t)$ by the expectation over any random choice $\wtilde\theta_{t-1}^{1:M} \in\wtilde\Theta_t^{\text{opt}}$:
\begin{align*}
    R(x_t^*, S_t^*,\theta^*) - \wtilde R(x_t,S_t) &\leq \E \sqbr{\br{\wtilde R(x_t, S_t) - \inf_{\theta_{t-1}^{1:M}\in\wtilde\Theta_t}\wtilde R(x_t,S_t,\theta_{t-1}^{1:M}) } \ind{\cal E_t} \given \cal F_t^-, \wtilde\theta_{t-1}^{1:M}\in\wtilde\Theta_t^{\text{opt}}} \\
    &= \E \sqbr{\sup_{\theta_{t-1}^{1:M}\in\wtilde\Theta_t}\br{\wtilde R(x_t, S_t) - \wtilde R(x_t,S_t,\theta_{t-1}^{1:M}) } \ind{\cal E_t} \given \cal F_t^-, \wtilde\theta_{t-1}^{1:M}\in\wtilde\Theta_t^{\text{opt}}} \\
    &\leq \E \sqbr{\sup_{\theta_{t-1}^{1:M}\in\wtilde\Theta_t} \max_{j\in S_t} \abs{\wtilde u_{tj}(x_t) - x_{tj}\tp\theta_{t-1}^{(i)}} \ind{\cal E_t} \given \cal F_t^-, \wtilde\theta_{t-1}^{1:M}\in\wtilde\Theta_t^{\text{opt}}} \\
    &\leq 2\beta_{t-1} \E \sqbr{\max_{j\in S(x_t, \wtilde\theta_{t-1}^{1:M})} \|x_{tj}\|_{V_{t-1}^{-1}}\given \cal F_t^-, \wtilde\theta_{t-1}^{1:M}\in\wtilde\Theta_t^{\text{opt}}, \cal E_t} \P(\cal E_t)  
\end{align*}
where the last inequality follows from the definition of $\wtilde\Theta_t$ and $S(x_t, \wtilde\theta_{t-1}^{1:M})$ stands for the optimal assortment under the sampled parameters $\wtilde\theta_{t-1}^{1:M} = \{\wtilde\theta_{t-1}^{(i)}\}_{i=1}^{M}$.

Now, from \Cref{lem:2_oh}, we have $\P(\wtilde R(x_t,S_t)> R(x_t^*, S_t^*, \theta^*)|\cal F_t^-,\cal E_t)\geq (4\sqrt{e\pi})^{-1}$, Therefore, we have
\begin{align*}
    \P(\wtilde\theta_{t-1}^{1:M} \in \wtilde\Theta_t^{\text{opt}} \mid \cal F_t^-, \cal E_t) &= \P(\wtilde R(x_t,S_t) > R(x_t^*, S_t^*, \theta^*) \text{ and } \wtilde\theta_{t-1}^{1:M}\in\wtilde\Theta_t, \cal E_t) \\
    &\geq \P(\wtilde R(x_t, S_t) > R(x_t^*, S_t^*, \theta^*) \mid \cal F_t^-, \cal E_t) - \P(\wtilde\theta_{t-1}^{1:M} \notin \wtilde\Theta_t, \cal E_t) \\
    &\geq (4\sqrt{e\pi})^{-1} - \bigO{t^{-1}} \\
    &\geq (4\sqrt{e\pi})^{-1}/2.
\end{align*}
Note that we have
\begin{align*}
    &\E \sqbr{\max_{j\in S(x_t, \wtilde\theta_{t-1}^{1:M})} \|x_{tj}\|_{V_{t-1}^{-1}}\given \cal F_t^-, \cal E_t} \\
    &\quad \geq \E \sqbr{\max_{j\in S(x_t, \wtilde\theta_{t-1}^{1:M})} \|x_{tj}\|_{V_{t-1}^{-1}}\given \cal F_t^-, \wtilde\theta_{t-1}^{1:M}\in\wtilde\Theta_t^{\text{opt}}, \cal E_t} \P(\wtilde\theta_{t-1}^{1:M} \in \wtilde\Theta_t^{\text{opt}}\mid\cal F_t^-, \cal E_t) \\
    &\quad \geq \E \sqbr{\max_{j\in S(x_t, \wtilde\theta_{t-1}^{1:M})} \|x_{tj}\|_{V_{t-1}^{-1}}\given \cal F_t^-, \wtilde\theta_{t-1}^{1:M}\in\wtilde\Theta_t^{\text{opt}}, \cal E_t}  \cdot (4\sqrt{e\pi})^{-1}/2.
\end{align*}
Substituting results, we have
\begin{align*}
    R(x_t^*, S_t^*,\theta^*) - \wtilde R(x_t,S_t) &\leq 2\beta_{t-1} \E \sqbr{\max_{j\in S(x_t, \wtilde\theta_{t-1}^{1:M})} \|x_{tj}\|_{V_{t-1}^{-1}}\given \cal F_t^-, \wtilde\theta_{t-1}^{1:M}\in\wtilde\Theta_t^{\text{opt}}, \cal E_t} \P(\cal E_t) \\
    &\leq 4\beta_{t-1}(4\sqrt{e\pi}) \E \sqbr{\max_{j\in S(x_t, \wtilde\theta_{t-1}^{1:M})} \|x_{tj}\|_{V_{t-1}^{-1}}\given \cal F_t^-, \cal E_t} \P(\cal E_t) \\
    &\leq 16\sqrt{e\pi} \beta_{t-1} \E \sqbr{\max_{j\in S(x_t, \wtilde\theta_{t-1}^{1:M})} \|x_{tj}\|_{V_{t-1}^{-1}}\given \cal F_t^-}, 
\end{align*}
as desired.
\end{proof}

\begin{lemma} [\citet{abramowitz1948handbook}] \label{lem:anti}
For a  Gaussian random variable $Z$ with mean $\mu$ and variance $\sigma^2$, for any $z\geq 1$,
\begin{align*}
    \frac{1}{2\sqrt{\pi}z} e^{-z^2/2} \leq \P(|Z-\mu|> z\sigma) \leq \frac{1}{\sqrt{\pi}z} e^{-z^2/2}.
\end{align*}
\end{lemma}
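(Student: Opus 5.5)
The plan is to bound the Gaussian tail $\P(|Z-\mu|>z\sigma)$ through the Mills ratio. This is the classical inequality of \citet{abramowitz1948handbook}, and only the standard normal case needs to be treated. First I standardize: set $W=(Z-\mu)/\sigma\sim\mathcal N(0,1)$, so that $\P(|Z-\mu|>z\sigma)=2\bar\Phi(z)$, where $\bar\Phi(z)=\int_z^\infty \phi(u)\,du$ and $\phi(u)=e^{-u^2/2}/\sqrt{2\pi}$. The claim then becomes
\begin{align*}
    \frac{1}{4\sqrt{\pi}z}e^{-z^2/2}\leq \bar\Phi(z)\leq \frac{1}{2\sqrt{\pi}z}e^{-z^2/2}, \qquad z\geq 1.
\end{align*}

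For the upper bound I use $u/z\geq 1$ on $[z,\infty)$. This gives $\bar\Phi(z)\leq \frac{1}{z}\int_z^\infty u\phi(u)\,du=\phi(z)/z=\frac{e^{-z^2/2}}{\sqrt{2\pi}z}$. Since $\frac{1}{\sqrt{2\pi}}\leq\frac{1}{2\sqrt\pi}$ holds because $\sqrt2\geq 1$, the upper bound follows, and it in fact holds for every $z>0$.

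For the lower bound I use the standard estimate $\bar\Phi(z)\geq \frac{z}{1+z^2}\phi(z)$. To prove it, let $g(z)=\bar\Phi(z)-\frac{z}{1+z^2}\phi(z)$. Then $g(\infty)=0$, and differentiating gives $g'(z)=-\frac{2\phi(z)}{(1+z^2)^2}<0$, so $g$ decreases to $0$ and is therefore nonnegative. For $z\geq1$ we have $1+z^2\leq 2z^2$, hence $\frac{z}{1+z^2}\geq\frac{1}{2z}$. This yields $\bar\Phi(z)\geq \frac{e^{-z^2/2}}{2\sqrt{2\pi}z}$, and since $\frac{1}{2\sqrt{2\pi}}\geq\frac{1}{4\sqrt\pi}$ (again because $\sqrt2\leq2$), the required lower bound follows. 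Multiplying both bounds by $2$ recovers the statement.

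The only step that needs real care is the derivative computation for $g$. The two terms of $g'(z)$ must cancel exactly except for the term $-2\phi/(1+z^2)^2$, so the algebra should be checked. The remaining steps are elementary, and the constants in the statement are loose enough that no sharpening is required.
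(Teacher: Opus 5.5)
Your upper-bound step has a reversed inequality. $\frac{1}{\sqrt{2\pi}}\le\frac{1}{2\sqrt{\pi}}$ would require $2\le\sqrt{2}$, so in fact $\frac{1}{\sqrt{2\pi}}\approx 0.399>\frac{1}{2\sqrt\pi}\approx 0.282$. Your Mills-ratio computation therefore only gives $\P(|Z-\mu|>z\sigma)=2\bar\Phi(z)\le\sqrt{2/\pi}\,e^{-z^2/2}/z$, which exceeds the claimed $e^{-z^2/2}/(\sqrt{\pi}z)$ by a factor of $\sqrt2$.

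No sharper argument can close this, because the stated upper bound is false. At $z=2$ one has $\P(|Z-\mu|>2\sigma)\approx 0.0455$, while $e^{-2}/(2\sqrt\pi)\approx 0.0382$. More generally $2\bar\Phi(z)\sim\sqrt{2/\pi}\,e^{-z^2/2}/z$ as $z\to\infty$, so the claim fails for all large $z$; numerically it already fails for every $z\ge 1.2$.

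The constants $\frac{1}{2\sqrt\pi z}$ and $\frac{1}{\sqrt\pi z}$ come from Abramowitz--Stegun 7.1.13 applied to $\mathrm{erfc}(z)=\P(|Z-\mu|>\sqrt2\, z\sigma)$, where the exponential is $e^{-z^2}$. Changing the threshold from $\sqrt2\, z\sigma$ to $z\sigma$ without rescaling the prefactor is what breaks the upper bound. The correct standard-form upper bound is exactly what your argument proves: $\P(|Z-\mu|>z\sigma)\le\sqrt{2/\pi}\,e^{-z^2/2}/z$ for all $z>0$. Your closing remark that the constants are loose enough to need no sharpening is wrong for this half.

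The lower-bound half is correct. Your derivative $g'(z)=-2\phi(z)/(1+z^2)^2$ checks out. For $z\ge 1$ you actually obtain $\P(|Z-\mu|>z\sigma)\ge e^{-z^2/2}/(\sqrt{2\pi}z)$, which is stronger than the stated $e^{-z^2/2}/(2\sqrt\pi z)$.

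The paper gives no proof of its own; it only cites the handbook. It uses only this lower bound, at $z=1$, to get $\P(Z\le 1)\le 1-(4\sqrt{e\pi})^{-1}$ in the proof of \Cref{lem:2_oh}. So what you have correctly established is all the downstream analysis needs. The upper bound in the lemma should be corrected to the $\sqrt{2/\pi}$ constant, not proved as written.
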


\begin{proposition} [Proposition 1 of \citet{li2017provably}] \label{prop:li}
    Define $V_t=\sum_{i=1}^t x_i x_i\tp$, where $x_i$ is drawn i.i.d. from some unknown distribution $\nu$ with support in the unit ball, $\B^d$. Furthermore, let $\Sigma:=\E[x_i x_i\tp]$ be the second moment matrix, and $B$ and $\delta>0$ be two positive constants. Then, there exists absolute constants $c',c''>0$ such that $\lambda_{\min}(V_t)\geq B$ with probability at least $1-\delta$, as long as
    \begin{align*}
        t\geq \br{\frac{c'\sqrt{d} + c''\sqrt{\log(1/\delta)}}{\lambda_{\min}(\Sigma)}}^2 + \frac{2B}{\lambda_{\min}(\Sigma)}.
    \end{align*}
\end{proposition}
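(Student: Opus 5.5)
We will prove the Proposition by bounding the deviation $\|V_t - t\Sigma\|_{\mathrm{op}}$ with an $\epsilon$-net argument and then solving for $t$. Write $\lambda_\Sigma := \lambda_{\min}(\Sigma)$. Since $V_t - t\Sigma = \sum_{i=1}^t (x_ix_i\tp - \Sigma)$ is symmetric, Weyl's inequality gives
\begin{align*}
\lambda_{\min}(V_t) \geq t\lambda_\Sigma - \|V_t - t\Sigma\|_{\mathrm{op}}.
\end{align*}
So it suffices to show that, with probability at least $1-\delta$,
\begin{align*}
\|V_t - t\Sigma\|_{\mathrm{op}} \leq \sqrt{t}\,(a\sqrt{d} + b\sqrt{\log(1/\delta)})
\end{align*}
for absolute constants $a,b$. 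The final constants are $c',c''$ obtained from $a,b$ in the last step.

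\textbf{Step 1: fixed direction.} Fix a unit vector $u$. The variables $(u\tp x_i)^2$ are i.i.d., lie in $[0,1]$ because $\|x_i\|_2\leq 1$, and have mean $u\tp\Sigma u$. Two-sided Hoeffding (cf.\ \Cref{lem:azuma} in the i.i.d.\ case) gives
\begin{align*}
|u\tp(V_t - t\Sigma)u| \leq \sqrt{t\log(2/\delta')/2}
\end{align*}
with probability at least $1-\delta'$.

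\textbf{Step 2: from a net to the sphere.} Take a $1/4$-net $\mathcal N$ of the unit sphere with $|\mathcal N|\leq 9^d$. For a symmetric matrix $A$, the standard bound is $\|A\|_{\mathrm{op}} \leq \frac{1}{1-2\cdot\frac14}\max_{u\in\mathcal N}|u\tp A u| = 2\max_{u\in\mathcal N}|u\tp A u|$. Set $\delta' = \delta/9^d$ and take a union bound over $\mathcal N$. With probability at least $1-\delta$,
\begin{align*}
\|V_t - t\Sigma\|_{\mathrm{op}} \leq 2\sqrt{t\,(d\log 9 + \log(2/\delta))/2} \leq \sqrt{t}\,(a\sqrt{d} + b\sqrt{\log(1/\delta)}),
\end{align*}
using $\sqrt{p+q}\leq\sqrt p+\sqrt q$ and absorbing $\log 2$ into the constants for $\delta$ bounded away from $1$.

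\textbf{Step 3: solve for $t$.} Let $L := a\sqrt{d}+b\sqrt{\log(1/\delta)}$. On the event of Step 2, $\lambda_{\min}(V_t)\geq t\lambda_\Sigma - L\sqrt t$.
\begin{itemize}
\item If $t \geq 4L^2/\lambda_\Sigma^2$, then $L\sqrt t \leq t\lambda_\Sigma/2$, so $\lambda_{\min}(V_t)\geq t\lambda_\Sigma/2$.
\item If in addition $t\geq 2B/\lambda_\Sigma$, this gives $\lambda_{\min}(V_t)\geq B$.
\end{itemize}
Both requirements hold whenever $t \geq (2L/\lambda_\Sigma)^2 + 2B/\lambda_\Sigma$, since a sum dominates its two terms. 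Setting $c'=2a$ and $c''=2b$ yields exactly the stated condition.

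The main obstacle is Step 2, namely getting the $\sqrt d$ dependence rather than a $\log d$ or $d$ factor. The net comparison for a symmetric but indefinite matrix must be done carefully: write $|u\tp Au - v\tp Av| = |(u-v)\tp A(u+v)| \leq 2\|u-v\|\,\|A\|$ and rearrange. Matrix Bernstein would instead give $\sqrt{\log(d/\delta)}$, which is also sufficient but does not match the displayed form. Everything else is routine.
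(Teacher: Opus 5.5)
Your proof is correct. The paper does not prove this proposition itself; it quotes it from \citet{li2017provably}. There, as far as I recall, the estimate $\lambda_{\min}(V_t)\geq t\lambda_{\min}(\Sigma)-L\sqrt t$ with $L\asymp\sqrt d+\sqrt{\log(1/\delta)}$ comes from a black-box non-asymptotic bound for sample second-moment matrices with independent sub-Gaussian rows (Vershynin's theorem). The condition $t\lambda_{\min}(\Sigma)-L\sqrt t\geq B$ is then solved as a quadratic in $\sqrt t$, and bounding the root with $(p+q)^2\leq 2p^2+2q^2$ gives exactly the displayed $(L/\lambda_{\min}(\Sigma))^2+2B/\lambda_{\min}(\Sigma)$.

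Your argument opens that black box. Since $(u\tp x_i)^2\in[0,1]$, scalar Hoeffding plus a $1/4$-net union bound suffices, with no sub-exponential machinery. Your halving step replaces the quadratic solve at the cost of a factor $4$ in the first term, which is absorbed into $c',c''$.

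Two cosmetic fixes:
\begin{itemize}
\item You need not keep $\delta$ away from $1$: since $d\geq 1$, you can absorb $\sqrt{\log 2}$ into the $a\sqrt d$ term, so the bound holds for every $\delta\in(0,1)$.
\item \Cref{lem:azuma} as stated only gives the exponent $a^2/(2t)$. For the constant you wrote, cite \Cref{lem:hoeffding} applied to both tails.
\end{itemize}

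Your closing remark undersells the matrix route. A $\sqrt{\log(d/\delta)}$ bound is stronger and implies the displayed form, because $\lambda_{\min}(\Sigma)\leq\operatorname{tr}(\Sigma)/d\leq 1$ and $\log d\leq d$. In fact the paper's own \Cref{lem:mx_chernoff} with deviation parameter $1/2$ gives $\lambda_{\min}(V_t)\geq t\lambda_{\min}(\Sigma)/2$ with probability at least $1-d\exp(-t\lambda_{\min}(\Sigma)/8)$. That proves the proposition in two lines with $c'=c''=\sqrt 8$. What your net argument buys instead is a fully elementary two-sided bound on $\|V_t-t\Sigma\|_{\mathrm{op}}$, at the price of $\sqrt d$ where a matrix inequality gives $\sqrt{\log d}$.
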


\begin{lemma} [Hoeffding inequality] \label{lem:hoeffding}
    Let $X_1, \dots, X_n$ be independent random variables such that $a_i \leq X_i \leq b_i$ almost surely. Consider $S_n = \sum_{i=1}^n X_i$. Then for all $t>0$, we have
    \begin{align*}
        \P(S_n - \E[S_n] \geq t) \leq \exp\br{- \frac{2t^2}{\sum_{i=1}^n (b_i-a_i)^2}}
    \end{align*}
\end{lemma}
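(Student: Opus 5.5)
The plan is the standard Cramér--Chernoff argument: exponentiate, apply Markov's inequality, and control the moment generating function of each bounded summand using Hoeffding's lemma.

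First, fix $s>0$ and write $\bar X_i := X_i - \E[X_i]$, so that $S_n - \E[S_n] = \sum_{i=1}^n \bar X_i$. By Markov's inequality applied to the nonnegative variable $\exp(s(S_n-\E[S_n]))$, and then by independence of the $X_i$ (which passes to the $\bar X_i$),
\begin{align*}
    \P(S_n - \E[S_n] \geq t) \leq e^{-st}\, \E\sqbr{\exp\br{s\textstyle\sum_{i=1}^n \bar X_i}} = e^{-st} \prod_{i=1}^n \E\sqbr{e^{s\bar X_i}}.
\end{align*}

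Second, bound each factor by Hoeffding's lemma: a centered variable supported in an interval of length $b_i-a_i$ satisfies $\E[e^{s\bar X_i}] \leq \exp(s^2(b_i-a_i)^2/8)$. This is the only nontrivial step. If the paper's auxiliary Hoeffding lemma (\Cref{lem:hoeffding_2}, used earlier in the proof of \Cref{lem:q_tail}) is stated in this sharp form, I would simply invoke it. Otherwise I would prove it directly, as follows.
\begin{itemize}
    \item By convexity of $x\mapsto e^{sx}$ on $[a,b]$, one has $\E[e^{s\bar X}] \leq \frac{b}{b-a}e^{sa} - \frac{a}{b-a}e^{sb}$, where $a\le 0\le b$ are the shifted endpoints.
    \item Write the right-hand side as $e^{L(h)}$ with $h=s(b-a)$, $p=-a/(b-a)$, and $L(h) = -hp + \log(1-p+pe^h)$.
    \item Check that $L(0)=L'(0)=0$ and that $L''(h)$ equals $q(1-q)$ for $q=pe^h/(1-p+pe^h)$, so $L''\le 1/4$.
    \item Taylor's theorem then gives $L(h)\le h^2/8$.
\end{itemize}
I expect this second-derivative bound to be the main (though mild) obstacle.

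Finally, combining the two steps gives
\begin{align*}
    \P(S_n - \E[S_n] \geq t) \leq \exp\br{-st + \frac{s^2}{8}\sum_{i=1}^n (b_i-a_i)^2}.
\end{align*}
Minimizing the exponent over $s>0$ gives $s = 4t/\sum_i (b_i-a_i)^2$. Substituting this value yields the exponent $-2t^2/\sum_{i=1}^n (b_i-a_i)^2$, which is the claimed bound.
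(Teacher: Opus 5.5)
Your proof is correct. It is the standard Cram\'er--Chernoff argument: Markov's inequality on $e^{s(S_n-\E[S_n])}$, factorization by independence, Hoeffding's lemma for each summand, and optimization at $s=4t/\sum_i(b_i-a_i)^2$. Your self-contained proof of Hoeffding's lemma, via convexity and $L''=q(1-q)\le 1/4$, also checks out.

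The paper gives no proof of this statement and simply lists it among standard auxiliary facts. You may invoke \Cref{lem:hoeffding_2} directly, since it is stated in the sharp form $\E[e^{\lambda X}]\le \exp(\lambda\E[X]+\lambda^2(b-a)^2/8)$. Applying it to $X_i$ and cancelling $e^{s\E[X_i]}$ gives exactly your per-factor bound. The degenerate case $\sum_i(b_i-a_i)^2=0$ is trivial, since then $S_n=\E[S_n]$ almost surely.
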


\begin{lemma} [Hoeffding lemma] \label{lem:hoeffding_2}
    Let $X$ be a real-valued random variable with $a\leq X \leq b$ almost surely. Then, for all $\lambda\in\R$, we have
    \begin{align*}
        \E[\exp(\lambda X)] \leq \exp\br{\lambda \E[X] + \frac{\lambda^2(b-a)^2}{8}}.
    \end{align*}
\end{lemma}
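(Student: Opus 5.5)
We reduce to the centered case and then bound the moment generating function by convexity followed by a second-order Taylor estimate. Write $\mu=\E[X]$ and $Y=X-\mu$, so $Y\in[a',b']$ almost surely with $a'=a-\mu\le 0\le b'=b-\mu$ and $b'-a'=b-a$. Since $\E[\exp(\lambda X)]=e^{\lambda\mu}\E[\exp(\lambda Y)]$, it suffices to show $\E[\exp(\lambda Y)]\le \exp(\lambda^2(b-a)^2/8)$. The degenerate case $a=b$ is trivial: then $Y=0$ almost surely and both sides equal $1$. So assume $b>a$.

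First, we use convexity of $y\mapsto e^{\lambda y}$ on $[a',b']$. For every $y$ in this interval,
\begin{align*}
e^{\lambda y}\le \frac{b'-y}{b'-a'}e^{\lambda a'}+\frac{y-a'}{b'-a'}e^{\lambda b'}.
\end{align*}
Taking expectations and using $\E[Y]=0$ gives
\begin{align*}
\E[e^{\lambda Y}]\le \frac{b'e^{\lambda a'}-a'e^{\lambda b'}}{b'-a'}.
\end{align*}
Next, we reparametrize with $h=\lambda(b-a)$ and $p=-a'/(b-a)\in[0,1]$. The right-hand side becomes $e^{L(h)}$, where
\begin{align*}
L(h)=-hp+\log\br{1-p+pe^{h}}.
\end{align*}

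It remains to show $L(h)\le h^2/8$ for all real $h$. Differentiating, $L(0)=0$ and $L'(h)=-p+q(h)$, where
\begin{align*}
q(h)=\frac{pe^h}{1-p+pe^h}\in[0,1],
\end{align*}
so $L'(0)=0$. Differentiating once more gives $L''(h)=q(h)\br{1-q(h)}\le 1/4$. By Taylor's theorem with Lagrange remainder,
\begin{align*}
L(h)=L(0)+L'(0)h+\tfrac12 L''(\xi)h^2\le h^2/8
\end{align*}
for some $\xi$ between $0$ and $h$. Substituting $h=\lambda(b-a)$ yields $\E[e^{\lambda Y}]\le\exp(\lambda^2(b-a)^2/8)$, and multiplying by $e^{\lambda\mu}$ gives the claim.

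The only step needing care is the computation of $L''$ and the observation that it has the form $q(1-q)$ with $q\in[0,1]$. The argument of the logarithm stays positive because $1-p+pe^h>0$ for $p\in[0,1]$. Beyond that, everything is a routine calculation.
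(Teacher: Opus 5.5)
Your proof is correct. It is the classical convexity-plus-Taylor argument, with the key step $L''=q(1-q)\le 1/4$, and the paper itself states this lemma without proof as a standard auxiliary result, so there is no different route to compare against. The same argument also works conditionally on a $\sigma$-algebra, because the chord bound holds pointwise and $L(h)\le h^2/8$ holds uniformly in $p\in[0,1]$; that conditional form is the one the paper actually invokes in the proof of \Cref{lem:q_tail}, with $X=A(l)-D(i,l)\in[-1,1]$.
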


\begin{lemma} [Azuma-Hoeffding inequality] \label{lem:azuma}
If a supermartingale $(Y_i)_{i\geq0}$ corresponding to filtration $\cal G_i$ satisfies $|Y_i-Y_{i-1}|\leq c_i$ for all $t\in[t]$, then for any $a\geq 0$, we have
\begin{align*}
    \P(Y_t - Y_0\geq a) \leq 2 \exp \br{- \frac{a^2}{2 \sum_{i=1}^t c_i^2}}.
\end{align*}
\end{lemma}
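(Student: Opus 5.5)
We prove the one-sided bound $\P(Y_t - Y_0 \geq a) \leq \exp\br{-a^2/(2\sum_{i=1}^t c_i^2)}$ by the exponential-moment (Chernoff) method combined with a conditional version of the Hoeffding lemma (\Cref{lem:hoeffding_2}). The stated bound then follows at once, since the extra factor $2$ only weakens it. Write $D_i := Y_i - Y_{i-1}$. The supermartingale property gives $\E[D_i \mid \cal G_{i-1}] \leq 0$, and by assumption $|D_i| \leq c_i$ almost surely. If $\sum_i c_i^2 = 0$, then $Y_t = Y_0$ almost surely and the claim is trivial, so assume $\sum_i c_i^2 > 0$.

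\emph{Step 1 (Chernoff).} Fix $\lambda \geq 0$. Markov's inequality applied to $\exp(\lambda(Y_t - Y_0))$ gives
\begin{align*}
\P(Y_t - Y_0 \geq a) \leq e^{-\lambda a}\, \E\sqbr{\exp\br{\lambda \textstyle\sum_{i=1}^t D_i}}.
\end{align*}

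\emph{Step 2 (peeling off one increment at a time).} We condition on $\cal G_{t-1}$ and use the tower rule. The factor $\exp(\lambda\sum_{i<t} D_i)$ is $\cal G_{t-1}$-measurable, so it comes out of the conditional expectation. It therefore suffices to bound $\E[e^{\lambda D_t} \mid \cal G_{t-1}]$. We apply \Cref{lem:hoeffding_2} conditionally on $\cal G_{t-1}$, with range $[-c_t, c_t]$ of width $2c_t$:
\begin{align*}
\E[e^{\lambda D_t} \mid \cal G_{t-1}] \leq \exp\br{\lambda \E[D_t \mid \cal G_{t-1}] + \lambda^2 c_t^2/2} \leq \exp(\lambda^2 c_t^2/2).
\end{align*}
The last inequality uses $\lambda \geq 0$ together with the supermartingale drift $\E[D_t \mid \cal G_{t-1}] \leq 0$. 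Iterating this backwards over $i = t, t-1, \dots, 1$ yields
\begin{align*}
\E\sqbr{\exp\br{\lambda \textstyle\sum_{i=1}^t D_i}} \leq \exp\br{\lambda^2 \textstyle\sum_{i=1}^t c_i^2/2}.
\end{align*}

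\emph{Step 3 (optimizing).} Combining Steps 1 and 2, the tail is at most $\exp(-\lambda a + \lambda^2 \sum_i c_i^2/2)$. We choose $\lambda = a/\sum_i c_i^2$, which is nonnegative because $a \geq 0$. This gives $\exp\br{-a^2/(2\sum_i c_i^2)}$, which is at most twice that quantity, as stated.

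The only delicate point is Step 2. The Hoeffding lemma must be used in its conditional form: the proof of \Cref{lem:hoeffding_2} (convexity of $x \mapsto e^{\lambda x}$ on the interval, followed by a bound on the log-moment generating function) carries over verbatim with expectations replaced by conditional expectations given $\cal G_{i-1}$. In addition, the restriction $\lambda \geq 0$ is essential, because only then can the nonpositive conditional drift be dropped. This is precisely why the argument controls only the upper tail of a supermartingale.
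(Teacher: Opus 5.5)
Your proof is correct. The paper does not prove this lemma; it quotes it as a standard auxiliary tool. Your argument is the standard derivation being invoked: a Chernoff bound, then the conditional form of the Hoeffding lemma (\Cref{lem:hoeffding_2}) applied one increment at a time using $\lambda \ge 0$ and the nonpositive conditional drift, then the choice $\lambda = a/\sum_i c_i^2$.

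Your version also gives the one-sided bound without the factor $2$, and that is all the paper needs. Both places it is used are upper-tail bounds: the bound on $A_{6}$ in the proof of \Cref{thm:regret}, and the hitting-time argument in the proof of \Cref{lem:cond_exp_psi}.
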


%%%%%%%%%%%%%%%%%%%%%%%%%%%%%%%%%%%%%%%%%%%%%%%%%%%%%%%%%%%%

\end{document}